\newcommand{\reffont}{\fontsize{7.5pt}{7.5pt}\selectfont}
\newcommand{\authorfont}{\fontsize{12pt}{12pt}\selectfont}
\newcommand{\comment}[1]{}
\renewcommand{\comment}[1]{ {\color{red}{COMMENTS}:}{\color{red}{ #1} }}
\def\T{{\!\top}}
\newtheorem{theorem}{Theorem}[section]
\def\bp{ {\bf p } }
\def\bM{ {\bf M } }
\renewcommand{\vspace}[1]{{}}
\begin{document}

\begin{spacing}{1.2}

\title{Online Metric-Weighted Linear Representations for Robust Visual Tracking}

\author{ \authorfont
  Xi Li${^{\dag,\ddag}}$,
  Chunhua Shen$^{\ddag}$\thanks{Correspondence should be addressed to C. Shen (email: chunhua.shen@adelaide.edu.au).},
  Anthony Dick$^{\ddag}$,
  Zhongfei Zhang$^{\dag}$,
  Yueting Zhuang$^{\dag}$\\
$^{\dag}$College of Computer Science, Zhejiang University, China\\
$^{\ddag}$School of Computer Science, University of Adelaide, Australia

}

\IEEEcompsoctitleabstractindextext{%

\begin{abstract}

In this paper, we propose a visual tracker based on
a metric-weighted linear representation of appearance.
In order to capture the interdependence of
different feature dimensions, we develop two
online distance metric learning methods using
proximity comparison information and structured output learning.
    The learned metric is then incorporated into a linear representation of appearance.
We show that online distance metric learning
significantly improves the robustness of the
tracker, especially on those sequences exhibiting drastic
appearance changes.
In order to bound growth in the number of training samples,
we design a time-weighted reservoir sampling method.

Moreover,  we enable our tracker
to automatically perform object identification
during the process of object tracking,
by introducing a collection of static template samples
belonging to several object classes of interest.
Object identification results for
an entire video sequence are achieved by
systematically combining the tracking information
and visual recognition at each frame.
Experimental results on challenging video sequences demonstrate
the effectiveness of the method for both inter-frame
tracking and object identification.

\end{abstract}

\begin{keywords}
Visual tracking, linear representation, structured metric learning, reservoir sampling
\end{keywords}}

\maketitle

\IEEEdisplaynotcompsoctitleabstractindextext
\IEEEpeerreviewmaketitle

\section{Introduction}

Visual tracking is an important and challenging
problem in computer vision, with widespread application domains.
Its goal is to consistently locate an object of interest in multiple images captured at successive time steps.
Despite great progress in recent years, visual tracking
remains a challenging problem because of the
complicated appearance changes caused by factors including
illumination variation, shape deformation, occlusion,
pose variation, background clutter, sophisticated
object motion, and scene blurring.
To address these factors, a variety of tracking approaches
have been proposed
to improve the robustness, speed, or accuracy of visual tracking.
In order to effectively capture the dynamic spatio-temporal
information on object appearance, these tracking approaches aim to learn
generative or discriminative appearance models using a variety of statistical
learning techniques,
including
hidden Markov model~\cite{park2012robust},
mixture models~\cite{Jepson-Fleet-Yacoob5}, subspace
learning~\cite{Limy-Ross17,Kwon-Lee-CVPR2010,zhang2013robust},
linear regression~\cite{Meo-Ling-ICCV09, Li-Shen-Shi-cvpr2011,zhang2013robust,wang2013online},
covariance learning~\cite{wu2012real},
compressive tracking~\cite{zhang2012real},
SVMs~\cite{Avidan-2004,bai2012robust,harestruck_iccv2011,khanloo2012large},
boosting~\cite{Grabner-Grabner-Bischof-BMVC2006,Babenko-Yang-Belongie-cvpr2009}, random
forest~\cite{Santner-Leistner-Saffari-Pock-Bischof-cvpr2010}, spatial attention
learning~\cite{Fan-Wu-Dai-ECCV2010}, metric learning~\cite{wang2010discriminative,tsagkatakisonline_csvt}, and
tracking-learning-detection~\cite{kalal2012tracking}.

Linear representations, in which the object is
represented as a linear combination of basis samples, are often used to build
such appearance models.
Suppose that we have a set of basis samples denoted as: $\mathcal{T}=[\textbf{T}_{1} \ldots \textbf{T}_{q}] \in \mathcal{R}^{d\times q}$.
Using these basis samples,
a new sample $\textbf{y}$ can be approximated
by the following linear combination:
$\textbf{y} \approx \mathcal{T}\textbf{c}=c_{1}\textbf{T}_{1} + c_{2}\textbf{T}_{2} + \cdots + c_{q}\textbf{T}_{q}$,
where $\textbf{c}=(c_{1}, c_{2}, \ldots, c_{q})^{\T}$ is a coefficient vector.
This gives rise to the following reconstruction error norm:
$D_{\textbf{y}} = \|\textbf{y} - (c_{1}\textbf{T}_{1} + c_{2}\textbf{T}_{2} + \cdots + c_{q}\textbf{T}_{q})\|_{2}$.
In this case, the smaller $D_{\textbf{y}}$ is, the more likely $\textbf{y}$ is generated from the
subspace spanned by $\mathcal{T}$. During visual tracking,  the subspace is likely to vary
dynamically as new data arrive, so $\mathcal{T}$ should be accordingly adjusted to the new data.
In addition, the feature representation for each sample
is typically extracted from local image patches, whose appearance is often correlated because of their spatial proximity.
Therefore, the elements of the feature representation in different dimensions often
contain intrinsic spatial correlation information.
In general, such correlation information is stable
in the case of complicated appearance changes (e.g., partial occlusion, illumination variation, and shape deformation),
and thus
plays an important role in robust visual tracking.
However, most existing linear representation-based trackers
(e.g., \cite{Meo-Ling-ICCV09,Li-Shen-Shi-cvpr2011})  build
linear regressors that treat feature dimensions independently
and ignore the correlation between them.

Therefore, we address the following three key issues for tracking
robustness and  efficiency:
i)
how to capture
the intrinsic correlation between different feature dimensions;
ii)
how to maintain and update a limited-sized
basis sample buffer $\mathcal{T}$ that effectively adapts during tracking;
and iii)
given this correlation information and a dynamically changing basis  $\mathcal{T}$, how to efficiently compute the optimal
coefficient vector $\textbf{c}$ at each frame.

The core of our approach to these problems is online metric learning, which learns and updates a distance metric matrix over time.  Within this framework, the three issues listed above are solved as follows.
For i), inter-dimensional correlation is captured by the learned Mahalanobis distance metric matrix $\mathbf{M}$
such that $\mathbf{M}=\mathbf{L}^{\T}\mathbf{L}$, where $\mathbf{L}$ projects the feature vector to
a more discriminative feature space.
In other words, online metric learning aims to
find a linear mapping (i.e., a set of linear combinations
over the correlated feature elements from different feature dimensions),
which projects the original samples to
a more discriminative feature space for robust visual tracking.
We compare two different metric learning methods, one of which uses structured learning while the other is based only on pairwise sample proximity.
For ii), we design a time-weighted reservoir sampling method to maintain and update limited-sized sample buffers in the metric learning procedure.
In addition,
we prove
that metric learning based on our reservoir sampling method is statistically close to metric learning using all observed training samples.
For iii), we pose the calculation of $\textbf{c}$ as a least-square optimization problem, which admits an extremely simple and efficient closed-form solution.
We also demonstrate that, with the emergence of new data, the solution can be efficiently updated by a sequence of simple matrix operations.

Therefore, the main contributions of this work are two-fold:
1) We propose a novel
online metric-weighted linear representation for visual tracking.
The linear representation is associated with
a metric-weighted least-square optimization problem, which admits
an extremely simple and efficient closed-form solution.
The metric used in the linear representation is updated online
in a max-margin optimization framework
using proximity comparison or structured learning.
Different from the similarity metric learning developed in~\cite{chechik2010large}, our work
is formulated as a max-margin optimization problem for
learning a Mahalanobis distance metric.
Furthermore, we introduce the idea of
structured learning  to
the online metric learning process, which is also novel in the visual tracking literature.
2) We design a time-weighted reservoir sampling method to maintain and update
limited-sized sample buffers in the linear representation. The method is able to
effectively maintain sample buffers
that not only retain some old samples to avoid tracker drift,
but also adapt to recent changes.
In addition, we theoretically prove
that metric learning based on our reservoir sampling method is
statistically close to
metric learning using all available training samples.
{\em  This is the first time that reservoir sampling is used
in an  online metric learning setting that is tailored for robust
visual tracking.}

We note that, if the template samples all represent the same object, this same procedure can be used to identify the object in the presence of multiple objects.
The goal of object identification is naturally achieved
by combining the tracking information
and the linear regression-based visual
recognition together. We obtain promising results
of pedestrian identification on multi-view
video sequences.

Compared to previous systems, we fully exploit the linear representation of object appearance in a consistent and principled manner. By using online metric learning, our similarity measure is better maintained despite changing conditions. Reservoir sampling allows us to represent object appearance over time as a linear combination of samples. Our incremental solution update allows rapid update of object appearance coefficients, either for object tracking or identification.

\section{Related work}

Our work builds on recent progress in several related fields:
i) linear representations; ii) distance metric learning;
iii) reservoir sampling; and iv) structured tracking. We give a brief overview of the most relevant work in each of these areas.

{\bf Linear representations} Mei and Ling~\cite{Meo-Ling-ICCV09} propose a tracker
based on a sparse linear representation
obtained by  solving an $\ell_{1}$-regularized minimization problem.
With the sparsity constraint, this tracker can adaptively select a small number of
relevant templates to optimally approximate the given test samples.
The main limitation is its computational expense due to solving
an $\ell_{1}$-norm convex problem.
To speed up the tracking,
Bao \emph{et al.}~\cite{bao2012real} take advantage of a fast numerical solver
called  accelerated proximal gradient, which
solves the $\ell_{1}$-regularized minimization problem
with guaranteed quadratic convergence.
An alternative is to solve the $\ell_{1}$-regularized minimization problem
in an approximate way.
For example,
Li \emph{et al.}~\cite{Li-Shen-Shi-cvpr2011}
propose to  approximately solve the sparse
optimization problem using orthogonal matching pursuit (OMP).
Recently,
research has revealed that the $ \ell_1 $-norm induced sparsity
does not necessarily help improve the accuracy of image classification;
and non-sparse representations are typically orders of
magnitude faster to compute than their sparse counterparts, with
competitive
accuracy
\cite{rigamonti2011sparse,FaceICCV2011}.
Subsequently, Li \emph{et al.}~\cite{li2012incremental}
propose a 3D discrete cosine transform based
multilinear representation for visual tracking.
The representation models the spatio-temporal
properties of object appearance from
the perspective of signal compression and reconstruction.
However, the above trackers
construct linear regressors that are defined on independent feature dimensions
(mutually independent raw pixels in both \cite{Meo-Ling-ICCV09}
and \cite{Li-Shen-Shi-cvpr2011}). In other words, the correlation
information between different feature dimensions is not exploited.
Such correlation information
can play an important role in robust visual tracking.

{\bf Distance metric learning}
The goal of distance metric learning is to seek
an effective and discriminative metric space, where
both intra-class compactness
and inter-class separability are maximized.
In general, distance metric learning~\cite{weinberger2006distance,mensink2012metric,kedem2012non,ying2012distance}
is a popular and powerful tool for many applications.
For example, in \cite{weinberger2006distance}, a
Mahalanobis distance metric is learned using positive semidefinite
programming.
Chechik \emph{et al.} \cite{chechik2010large} propose
a cosine similarity metric learning method using proximity
comparison for large-scale image retrieval.
Discriminative metric learning has also been successfully applied
to visual tracking
\cite{wang2010discriminative,jiang2011adaptive,tsagkatakisonline_csvt}.
These works learn a distance
metric mainly for object matching across adjacent frames, and
the tracking is not carried out in the framework of linear
representations.
In addition, Hong \emph{et al.}~\cite{hong2012dual}
learn a discriminative distance metric
in a max-margin framework, where the average
inter-class distance is maximized while minimizing
the average intra-class distance.
The distance metric learning approach is implemented
in a batch mode learning scheme, which does not allow for online updating required for visual tracking.

{\bf Reservoir sampling} Visual tracking is a time-varying process which deals with a
        dynamic stream data in an online manner.
        Due to memory limitations,
        it is often impractical for trackers to store all the video stream data.
        To address this issue, reservoir sampling is a means of
        maintaining and updating limited-sized
        data buffers. However, the conventional reservoir
        sampling in \cite{vitter1985random, zhaoICML2011} can only
        accomplish the task of uniform random sampling, which assumes all samples are equally important.
        Due to temporal coherence, visual tracking in the current frame usually
        relies more on recently received samples than old samples.
        Hence, time-weighted reservoir sampling is required
        for robust visual tracking.

{\bf Structured tracking} The objective of
structured tracking is to utilize the
intrinsic structural information on object appearance
for robust object tracking. For instance,
Jia \emph{et al.}~\cite{jia2012visual}
construct a structured
sparse appearance model, which performs
alignment pooling on the sparse
 coefficient vectors for
local patches within the object.
Similar to~\cite{jia2012visual},
Zhong \emph{et al.}~\cite{zhong2012robust}
also propose a structured appearance model
that carries out average pooling
on the sparse
 coefficient vectors
for local image patches within the object.
Likewise, Li \emph{et al.}~\cite{li2013visual} present a local block-division
appearance model that comprises a set of
block-specific SVM classification models.
The Dempster-Shafer evidence theory is further used to
fuse the block-specific SVM discriminative information
for object localization.
In addition, structured output learning (e.g., structured SVM~\cite{harestruck_iccv2011}) is applied
to visual tracking. Its key idea is to learn a classification model in
a max-margin optimization framework, which involves an infinite number of
constraints containing structured information (e.g., VOC overlap score~\cite{harestruck_iccv2011}).

{\bf Tracking and identification}
Recent studies have demonstrated the effectiveness
of combining object identification and
tracking together.
For example, Edwards \emph{et al.}~\cite{edwards1999improving}
present an adaptive framework that improves
the performance of face tracking and recognition
by adaptively combining the motion information
from the video sequence.
Following this work, Zhou \emph{et al.}~\cite{zhou2004visual}
study the problem of simultaneous tracking and recognizing human
faces in a particle filter framework.
Moreover, Mei and Ling~\cite{Meo-Ling-ICCV09}
formulate simultaneous
vehicle tracking and identification
as a $\ell_{1}$-regularized sparse representation problem.

Compared with the previous work on appearance modeling,
the advantages of this work are as follows.
First, this work constructs a simple but effective
appearance model based on a metric-weighted
linear regression problem, which admits
an extremely simple and efficient closed-form solution
with online updating.
Second, this work naturally embeds useful discriminative information
within the process of
appearance modeling using online metric learning,
which finds an effective metric space
(obtained by discriminative linear mappings)
for metric-weighted
linear regression.
Third, this work effectively maintains
limited-sized sample buffers for
online metric learning by
time-weighted reservoir sampling.
The maintained buffers can not only
retain some old samples with a long lifespan
for avoiding the tracker drift, but also
adapt to recent appearance changes.
A preliminary conference version of this work appears in~\cite{li2012non}.

\vspace{-0.45cm}
\section{The proposed visual tracking algorithm}
\vspace{-0.1cm}
\subsection{Particle filtering for tracking}
 \label{sec:Motion_model}
\vspace{-0.1cm}
At the top level, visual tracking is posed as
a sequential object state estimation problem,
which is often solved in a particle filtering framework~\cite{Isard-Blake-ECCV1996}.
The particle filter
can be divided into prediction and update steps: \vspace{-0.28cm}
\[
 p(\mathbf{Z}_{t}\hspace{-0.05cm}\mid \hspace{-0.05cm}\mathcal
{O}_{t-1} )\hspace{-0.0cm}\propto \int
\hspace{-0.1cm}p(\mathbf{Z}_{t}\hspace{-0.1cm}\mid
\hspace{-0.1cm}\mathbf{Z}_{t-1})p(\mathbf{Z}_{t-1}\hspace{-0.1cm}\mid \hspace{-0.1cm}
\mathcal {O}_{t-1} )d\mathbf{Z}_{t-1}, \hspace{0.25cm}
p(\mathbf{Z}_{t}\hspace{-0.1cm}\mid \hspace{-0.1cm}\mathcal
{O}_{t} )\hspace{-0.0cm}\propto\hspace{-0.0cm}
p(\mathbf{o}_{t}\hspace{-0.1cm}\mid \hspace{-0.1cm}\mathbf{Z}_{t})p(\mathbf{Z}_{t}\hspace{-0.1cm}\mid \hspace{-0.1cm}\mathcal
{O}_{t-1} ),\hspace{-0.0cm} \vspace{-0.21cm}
\]
where $\mathcal
{O}_{t}=\{\mathbf{o}_{1},\ldots,\mathbf{o}_{t}\}$ are observation variables,
and $\mathbf{Z}_{t}= (\mathcal{X}_{t}, \mathcal{Y}_{t}, \mathcal{S}_{t})$
denotes the motion parameters including
$\mathcal{X}$ translation, $\mathcal{Y}$ translation, and scaling.
The key distributions are
$p(\mathbf{o}_{t}\hspace{-0.1cm}\mid \hspace{-0.1cm}\mathbf{Z}_{t})$ denoting the
observation model, and $p(\mathbf{Z}_{t}\hspace{-0.1cm}\mid
\hspace{-0.1cm}\mathbf{Z}_{t-1})$ representing the state transition model.
Usually, the motion between two consecutive frames is assumed to conform to a Gaussian distribution:
$p(\mathbf{Z}_{t}|\mathbf{Z}_{t-1}) = \mathcal{N}(\mathbf{Z}_{t}; \mathbf{Z}_{t-1}, \Sigma)$,
where $\Sigma$ denotes a diagonal covariance matrix with diagonal
elements: $\sigma_{\mathcal{X}}^{2}$, $\sigma_{\mathcal{Y}}^{2}$, and $\sigma_{\mathcal{S}}^{2}$.
For each state $\mathbf{Z}_{t}$, there is a corresponding image region $\mathbf{o}_{t}$ that is normalized  by image scaling.
The optimal object state $\mathbf{Z}_{t}^{\ast}$ at time $t$ can be determined by solving the following maximum a posterior (MAP) problem:
$\mathbf{Z}_{t}^{\ast} = \arg \max_{\mathbf{Z}_{t}} \thinspace p(\mathbf{Z}_{t}\hspace{-0.1cm}\mid \hspace{-0.1cm}\mathcal
{O}_{t})$. Therefore, efficiently constructing an effective observation model $p(\mathbf{o}_{t}\hspace{-0.1cm}\mid \hspace{-0.1cm}\mathbf{Z}_{t})$
plays a critical role in robust visual tracking. Motivated by this observation,
we design a metric-weighted linear representation that
captures the intrinsic object appearance properties in a discriminative distance metric space.

\begin{figure}[t]
\vspace{-0.4cm}
\includegraphics[scale=0.155]{./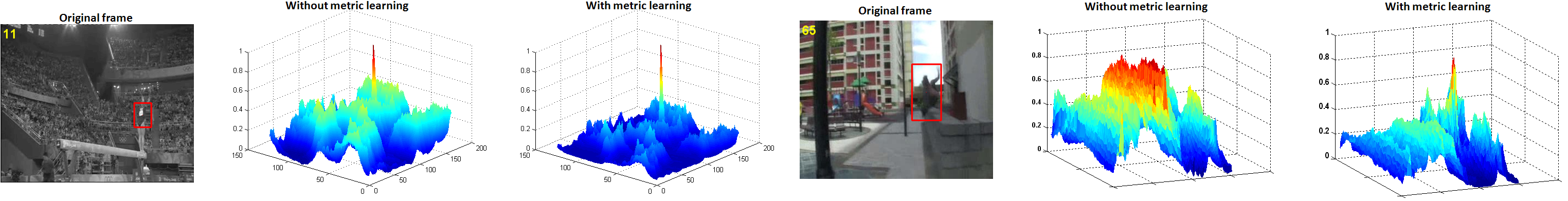}
\vspace{-0.8cm}
 \caption{\footnotesize Illustration of our discriminative criterion based on metric-weighted linear representation. The first column displays
 the original frames; the second column shows the corresponding confidence maps without metric learning (i.e., $\mathbf{M}$ is an identity matrix); and the third column exhibits
 the corresponding confidence maps with metric learning. Clearly, our metric-weighted criterion is more discriminative.}
 \label{fig:confidence_map} \vspace{-1.1cm}
\end{figure}

\subsection{Problem formulation}
\label{sec:malso}

Modeling the observed appearance of an object $p(\mathbf{o}_{t}\hspace{-0.1cm}\mid \hspace{-0.1cm}\mathbf{Z}_{t})$ is more complex than modelling its motion.
This is often posed as
a problem of linear representation and
reconstruction, which corresponds
to a $\ell_p$-norm regularized
least-square optimization problem (e.g., solved in~\cite{Meo-Ling-ICCV09, Li-Shen-Shi-cvpr2011}).
These optimization problems usually
ignore the relative importance of
individual feature dimensions as well as
the correlation between feature dimensions.
During tracking, such information
plays a critical role in
robust object/non-object classification
with complicated appearance variations.
Motivated by this observation, we propose a
metric-weighted linear representation
that is
capable of capturing the varying correlation information between
feature dimensions. As shown in Fig.~\ref{fig:confidence_map},
metric learning results in a linear representation that is more discriminative
for object/non-object classification.

{\bf Metric-weighted linear representation.}
More specifically, given a set of basis samples $\mathbf{P} =
(\mathbf{p_{i}})_{i=1}^{N} \in \mathcal{R}^{d\times N}$ and a test
sample $\mathbf{y} \in \mathcal{R}^{d\times 1}$, we aim to discover
a linear combination of $\mathbf{P}$ to optimally approximate the test
sample $\mathbf{y}$ by solving the following optimization problem: \vspace{-0.26cm}
\begin{equation}
 \underset{\mathbf{x}}{\min} \thinspace g(\mathbf{x}; \mathbf{M},
\mathbf{P}, \mathbf{y})= \underset{\mathbf{x}}{\min} \thinspace
(\mathbf{y}-\mathbf{P}\mathbf{x})^{\T}\mathbf{M}
(\mathbf{y}-\mathbf{P}\mathbf{x}),
\label{eq:linear_regression_metric} \vspace{-0.3cm}
\end{equation}
where $\mathbf{x} \in \mathcal{R}^{N \times 1}$ and
$\mathbf{M}$ is a symmetric distance metric matrix
that can be decomposed as $\mathbf{M}=\mathbf{L}^{\T}\mathbf{L}$.
In principle, the idea of the metric-weighted linear representation is to linearly reconstruct
the given test sample $\mathbf{y}$ using the basis samples  $(\mathbf{p_{i}})_{i=1}^{N}$
within a distance metric space (characterized by the Mahalanobis metric matrix $\mathbf{M}$).
The aforementioned linear regression problem is equivalent to the following form:
$\underset{\mathbf{x}}{\min} \thinspace
(\mathbf{L}\mathbf{y}-\mathbf{L}\mathbf{P}\mathbf{x})^{\T}
(\mathbf{L}\mathbf{y}-\mathbf{L}\mathbf{P}\mathbf{x})$. In other words, we perform the linear reconstruction task
on the transformed sample $\mathbf{L}\mathbf{y}$ with respect to the transformed basis samples
$(\mathbf{L}\mathbf{p_{i}})_{i=1}^{N}$. When $\mathbf{L}$ is an identity matrix, our metric-weighted regression
problem degenerates to a standard least square regression problem.

The optimization problem~\eqref{eq:linear_regression_metric} has an analytical solution that can
be  computed as: \vspace{-0.25cm}
\begin{equation}
\mathbf{x}^{\ast} =
(\mathbf{P}^{\T}\mathbf{M}\mathbf{P})^{-1}\mathbf{P}^{\T}\mathbf{M}\mathbf{y}.
\label{eq:regression_optimal_solution} \vspace{-0.25cm}
\end{equation}
If $\mathbf{P}^{T}\mathbf{M}\mathbf{P}$ is a singular matrix, we
use its pseudoinverse to compute $\mathbf{x}^{\ast}$.

{\bf Tracking application.}
During tracking, we typically want to classify a candidate sample as either foreground or background. We are therefore interested in the relative similarity of the sample to a set of foreground and background samples.
To address this problem, we
obtain the foreground and background linear regression
solutions as follows:
$\mathbf{x}^{\ast}_{f} = \arg \min_{\mathbf{x}_{f}}  g(\mathbf{x}_{f};
\mathbf{M}, \mathbf{P}_{f}, \mathbf{y})$ and
$\mathbf{x}^{\ast}_{b} = \arg \min_{\mathbf{x}_{b}}  g(\mathbf{x}_{b};
\mathbf{M}, \mathbf{P}_{b}, \mathbf{y})$,
where $\mathbf{P}_{f}$ and $\mathbf{P}_{b}$ are foreground and
background basis samples, respectively.

Thus, we can define a discriminative criterion for measuring the
similarity of the test sample $\mathbf{y}$ to the foreground
class: \vspace{-0.26cm}
\begin{equation}
\mathcal{S}(\mathbf{y}) \hspace{-0.06cm} =
\hspace{-0.06cm}\sigma\left[\exp(-\theta_{f}/\gamma_{f}) -
\rho \exp(-\theta_{b}/\gamma_{b})\right],
\label{eq:particle_liki_model} \vspace{-0.26cm}
\end{equation}
where $\gamma_{f}$ and $\gamma_{b}$ are two scaling factors,
$\theta_{f} $ = $ g(\mathbf{x}_{f}^{\ast}; $ $ \mathbf{M},
\mathbf{P}_{f}, \mathbf{y})$, $\theta_{b} =
g(\mathbf{x}_{b}^{\ast}; \mathbf{M}, \mathbf{P}_{b},
\mathbf{y})$,
$\rho$ is a trade-off control factor, and $\sigma[\cdot]$ is the
sigmoid function.
Here, the term $\exp(-\theta_{f}/\gamma_{f})$ reflects the reconstruction
similarity relative to the foreground class, while $\exp(-\theta_{b}/\gamma_{b})$ determines the
similarity with the background class. Greater $\exp(-\theta_{f}/\gamma_{f})$ with a smaller
$\exp(-\theta_{b}/\gamma_{b})$ indicates a stronger confidence for foreground prediction.

During tracking, the similarity score
$\mathcal{S}(\cdot)$ is associated with the observation model
of the particle filter such that
$p(\mathbf{o}_{t}\hspace{-0.1cm}\mid \hspace{-0.1cm}\mathbf{Z}_{t}) \propto \mathcal{S}(\mathbf{o}_{t})$.

Implementing this framework involves three main challenges: i) maintaining a representative pool of foreground and background samples; ii) efficiently updating the solution when foreground or background samples are updated; iii) learning and updating the metric matrix $\mathbf{M}$. These are addressed in the following four sections.

\begin{algorithm}[t]
\begin{minipage}[ctb]{16cm}
\footnotesize
\caption{Metric-weighted linear representation}
\label{alg:online_linear_regression}
\KwIn
{
 The current distance metric matrix $\mathbf{M}$, the basis samples
$\mathbf{P} = (\mathbf{p_{i}})_{i=1}^{N} \in \mathcal{R}^{d\times N}$,
any test sample $\mathbf{y} \in \mathcal{R}^{d\times 1}$.
}
\KwOut
{
 The optimal linear representation solution $\mathbf{x}^{\ast}$ of sample $\mathbf{y}$.
}
\begin{enumerate}\itemsep=-0pt

   \item Build the optimization problem in
 Equ.~\eqref{eq:linear_regression_metric}:
        \[\underset{\mathbf{x}}{\min} \thinspace g(\mathbf{x};
 \mathbf{P}, \mathbf{y})= \underset{\mathbf{x}}{\min} \thinspace
 (\mathbf{y}-\mathbf{P}\mathbf{x})^{T}\mathbf{M}
 (\mathbf{y}-\mathbf{P}\mathbf{x})
         \vspace{-0.15cm}\]
   \item Compute the optimal solution $\mathbf{x}^{\ast} =
 (\mathbf{P}^{T}\mathbf{M}\mathbf{P})^{-1}\mathbf{P}^{T}\mathbf{M}\mathbf{y}$.
 If samples are added to or removed from $\mathbf{P}$,
$(\mathbf{P}^{T}\mathbf{M}\mathbf{P})^{-1}$
         can be efficiently updated in an online manner:
         \begin{itemize} \itemsep=-0pt
         \item Use Equ.~\eqref{eq:incremental} to compute the
 incremental inverse.
         \item Use Equ.~\eqref{eq:decremental} to calculate the
 decremental inverse.
         \item Obtain the replacement inverse based on the incremental and decremantal inverses.
         \end{itemize}
   \item Return the optimal  solution $\mathbf{x}^{\ast}$.
\end{enumerate}
\end{minipage}
\end{algorithm}

\vspace{-0.3cm}
\subsection{Online solution update}

The main computational time of
Equ.~\eqref{eq:regression_optimal_solution} is spent on the calculation
of
$(\mathbf{P}^{T}\mathbf{M}\mathbf{P})^{-1}$.
For computational efficiency, we need to incrementally or decrementally
update the inverse when a sample is added to or removed from $\mathbf{P}$, for a fixed
metric $\mathbf{M}$.
Motivated by this observation,
we design an online update scheme that deals with the following three cases:
1) the basis sample matrix $\mathbf{P} $ is incrementally expanded by one column
such that $\mathbf{P_{n}} = (\mathbf{P} \thickspace \Delta \mathbf{p})$;
2) the basis sample matrix $\mathbf{P} $ is decrementally reduced by one column
such that $\mathbf{P_{o}}$ is the reduced matrix after removing the
$i$-th column of $\mathbf{P}$; and 3) one column of $\mathbf{P}$ is replaced by
a new sample.

{\bf Incremental case.} Let $\mathbf{P_{n}} = (\mathbf{P} \thickspace \Delta \mathbf{p})$
denote the expanded matrix of $\mathbf{P}$.
Clearly, the following relation holds:
\[
(\mathbf{P_{n}})^{T}\mathbf{M}\mathbf{P_{n}}
=\left(
\begin{matrix}
\mathbf{P}^{T}\mathbf{M}\mathbf{P} & \mathbf{P}^{T}\mathbf{M} \Delta
\mathbf{p}\\
(\Delta \mathbf{p})^{T}\mathbf{M} \mathbf{P} & (\Delta
\mathbf{p})^{T}\mathbf{M} \Delta \mathbf{p}
\end{matrix}
\right).
\]
For simplicity, let $\mathbf{H}=(\mathbf{P}^{T}\mathbf{M}\mathbf{P})^{-1}$,
$\mathbf{c} = \mathbf{P}^{T}\mathbf{M} \Delta \mathbf{p}$, and $r =
(\Delta \mathbf{p})^{T}\mathbf{M} \Delta \mathbf{p}$.
Since $\mathbf{M}$ is a symmetric matrix,
$\mathbf{c}^{T} = (\Delta \mathbf{p})^{T}\mathbf{M} \mathbf{P}$.
The inverse of
 $(\mathbf{P_{n}})^{T}\mathbf{M}\mathbf{P_{n}}$ can be computed
as~\cite{jennings1992matrix}:
\begin{equation}
((\mathbf{P_{n}})^{T}\mathbf{M}\mathbf{P_{n}})^{-1}
=\left(
\begin{matrix}
\mathbf{H}+\frac{\mathbf{H}\mathbf{c}\mathbf{c}^{T}\mathbf{H}}{r-\mathbf{c}^{T}\mathbf{H}\mathbf{c}}
& -\frac{\mathbf{H}\mathbf{c}}{r-\mathbf{c}^{T}\mathbf{H}\mathbf{c}}\\
-\frac{\mathbf{c}^{T}\mathbf{H}}{r-\mathbf{c}^{T}\mathbf{H}\mathbf{c}}
& \frac{1}{r-\mathbf{c}^{T}\mathbf{H}\mathbf{c}}
\end{matrix}
\right).
\label{eq:incremental}
\end{equation}

{\bf Decremental case.} Let
$\mathbf{P_{o}}$ denote the reduced matrix of $\mathbf{P}$ after
removing the $i$-th column such that $1\leq i \leq N$.
Based on~\cite{jennings1992matrix}, the inverse of
$(\mathbf{P_{o}})^{T}\mathbf{M}\mathbf{P_{o}}$ can be computed
as: \vspace{-0.082cm}
\begin{equation}
((\mathbf{P_{o}})^{T}\mathbf{M}\mathbf{P_{o}})^{-1}
=\mathbf{H}(\mathcal{I}_{i},\mathcal{I}_{i})- \frac{\mathbf{H}(\mathcal{I}_{i},
i)\mathbf{H}(i,\mathcal{I}_{i})}{\mathbf{H}(i,i)},\\
\label{eq:decremental} \vspace{-0.082cm}
\end{equation}
where $\mathcal{I}_{i}=\{1, 2, \ldots, N\}\backslash{\{i\}}$ stands for
the index set except $i$.

{\bf Replacement case.} For adapting to object appearance changes, it is necessary for
trackers to replace an old sample from the buffer with a new
sample.
Sample replacement is implemented in two
stages: 1) old sample removal; and 2) new sample addition, corresponding to the decremental and incremental cases described above.

The complete optimization procedure including online sample update
is summarized in
Algorithm~\ref{alg:online_linear_regression}.

\begin{algorithm}[t]
\begin{minipage}[ctb]{16cm}
\footnotesize
\caption{Online metric learning using proximity comparison}
\label{alg:online_metric_learning}
\KwIn
{
 The current distance metric matrix $\mathbf{M}^{k}$ and a new
triplet $(\mathbf{p}, \mathbf{p}^{+}, \mathbf{p}^{-})$.
}

\KwOut
{
 The updated distance metric matrix $\mathbf{M}^{k+1}$.
}
\begin{enumerate}\itemsep=0pt
  \item Calculate $\mathbf{a}_{+} = \mathbf{p} - \mathbf{p}^{+}$ and
$\mathbf{a}_{-} = \mathbf{p} - \mathbf{p}^{-}$
  \item Compute the optimal step length
  $\eta \hspace{-0.08cm}= \hspace{-0.08cm}\min\left\{C, \max \left\{0,\frac{1 + \mathbf{a}_{+}^{T}\mathbf{M}^{k}\mathbf{a}_{+} -
\mathbf{a}_{-}^{T}\mathbf{M}^{k}\mathbf{a}_{-}}{2\mathbf{a}_{-}^{T}\mathbf{U}\mathbf{a}_{-} \hspace{-0.02cm}- \hspace{-0.02cm}
2\mathbf{a}_{+}^{T}\mathbf{U}\mathbf{a}_{+} \hspace{-0.02cm}- \hspace{-0.02cm}\|\mathbf{U}\|_{F}^{2}}\right\}\hspace{-0.1cm}\right\}$
  with $\mathbf{U} = \mathbf{a}_{-}\mathbf{a}_{-}^{T} -
\mathbf{a}_{+}\mathbf{a}_{+}^{T}$.
  \item $\mathbf{M}^{k+1} \leftarrow \mathbf{M}^{k} + \eta
(\mathbf{a}_{-}\mathbf{a}_{-}^{T} -
\mathbf{a}_{+}\mathbf{a}_{+}^{T})$.
\end{enumerate}
\end{minipage}
\end{algorithm}

\begin{algorithm}[t]
\begin{minipage}[ctb]{16cm}
\footnotesize
\caption{Online structured distance metric learning}
\label{alg:online_metric_learning_structure}
\KwIn
{
 The current distance metric matrix $\mathbf{M}^{k}$,
the current tracking bounding box $\mathbf{R}_{t}$ and its
associated feature vector $\mathbf{p}_{t}$.
}

\KwOut
{
 The updated distance metric matrix $\mathbf{M}^{k+1}$.
}
\begin{enumerate}\itemsep=-0pt
\item Most violated constraint set $\mathcal{P}= \emptyset$
\item Sample a number of
         bounding boxes around $\mathbf{R}_{t}$ to construct a constraint set for the optimization problem~\eqref{eq:online_optimization_function_structure}.
\item Compute the most violated constraint $(\mu, \nu)$
as shown in Equ.~\eqref{eq:most_violated_constraints}.
\item Add $(\mathbf{p}_{t}^{\mu}\circ \mathbf{R}_{t}^{\mu}, \mathbf{p}_{t}^{\nu}\circ \mathbf{R}_{t}^{\nu})$ to
the most violated constraint set such that $\mathcal{P} \leftarrow \mathcal{P} \bigcup \{(\mathbf{p}_{t}^{\mu}\circ \mathbf{R}_{t}^{\mu}, \mathbf{p}_{t}^{\nu}\circ \mathbf{R}_{t}^{\nu})\}$.
\item Solve the optimization problem~\eqref{eq:eta_linear_solver} to obtain the optimal step length vector $\bm{\eta}^{\ast}$.
\item Compute the updated metric matrix $\mathbf{M}$ according to Sec.~\ref{sec:oml}.
\item Repeat Steps 3--6 until convergence (restricted by a maximum iteration number).
\item Return $\mathbf{M}^{k+1} \leftarrow \mathbf{M}$.
\end{enumerate}
\end{minipage}
\end{algorithm}

\vspace{-0.25cm}
\subsection{Online proximity based metric learning}
\label{sec:oml}

Having introduced the metric-weighted linear representation  in Sec.~\ref{sec:malso}, we now address the key issue of calculating the metric matrix $\mathbf{M}$.
$\bf M$ should ideally be learned from the visual data, and should be dynamically updated as conditions change throughout a video sequence.

\subsubsection{\bf Triplet-based ranking loss}
   Suppose that we have a set of sample triplets
   $\{(\mathbf{p}, \mathbf{p}^{+},\mathbf{p}^{-})\}$
    with
    $ \mathbf{p},
    \mathbf{p}^{+},
    \mathbf{p}^{-} \in \mathcal{R}^{d}
    $.  These triplets encode the proximity comparison information.
   In each triplet, the distance between $  \bf p $
   and $ {\bf p}^+  $ should be smaller than the distance between
   $ \bf p $ and $ {\bf p}^-  $.

The Mahalanobis distance under  metric $ \bf M$ is defined as:
\vspace{-0.153cm}
\begin{equation}
D_{\mathbf{M}}(\mathbf{p}, \mathbf{q}) = (\mathbf{p}
-\mathbf{q})^{T}\mathbf{M}(\mathbf{p}-\mathbf{q}). \vspace{-0.126cm}
\end{equation}
Clearly, $ \bf M $ must be a symmetric and positive semidefinite
matrix. It is equivalent to learn a projection matrix $ \bf L $ such
that $ {\bf M} = {\bf L} {\bf L}^T$.
In practice,
   we generate the triplets set as:
   $ \bp $ and $ \bp^+$ belong to the same class
   and $ \bp $ and $ \bp^-$ belong to different classes.
So we want the constraints
$ D_\bM ( \bp, \bp^+ ) < D_\bM ( \bp, \bp^- ) $ to be satisfied as
well as possible.
By putting it into a large-margin learning framework, and using the
soft-margin hinge loss,
the loss function for each triplet is: \vspace{-0.016cm}
\begin{equation}
l_{\mathbf{M}}(\mathbf{p}, \mathbf{p}^{+}, \mathbf{p}^{-}) =
       \max\{0, 1 + D_{\mathbf{M}}(\mathbf{p}, \mathbf{p}^{+}) -
D_{\mathbf{M}}(\mathbf{p}, \mathbf{p}^{-})\}. \vspace{-0.01cm}
\label{eq:local_hinge_loss}
\end{equation}

\subsubsection{\bf Large-margin metric learning}
To obtain the optimal distance metric matrix $\mathbf{M}$, we need to
minimize the global loss $L_{\mathbf{M}}$
that takes the sum of hinge losses~\eqref{eq:local_hinge_loss} over
all possible triplets from the training
set: \vspace{-0.21cm}
\begin{equation}
L_{\mathbf{M}} = \underset{(\mathbf{p}, \mathbf{p}^{+},
\mathbf{p}^{-})\in \mathcal{Q}}{\sum} l_{\mathbf{M}}(\mathbf{p},
\mathbf{p}^{+}, \mathbf{p}^{-}), \vspace{-0.21cm}
\label{eq:total_loss}
\end{equation}
where $\mathcal{Q}$ is the triplet set.
   To sequentially optimize the above objective function $L_{\mathbf{M}}$
   in an online fashion,
   we design an iterative algorithm to solve the following
   convex problem:
\begin{equation}
\begin{array}{l}
\mathbf{M}^{k+1} = \underset{\mathbf{M}}{\arg\min}
\frac{1}{2}\|\mathbf{M}-\mathbf{M}^{k}\|^{2}_{F} + C\xi,\\
\mbox{s.t.} \thickspace  D_{\mathbf{M}}(\mathbf{p}, \mathbf{p}^{-})
- D_{\mathbf{M}}(\mathbf{p}, \mathbf{p}^{+})  \geq 1 -  \xi,
\thickspace \xi \geq  0,
\end{array}
\label{eq:online_optimization_function}
\end{equation}
where $\|\cdot\|_{F}$ denotes the Frobenius norm, $\xi$ is a slack
variable, and $C$ is a positive factor controlling
the trade-off between the smoothness term
$\frac{1}{2}\|\mathbf{M}-\mathbf{M}^{k}\|^{2}_{F}$ and the loss  term
$\xi$.
Following the passive-aggressive mechanism used
in~\cite{chechik2010large, crammer2006online}, we only update the
metric matrix $\mathbf{M}$ when $l_{\mathbf{M}}(\mathbf{p},
\mathbf{p}^{+}, \mathbf{p}^{-})>0$.

\subsubsection{\bf Optimization of ${\mathbf M}$}
We optimize the function in Equ.~\eqref{eq:online_optimization_function}
with Lagrangian regularization:\vspace{-0.12cm}
\begin{equation}
\begin{array}{l}
\mathcal{L}(\mathbf{M}, \eta, \xi, \beta) =
\frac{1}{2}\|\mathbf{M}-\mathbf{M}^{k}\|^{2}_{F} + C\xi - \beta \xi
+ \eta(1-\xi
+
D_{\mathbf{M}}(\mathbf{p},
\mathbf{p}^{+})-D_{\mathbf{M}}(\mathbf{p}, \mathbf{p}^{-})),
\end{array}
\label{eq:Lagrange_loss} \vspace{-0.12cm}
\end{equation}
where $\eta\geq 0$ and $\beta \geq 0$ are Lagrange multipliers.
The optimization procedure is carried out in the following two alternating
steps.

\begin{itemize}
\item {\it Update $\mathbf{M}$}.
By setting
$ \frac{\partial\mathcal{L}(\mathbf{M}, \eta, \xi, \beta)}{\partial
\mathbf{M}} = 0$, we arrive at the update rule
\begin{equation}
\mathbf{M}^{k+1} = \mathbf{M}^{k} + \eta \mathbf{U}
\label{eq:eq:metric_update} \vspace{-0.12cm}
\end{equation}
where
$\mathbf{U} = \mathbf{a}_{-}\mathbf{a}_{-}^{T} - \mathbf{a}_{+}\mathbf{a}_{+}^{T}$ and $\mathbf{a}_{+} = \mathbf{p} - \mathbf{p}^{+}$,
$\mathbf{a}_{-} = \mathbf{p} - \mathbf{p}^{-}$.

\item {\it Update $\eta$}. Subsequently, we take the derivative of the
Lagrangian~\eqref{eq:Lagrange_loss} w.r.t. $\eta$ and set it
to zero, leading to the update rule:
\begin{equation}
\hspace{-0.0cm}
\eta = \min\left\{C, \max \left\{0,\frac{1 + \mathbf{a}_{+}^{T}\mathbf{M}^{k}\mathbf{a}_{+} -
\mathbf{a}_{-}^{T}\mathbf{M}^{k}\mathbf{a}_{-}}{2\mathbf{a}_{-}^{T}\mathbf{U}\mathbf{a}_{-} - 2\mathbf{a}_{+}^{T}\mathbf{U}\mathbf{a}_{+} - \|\mathbf{U}\|_{F}^{2}}\right\}\right\}
\label{eq:eta_final}
\end{equation}

\end{itemize}
The full derivation of each step can be found in the supplementary file (as shown in Sec.~\ref{sec:oml_supp}).
The complete procedure of online distance metric learning is
summarized in Algorithm~\ref{alg:online_metric_learning}.

\subsubsection{\bf  Online matrix inverse update}
When updated according to
Algorithm~\ref{alg:online_metric_learning}, $\mathbf{M}$ is modified
by rank-one
additions
such that $\mathbf{M}\longleftarrow \mathbf{M} + \eta
(\mathbf{a}_{-}\mathbf{a}_{-}^{T} -
\mathbf{a}_{+}\mathbf{a}_{+}^{T})$ where $\mathbf{a}_{+} = \mathbf{p} - \mathbf{p}^{+}$ and
$\mathbf{a}_{-} = \mathbf{p} - \mathbf{p}^{-}$ are
two vectors (defined in Equ.~\eqref{eq:eq:metric_update}) for triplet construction, and
$\eta$ is a step-size factor (defined in Equ.~\eqref{eq:eta_final}).
As a result, the original $\mathbf{P}^{T}\mathbf{M}\mathbf{P}$
becomes $\mathbf{P}^{T}\mathbf{M}\mathbf{P}
+ (\eta\mathbf{P}^{T}\mathbf{a}_{-})(\mathbf{P}^{T}\mathbf{a}_{-})^{T}
+  (-\eta\mathbf{P}^{T}\mathbf{a}_{+})(\mathbf{P}^{T}\mathbf{a}_{+})^{T}$.
When $\mathbf{M}$ is modified by a rank-one addition, the inverse of
$\mathbf{P}^{T}\mathbf{M}\mathbf{P}$ can be updated
according to the theory of~\cite{householder1964theory,powell1969theorem}:
\begin{equation}
(\mathbf{J}+\mathbf{u}\mathbf{v}^{T})^{-1} = \mathbf{J}^{-1} -
\frac{\mathbf{J}^{-1}\mathbf{u}\mathbf{v}^{T}\mathbf{J}^{-1}}{1+\mathbf{v}^{T}\mathbf{J}^{-1}\mathbf{u}}.
\label{eq:rank_one_update} \vspace{-0.02cm}
\end{equation}
Here, $\mathbf{J}=\mathbf{P}^{T}\mathbf{M}\mathbf{P}$, $\mathbf{u}=\eta\mathbf{P}^{T}\mathbf{a}_{-}$ (or $\mathbf{u}=-\eta\mathbf{P}^{T}\mathbf{a}_{+}$),
and $\mathbf{v}=\mathbf{P}^{T}\mathbf{a}_{-}$ (or $\mathbf{v}=\mathbf{P}^{T}\mathbf{a}_{+}$).

\vspace{-0.2cm}
\subsection{Online structured metric learning}

Metric learning based on sample proximity comparisons leads to an efficient online learning algorithm, but requires pre-defined sets of positive and negative samples. In tracking, these usually correspond to target/non-target image patches. The boundary between these classes typically occurs where sample overlap with the target drops below a threshold, but this can be difficult to evaluate exactly and thus introduces some noise into the algorithm.

In this Section, we replace the proximity based metric learning module with an online structured metric learning method for learning  $\bf M$. The main advantage of this method is that it directly learns the metric from measured sample overlap, and therefore does not require the separation of samples into positive and negative classes.

{\bf Structured ranking}
Let $\mathbf{p}_{t}$ and $\mathbf{p}_{t}^{i}$
 denote two feature vectors extracted from two image patches,
 which are respectively associated
 with two bounding boxes
 $\mathbf{R}_{t}$ and $\mathbf{R}_{t}^{i}$ from frame $t$.
 Without loss of generality, let us assume that
 $\mathbf{R}_{t}$  corresponds to the
 bounding box obtained by the current tracker
 while $\mathbf{R}_{t}^{i}$ is associated with
 a bounding box from the area surrounding $\mathbf{R}_{t}$.
 As in~\cite{harestruck_iccv2011}, the structural affinity relationship
 between $\mathbf{p}_{t}$ and $\mathbf{p}_{t}^{i}$
 is captured by the following overlap function: $s^{o}_{t}(\mathbf{R}_{t}, \mathbf{R}_{t}^{i}) = \frac{\mathbf{R}_{t}\bigcap\mathbf{R}_{t}^{i}}{\mathbf{R}_{t}\bigcup\mathbf{R}_{t}^{i}}.$
 As a result, we define the following optimization problem for structured metric learning:  \vspace{-0.18cm}
 \begin{equation}
 \begin{array}{l}
 \mathbf{M}^{k+1} = \underset{\mathbf{M}}{\arg\min}\thinspace
 \frac{1}{2}\|\mathbf{M}-\mathbf{M}^{k}\|^{2}_{F} + C\xi,\\
 \mbox{s.t.}
 \thickspace D_{\mathbf{M}}(\mathbf{p}_{t}, \mathbf{p}_{t}^{j}) - D_{\mathbf{M}}(\mathbf{p}_{t}, \mathbf{p}_{t}^{i})
  \geq
 \Delta_{ij}  -  \xi, \forall i,j\\
 \end{array}
 \label{eq:online_optimization_function_structure} \vspace{-0.18cm}
 \end{equation}
 where $\xi \geq  0$
 and  $\Delta_{ij} = s^{o}_{t}(\mathbf{R}_{t}, \mathbf{R}_{t}^{i}) - s^{o}_{t}(\mathbf{R}_{t}, \mathbf{R}_{t}^{j})$.
 Clearly, the number of constraints in the optimization problem~\eqref{eq:online_optimization_function_structure}
 is
 exponentially large or even
 infinite, making it difficult to optimize.
 Our approach to this optimization problem differs
 from~\cite{harestruck_iccv2011} in four main aspects:
 i) our approach aims to learn a distance metric while
 \cite{harestruck_iccv2011} seeks  a SVM classifier;
 ii)
 we optimize an online max-margin objective function
 while \cite{harestruck_iccv2011} solves a batch-mode
 optimization problem; iii) our optimization problem involves nonlinear constraints on
 triplet-based Mahalanobis distance differences, while the optimization problem in~\cite{harestruck_iccv2011}
 comprises linear constraints on doublet-based SVM classification score differences;
 and iv) our approach directly solves the primal optimization problem while \cite{harestruck_iccv2011}
 optimizes the dual problem.

{\bf Structured optimization}
 Inspired by the cutting-plane method,
 we iteratively construct a constraint set (denoted as $\mathcal{P}$)
 containing the most violated constraints
 for the optimization problem~\eqref{eq:online_optimization_function_structure}.
 In our case,
 the most violated constraint is selected according to
 the following criterion:  \vspace{-0.18cm}
 \begin{equation}
 (\mu, \nu)
 =
 \underset{
 (i, j)}{\arg\max} \hspace{0.2cm}
 \Delta_{ij} + D_{\mathbf{M}}(\mathbf{p}_{t}, \mathbf{p}_{t}^{i}) - D_{\mathbf{M}}(\mathbf{p}_{t}, \mathbf{p}_{t}^{j}),
 \label{eq:most_violated_constraints}  \vspace{-0.18cm}
 \end{equation}
 For notational simplicity, let $l_{\mathbf{M}}(\mathbf{p}_{t}\circ \mathbf{R}_{t}, \mathbf{p}_{t}^{j} \circ \mathbf{R}_{t}^{j},\mathbf{p}_{t}^{i} \circ \mathbf{R}_{t}^{i} )$ denote the loss term
 $\Delta_{ij} + D_{\mathbf{M}}(\mathbf{p}_{t}, \mathbf{p}_{t}^{i}) - D_{\mathbf{M}}(\mathbf{p}_{t}, \mathbf{p}_{t}^{j})$.
 Note that the violated constraints generated from~\eqref{eq:most_violated_constraints}
 are used if and only if $l_{\mathbf{M}}(\mathbf{p}_{t}\circ \mathbf{R}_{t}, \mathbf{p}_{t}^{j} \circ \mathbf{R}_{t}^{j},\mathbf{p}_{t}^{i} \circ \mathbf{R}_{t}^{i} )$ is greater than zero.
 Subsequently, we add the most violated constraint
 to the optimization problem~\eqref{eq:online_optimization_function_structure} in an iterative manner,
 that is, $\mathcal{P} \leftarrow \mathcal{P} \bigcup \{(\mathbf{p}_{t}^{\mu}\circ \mathbf{R}_{t}^{\mu}, \mathbf{p}_{t}^{\nu}\circ \mathbf{R}_{t}^{\nu})\}$.
 The corresponding Lagrangian is formulated as:  \vspace{-0.18cm}
 \begin{equation}
 \begin{array}{l}
 \mathcal{L} =
 \frac{1}{2}\|\mathbf{M}-\mathbf{M}^{k}\|^{2}_{F} +  (C - \beta)\xi
 + \sum_{\ell=1}^{|\mathcal{P}|}\eta_{\ell}[\Delta_{\mu_{\ell}\nu_{\ell}} - \xi
 +D_{\mathbf{M}}(\mathbf{p}_{t}, \mathbf{p}_{t}^{\mu_{\ell}}) - D_{\mathbf{M}}(\mathbf{p}_{t}, \mathbf{p}_{t}^{\nu_{\ell}})],
 \end{array}
 \label{eq:Lagrange_loss_structure}  \vspace{-0.18cm}
 \end{equation}
 where $\beta \geq 0$ and $\eta_{\ell}\geq 0$ are Lagrange multipliers.
 The optimization procedure is once again carried out in two alternating
steps:
 
 \begin{itemize}
 \item {\it Update $\mathbf{M}$.} 
 By setting $\frac{\partial\mathcal{L}}{\partial
 \mathbf{M}}$ to zero, we obtain an updated $\mathbf{M}$ defined as:
\begin{equation}
 \mathbf{M}^{k+1} = \mathbf{M}^{k} + \sum_{\ell=1}^{|\mathcal{P}|}\eta_{\ell}\mathbf{U}_{\ell}
\end{equation}
where 
$\mathbf{U}_{\ell} = \mathbf{a}_{t}^{\nu_{\ell}}(\mathbf{a}_{t}^{\nu_{\ell}})^{\T} - \mathbf{a}_{t}^{\mu_{\ell}}(\mathbf{a}_{t}^{\mu_{\ell}})^{\T}$, and $\mathbf{a}_{t}^{n}$ denotes $\mathbf{p}_{t} - \mathbf{p}_{t}^{n}$.

 \item {\it Update $\eta_{\ell}$.}
 To obtain the optimal solution for all Lagrange multipliers $\eta_{\ell}$, we set $\frac{\partial \mathcal{L}}{\partial \eta_{\ell}} = 0$ for all $\ell$, leading to the following optimization problem:  
 \begin{equation}
 \begin{array}{l}
 \bm{\eta}^{\ast} = \underset{\bm{\eta}}{\arg \min} \thinspace \|\mathbf{B}\bm{\eta} - \mathbf{f}\|_{1}, \hspace{0.15cm}
 \mbox{s.t.} \thinspace \bm{\eta} \succeq 0; \mathbf{1}^{\T}\bm{\eta} \leq C.
 \end{array}
 \label{eq:eta_linear_solver}  \vspace{-0.25cm}
 \end{equation}
where 
where $\bm{\eta} = (\eta_{1}, \eta_{2}, \ldots, \eta_{|\mathcal{P}|})^{\T}$,
 $\mathbf{f} = (f_{1}, f_{2}, \ldots, f_{|\mathcal{P}|})$ with
 $f_{\ell}$ being $-[\Delta_{\mu_{\ell}\nu_{\ell}} + (\mathbf{a}_{t}^{\mu_{\ell}})^{\T}\mathbf{M}^{k}\mathbf{a}_{t}^{\mu_{\ell}} - (\mathbf{a}_{t}^{\nu_{\ell}})^{\T}\mathbf{M}^{k}\mathbf{a}_{t}^{\nu_{\ell}}]$,
 and $\mathbf{B} = (b_{\ell m})_{|\mathcal{P}|\times |\mathcal{P}|}$ with
 $b_{\ell m}$ being
 $\mathbf{1}^{\T}(\mathbf{U}_{\ell} \circ \mathbf{U}_{m})\mathbf{1} + (\mathbf{a}_{t}^{\mu_{m}})^{\T}\mathbf{U}_{m}\mathbf{a}_{t}^{\mu_{m}}
 - (\mathbf{a}_{t}^{\nu_{m}})^{\T}\mathbf{U}_{m}\mathbf{a}_{t}^{\nu_{m}}
 + (\mathbf{a}_{t}^{\mu_{m}})^{\T}\mathbf{U}_{\ell}\mathbf{a}_{t}^{\mu_{m}}
 - (\mathbf{a}_{t}^{\nu_{m}})^{\T}\mathbf{U}_{\ell}\mathbf{a}_{t}^{\nu_{m}}$.
\end{itemize}

 As before, the optimal $\mathbf{M}$ is updated
 as a sequence of rank-one
 additions: $\mathbf{M}\longleftarrow \mathbf{M} + \eta_{\ell}[\mathbf{a}_{t}^{\nu_{\ell}}(\mathbf{a}_{t}^{\nu_{\ell}})^{\T} - \mathbf{a}_{t}^{\mu_{\ell}}(\mathbf{a}_{t}^{\mu_{\ell}})^{\T}]$.
 As a result, the original $\mathbf{P}^{\T}\mathbf{M}\mathbf{P}$
 becomes $\mathbf{P}^{\T}\mathbf{M}\mathbf{P}
 + (\eta_{\ell}\mathbf{P}^{\T}\mathbf{a}_{t}^{\nu_{\ell}})(\mathbf{P}^{\T}\mathbf{a}_{t}^{\nu_{\ell}})^{\T}
 +  (-\eta_{\ell}\mathbf{P}^{\T}\mathbf{a}_{t}^{\mu_{\ell}})(\mathbf{P}^{\T}\mathbf{a}_{t}^{\mu_{\ell}})^{\T}$.
 When $\mathbf{M}$ is modified by a rank-one addition, the inverse of
 $\mathbf{P}^{\T}\mathbf{M}\mathbf{P}$ can again be updated
 according to the theory of~\cite{householder1964theory,powell1969theorem}.

Algorithm~\ref{alg:online_metric_learning_structure}
outlines the procedure of online structured metric learning.
The complete derivation of these results is given in the supplementary file (as shown in Sec.~\ref{sec:structured_metric_learning_supp}).

\begin{figure*}[t]
 \vspace{-0.3cm}
\centering
\includegraphics[scale=0.6]{./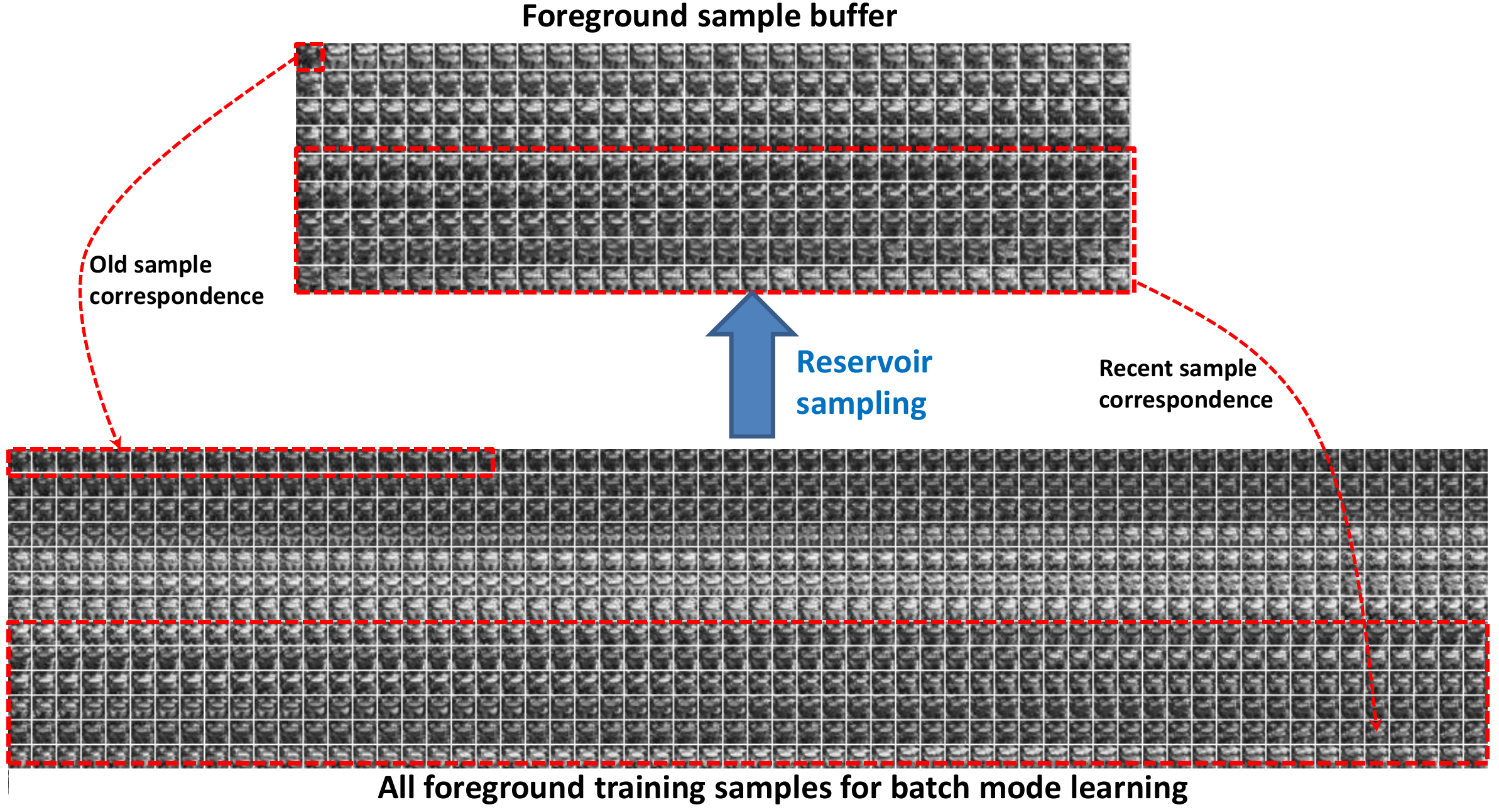}
\vspace{-0.0cm}
 \caption{Intuitive illustration of time-weighted reservoir sampling.
 The upper part corresponds to the foreground samples stored in the foreground buffer during tracking,
 and the lower part is associated with all the foreground samples collected in the entire
 tracking process. Clearly, time-weighted reservoir sampling encourages more recent samples to
 appear in the buffer, and meanwhile retain some old samples with a long lifespan.}
\label{fig:weighted_sample_example} \vspace{-0.5cm}
\end{figure*}

\subsection{Time-weighted reservoir sampling}
\label{sec:twrs}

In order to construct a metric-weighted
linear representation (referred to in Equ.~\eqref{eq:linear_regression_metric}) for visual tracking,
we need to learn a discriminative Mahalanobis metric matrix $\mathbf{M}$ by minimizing the total
ranking losses (referred to in Equ.~\eqref{eq:total_loss}) over a set of training triplets $\mathcal{Q}=  \{(\mathbf{p},
\mathbf{p}^{+}, \mathbf{p}^{-})\}$, which are generated from the training data (collected incrementally frame by frame) by
proximity comparison.
As tracking proceeds, the amount of collected training data increases, which  leads to an exponential growth
of the triplet set size (i.e., $|\mathcal{Q}|$). As a result, the optimization required for metric learning
(referred to in Equ.~\eqref{eq:total_loss}) becomes computationally intractable. %
To address this issue, a practical solution is to maintain a limited-sized buffer to store only selected
training triplets. However, using the training triplets from the limited-sized buffer (instead of all training data) for metric learning usually leads to
discriminative information loss. Therefore, how to effectively reduce such information loss is our focus in this work.

Inspired by the idea of reservoir sampling~\cite{vitter1985random,
zhaoICML2011, Kolonko04sequentialreservoir, efraimidis2006weighted} (i.e., sequential
random sampling for statistical learning),
we propose a sampling scheme to maintain and update the limited-sized buffer
while preserving the discriminative information on the ranking losses as much as possible.
Moreover, since the training data for tracking have to be collected frame by frame,
the limited-sized buffer needs to be updated sequentially.
Therefore, we seek a sequential sampling mechanism to online maintain and update the buffer,
in such a way that the ranking losses for metric learning is as close as possible to those using all the received training samples. Reservoir sampling
is one approach to this problem.

The classical version of
reservoir sampling simulates the process of
uniform random sampling~\cite{vitter1985random, zhaoICML2011} from a large population of sequential samples.
However, this is inappropriate for visual
tracking because the samples are dynamically
distributed as time progresses. Usually, recent samples should have
a greater influence on the current tracking process than those
appearing a long time ago. Therefore, larger weights should be assigned
to recent samples
while smaller weights should be attached to old samples.
Based on weighted reservoir sampling~\cite{Kolonko04sequentialreservoir,efraimidis2006weighted},
our sample scheme further takes into account the time-varying properties of
visual tracking, by incorporating time-related weight information into the weighted
reservoir sampling process.

More specifically, we design a time-weighted reservoir
sampling (TWRS)
method for randomly drawing samples according to their
time-varying properties, as listed in
Algorithm~\ref{alg:Weighted_reserior_sampling}.
In the algorithm, each new sample is associated with
a time-related weight $w = q^{\mathbb{I}_{}}$
with $\mathbb{I}_{}$ being the frame index  number corresponding to
$\mathbf{p}_{}$ and $q > 1$ being fixed. Using this time-related weight,
a random key for indexing the new sample is
generated by $k_{}=u_{}^{1/w_{}}$ with $u_{}\sim rand(0,1)$.
After that, a weighted sampling procedure~\cite{efraimidis2006weighted}
is adopted to update the existing
foreground or background sample buffer.
Fig.~\ref{fig:weighted_sample_example} gives an
intuitive illustration of the way that TWRS retains useful old samples
while keeping sample adaptability to recent changes.
{\em Note that it is the first time that
time-weighted reservoir sampling is used for
visual tracking.}

As described in the supplementary file, Theorem~\ref{theo:Weighted_reservoir_sampling_supp} states the relationship between
the ranking losses respectively from the reservoir sampling-based buffer and all training data seen to date.
This theorem shows that the sum of the ranking losses $\{l_{\mathbf{M}}(\mathbf{p},\mathbf{p}^{+}, \mathbf{p}^{-})\}$ over the foreground and background buffers
is probabilistically close to the sum of
the empirical ranking losses $\{\l_{\mathbf{M}}(\mathbf{p}, \mathbf{p}^{c_{+}}_{i}, \mathbf{p}^{c_{-}}_{j})\}$
over all the received training data.

Therefore,
statistical learning based on our reservoir sampling method with limited-sized sample buffers can effectively approximate statistical learning
using all the received training samples.
In our case,  reservoir sampling is used to maintain and update the foreground and background basis samples for discriminative distance metric learning.
Hence, the TWRS method extends the reservoir sampling
method~\cite{efraimidis2006weighted} to
cope with the online metric learning
problem using two sample buffers. It is a version of reservoir sampling tailored for online triplet-based
metric learning during visual tracking.

The key benefit of TWRS is to effectively generate and maintain
the limited-sized sample buffers, which encourage the recent samples
and meanwhile retain some old samples with a long lifespan.
In this way, online metric learning using the
limited-sized sample buffers approximates that of batch-mode learning (i.e., retaining all the samples during tracking),
which balances the effectiveness and efficiency.
The key difference from other online approaches (e.g., using forgetting factors)
is that the total learning costs for TWRS are derived from
the sequentially generated limited-sized sample buffers (retaining the old and recent samples simultaneously).
In contrast, the online learning approaches using forgetting factors
only store the recent samples and discard
the previously generated samples.
The costs of using the previously generated samples decay
recursively using a forgetting factor.
As a result, the online learning approaches using forgetting factors
may suffer from the model drift problem.

\setlength{\textfloatsep}{8pt}
\begin{algorithm}[t]
\begin{minipage}[ctb]{15cm}
\footnotesize
\caption{Time-weighted reservoir sampling}
\label{alg:Weighted_reserior_sampling}
\KwIn
{
 \hspace{-0.2cm} Current buffers $\mathcal{B}_{f}$ and  $\mathcal{B}_{b}$ together with their corresponding keys, a new
training sample $\mathbf{p}_{}$, maximum buffer size $\Omega$,
time-weighted factor $q$.
}
\KwOut
{
 \hspace{-0.1cm}Updated buffers $\mathcal{B}_{f}$ and  $\mathcal{B}_{b}$ together with their corresponding keys. \vspace{-0cm}
}
\begin{enumerate}\itemsep=-0pt
\item Obtain the samples $\mathbf{p}^{\ast}_{f}\in \mathcal{B}_{f}$ and $\mathbf{p}^{\ast}_{b} \in \mathcal{B}_{b}$ with the
smallest \mbox{\hspace{0.08cm}keys $k^{\ast}_{f}$ and $k^{\ast}_{b}$ from $\mathcal{B}_{f}$ and $\mathcal{B}_{b}$, respectively}.
  \item Compute the time-related weight $w = q^{\mathbb{I}_{}}$
with $\mathbb{I}_{}$ being the corresponding frame index  number of
$\mathbf{p}_{}$. \vspace{-0.05cm}
  \item Calculate a key $k_{}=u_{}^{1/w_{}}$ where $u_{}\sim rand(0,1)$.
  \item
  \textbf{Case:} $\mathbf{p}_{}\in$ foreground.
                 If $|\mathcal{B}_{f}|<\Omega$, $\mathcal{B}_{f}=\mathcal{B}_{f}\bigcup \{\mathbf{p}_{}\}$;
                 otherwise, $\mathbf{p}^{\ast}_{f}$ is replaced with $\mathbf{p}_{}$ when \mbox{\hspace{0.08cm}$k_{}>k^{\ast}_{f}$}.\\
        \textbf{Case:} $\mathbf{p}_{}\in$ background. If $|\mathcal{B}_{b}|<\Omega$, $\mathcal{B}_{b}=\mathcal{B}_{b}\bigcup \{\mathbf{p}_{}\}$;
                 otherwise, $\mathbf{p}^{\ast}_{b}$ is replaced with $\mathbf{p}_{}$ when $k_{}>k^{\ast}_{b}$.
  \item Return $\mathcal{B}_{f}$ and  $\mathcal{B}_{b}$ together with their corresponding keys.
\end{enumerate}
\end{minipage}
\end{algorithm}

\section{Experimental evaluation of our baseline tracker}

\subsection{Experimental configurations}

{\it 1) Implementation details.}
For the sake of computational efficiency, we simply consider the
object state information in 2D translation and
scaling in the particle filtering module, where
the corresponding variance parameters are set to (10, 10, 0.1).
The number of particles is set to 200.
In practice, the video sequences used for experiments
consists of the targets with relatively slow motion and progressive
scale variation in most cases. As a result,
such settings for particle number and translational variances
are adequate for robust visual tracking.
Of course, in the case of fast motion or drastic object motion
the parameter settings with larger variance parameters or
more particles should be adopted.
For each particle, there is a corresponding image region
represented as a HOG feature descriptor (\cite{Dalal-Triggs-CVPR2005}) with $3\times 3$ cells
(each cell is represented by a 9-dimensional histogram vector)
in the five spatial block-division modes (~\cite{lixi-cvpr2008}),
resulting in
a 405-dimensional feature vector for the image region.
The number of triplets used for online metric learning
is chosen as 500.
The maximum buffer size $\Omega$ and time-weighted factor $q$ in
Algorithm~\ref{alg:Weighted_reserior_sampling}
is set as 300 and 1.6, respectively.
Similarly to~\cite{Babenko-Yang-Belongie-cvpr2009}, we take a spatial distance-based strategy
for training sample selection.
The scaling factors $\gamma_{f}$
and $\gamma_{b}$ in Equ.~\eqref{eq:particle_liki_model} are
chosen as 1. The trade-off control factor $\rho$ in
Equ.~\eqref{eq:particle_liki_model} is set as 0.1.
Note that the aforementioned parameters are fixed throughout all the
experiments.
If the proposed tracker is implemented in Matlab
on a workstation
with an Intel Core 2 Duo 2.66GHz processor and 3.24G RAM,
the average running time of the proposed tracker is about
0.55 second per frame (because of the slow for-loop operations in Matlab). In contrast,
if the proposed tracker is carried out in C++
with multi-thread parallel computation (for greatly speeding up the for-loop operations),
the running time will be greatly reduced
(for about 0.08 second per frame).

\begin{figure}[t]
\centering
\includegraphics[width=4.15cm, height = 3.35cm]{./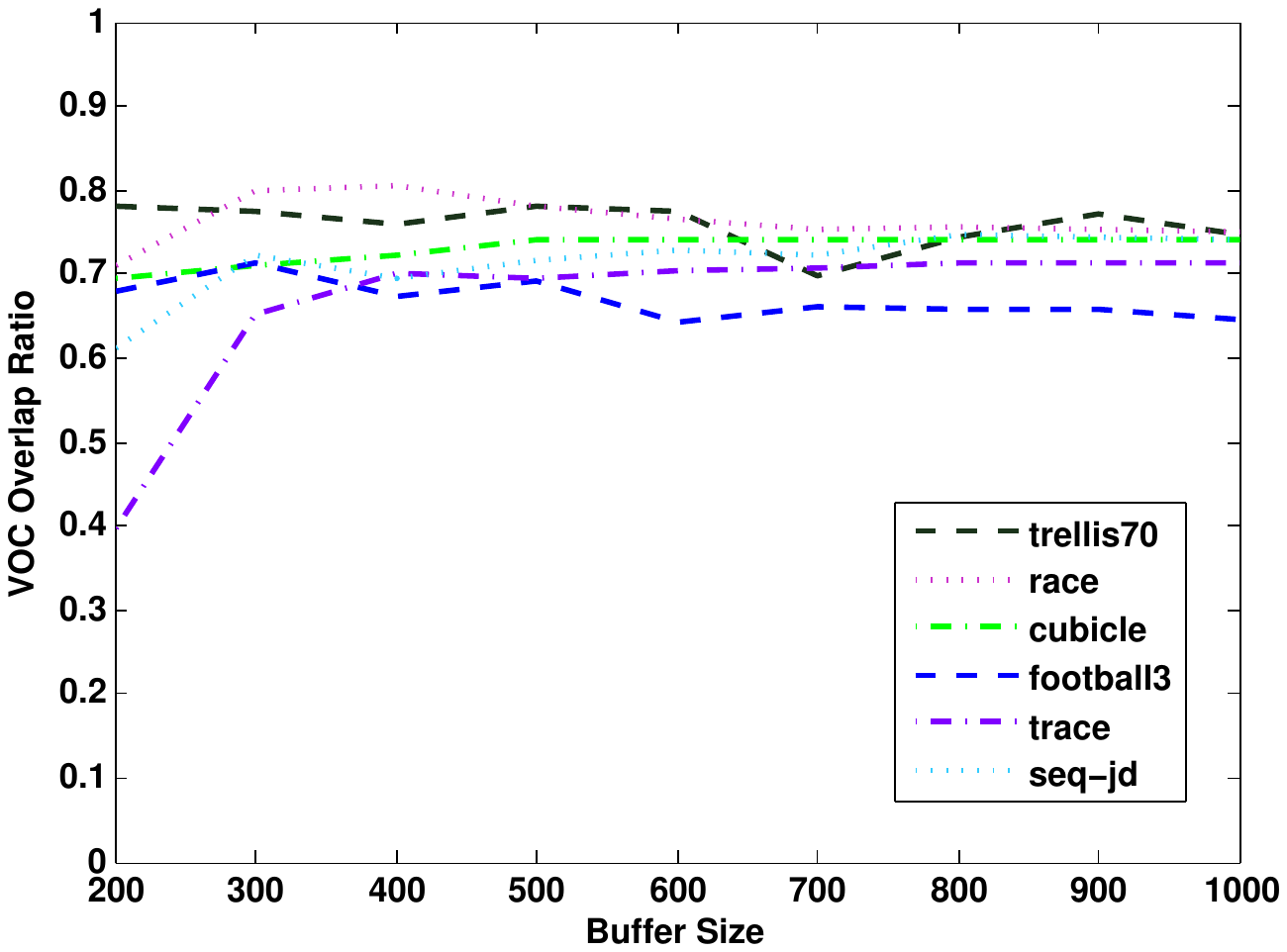}
\hspace{-0.5cm}
\includegraphics[width=4.15cm, height = 3.35cm]{./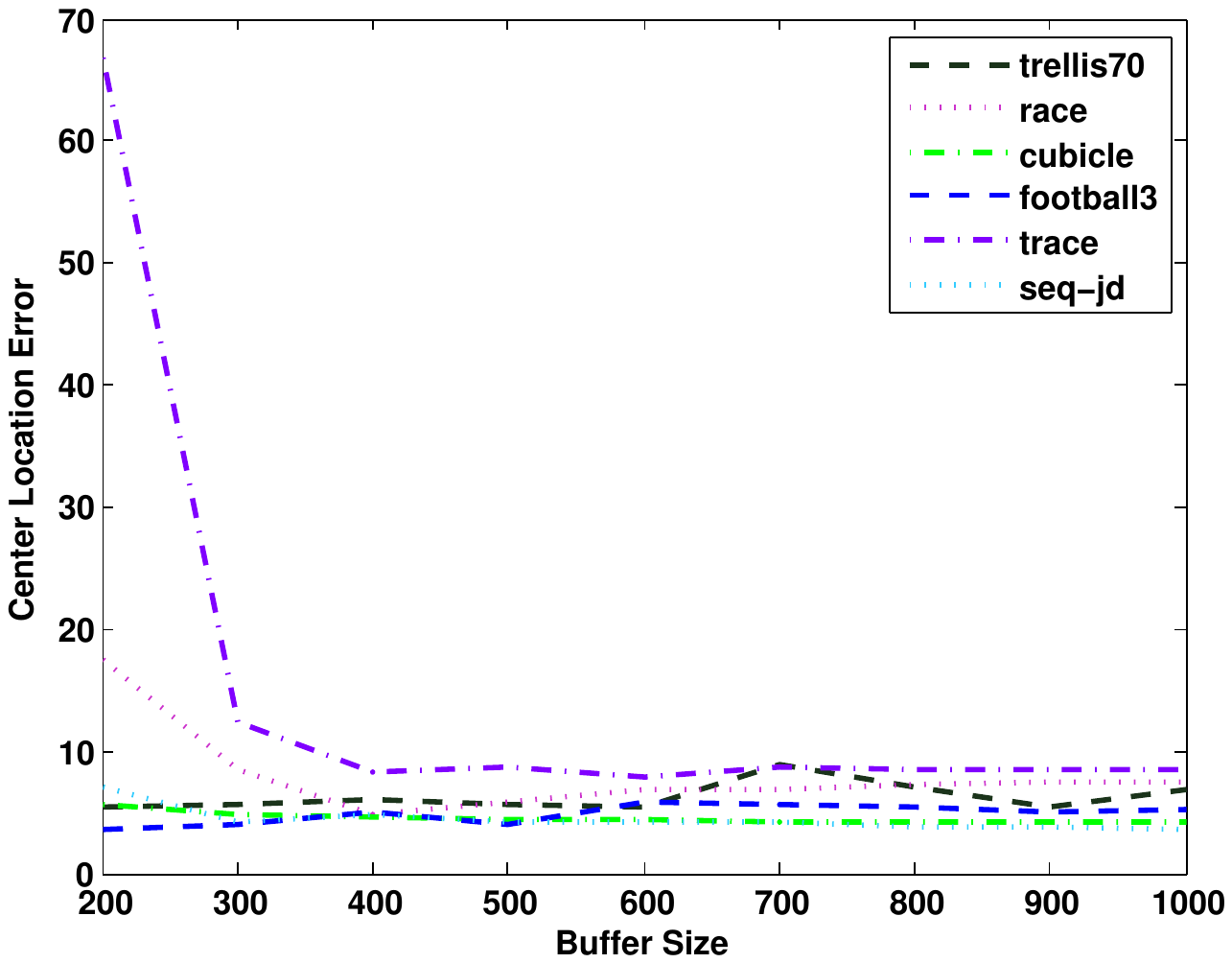}
\hspace{-0.5cm}
\includegraphics[width=4.15cm, height = 3.35cm]{./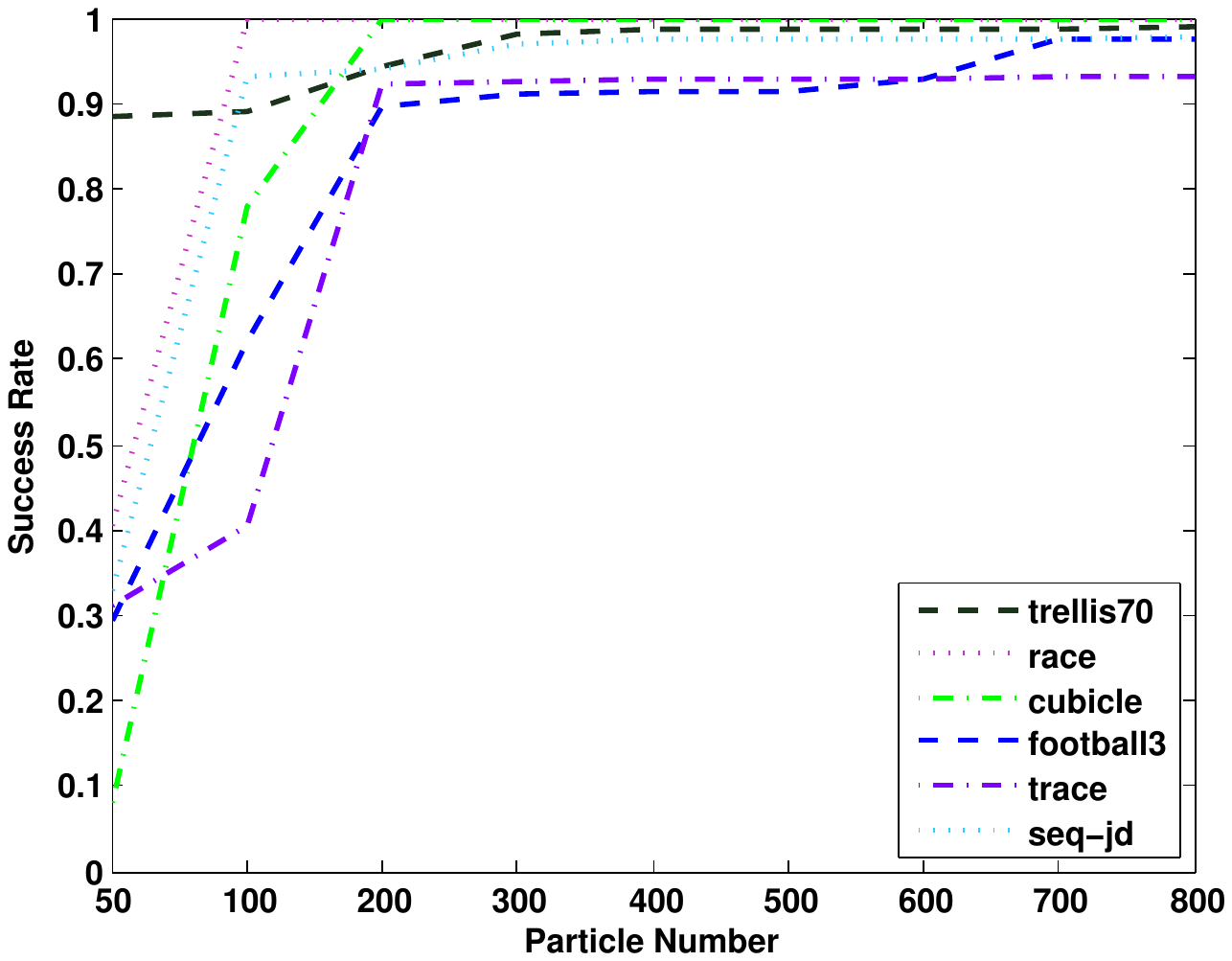}
\hspace{-0.5cm}
\includegraphics[width=4.15cm, height = 3.35cm]{./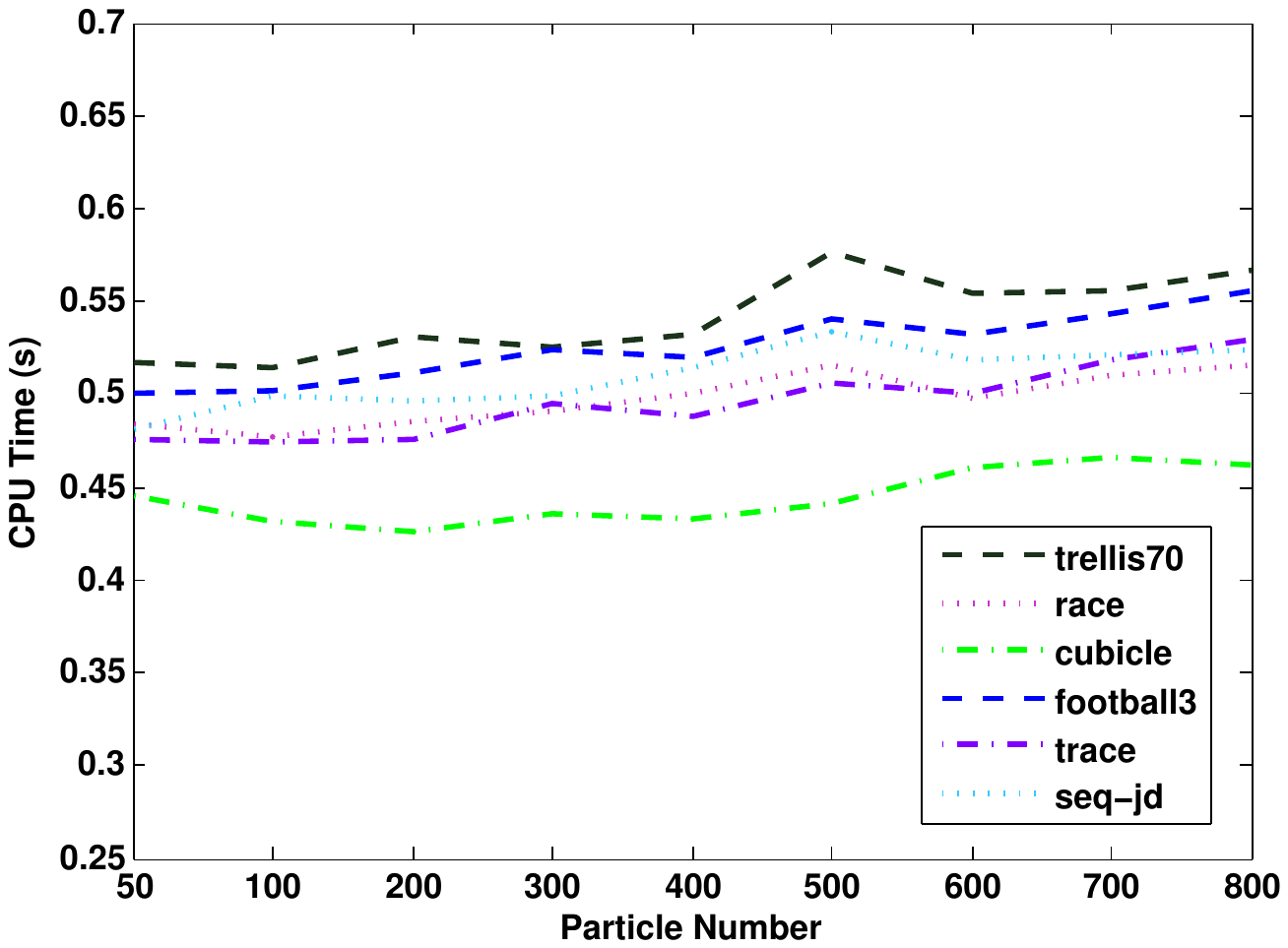}\vspace{-0.1cm}\\
\footnotesize (a) \hspace{3.6cm}(b) \hspace{3.3cm} (c) \hspace{3.3cm} (d) \hspace{-0.3cm}\\
\vspace{-0.16cm}
 \caption{
 Quantitative evaluation of the proposed tracker using
different buffer sizes  and  particle numbers.
The left half corresponds to the tracking results with different buffer sizes,
while the right half is associated with the tracking results with different
particle numbers.
 \vspace{-0.0cm}}
 \label{fig:buffersize_particlenum}
\end{figure}

{\it 2) Datasets and evaluation criteria}
A set of experiments are conducted on
eighteen challenging video sequences, which consist of
8-bit grayscale images.
These video sequences are
captured from different scenes,
and contain different types of object motion events (e.g., human walking and
car running), which are illustrated in the supplementary file.
For quantitative performance comparison, two popular evaluation
criteria are introduced, namely,
center location error (CLE) and VOC overlap ratio (VOR)
between the predicted bounding
box $B_p$ and ground truth bounding box $B_{gt}$
such that ${\rm VOR} = \frac{area(B_p\bigcap B_{gt})}{area(B_p \bigcup B_{gt})}$.
If the
VOC overlap ratio
is larger than $0.5$, then tracking is considered  successful in that frame.

\vspace{-0.25cm}
\subsection{Empirical analysis of parameter settings}

{\it 1) Sample buffer size.}
To test the effect of buffer size on reservoir sampling, we compute the average
CLE and VOR for each video sequence using nine different sample buffer sizes.
Figs.~\ref{fig:buffersize_particlenum} (a) and (b) show the quantitative CLE and
VOR performance on five video sequences.
It is clear that the average CLE (VOR) decreases (increases) as the
buffer size increases, and
plateaus with approximately more than 300 samples.

{\it 2) Number of particles.}
In general, more particles
enable visual trackers to locate the object more accurately, but lead
to a higher computational cost.
Thus, it is crucial for visual trackers to keep a good balance between
accuracy and efficiency using a
moderate number of particles.
Fig.~\ref{fig:buffersize_particlenum} (c) shows the average VOC success rates
(i.e., $\frac{\#\mbox{success frames}}{\# \mbox{total frames}}$)
of the proposed tracking algorithm on three video sequences.
From Fig.~\ref{fig:buffersize_particlenum} (c), we can see that the success
rate  rapidly grows with increasing
particle number and then converges at approximately 200-300 particles for each sequence.
Fig.~\ref{fig:buffersize_particlenum} (d) displays the average
CPU time (spent by the proposed tracking algorithm in each frame) with
different particle numbers.
It is observed from Fig.~\ref{fig:buffersize_particlenum} (d) that the average
CPU time slowly increase.

{\it 3) Comparison of different linear representations.}
The objective of this task is to evaluate the performance of four
linear representations: our linear
representation with metric learning,
our linear representation without
metric learning,
compressive sensing linear representation~\cite{Li-Shen-Shi-cvpr2011},
and $\ell_{1}$-regularized linear
representation~\cite{Meo-Ling-ICCV09}.
For a fair comparison, we utilize the raw pixel features as in
\cite{Li-Shen-Shi-cvpr2011,Meo-Ling-ICCV09}.
Tab.~\ref{Tab:different_linear_representation_VOC_CLE}
shows the average performance of these four linear representations
in CLE, VOR, and success rate on four video sequences.
Clearly, our linear
representation with metric learning consistently achieves
better tracking results than the three other linear representations.
Please see the supplementary file for the details of
the frame-by-frame tracking results (i.e., CLE, VOR, success rate).

{\it 4) Evaluation of different sampling methods.}
Here, we examine the performance of two
sampling methods: uniform~\cite{vitter1985random} and time-weighted~\cite{efraimidis2006weighted} reservoir sampling.
Tab.~\ref{Tab:sampling_VOC_CLE} shows the experimental results of the two
sampling methods in CLE, VOR, and success rate on five video sequences.
From Tab.~\ref{Tab:sampling_VOC_CLE}, we can see that weighted reservoir
sampling performs better than
ordinary reservoir sampling. More results of these two
sampling methods can be found in the supplementary file.

\begin{figure*}[t]
 \vspace{-0.1cm}
\centering
\includegraphics[scale=0.4]{./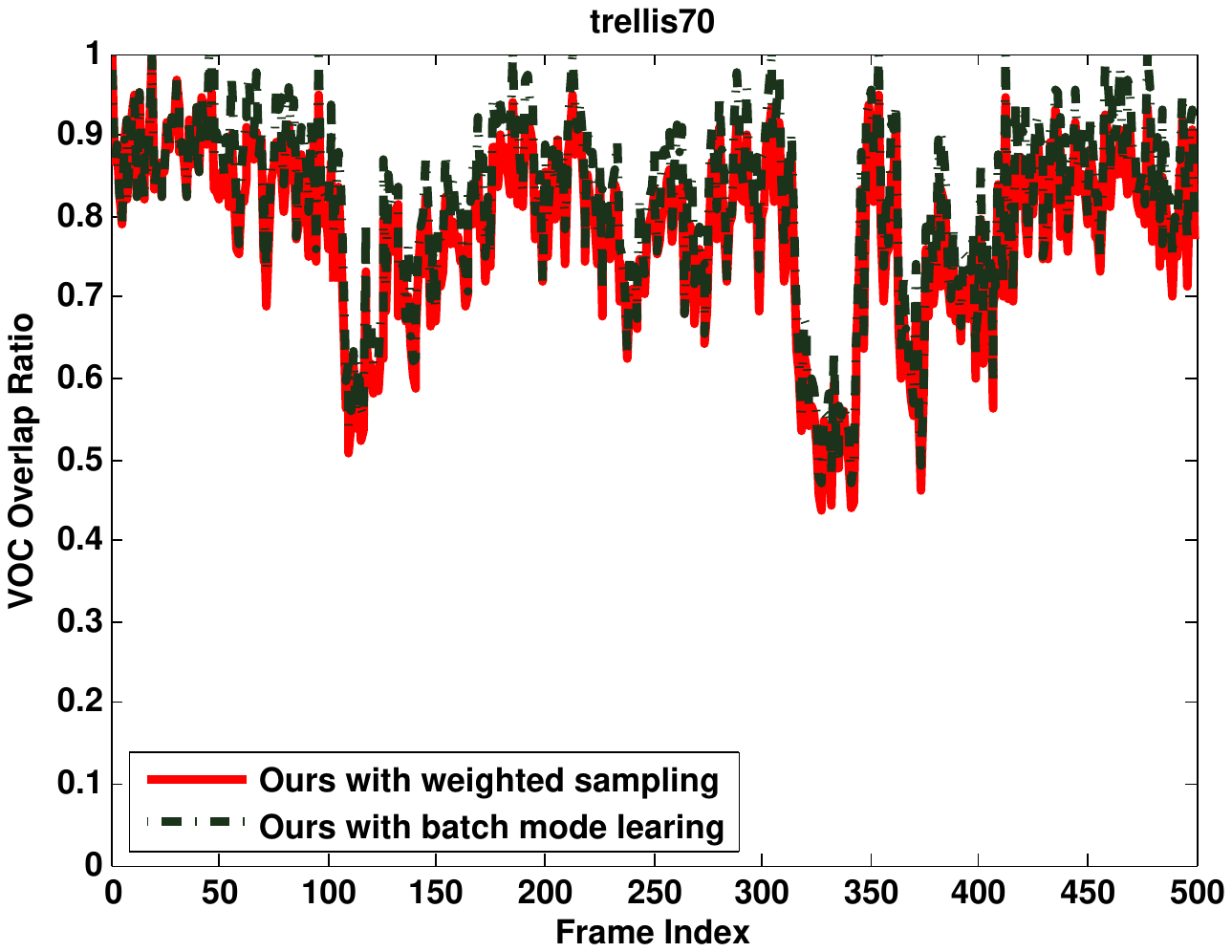}\hspace{-0.21cm}
\includegraphics[scale=0.4]{./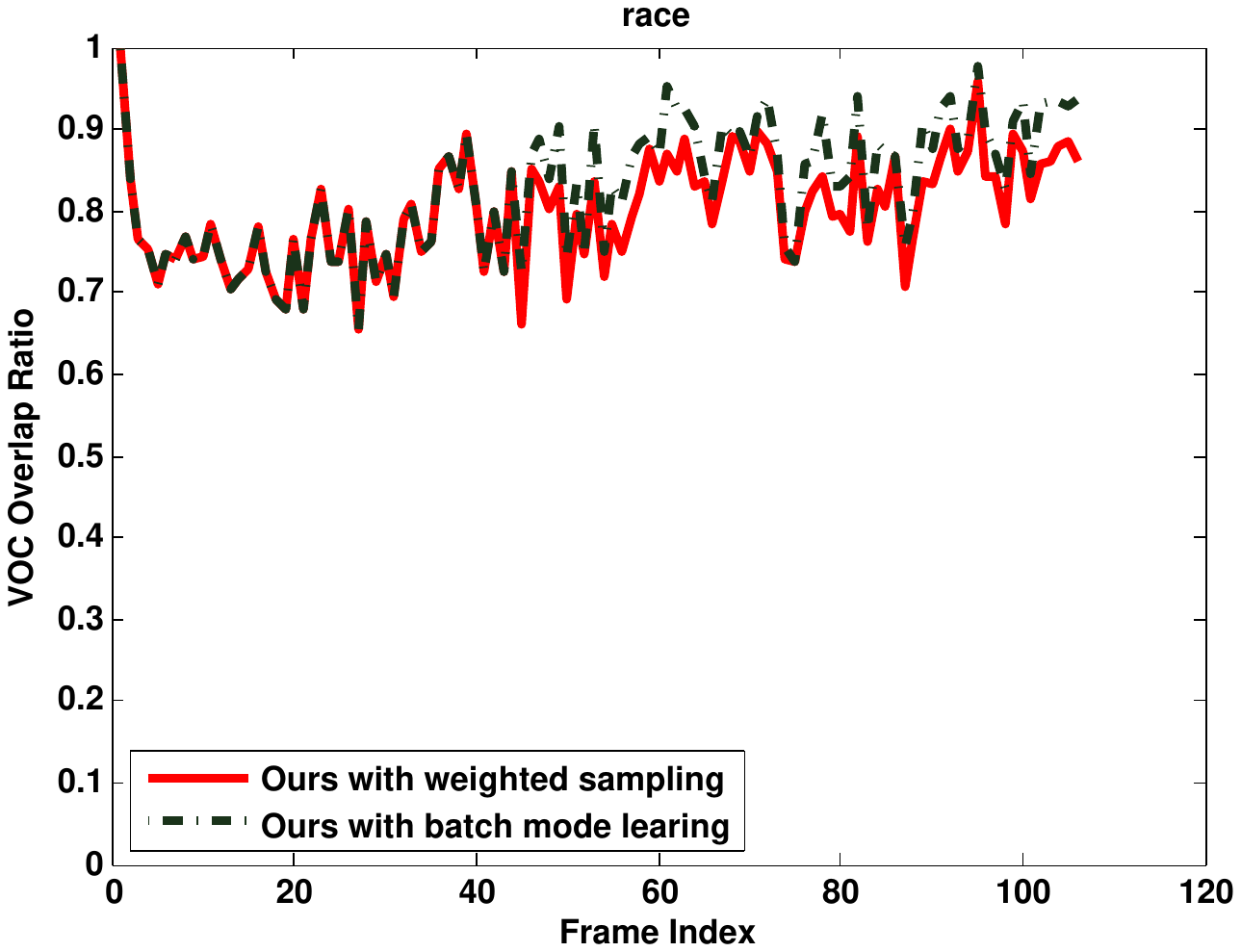} \hspace{-0.21cm}
\includegraphics[scale=0.4]{./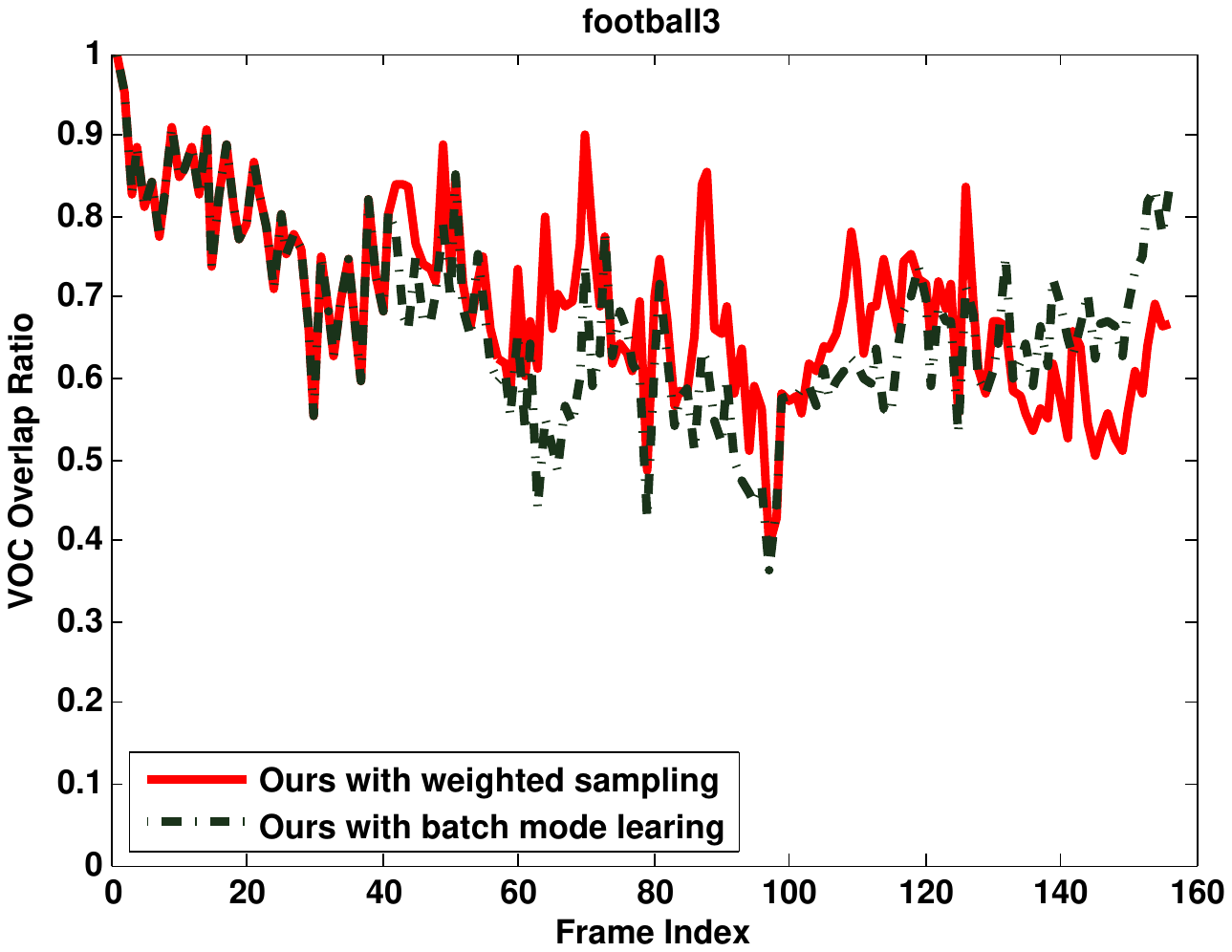}\\
\vspace{-0.3cm}
 \caption{Quantitative comparison of the proposed tracker
with weighted reservoir sampling and batch mode learning in average VOR
on three video sequences.
Clearly, the tracking performance of our weighted reservoir sampling
is very close to that of batch mode learning.
}
\label{fig:batch_mode_learning}
\end{figure*}

\begin{figure*}[t]
 \vspace{-0.1cm}
\centering
\includegraphics[scale=0.38]{./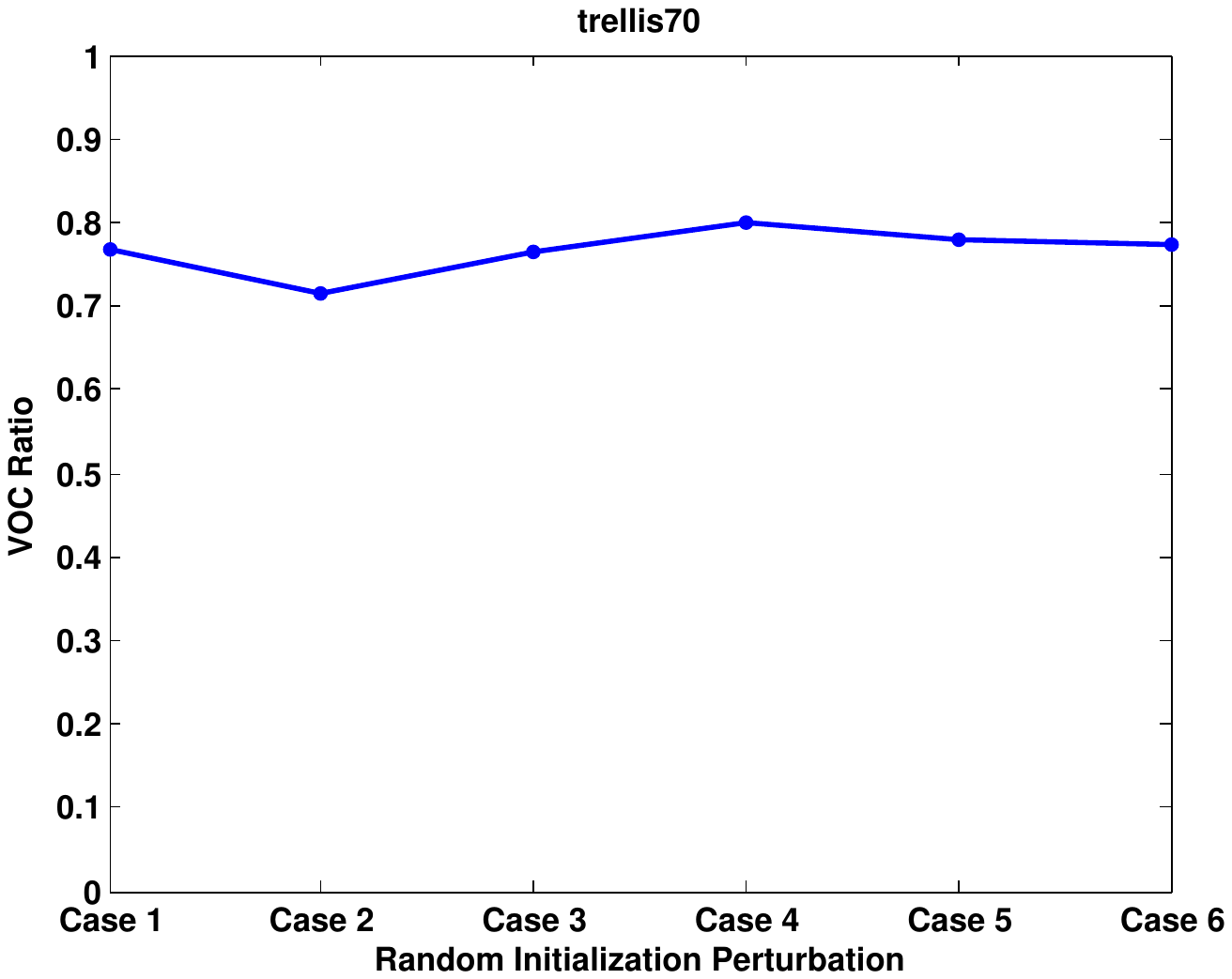}\hspace{-0.21cm}
\includegraphics[scale=0.38]{./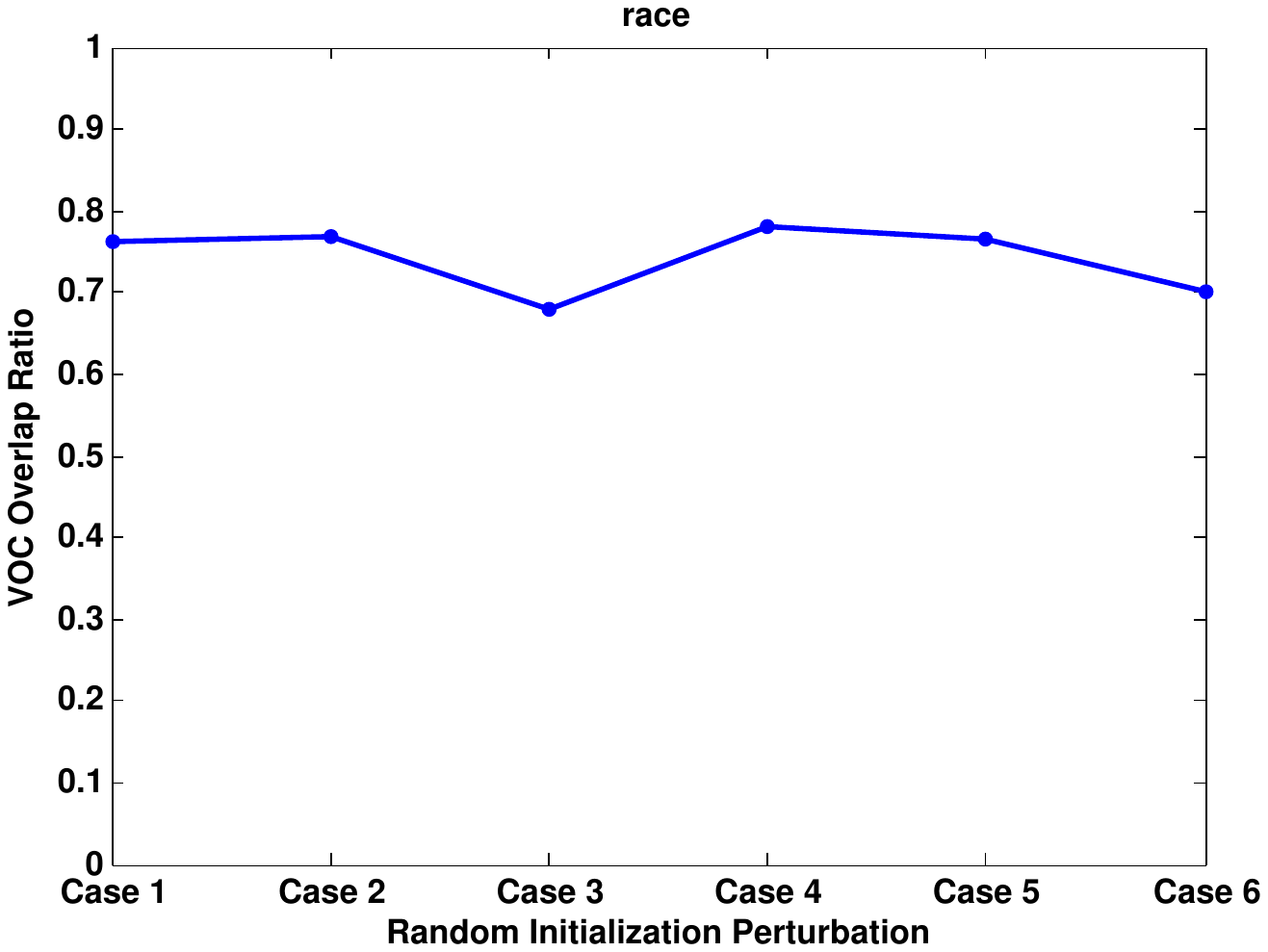} \hspace{-0.21cm}
\includegraphics[scale=0.38]{./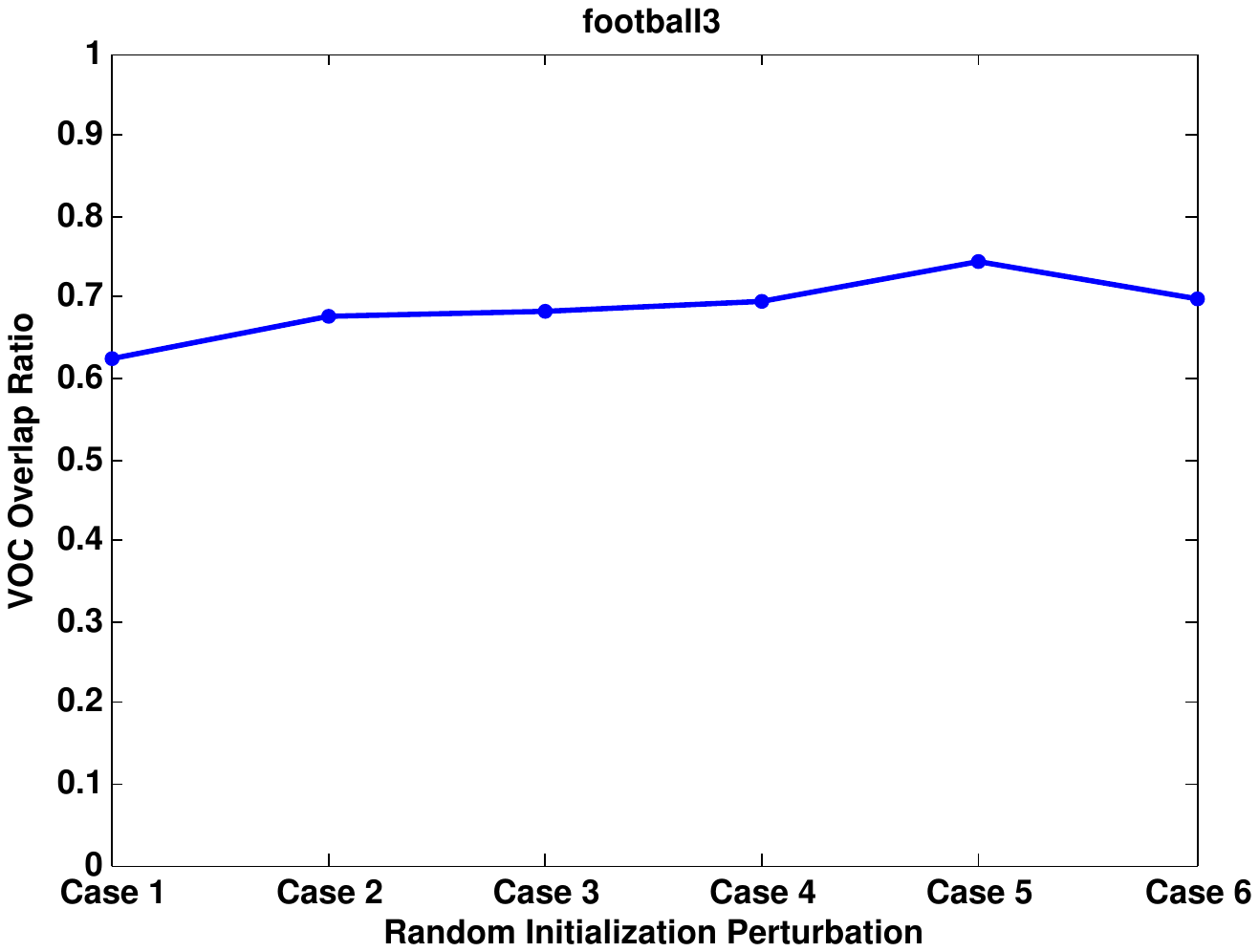}\\
\vspace{-0.3cm}
 \caption{Quantitative evaluation of the proposed tracker
with five different initialization configurations
(obtained by moderate random perturbation on the original initialization setting)
in VOR on three video sequences.
It is clear that the proposed tracker is not very sensitive to
different initialization configurations.
}
\label{fig:initialization}
\end{figure*}

\begin{figure*}[t]
 \vspace{-0.1cm}
\centering
\includegraphics[scale=0.38]{./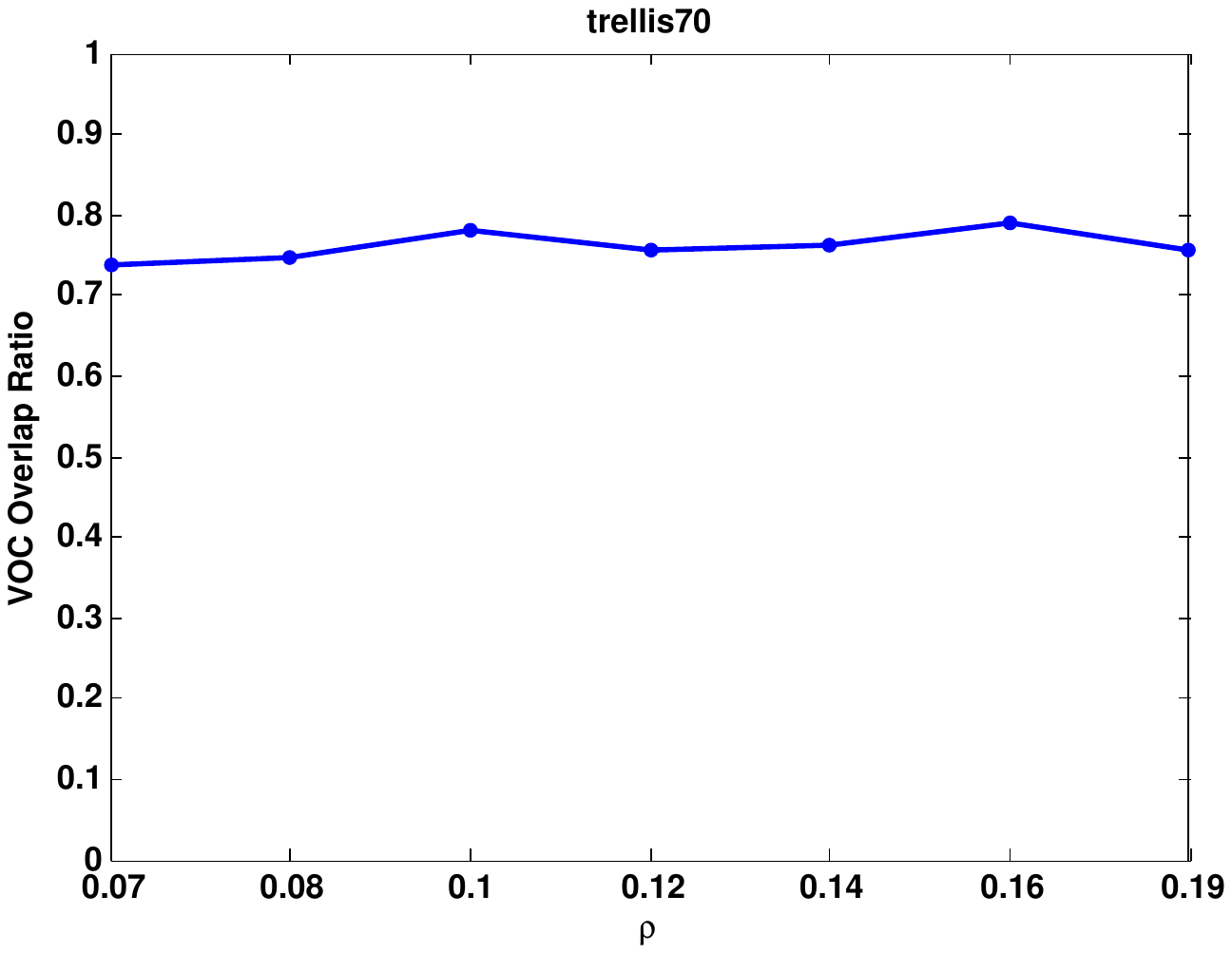}\hspace{-0.21cm}
\includegraphics[scale=0.38]{./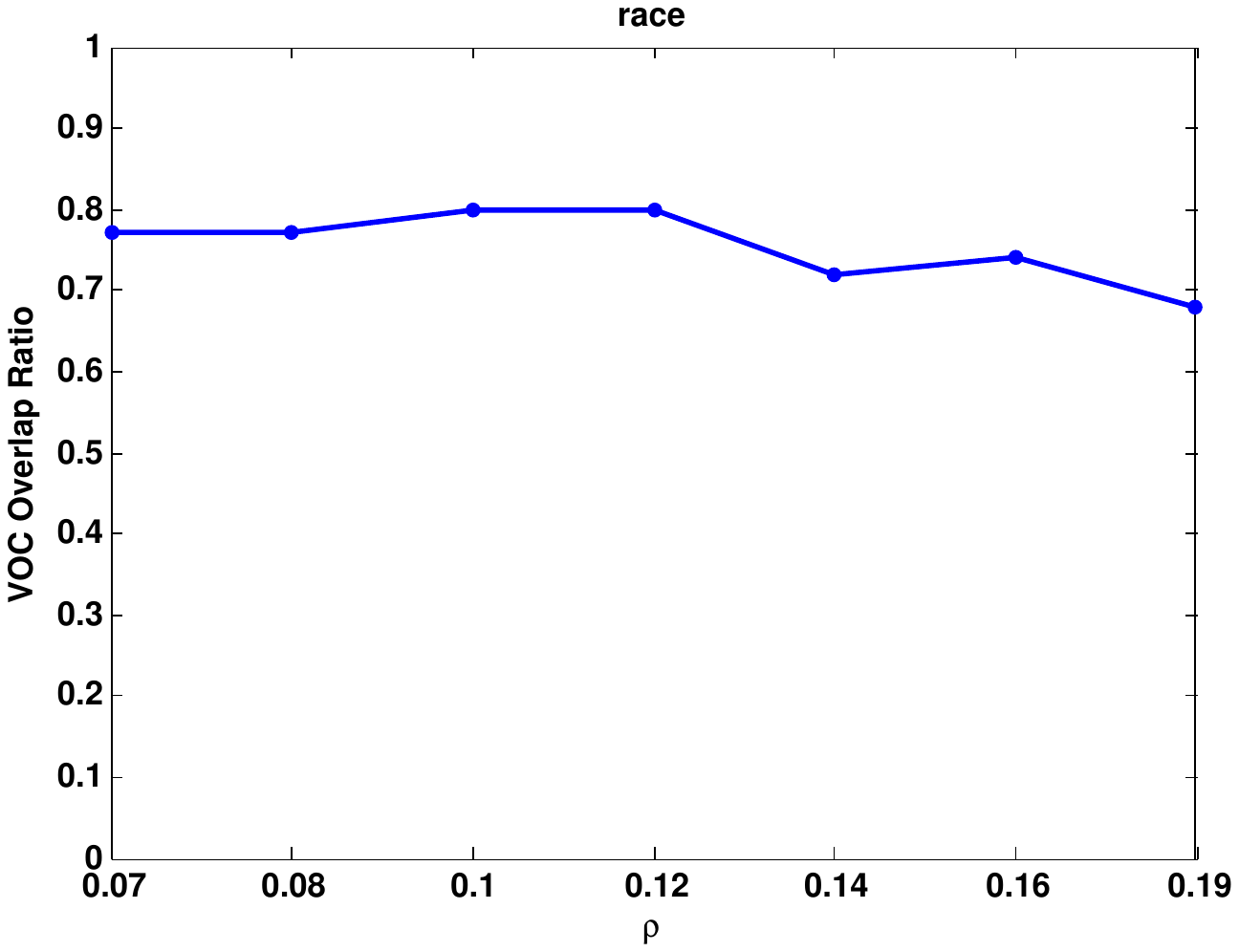} \hspace{-0.21cm}
\includegraphics[scale=0.38]{./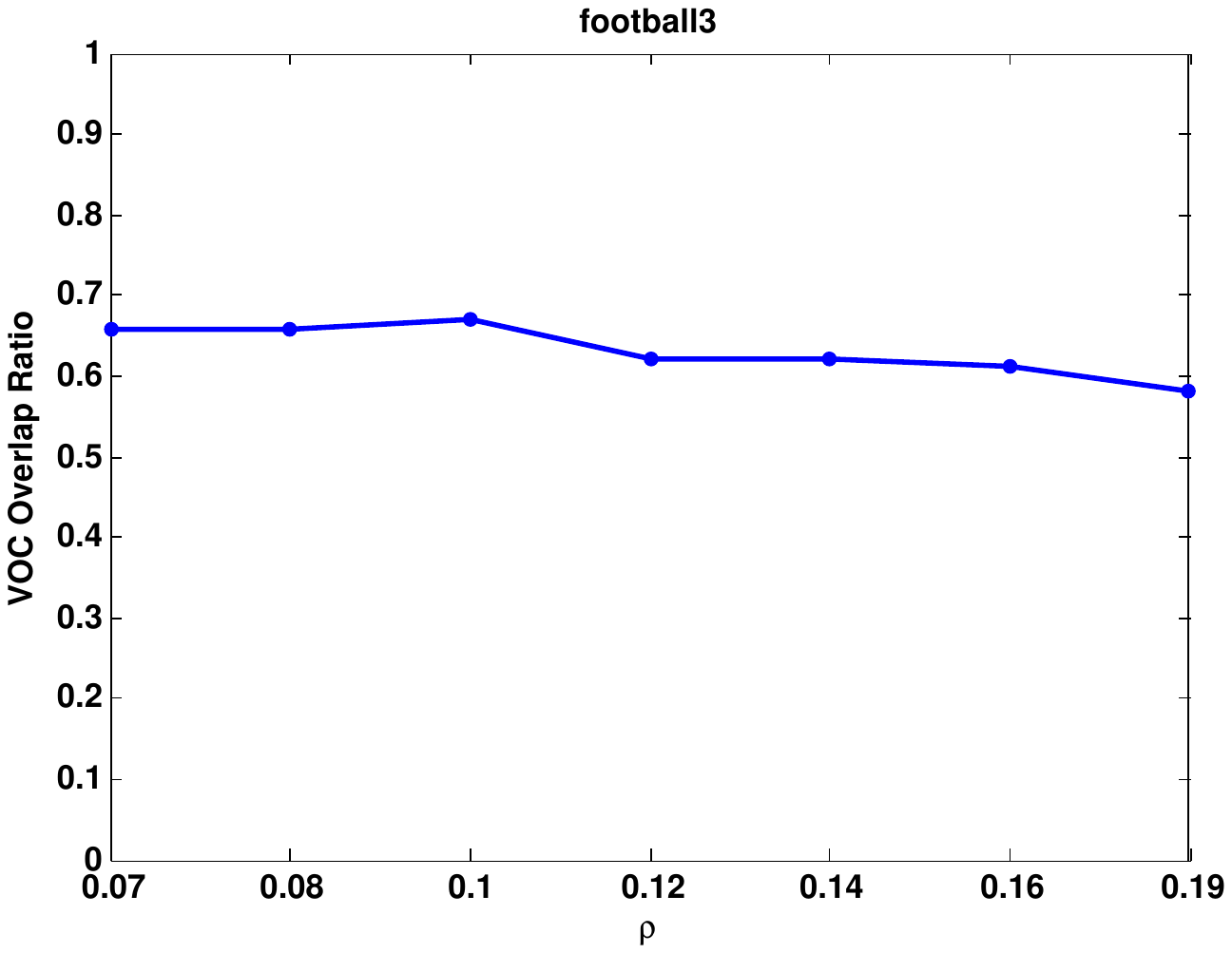}\\
\vspace{-0.3cm}
 \caption{Quantitative evaluation of
 the proposed tracker with different settings for
 the trade-off control factor $\rho$ such that
 $\rho \in \{0.07, 0.08, 0.10, 0.12, 0.14, 0.16, 0.19\}$.
 It is observed that the proposed tracker is not very
 sensitive to the setting of $\rho$.
 }
 \label{fig:tradeoff_factor}
\end{figure*}

\begin{table}[t]
\begin{center}
\scalebox{0.51}
{
\begin{tabular}{c||c|c|c|c|c|c||c|c|c|c|c|c||c|c|c|c|c|c}
\hline \scriptsize
& \multicolumn{6}{|c||}{CLE} & \multicolumn{6}{|c||}{VOR} & \multicolumn{6}{|c}{Success Rate}\\
\hline
& \makebox[1.1cm]{trellis70}   &  \makebox[0.9cm]{race} &
\makebox[0.9cm]{cubicle} & \makebox[1.3cm]{football3} & \makebox[0.9cm]{trace} & \makebox[0.9cm]{seq-jd} &
 \makebox[1.1cm]{trellis70}   &  \makebox[0.9cm]{race} &
\makebox[0.9cm]{cubicle} & \makebox[1.3cm]{football3} & \makebox[0.9cm]{trace} & \makebox[0.9cm]{seq-jd}
& \makebox[1.1cm]{trellis70}   &  \makebox[0.9cm]{race} &
\makebox[0.9cm]{cubicle} & \makebox[1.3cm]{football3} & \makebox[0.9cm]{trace} & \makebox[0.9cm]{seq-jd} \\\hline
Ours+pixels with metric  &      \bf 20.91   & \bf 5.18   & \bf 21.12    & \bf 9.47   & \bf 22.88    & \bf8.67     & \bf 0.49    &\bf0.79    &\bf0.54    &\bf0.55
&\bf0.37    &\bf0.58     &\bf0.53    &\bf0.99    &\bf0.69    &\bf0.55    &\bf0.28    &\bf0.70\\
Ours+pixels without metric  &   66.32   & 7.91   &40.48   &14.83  &139.30   &20.59     &0.27    &0.67    &0.18    &0.46    &0.07    &0.51     &0.30    &0.89    &0.25    &0.28    &0.08    &0.62\\
CS+pixels  &                    68.45   & 5.51   &31.00   &53.56   &78.98   &39.75     &0.19    &0.68    &0.37    &0.22    &0.17    &0.17     &0.19    &0.90    &0.47    &0.24    &0.09    &0.11\\
L1+pixels  &                    27.64  &160.79   &24.26   &64.07  &108.64   &12.76     &0.28    &0.08    &0.41    &0.16    &0.09    &0.52     &0.34    &0.10    &0.49    &0.16    &0.06    &0.61\\
\hline
\end{tabular}
}
\end{center}
\vspace{-0.6cm}
\caption{Quantitative evaluation of the proposed tracker using
different linear representations on four video sequences.
The table shows their average CLEs,
VORs, and success rates.} \vspace{-0.2cm}
\label{Tab:different_linear_representation_VOC_CLE}
\end{table}

\begin{table}[t]
\begin{center}
\scalebox{0.52}
{
\begin{tabular}{c||c|c|c|c|c|c||c|c|c|c|c|c||c|c|c|c|c|c}
\hline \scriptsize
& \multicolumn{6}{|c||}{CLE} & \multicolumn{6}{|c||}{VOR} & \multicolumn{6}{|c}{Success Rate}\\
\hline
& \makebox[1.1cm]{trellis70}   &  \makebox[0.9cm]{race} &
\makebox[0.9cm]{cubicle} & \makebox[1.3cm]{football3} & \makebox[0.9cm]{trace} & \makebox[0.9cm]{seq-jd} &
 \makebox[1.1cm]{trellis70}   &  \makebox[0.9cm]{race} &
\makebox[0.9cm]{cubicle} & \makebox[1.3cm]{football3} & \makebox[0.9cm]{trace} & \makebox[0.9cm]{seq-jd}
& \makebox[1.1cm]{trellis70}   &  \makebox[0.9cm]{race} &
\makebox[0.9cm]{cubicle} & \makebox[1.3cm]{football3} & \makebox[0.9cm]{trace} & \makebox[0.9cm]{seq-jd} \\\hline
Ours with weighted sampling   &    \bf5.62    &\bf8.52    &\bf4.31    &\bf3.92    &\bf8.65    &\bf 4.30    &\bf 0.78    &\bf 0.80    &\bf0.74    &\bf0.69    &\bf0.70    &\bf0.72
&\bf0.98    &\bf0.99    &\bf0.98    &\bf0.97    &\bf0.89    &\bf0.94\\
Ours without weighted sampling  &  8.58   &13.92    &\bf4.31    &4.67   &15.52    &5.21    &0.70    &0.75    &\bf 0.74    &0.67    &0.61    &0.68
&0.90    &\bf0.99    &\bf0.98    &0.93    &0.58    &0.88\\
\hline
\end{tabular}
}
\end{center}
\vspace{-0.6cm}
\caption{Quantitative evaluation of the proposed tracker using
different sampling methods on five video sequences.
The table shows their average CLEs, VORs,
and success rates.}
\label{Tab:sampling_VOC_CLE}
\end{table}

{\it 5) Performance with and without metric learning.}
To justify the effect of different metric learning mechanisms, we
design several experiments on five video sequences.
Tab.~\ref{Tab:metric_success_rate}  shows the corresponding experimental
results of different metric learning mechanisms
in CLE, VOR, and success rate.
From Tab.~\ref{Tab:metric_success_rate}, we can see that the performance of
metric learning is  better than that of no
metric learning. In addition, the performance of metric learning with
no eigendecomposition is close
to that of metric learning with step-by-step eigendecomposition, and
better than that of
metric learning with final eigendecomposition. Therefore, the obtained
results are consistent with those
in~\cite{chechik2010large}. Besides, metric learning with step-by-step
eigendecomposition
is much slower than that with no eigendecomposition which is adopted
by the proposed
tracking algorithm.

{\it 6) Evaluation of weighted reservoir sampling and batch mode learning.}
To balance effectiveness and efficiency, weighted reservoir sampling aims to maintain
limited-sized foreground and background sample buffers used
for learning a metric-weighted linear representation.
In contrast, batch mode learning requires storing all the
foreground and background samples during tracking, which leads
to expensive computation and high memory usage.
Therefore,
we conduct a quantitative comparison experiment
between weighted reservoir sampling and batch mode learning on three video sequences,
as shown in Fig.~\ref{fig:batch_mode_learning}.
From Fig.~\ref{fig:batch_mode_learning}, we observe that
the tracking performance with weighted reservoir sampling
is able to well approximate that of batch mode learning.

{\it 7) Effect of random initialization perturbation.}
Here, we aim to investigate the tracking performance of the proposed
tracker with different initialization configurations,
which are generated by moderate random perturbation (i.e., relatively small center location offset) on
the original bounding box after manual initialization.
Fig.~\ref{fig:initialization} shows the average VOR tracking performance on three video sequences
in different initialization cases. It is clearly seen from
Fig.~\ref{fig:initialization} that the proposed tracker
achieves the mutually close tracking results, and is not sensitive to
different initialization settings.

{\it 8) Investigation of the trade-off control factor $\rho$.}
To evaluate the effect of the discriminative metric-weighted
reconstruction information from foreground and background buffers,
we make a quantitative empirical study of the proposed tracker with
different configurations of $\rho$ (referred to in Equ.~\eqref{eq:particle_liki_model}).
Fig.~\ref{fig:tradeoff_factor} displays the quantitative
average VOR tracking results on three video sequences using different configurations
of $\rho$ such that  $\rho \in \{0.07, 0.08, 0.10, 0.12, 0.14, 0.16, 0.19\}$.
Apparently, the proposed tracker is not very sensitive to
the configuration of $\rho$ within a moderate range.

\begin{table}[t]
\begin{center}
\scalebox{0.52}
{
\begin{tabular}{c||c|c|c|c|c|c||c|c|c|c|c|c||c|c|c|c|c|c}
\hline \scriptsize
& \multicolumn{6}{|c||}{CLE} & \multicolumn{6}{|c||}{VOR} & \multicolumn{6}{|c}{Success Rate}\\
\hline
& \makebox[1.1cm]{trellis70}   &  \makebox[0.9cm]{race} &
\makebox[0.9cm]{cubicle} & \makebox[1.3cm]{football3} & \makebox[0.9cm]{trace} & \makebox[0.9cm]{seq-jd} &
 \makebox[1.1cm]{trellis70}   &  \makebox[0.9cm]{race} &
\makebox[0.9cm]{cubicle} & \makebox[1.3cm]{football3} & \makebox[0.9cm]{trace} & \makebox[0.9cm]{seq-jd}
& \makebox[1.1cm]{trellis70}   &  \makebox[0.9cm]{race} &
\makebox[0.9cm]{cubicle} & \makebox[1.3cm]{football3} & \makebox[0.9cm]{trace} & \makebox[0.9cm]{seq-jd} \\\hline
\hline
ML w/o eigen                   &5.62    &8.52    &4.31    &3.92    &8.65    &4.30   &0.78    &0.80    &0.74    &0.69    &0.70    &0.72   &0.98    &\bf0.99    &\bf0.98    &0.97    &0.89    &0.94\\
ML with final eigen            &8.46   &10.17    &5.95    &7.80   &18.38    &7.06   &0.70    &0.69    &0.67    &0.54    &0.52    &0.63   &0.94    &\bf0.99    &0.94    &0.55    &0.57    &0.82\\
ML with step-by-step eigen     &\bf4.56    &\bf6.23    &\bf2.16    &\bf3.84   &\bf7.46    &\bf3.12   &\bf0.83    &\bf0.75    &\bf0.78    &\bf0.74    &\bf0.75    &\bf0.76   &\bf0.99    &\bf0.99    &\bf0.98    &\bf0.98    &\bf0.98    &\bf0.97\\
No metric learning             &8.82   &16.80    &5.29    &5.07   &16.94    &6.09   &0.68    &0.65    &0.67    &0.63    &0.51    &0.64   &0.92    &0.97    &0.88    &0.91    &0.40    &0.85\\
\hline
\end{tabular}
}
\end{center}
\vspace{-0.5cm}
\caption{
Quantitative evaluation of the proposed tracker with different
metric learning configurations on five video sequences.
The table reports their average tracking results in
CLE, VOR, and success rate.
 \vspace{-0.1cm}}
\label{Tab:metric_success_rate}
\end{table}

\vspace{-0.25cm}
\subsection{Comparison with the state-of-the-art trackers}

To demonstrate the effectiveness of the proposed tracking algorithm,
we make a qualitative and quantitative comparison with several
state-of-the-art trackers, referred to as
FragT (Fragment-based tracker \cite{Adam-Fragment-2006}),
MILT (multiple instance boosting-based
tracker~\cite{Babenko-Yang-Belongie-cvpr2009}), VTD (visual tracking
decomposition~\cite{Kwon-Lee-CVPR2010}), OAB (online
AdaBoost \cite{Grabner-Grabner-Bischof-BMVC2006}), IPCA (incremental
PCA~\cite{Limy-Ross17}), L1T ($\ell_{1}$ minimization
tracker~\cite{Meo-Ling-ICCV09}),
CT (compressive tracker~\cite{zhang2012real}), Struck (structured
learning tracker~\cite{harestruck_iccv2011}),
DML (discriminative metric learning
tracker~\cite{wang2010discriminative}),
TLD (tracking-learning-detection~\cite{TLD}),
ASLA (adaptive structural local sparse model~\cite{ASLSAM}),
and SCM (sparsity-based collaborative model~\cite{SCM}).
Moreover, the proposed tracking algorithm has two versions that are
unstructured and structured (respectively referred to as Ours and Ours+S).

\begin{figure*}[t]
\vspace{-0.1cm}
\centering
\includegraphics[width=0.8\linewidth]{./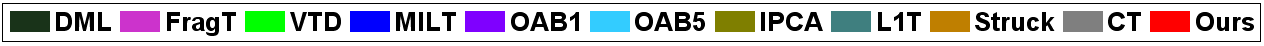}\\
\includegraphics[width=0.85\linewidth]{./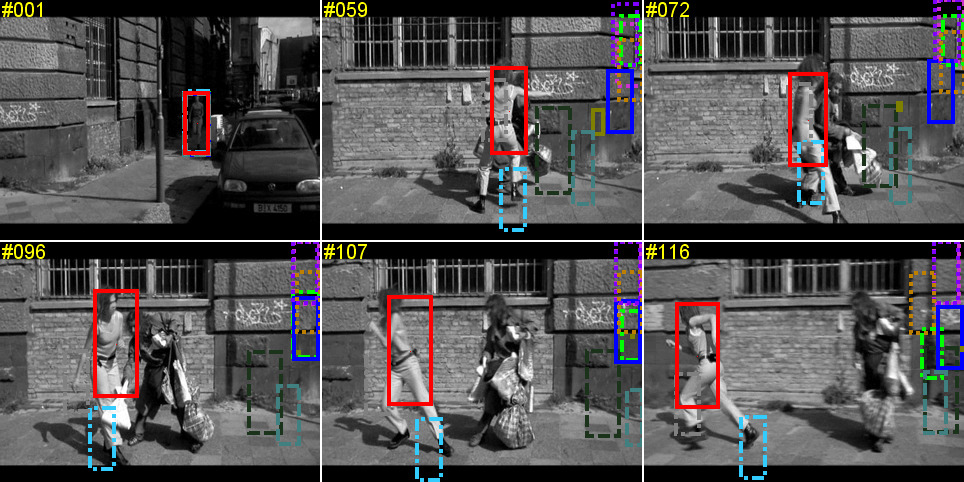}\\
 \caption{Tracking results of different trackers over some
representative frames from the ``Lola'' video sequence
in the scenarios with drastic scale changes and body pose variations.
}
 \label{fig:tracking_Lola} \vspace{-0.1cm}
\end{figure*}

\begin{figure*}[t]
\centering
\includegraphics[width=0.8\linewidth]{./Suppelmentary_File/NewFigs/NewLegendBar.PNG}\\
\includegraphics[width=0.85\linewidth]{./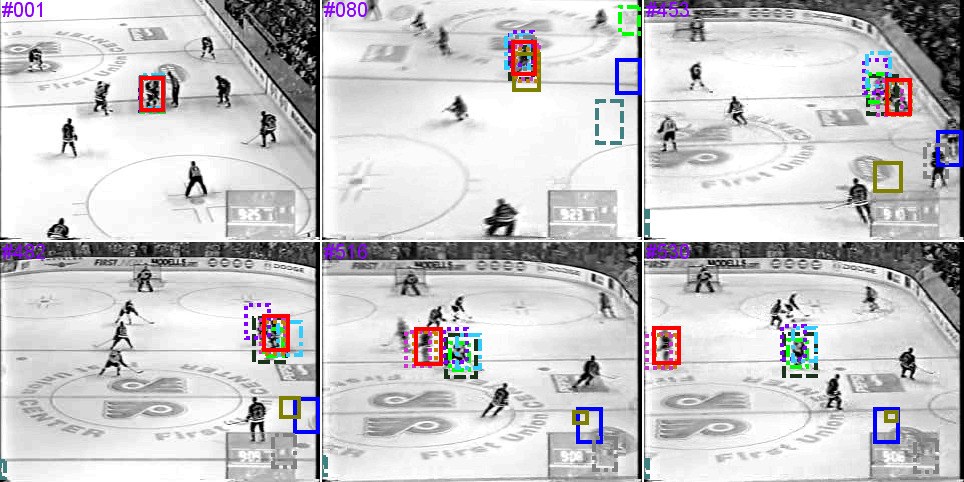}\\
 \caption{Tracking results of different trackers over some
representative frames from the ``iceball'' video sequence
in the scenarios with partial occlusions,  out-of-plane rotations, body pose variations, and abrupt motion.
}
 \label{fig:tracking_iceball} \vspace{-0.1cm}
\end{figure*}

\begin{figure*}[t]
\centering
\includegraphics[width=0.8\linewidth]{./Suppelmentary_File/NewFigs/NewLegendBar.PNG}\\
\includegraphics[width=0.85\linewidth]{./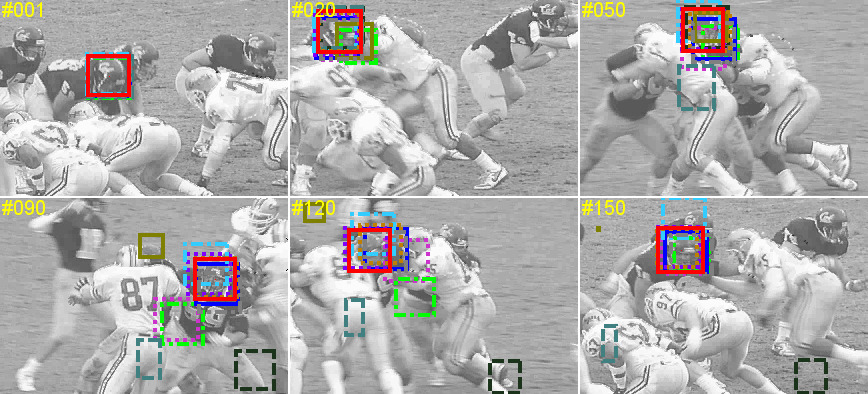}\\
 \caption{Tracking results of different trackers over some
representative frames from the ``football3'' video sequence
in the scenarios with motion blurring, partial occlusions,
head pose variations, and background clutters.
}
 \label{fig:tracking_football3} \vspace{-0.1cm}
\end{figure*}

\begin{figure*}[t]
\centering
\includegraphics[width=0.8\linewidth]{./Suppelmentary_File/NewFigs/NewLegendBar.PNG}\\
\includegraphics[width=0.85\linewidth]{./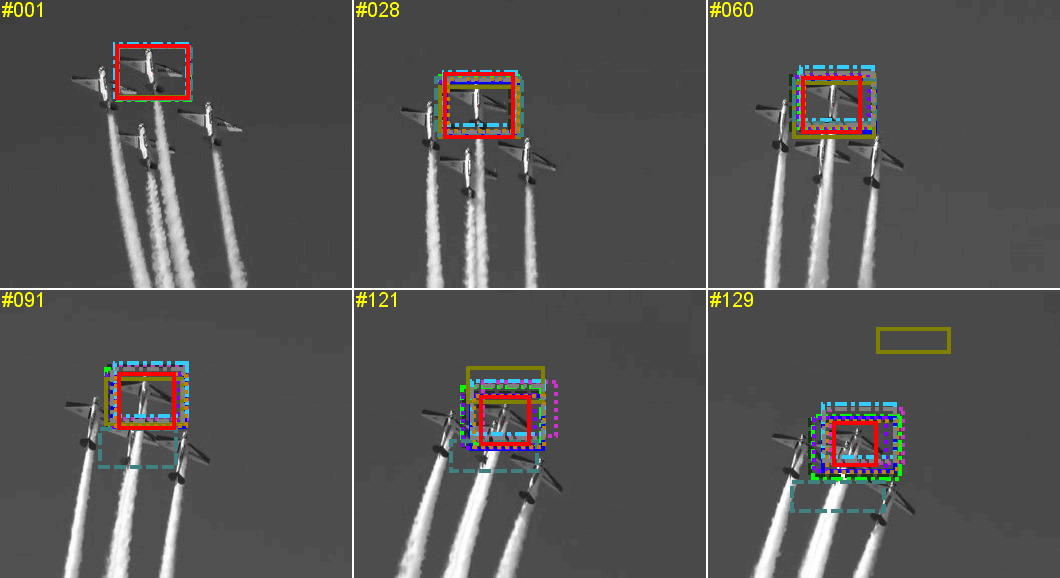}\\
 \caption{Tracking results of different trackers over some
representative frames from the ``planeshow'' video sequence
in the scenarios with shape deformations, out-of-plane rotations, and pose variations.
}
 \label{fig:tracking_planeshow} \vspace{-0.1cm}
\end{figure*}

\begin{figure*}[t]
\centering
\includegraphics[width=0.8\linewidth]{./Suppelmentary_File/NewFigs/NewLegendBar.PNG}\\
\includegraphics[width=0.85\linewidth]{./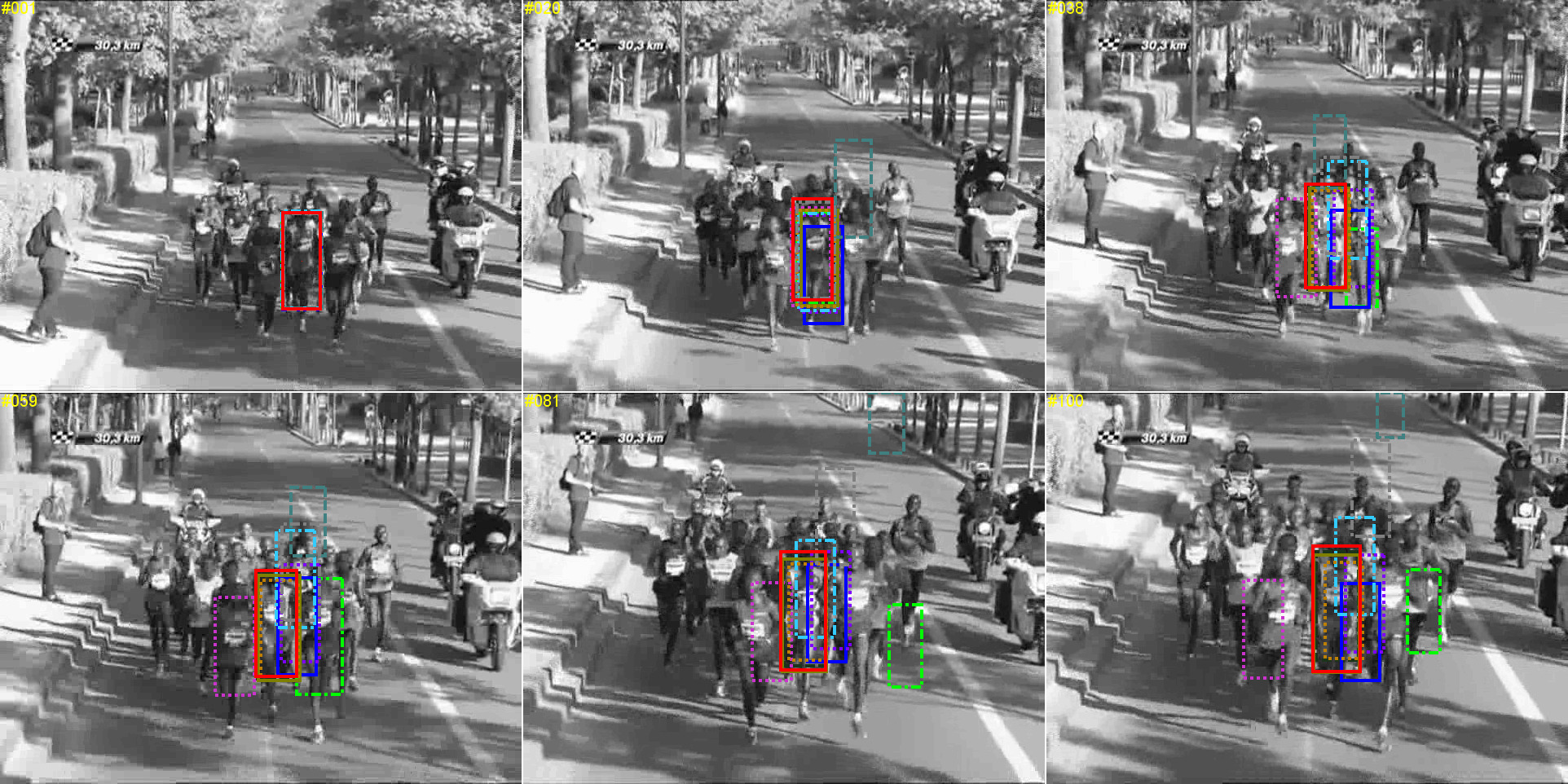}\\
 \caption{Tracking results of different trackers over some
representative frames from the ``race'' video sequence
in the scenarios with background clutters.
}
 \label{fig:tracking_race} \vspace{-0.1cm}
\end{figure*}

\begin{figure*}[t]
\centering
\includegraphics[width=0.8\linewidth]{./Suppelmentary_File/NewFigs/NewLegendBar.PNG}\\
\includegraphics[width=0.85\linewidth]{./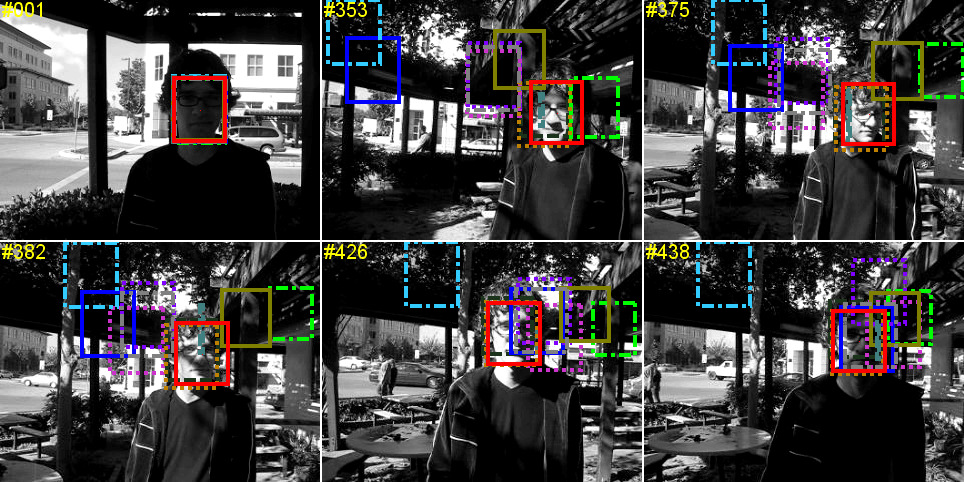}\\
 \caption{Tracking results of different trackers over some
representative frames from the ``trellis70'' video sequence
in the scenarios with drastic illumination changes and  head pose variations.
}
 \label{fig:tracking_trellis70} \vspace{-0.1cm}
\end{figure*}

\begin{figure*}[t]
\centering
 \vspace{-0.6cm}
\includegraphics[scale=0.3]{./Suppelmentary_File/NewFigs/NewLegendBar.PNG}\\
\includegraphics[scale=0.41]{./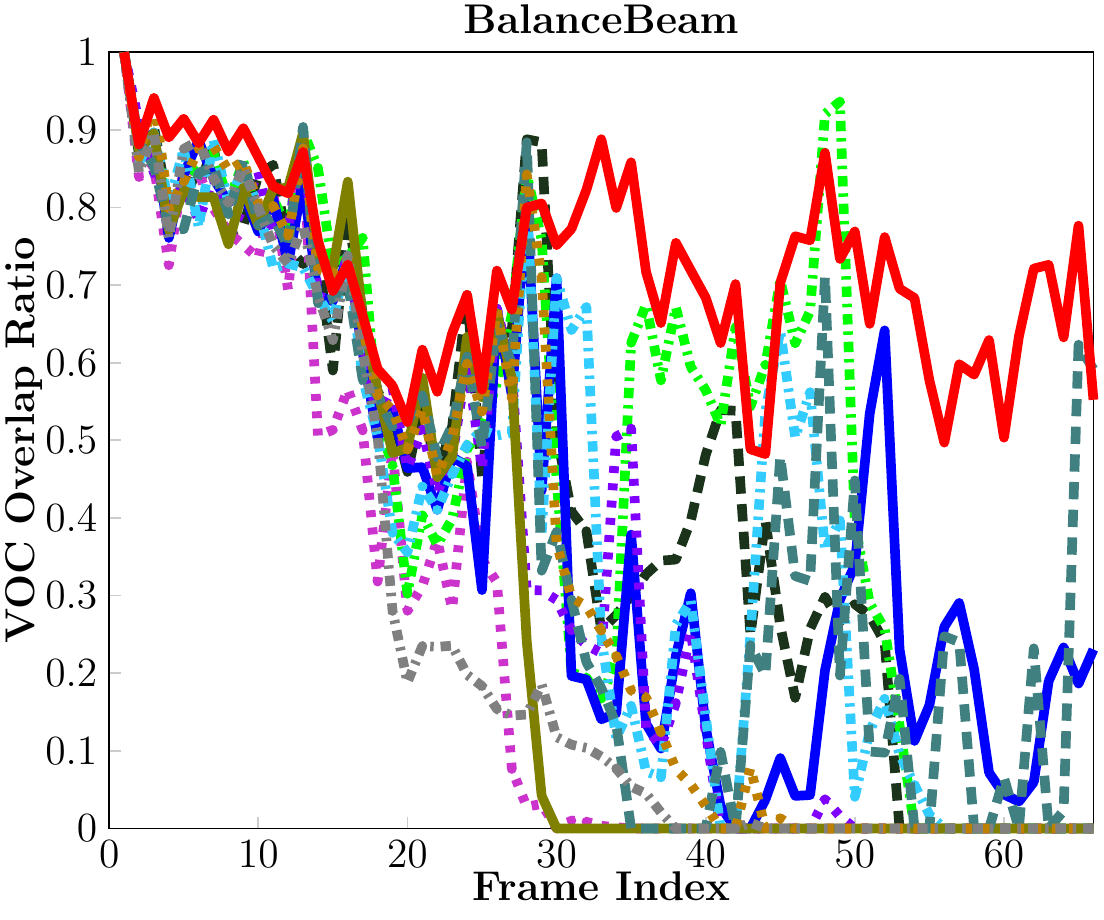}
\includegraphics[scale=0.41]{./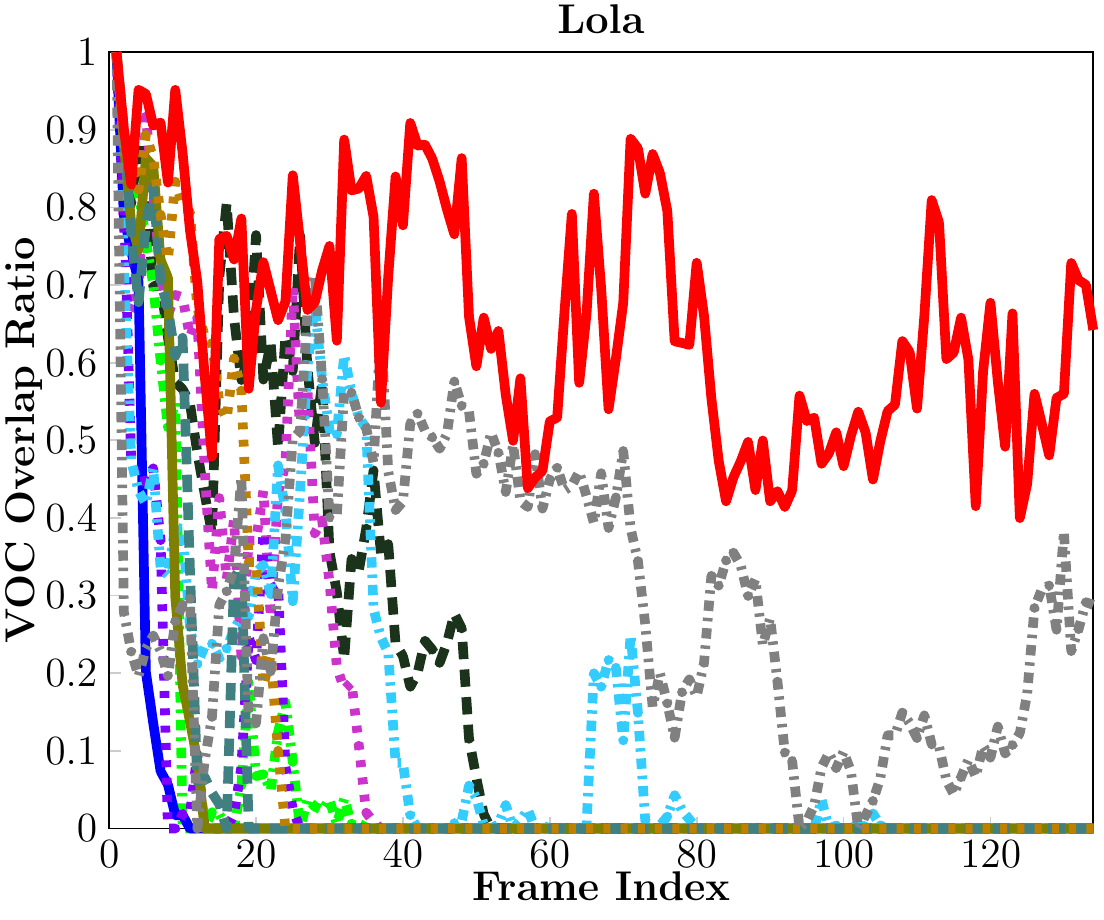}
\includegraphics[scale=0.41]{./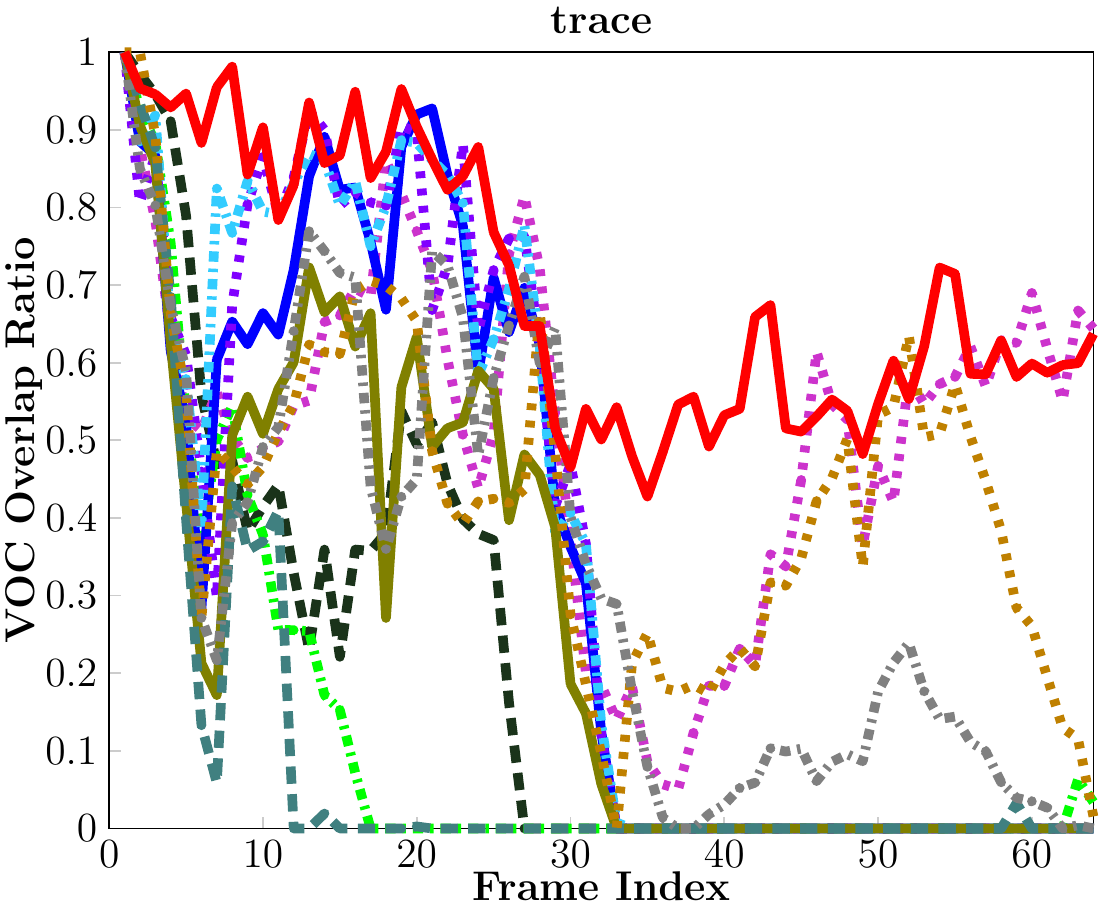}\\
\includegraphics[scale=0.41]{./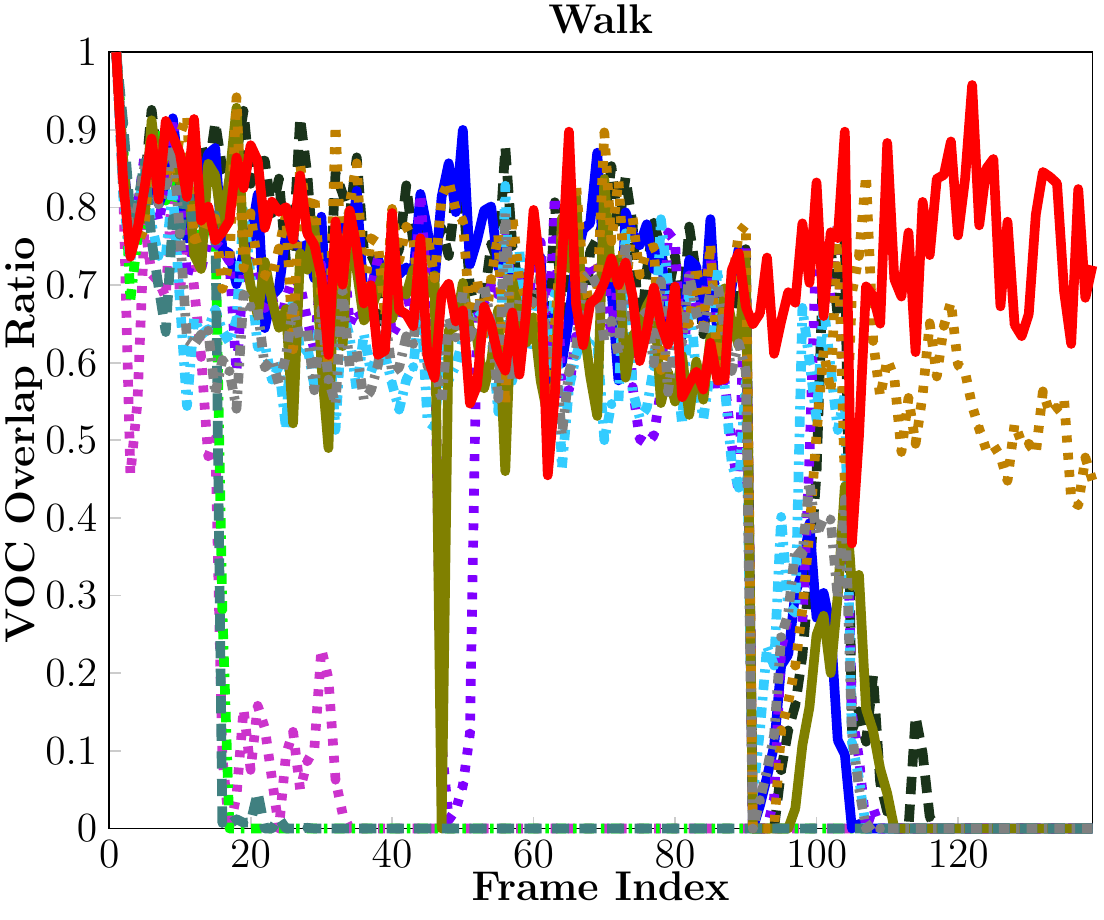}
\includegraphics[scale=0.41]{./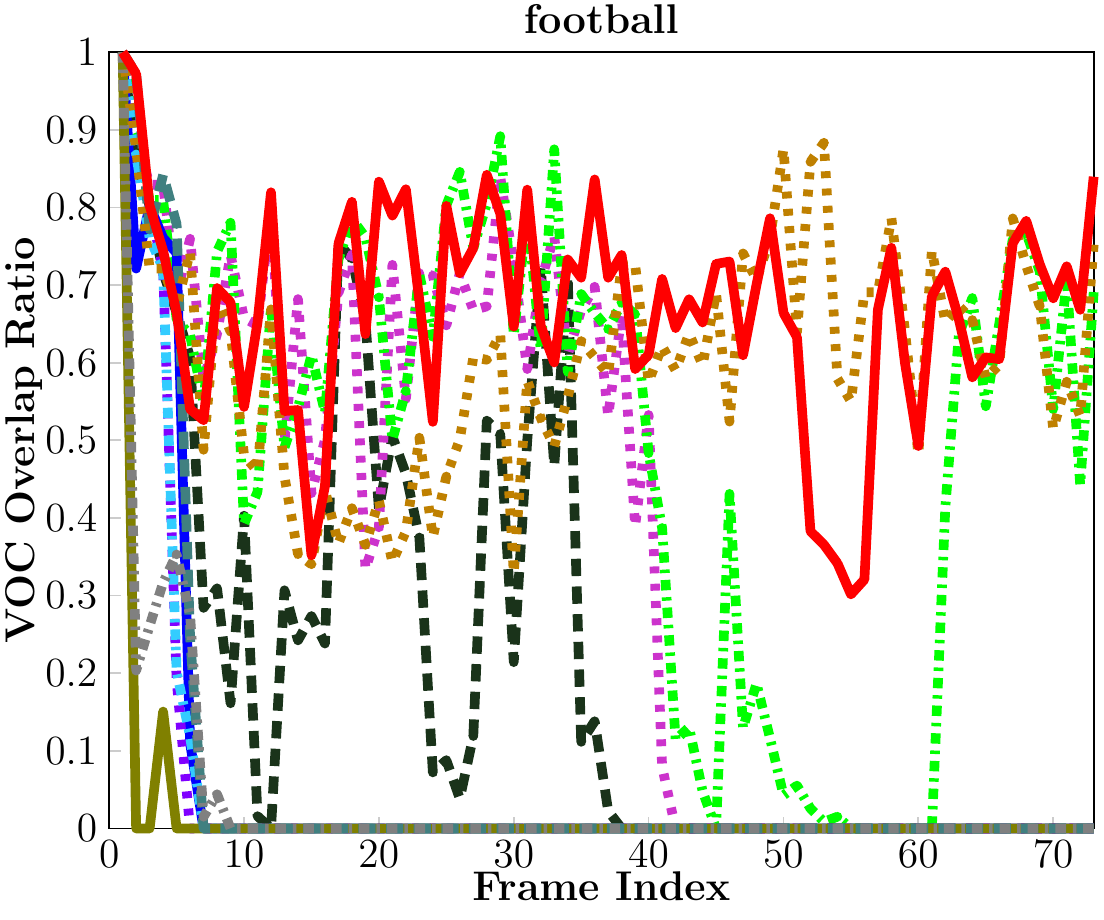}
\includegraphics[scale=0.41]{./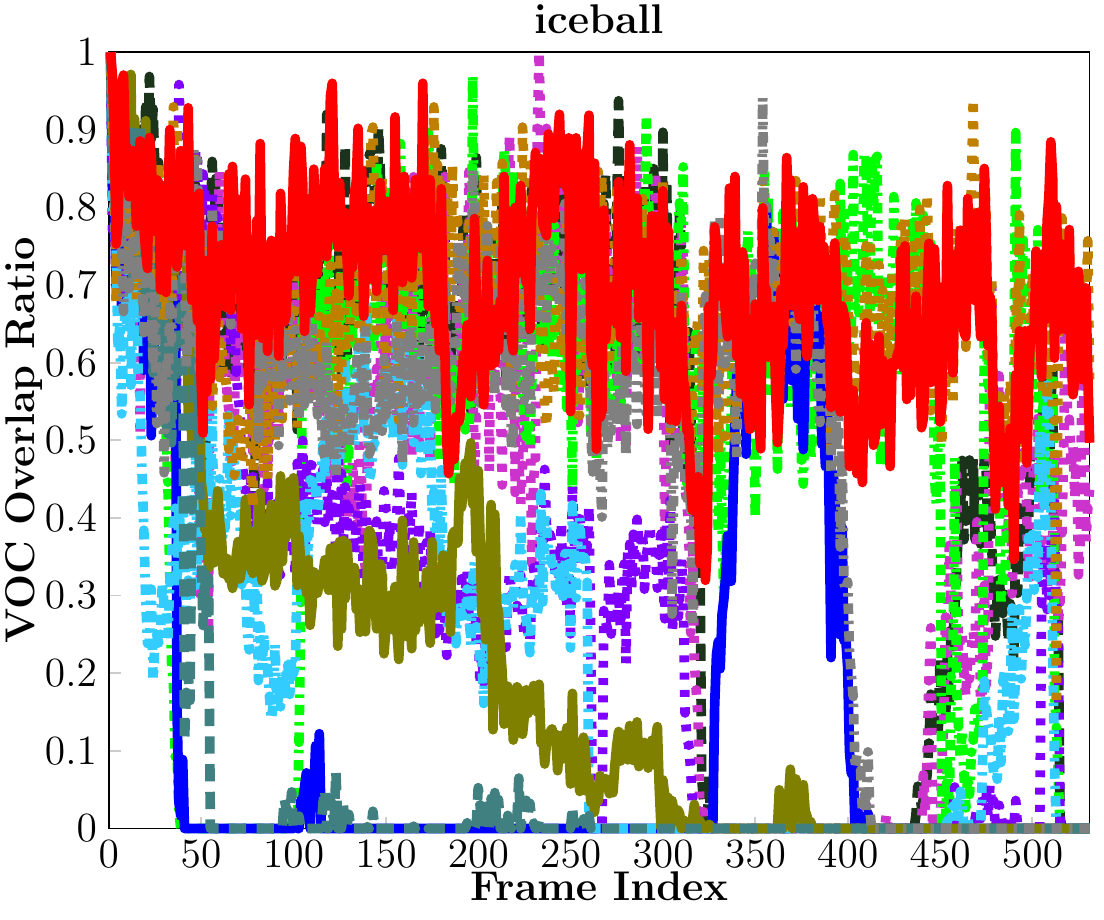}\\
\includegraphics[scale=0.41]{./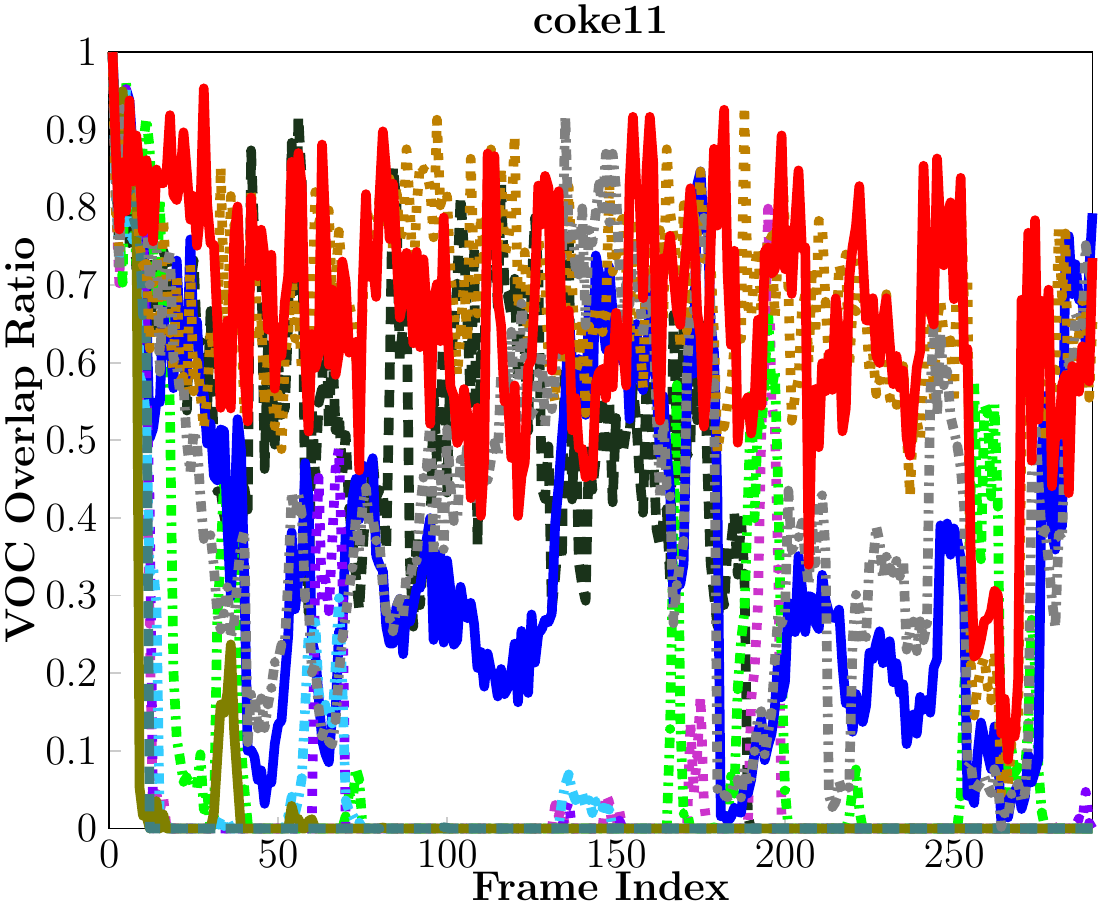}
\includegraphics[scale=0.41]{./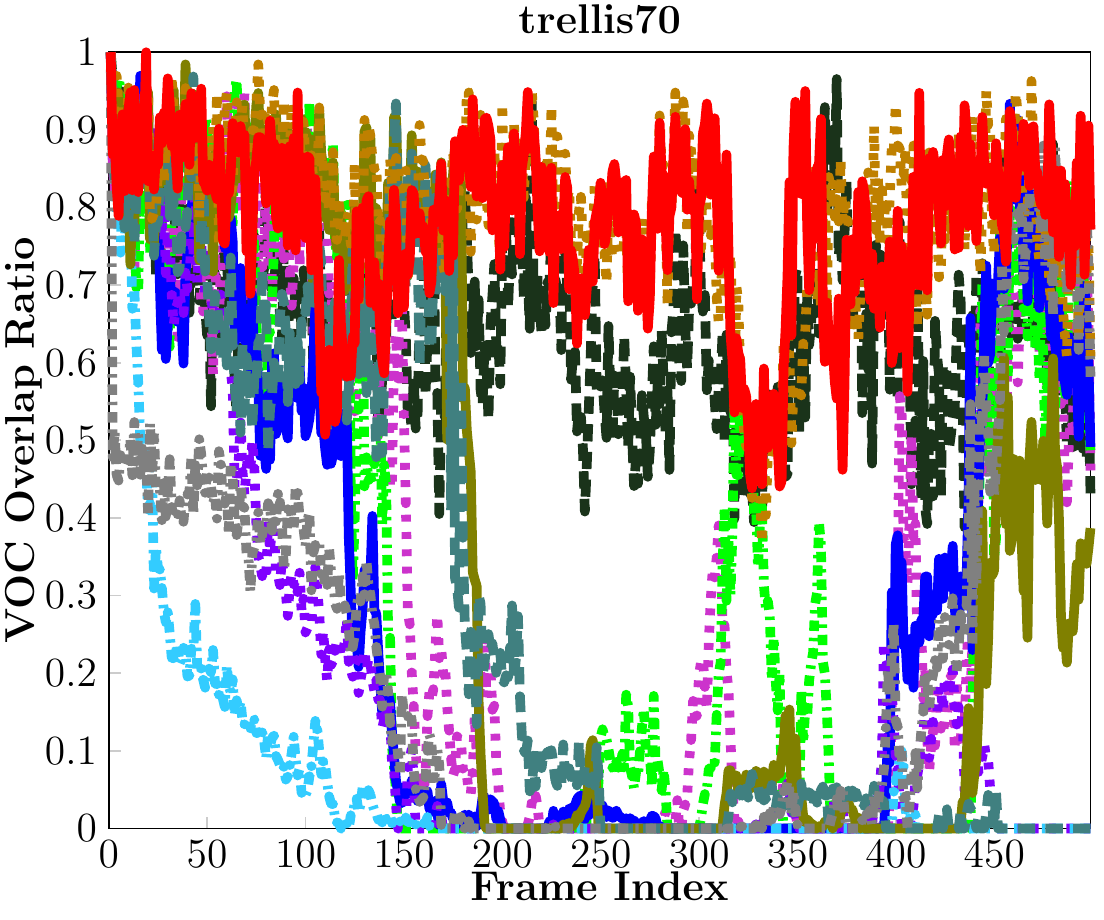}
\includegraphics[scale=0.41]{./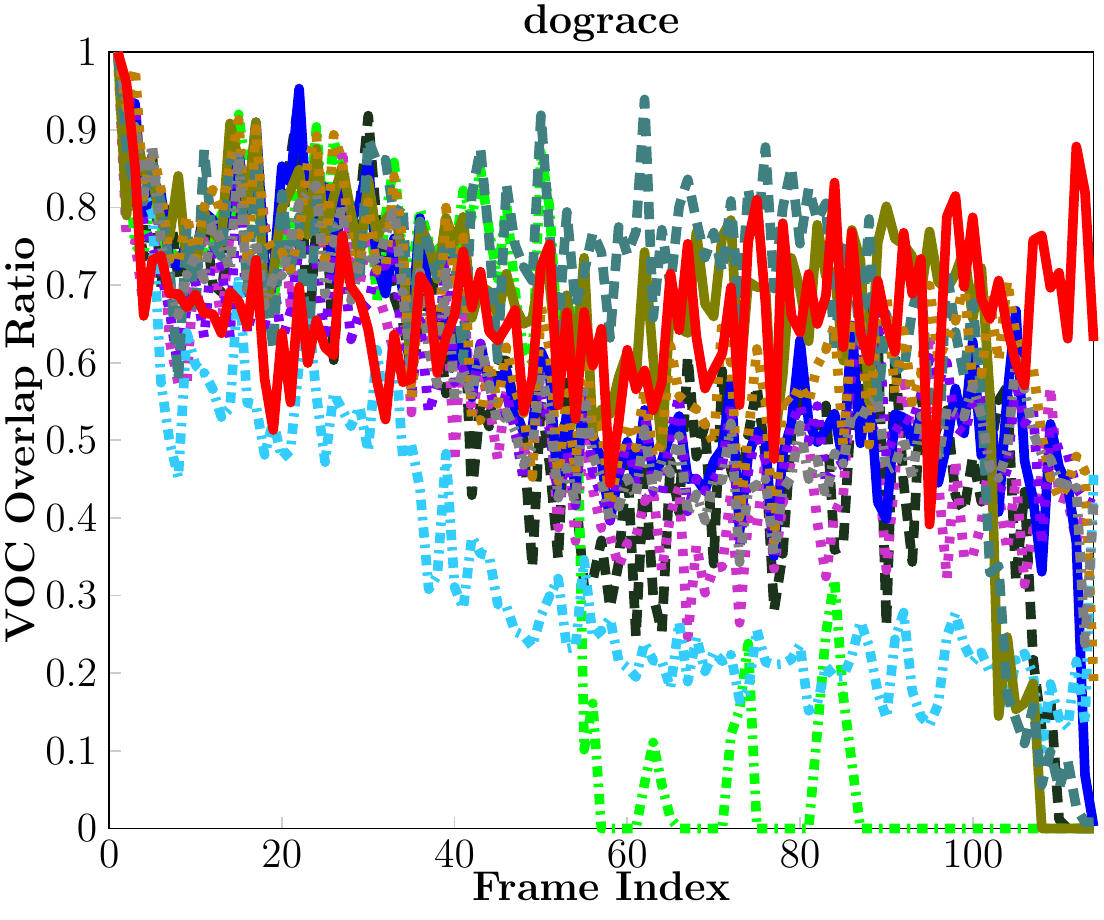}\\
\includegraphics[scale=0.41]{./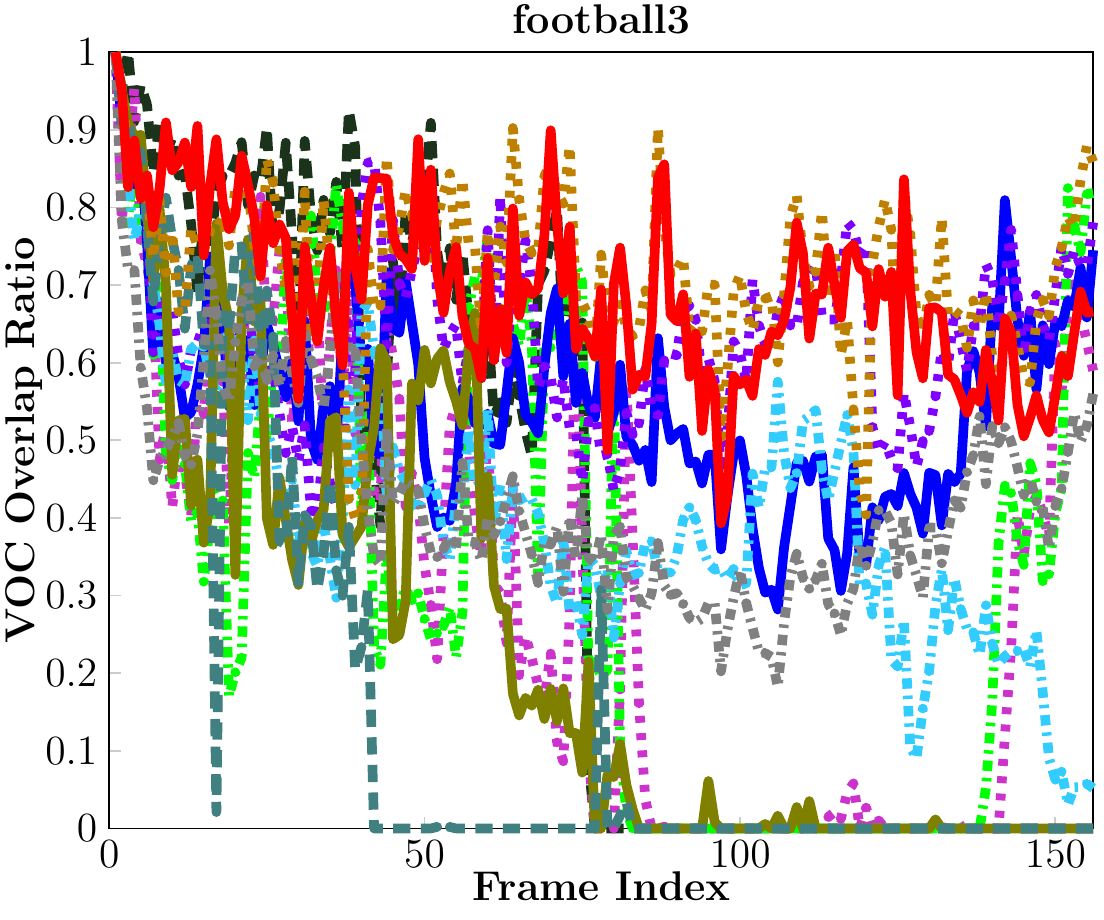}
\includegraphics[scale=0.41]{./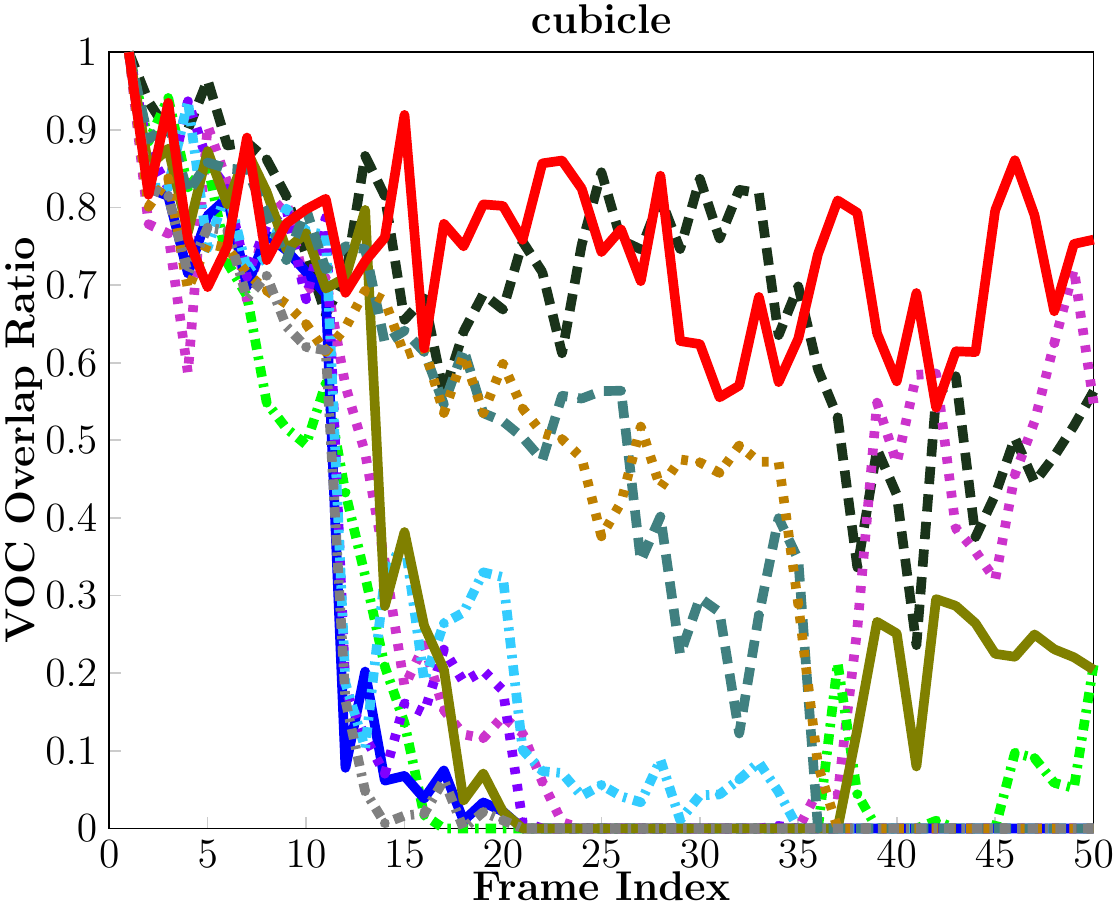}
\includegraphics[scale=0.41]{./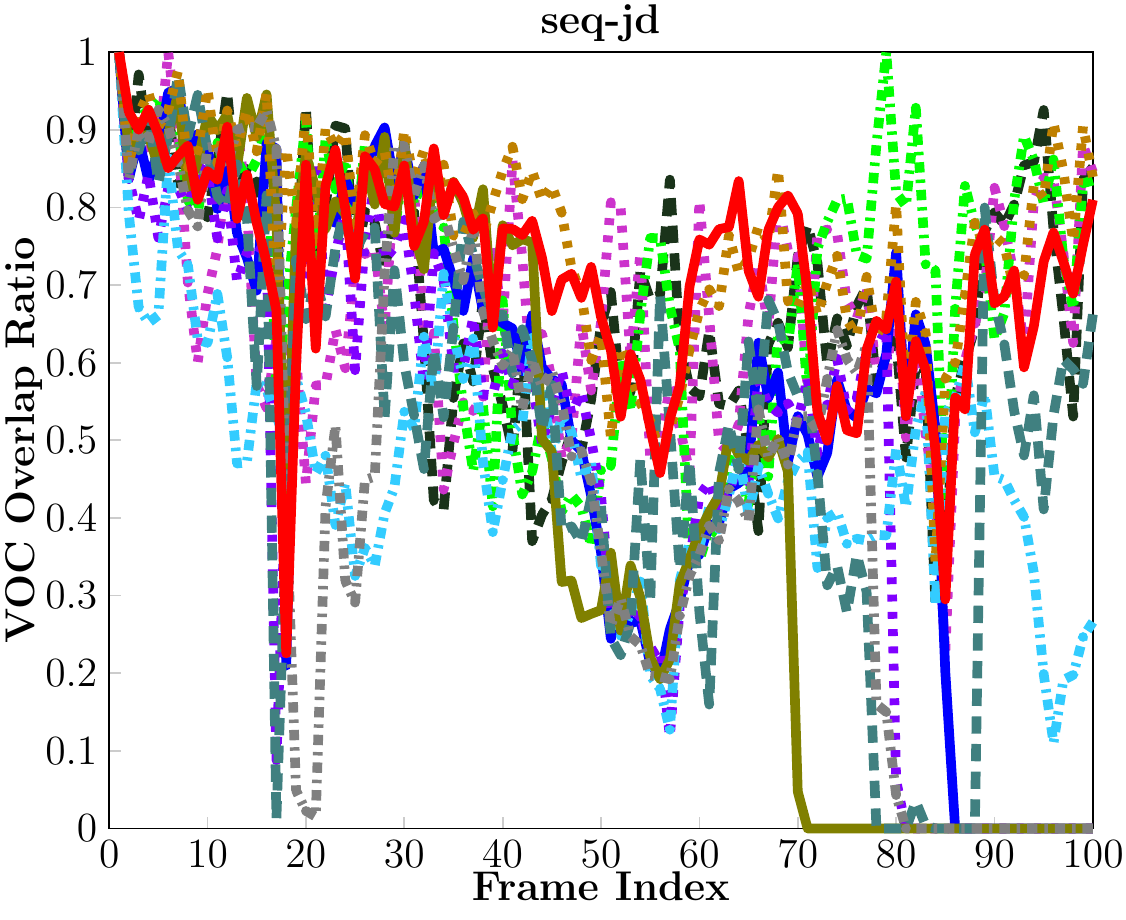}\\
\includegraphics[scale=0.41]{./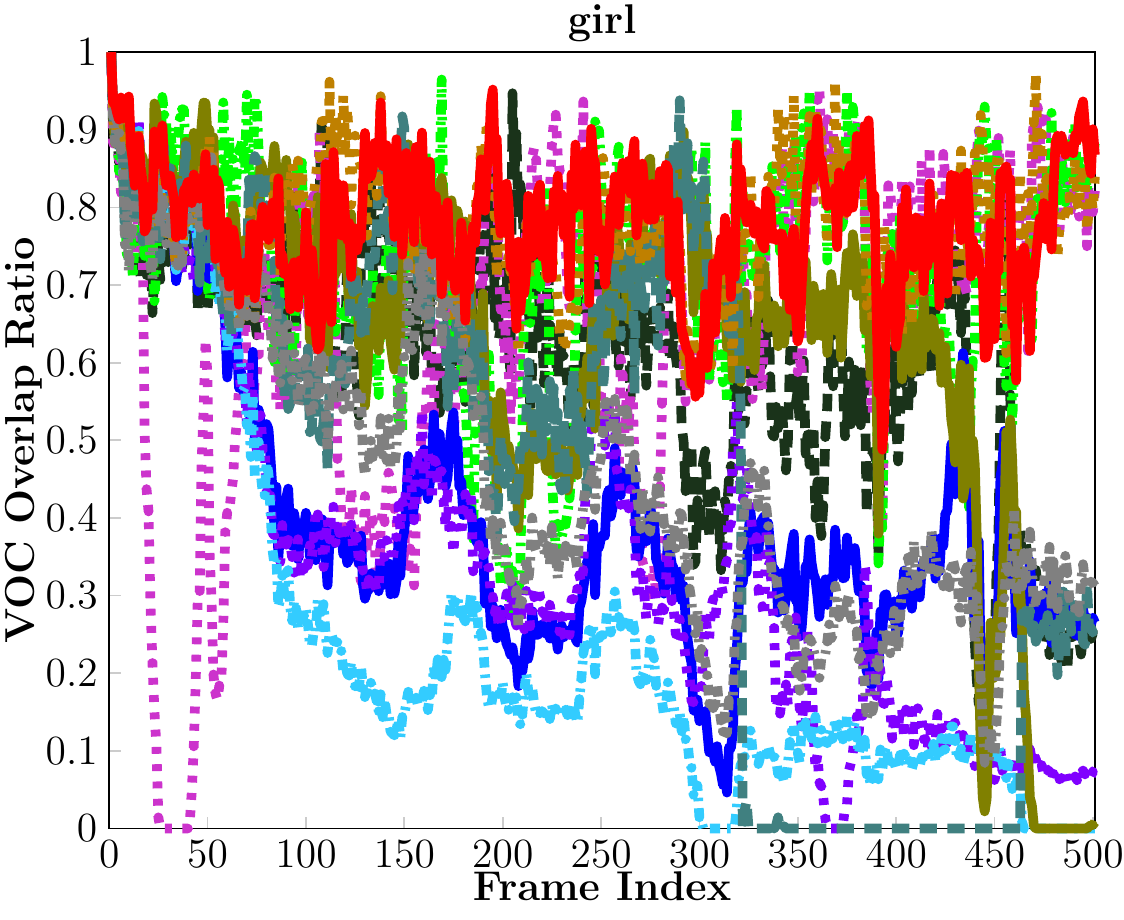}
\includegraphics[scale=0.41]{./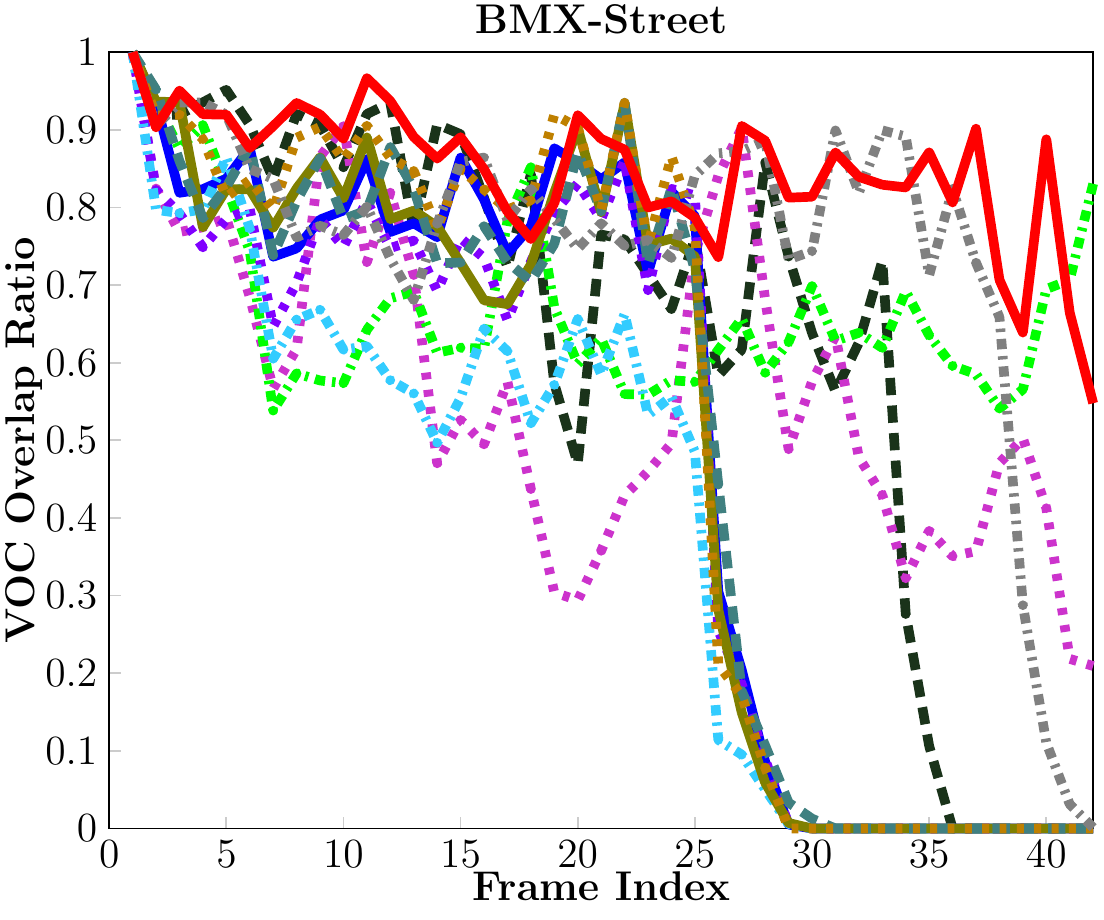}
\includegraphics[scale=0.41]{./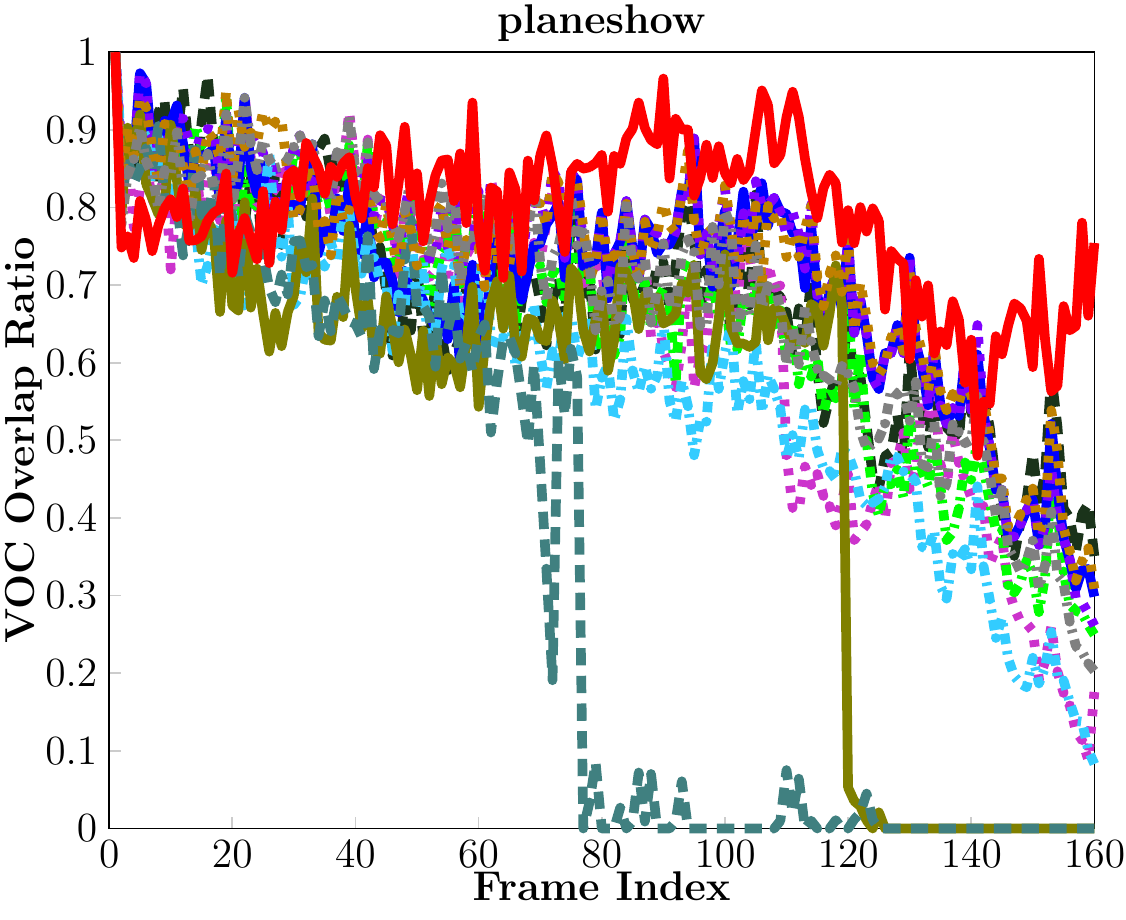}\\
\vspace{-0.13cm}
\caption{Quantitative comparison of different trackers in VOR on the first fifteen
video sequences.}
 \vspace{-0.11cm}
  \label{fig:exp_voc_curve}
\end{figure*}

In the experiments,
the following
trackers are implemented using their publicly available source code: FragT,
MILT,
VTD,
OAB, CT, Struck
IPCA,
L1T, TLD, ASLA,
and SCM.
For OAB, there are two different versions (namely, OAB1 and OAB5), which
are based on two different configurations (i.e., the search scale $r=1$ and $r=5$
as in \cite{Babenko-Yang-Belongie-cvpr2009}).

Figs.~\ref{fig:tracking_Lola}-\ref{fig:tracking_trellis70} show
the qualitative tracking results of the eleven trackers
on several sample frames on six video sequences.
Fig.~\ref{fig:exp_voc_curve} and Tab.~\ref{Tab:quantitative} report the quantitative tracking
results of the eleven trackers (in CLE, VOR, and success rate) over several video sequences.
The complete tracking results and quantitative comparisons for all the
eighteen video sequences can be found in
the supplementary file.
From Fig.~\ref{fig:exp_voc_curve} and Tab.~\ref{Tab:quantitative},
we observe that the proposed tracking algorithm
achieves the best tracking performance by all measures on most video sequences.
In the experiments, the TLD tracker (using the default parameter settings) produces the incomplete tracking results over some video sequences
because of its particular tracking-learning-detection properties (i.e., tracking reliability analysis
by simultaneously performing object detection and optical flow-based verification). Therefore, we only show
the video sequences in which the TLD tracker can always achieve stable tracking performances for all the frames.
The reasons for the incomplete tracking results are briefly analyzed as follows.
In principle, the TLD tracker takes a tracking-by-detection strategy that
needs to simultaneously perform object detection as well as optical flow-based tracking verification across successive frames.
In the case of severe occlusions (or drastic pose changes or tiny objects or strong background clutters),
it adaptively evaluates the tracking reliability by performing optical flow-based tracking verification or object classification
(whose classification score may be very low), and is likely to
remove the unreliable tracking results, leading to the tracking unavailability over several frames.
With the emergence of the visually feasible tracked objects, the detection component of
the TLD tracker is automatically activated to localize the tracked objects.

Subsequently, we briefly analyze the reasons why our tracker works
well in some challenging situations. In essence, the foreground samples
stored in the buffer approximately constitute an object manifold
that contains the intrinsic structural information on object appearance.
After distance metric learning, the object manifold encodes
more discriminative information on object/non-object classification.
If test samples are contaminated by some complicated
factors (e.g., shape deformation, noisy corruption, and illumination variation),
the intrinsic manifold structural properties of object appearance
are very helpful to recover these test samples from contamination by manifold embedding
(i.e., metric-weighted linear regression).
Moreover, time-weighted reservoir sampling is able to ensure
that the sample buffer retains useful old samples with
a long lifespan and meanwhile adapts to recent appearance changes.
Therefore, metric-weighted linear regression on such a sample buffer
can not only alleviate the tracker drift problem but also adapt to
complicated appearance changes. Besides, metric-weighted
linear regression on the background buffer can generate
the discriminative information to help the tracker
reject several false foreground samples (caused by occlusion, out-of-plane rotation, and pose variation).
Finally, the used features during tracking are extracted in a block-division manner.
Therefore, they are capable of encoding the local geometrical information on object appearance,
leading to the robustness to complicated scenarios (e.g., partial occlusion).
Of course, when the appearance distinction between the foreground
and background samples (especially for background clutter) is small,
discriminative distance metric learning cannot improve
the performance of metric-weighted linear regression.
In this case, our tracker is incapable of accurately capturing
the target location (e.g., the ``coke11'' sequence shown in Tab.~\ref{Tab:quantitative}) or even failing to track.

\begin{table}[t]
\centering
\scalebox{0.56}
{
   \centering
    \begin{tabular}{c|c|c|c|c|c|c|c|c|c|c|c|c|c|c|c|c|c|c|c}\hline

          &       & B-Beam & Lola  & trace & Walk  & football & iceball & coke11 & trellis70 & dograce & football3 & cubicle & seq-jd & girl  & B-Street & planeshow & race  & CamSeq01 & car11 \\\hline

    {\multirow{15}[2]{*}{CLE}} & Ours+S & {\textbf{3.52 }} & {\textbf{10.67 }} & {\textbf{6.23 }} & {\textbf{5.91 }} & {\textbf{2.88 }} & {\textbf{2.75 }} & {\textbf{5.13 }} & {\textbf{4.22 }} & {\textbf{3.12 }} & {5.63} & {\textbf{3.10 }} & {\textbf{3.02 }} & {\bf 7.35 } & {\textbf{3.11 }} & {\textbf{2.89 }} & {6.96 } & {\textbf{2.87 }} & {2.62 } \\
    {} & Ours  & {4.68 } & {11.08 } & {8.65 } & {6.08 } & {3.14 } & {3.03 } & {5.76 } & {5.62 } & {3.54 } & {\bf 3.92 } & {4.31 } & {4.30 } & {7.72 } & {3.31 } & {3.00 } & {8.52 } & {3.00 } & {2.74 } \\
    {} & DML   & {22.03 } & {102.62 } & {91.42 } & {30.23 } & {65.34 } & {15.94 } & {30.83 } & {9.10 } & {10.57 } & {89.66 } & {5.02 } & {5.47 } & {20.26 } & {22.10 } & {3.28 } & {7.29 } & {3.74 } & {3.79 } \\
    {} & FragT & {46.61 } & {141.00 } & {19.70 } & {92.46 } & {31.95 } & {13.86 } & {59.54 } & {39.86 } & {8.43 } & {28.52 } & {25.54 } & {5.83 } & {20.92 } & {15.29 } & {10.04 } & {45.28 } & {9.20 } & {65.31 } \\
    {} & VTD   & {15.82 } & {142.69 } & {110.25 } & {103.31 } & {9.81 } & {14.35 } & {45.66 } & {51.68 } & {24.56 } & {32.88 } & {46.11 } & {4.88 } & {10.97 } & {7.27 } & {3.64 } & {65.38 } & {7.09 } & {32.18 } \\
    {} & MILT  & {19.08 } & {138.97 } & {68.00 } & {37.08 } & {103.59 } & {76.09 } & {17.71 } & {60.41 } & {7.97 } & {9.23 } & {43.53 } & {16.24 } & {39.96 } & {41.16 } & {4.99 } & {25.55 } & {13.26 } & {44.79 } \\
    {} & OAB1  & {31.10 } & {146.08 } & {65.14 } & {36.47 } & {76.53 } & {24.64 } & {57.90 } & {68.58 } & {8.51 } & {6.78 } & {41.66 } & {20.42 } & {51.51 } & {43.32 } & {4.18 } & {33.44 } & {8.21 } & {24.73 } \\
    {} & OAB5  & {24.15 } & {67.16 } & {66.31 } & {36.74 } & {97.16 } & {37.00 } & {64.97 } & {126.36 } & {18.56 } & {14.87 } & {37.06 } & {11.71 } & {67.13 } & {47.71 } & {11.30 } & {42.18 } & {5.85 } & {9.85 } \\
    {} & IPCA  & {41.64 } & {127.88 } & {80.60 } & {37.33 } & {133.67 } & {41.60 } & {56.60 } & {46.13 } & {6.70 } & {31.99 } & {33.61 } & {24.16 } & {20.90 } & {41.69 } & {28.02 } & {\bf 5.26 } & {21.22 } & {2.42 } \\
    {} & L1T   & {22.60 } & {139.89 } & {108.64 } & {124.09 } & {103.08 } & {119.69 } & {64.70 } & {27.64 } & {5.28 } & {64.07 } & {24.26 } & {12.76 } & {44.12 } & {46.98 } & {32.27 } & {160.79 } & {37.40 } & {25.46 } \\
    {} & Struck & {28.62 } & {139.90 } & {24.15 } & {10.83 } & {3.24 } & {3.76 } & {5.50 } & {4.82 } & {4.54 } & {4.32 } & {22.61 } & {3.32 } & {7.38 } & {42.06 } & {5.77 } & {7.71 } & {6.54 } & {2.09 } \\
    {} & CT    & {37.22 } & {23.04 } & {41.59 } & {35.94 } & {106.72 } & {29.97 } & {15.83 } & {50.97 } & {8.98 } & {13.50 } & {45.78 } & {22.06 } & {34.10 } & {9.33 } & {6.61 } & {76.20 } & {11.34 } & {29.43 } \\
    {} & ALSA  & {15.73 } & {30.50 } & {23.11 } & {28.14 } & {3.99 } & {3.17 } & {12.98 } & {4.99 } & {11.20 } & {31.23 } & {25.34 } & {20.19 } & {36.87 } & {6.44 } & {3.19 } & {35.41 } & {10.38 } & {2.05 } \\
    {} & SCM   & {16.98 } & {22.94 } & {20.35 } & {31.81 } & {26.60 } & {3.56 } & {94.74 } & {12.30 } & {8.68 } & {36.85 } & {23.04 } & {11.66 } & {7.47 } & {8.25 } & {5.18 } & {33.12 } & {7.17 } & {\textbf{1.92 }} \\
    {} & TLD   & {-} & {-} & {-} & {-} & {-} & {-} & {9.73 } & {13.11 } & {-} & {-} & {19.58 } & {-} & {19.15 } & {-} & {-} & {-} & {9.10 } & {-} \\\hline
    {\multirow{15}[2]{*}{VOR}} & Ours+S & {\textbf{0.78 }} & {\textbf{0.67 }} & {\textbf{0.74 }} & {\textbf{0.73 }} & {\textbf{0.69 }} & {\textbf{0.71 }} & {0.65 } & {\textbf{0.83 }} & {\textbf{0.67 }} & {\textbf{0.72 }} & {\textbf{0.78 }} & {\textbf{0.76 }} & {\textbf{0.78 }} & {\textbf{0.86 }} & {\textbf{0.80 }} & {\textbf{0.82 }} & {\textbf{0.83 }} & {0.77 } \\
    {} & Ours  & {0.72 } & {0.66 } & {0.70 } & {0.72 } & {0.67 } & {0.69 } & {0.65 } & {0.78 } & {0.66 } & {\textbf{0.72 }} & {0.74 } & {0.72 } & {\textbf{0.78 }} & {0.85 } & {0.79 } & {0.80 } & {0.82 } & {0.76 } \\
    {} & DML   & {0.43 } & {0.18 } & {0.20 } & {0.54 } & {0.21 } & {0.47 } & {0.36 } & {0.65 } & {0.54 } & {0.36 } & {0.68 } & {0.68 } & {0.60 } & {0.63 } & {0.68 } & {0.78 } & {0.79 } & {0.69 } \\
    {} & FragT & {0.24 } & {0.13 } & {0.51 } & {0.09 } & {0.36 } & {0.44 } & {0.05 } & {0.34 } & {0.54 } & {0.30 } & {0.35 } & {0.67 } & {0.64 } & {0.57 } & {0.64 } & {0.23 } & {0.65 } & {0.08 } \\
    {} & VTD   & {0.49 } & {0.06 } & {0.12 } & {0.09 } & {0.50 } & {0.52 } & {0.11 } & {0.33 } & {0.38 } & {0.32 } & {0.20 } & {0.70 } & {0.73 } & {0.67 } & {0.66 } & {0.16 } & {0.74 } & {0.37 } \\
    {} & MILT  & {0.41 } & {0.03 } & {0.34 } & {0.50 } & {0.06 } & {0.13 } & {0.35 } & {0.29 } & {0.59 } & {0.55 } & {0.18 } & {0.53 } & {0.39 } & {0.50 } & {0.71 } & {0.34 } & {0.63 } & {0.15 } \\
    {} & OAB1  & {0.34 } & {0.04 } & {0.35 } & {0.46 } & {0.05 } & {0.26 } & {0.04 } & {0.16 } & {0.57 } & {0.63 } & {0.21 } & {0.47 } & {0.34 } & {0.47 } & {0.73 } & {0.28 } & {0.64 } & {0.37 } \\
    {} & OAB5  & {0.40 } & {0.13 } & {0.36 } & {0.45 } & {0.05 } & {0.21 } & {0.04 } & {0.06 } & {0.35 } & {0.40 } & {0.24 } & {0.46 } & {0.24 } & {0.39 } & {0.58 } & {0.36 } & {0.74 } & {0.43 } \\
    {} & IPCA  & {0.30 } & {0.05 } & {0.26 } & {0.46 } & {0.02 } & {0.19 } & {0.03 } & {0.36 } & {0.65 } & {0.23 } & {0.30 } & {0.45 } & {0.62 } & {0.50 } & {0.51 } & {0.79 } & {0.49 } & {0.78 } \\
    {} & L1T   & {0.41 } & {0.07 } & {0.09 } & {0.09 } & {0.06 } & {0.07 } & {0.03 } & {0.28 } & {0.67 } & {0.16 } & {0.41 } & {0.52 } & {0.46 } & {0.50 } & {0.33 } & {0.08 } & {0.27 } & {0.43 } \\
    {} & Struck & {0.35 } & {0.11 } & {0.43 } & {0.65 } & {0.60 } & {0.68 } & {\textbf{0.66 }} & {0.80 } & {0.64 } & {\textbf{0.72 }} & {0.41 } & {\textbf{0.76 }} & {\textbf{0.78 }} & {0.52 } & {0.73 } & {0.66 } & {0.68 } & {\textbf{0.79 }} \\
    {} & CT    & {0.25 } & {0.30 } & {0.33 } & {0.45 } & {0.03 } & {0.47 } & {0.41 } & {0.22 } & {0.57 } & {0.43 } & {0.17 } & {0.45 } & {0.45 } & {0.75 } & {0.69 } & {0.26 } & {0.63 } & {0.29 } \\
    {} & ALSA  & {0.47 } & {0.39 } & {0.40 } & {0.51 } & {0.57 } & {0.60 } & {0.42 } & {0.78 } & {0.55 } & {0.30 } & {0.24 } & {0.46 } & {0.39 } & {0.76 } & {0.60 } & {0.18 } & {0.66 } & {\textbf{0.79 }} \\
    {} & SCM   & {0.36 } & {0.42 } & {0.43 } & {0.50 } & {0.27 } & {0.61 } & {0.11 } & {0.70 } & {0.57 } & {0.29 } & {0.45 } & {0.48 } & {0.76 } & {0.71 } & {0.65 } & {0.23 } & {0.71 } & {0.77 } \\
    {} & TLD   & {-} & {-} & {-} & {-} & {-} & {-} & {0.56 } & {0.69 } & {-} & {-} & {0.48 } & {-} & {0.66 } & {-} & {-} & {-} & {0.70 } & {-} \\\hline
    {\multirow{15}[2]{*}{\begin{tabular}[c]{@{}c@{}}Success\\Rate\end{tabular}}} & Ours+S & {\textbf{0.95 }} & {\textbf{0.81 }} & {\textbf{0.98 }} & {\textbf{0.99 }} & {\textbf{0.90 }} & {0.94 } & {0.88 } & {\textbf{0.99 }} & {\textbf{0.98 }} & {\textbf{0.97 }} & {\textbf{0.98 }} & {0.96 } & {\textbf{0.99 }} & {\textbf{0.98 }} & {\textbf{0.99 }} & {\textbf{0.99 }} & {\textbf{0.99 }} & {\textbf{1.00 }} \\
    {} & Ours  & {0.94 } & {0.80 } & {0.89 } & {0.98 } & {0.88 } & {0.93 } & {0.87 } & {0.98 } & {0.97 } & {\textbf{0.97 }} & {\textbf{0.98 }} & {0.95 } & {\textbf{0.99 }} & {0.97 } & {0.98 } & {\textbf{0.99 }} & {\textbf{0.99 }} & {\textbf{1.00 }} \\
    {} & DML   & {0.43 } & {0.18 } & {0.12 } & {0.67 } & {0.20 } & {0.59 } & {0.37 } & {0.90 } & {0.60 } & {0.46 } & {0.82 } & {0.85 } & {0.79 } & {0.74 } & {0.86 } & {\textbf{0.99 }} & {\textbf{0.99 }} & {0.92 } \\
    {} & FragT & {0.25 } & {0.11 } & {0.63 } & {0.09 } & {0.47 } & {0.52 } & {0.05 } & {0.40 } & {0.49 } & {0.32 } & {0.37 } & {0.88 } & {0.79 } & {0.51 } & {0.71 } & {0.22 } & {0.90 } & {0.08 } \\
    {} & VTD   & {0.57 } & {0.07 } & {0.11 } & {0.11 } & {0.62 } & {0.70 } & {0.10 } & {0.37 } & {0.47 } & {0.31 } & {0.20 } & {0.79 } & {0.92 } & {\bf 0.98 } & {0.78 } & {0.20 } & {\textbf{0.99 }} & {0.43 } \\
    {} & MILT  & {0.37 } & {0.03 } & {0.42 } & {0.64 } & {0.07 } & {0.16 } & {0.28 } & {0.34 } & {0.67 } & {0.61 } & {0.22 } & {0.61 } & {0.19 } & {0.58 } & {0.89 } & {0.18 } & {0.87 } & {0.08 } \\
    {} & OAB1  & {0.37 } & {0.01 } & {0.40 } & {0.62 } & {0.05 } & {0.14 } & {0.04 } & {0.13 } & {0.67 } & {0.87 } & {0.22 } & {0.58 } & {0.18 } & {0.58 } & {0.88 } & {0.19 } & {0.80 } & {0.39 } \\
    {} & OAB5  & {0.43 } & {0.08 } & {0.42 } & {0.67 } & {0.05 } & {0.12 } & {0.04 } & {0.03 } & {0.23 } & {0.24 } & {0.22 } & {0.38 } & {0.14 } & {0.53 } & {0.69 } & {0.25 } & {0.91 } & {0.33 } \\
    {} & IPCA  & {0.34 } & {0.06 } & {0.31 } & {0.62 } & {0.01 } & {0.09 } & {0.03 } & {0.38 } & {0.87 } & {0.22 } & {0.25 } & {0.45 } & {0.80 } & {0.58 } & {0.74 } & {\textbf{0.99 }} & {0.60 } & {\bf 1.00 } \\
    {} & L1T   & {0.43 } & {0.07 } & {0.06 } & {0.11 } & {0.07 } & {0.08 } & {0.04 } & {0.34 } & {0.87 } & {0.16 } & {0.49 } & {0.61 } & {0.57 } & {0.58 } & {0.45 } & {0.10 } & {0.28 } & {0.59 } \\
    {} & Struck & {0.40 } & {0.13 } & {0.37 } & {0.83 } & {0.74 } & {\textbf{0.95 }} & {\textbf{0.93 }} & {0.96 } & {0.83 } & {0.96 } & {0.47 } & {\textbf{0.97 }} & {\textbf{0.99 }} & {0.58 } & {0.89 } & {0.88 } & {0.93 } & {\bf 1.00 } \\
    {} & CT    & {0.27 } & {0.15 } & {0.29 } & {0.64 } & {0.01 } & {0.67 } & {0.35 } & {0.12 } & {0.60 } & {0.27 } & {0.22 } & {0.47 } & {0.37 } & {0.88 } & {0.83 } & {0.20 } & {0.89 } & {0.17 } \\
    {} & ALSA  & {0.46 } & {0.31 } & {0.41 } & {0.68 } & {0.65 } & {0.73 } & {0.40 } & {0.95 } & {0.58 } & {0.31 } & {0.20 } & {0.48 } & {0.37 } & {0.93 } & {0.70 } & {0.20 } & {0.96 } & {\bf 1.00 } \\
    {} & SCM   & {0.30 } & {0.53 } & {0.42 } & {0.66 } & {0.32 } & {0.78 } & {0.12 } & {0.84 } & {0.61 } & {0.30 } & {0.63 } & {0.49 } & {\textbf{0.99 }} & {0.91 } & {0.83 } & {0.23 } & {0.94 } & {\bf 1.00 } \\
    {} & TLD   & {-} & {-} & {-} & {-} & {-} & {-} & {0.70 } & {0.81 } & {-} & {-} & {0.63 } & {-} & {0.90 } & {-} & {-} & {-} & {0.96 } & {-} \\\hline
    \end{tabular}%
}
\vspace{-0.25cm}
\caption{Quantitative comparison results of the fifteen trackers over
all the video sequences.
The table reports their average CLEs, VORs, and
success rates over each video sequence. Clearly, our tracker achieves the best tracking performance in most cases.
In the experiments, the TLD tracker produces the incomplete tracking results over some video sequences
because of its particular tracking-learning-detection properties (i.e., tracking reliability analysis
by simultaneously performing object detection and optical flow-based verification). Therefore, we only show
the video sequences in which the TLD tracker can always achieve stable tracking performances for all the frames.
 }
\label{Tab:quantitative}
\end{table}%

Moreover, time weighted reservoir sampling can
alleviate error accumulation during tracking.
Although some false foreground/background
samples may be added to the buffers because of tracking errors,
the old samples with a long lifespan can effectively reduce the
influence of the false foreground/background
samples on metric-weighted least square regression, leading
to robust tracking results. In other words, the
metric-weighted least square regression problem has two types
of reconstruction costs. One is based on
the old samples with a long lifespan, and the other
relies on the recent samples. Actually,
the regression cost for the old samples with a long lifespan
works as a regularizer that can resist tracker drift.
In addition, metric-weighted
linear regression on the background buffer can generate
the discriminative information to help the tracker
reject false foreground samples (caused by occlusion, out-of-plane rotation, and pose variation).
For instance, some false foreground training samples are
included into the foreground sample buffer at
the 18th frame of the ``seq-jd'' video sequence because of partial occlusions,
as shown in the supplementary demo video files.
After the 19th frame (without occlusions), our tracker
is still able to accurately localize the head target
with the help of discriminative metric-weighted
linear regression on the foreground and background
sample suffers.

\begin{figure}
\centering
\vspace{-0.5cm}
\includegraphics[scale=0.38]{./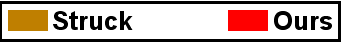}\\
\includegraphics[scale=0.6]{./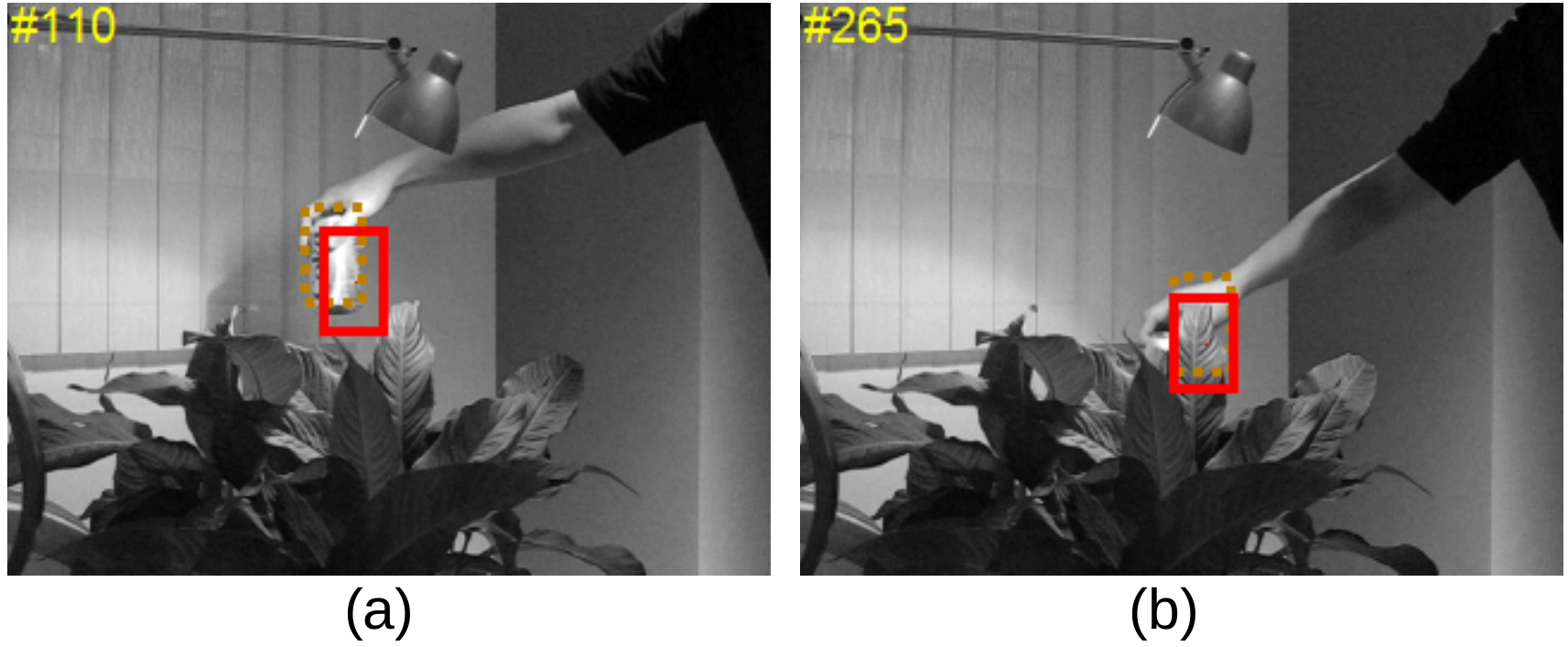}\\ \vspace{-0.41cm}
 \caption{Comparison between Struck and our tracker (highlighted in different colors). Specifically,
 (a) shows the case that our tracker almost fails while Struck succeeds in localizing the object. (b) displays
 the case that both of the trackers lose the object.
 }
 \label{fig:illustration} \vspace{-0.3cm}
\end{figure}

Fig.~\ref{fig:illustration} shows a failure case for our method  during
the ``coke11'' video sequence.
As shown in Fig.~\ref{fig:illustration}(a), the appearance difference between the tracked object and its surrounding background
is relatively small (i.e., they appear to be visually edgeless or textureless).
As a result, the foreground and background metric-weighted linear reconstruction costs for these regions
with respect to a set of foreground or background basis samples
are mutually close, resulting in a low confidence score for discriminative object/non-object classification with several false detection hypotheses.
Based on structured SVM learning for optimizing the structural localization
measure, Struck is capable of encoding the joint feature-location contextual information on object appearance,
leading to a relatively stable tracking result.
As shown in Fig.~\ref{fig:illustration}(b), a severe occlusion event leads to the almost complete disappearance of the tracked object.
Consequently, both of the trackers fail to track in this scenario.

Potentially, the further performance improvement can be made in the following two respects: 1) the current versions of the trackers are still
based on several hand-crafted visual features (e.g., HOG and Haar), which are weak in adaptively capturing the intrinsic discriminative appearance properties
of the tracked objects in various scenarios. Therefore, adaptive online feature learning is a potential solution to this issue.
2) the integration of tracking reliability analysis could handle some abnormal events like severe occlusions. If and only if
the tracking results are very reliable, then the detectors or classifiers can be updated.

The most competitive method to ours is Struck~\cite{harestruck_iccv2011},
which achieves a comparable or better tracking performance  on five of the eighteen sequences. Struck is
based on structured learning, and directly optimizes the
VOC overlap criterion using a structured SVM formulation.
In the following section, we extend our framework to optimize
this same criterion and re-evaluate against Struck.

\vspace{-0.26cm}
\subsection{Empirical evaluation of structured metric learning}

\begin{table}[t]
\vspace{-0.45cm}
\begin{center}
\scalebox{0.52}
{
\begin{tabular}{c||c|c|c|c|c|c||c|c|c|c|c|c||c|c|c|c|c|c}
\hline \scriptsize
& \multicolumn{6}{|c||}{CLE} & \multicolumn{6}{|c||}{VOR} & \multicolumn{6}{|c}{Success Rate}\\
\hline
& \makebox[1.1cm]{trellis70}   &  \makebox[0.9cm]{race} &
\makebox[0.9cm]{cubicle} & \makebox[1.3cm]{football3} & \makebox[0.9cm]{trace} & \makebox[0.9cm]{seq-jd} &
 \makebox[1.1cm]{trellis70}   &  \makebox[0.9cm]{race} &
\makebox[0.9cm]{cubicle} & \makebox[1.3cm]{football3} & \makebox[0.9cm]{trace} & \makebox[0.9cm]{seq-jd}
& \makebox[1.1cm]{trellis70}   &  \makebox[0.9cm]{race} &
\makebox[0.9cm]{cubicle} & \makebox[1.3cm]{football3} & \makebox[0.9cm]{trace} & \makebox[0.9cm]{seq-jd} \\\hline
\hline
Struck                          &\bf 4.82    &    7.71    &22.61   &4.32    &24.15   &3.32   &0.80          &0.66    &0.41          &\bf0.72    &0.43    &\bf0.76    &0.96    &0.88       &0.47        &0.96    &0.37    &\bf0.97\\
Non-structured metric learning  &    5.62    &    8.52    &4.31    &\bf3.92    &8.65    &4.30   &0.78       &0.80    &0.74          &\bf0.72    &0.70    &0.72        &0.98    &\bf0.99     &\bf0.98    &\bf0.97    &0.89    &0.95\\
Structured metric learning      &    4.22    &\bf 6.96    &\bf3.10    &5.63    &\bf6.23    &\bf3.02   &\bf0.83    &0.82    &0.78    &\bf0.72    &\bf0.74    &\bf0.76  &\bf0.99    &\bf0.99    &\bf0.98    &\bf0.97    &\bf0.98   &0.96\\
No metric learning              &    7.80    &    15.86    &4.43    &4.01   &15.75    &5.16   &0.73    &0.69    &0.70               &0.68       &0.56     &0.68        &0.93    &\bf0.99     &0.96       &0.95    &0.62    &0.94\\

\hline
\end{tabular}
}
\end{center}
\vspace{-0.6cm}
\caption{Quantitative evaluation of the proposed tracker using
different learning strategies on eight video sequences.
The table reports their average CLEs,
VORs, and success rates across frames.}
\label{Tab:structured_metric_VOC_CLE}
\end{table}

\begin{figure}[h!]
\vspace{-0.66cm}
\centering
\includegraphics[scale=0.55]{./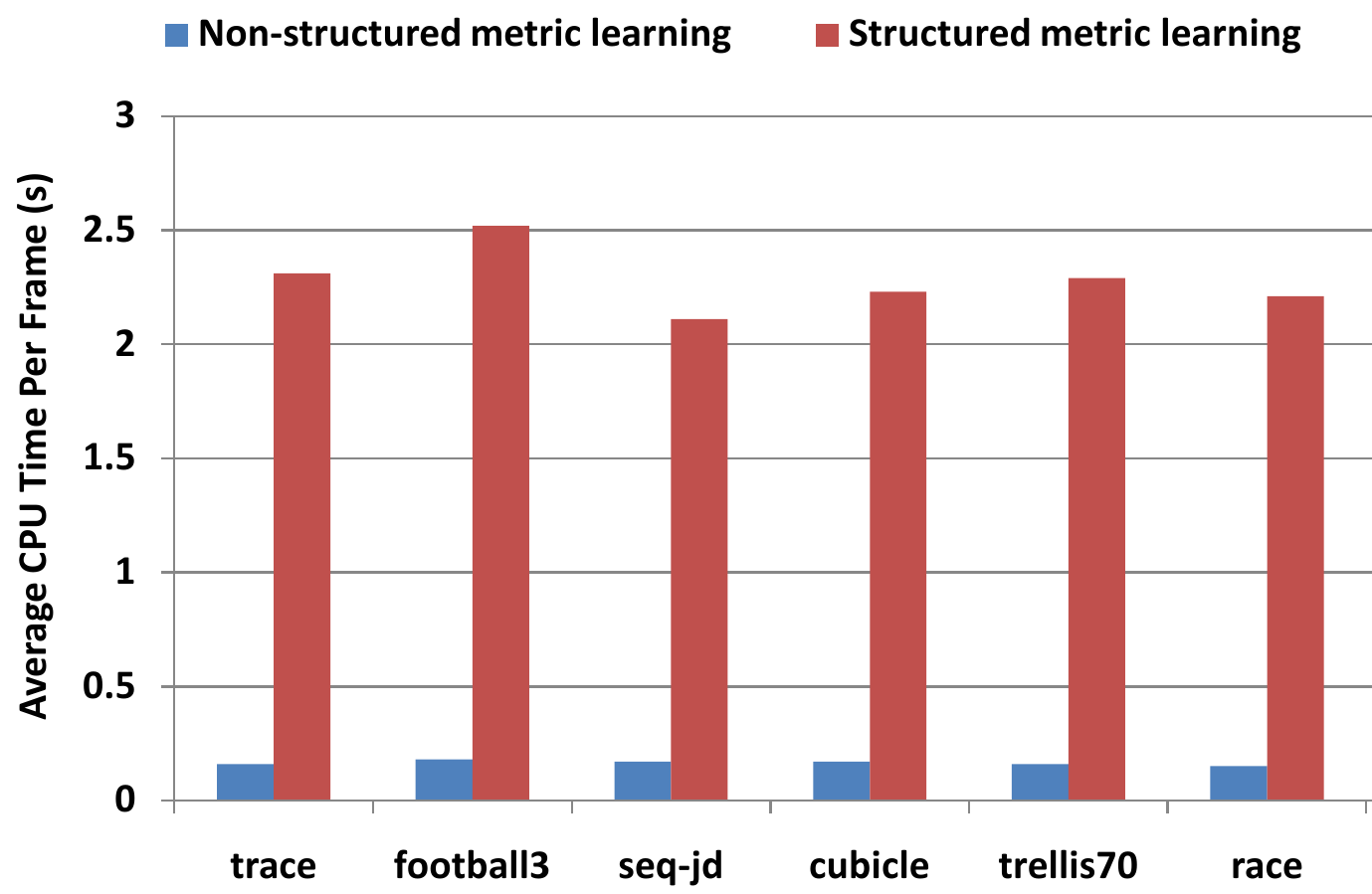}
 \vspace{-0.2cm}
 \caption{Runtime performance of
the proposed tracker using
different metric learning strategies on eight video sequences.
The table reports their average running time of
performing metric learning across frames.}
 \label{fig:structured_metric_runtime} \vspace{-0.86cm}
\end{figure}
To evaluate the effect of structured
metric learning, we
compare its tracking performance to our previous method on eight video sequences.
For computational efficiency, the structured metric learning method
takes a uniform sampling strategy to
randomly generate a collection of bounding boxes around the current tracker bounding box.
Using these bounding boxes, we construct a set of triplet-based
structural constraints (referred to in Equ.~\eqref{eq:online_optimization_function_structure})
for online structured metric learning.

Tab.~\ref{Tab:structured_metric_VOC_CLE} reports
their average frame-by-frame VORs, CLEs, and success rates on the
four video sequences.
Clearly, it is seen from Tab.~\ref{Tab:structured_metric_VOC_CLE}
that the structured metric learning method
outperforms
both the non-structured
metric learning method and the method without metric learning.
In addition, Fig.~\ref{fig:structured_metric_runtime} shows
the average runtime performance of the tracking approach using non-structured or
structured metric learning on the eight video sequences.
It is clearly seen from Fig.~\ref{fig:structured_metric_runtime}
that structured metric learning is about 20 times slower than
non-structured metric learning.
From Tab.~\ref{Tab:structured_metric_VOC_CLE}, we see that
non-structured metric learning
achieves a reasonably close tracking performance to
structured metric learning. However, the computational efficiency
of non-structured metric learning is much better than that of
structured metric learning. Therefore, in the applications presented in the next Section,
we use non-structured metric learning.

The Struck tracker constructs an
object localization scoring function based on
structured SVM learning, which learns a linear
SVM scoring function in a max-margin structured
output optimization framework. Therefore,
the tracking accuracy of the Struck tracker
solely depends on the learned SVM scoring function.
In the case of tracking errors,
the learned SVM scoring function
is
contaminated,
causing the error accumulations across frames,
which often leads to tracker drift in
complicated scenarios. In contrast, our tracker
takes a data-driven strategy for object localization.
Namely, our tracker takes advantage of reservoir sampling
to effectively maintain the foreground/background
buffers, which store the recently
included samples while keeping the old samples
with a long lifespan. Besides, our metric-weighted
linear regression cost based on
these old samples essentially works as a regularizer
that reduces the influence of the tracking
error accumulations, resulting in the tracking robustness.
Combined with structured metric learning, our tracker
has the capability of performing robust visual tracking
in a more discriminative metric space, leading to the further
performance improvements.

\subsection{Experimental summary}
\label{sec:exp_discussion}

Based on the obtained experimental results, we observe that
the proposed tracking algorithm has the following properties. First,
after the buffer size exceeds a certain value (around $300$ in our
experiments), the tracking performance is stable with increasing
buffer size, as shown in Fig.~\ref{fig:buffersize_particlenum}. This is desirable since we do not need a large buffer
size to achieve promising performance.
Second, in contrast to many existing particle filtering-based trackers
whose running time is typically linear in the number of particles, our method's
running time is sublinear in the number of
particles, as shown in Fig.~\ref{fig:buffersize_particlenum}.
Moreover, its tracking performance rapidly improves and
finally converge to a certain value, as shown in
Fig.~\ref{fig:buffersize_particlenum}.
Third, based on linear representation with metric learning, it performs
better in tracking accuracy, as shown
in Tab.~\ref{Tab:different_linear_representation_VOC_CLE}.
Fourth, it utilizes weighted reservoir
sampling to effectively maintain and update the foreground
and background sample buffers for metric learning, as shown in Tab.~\ref{Tab:sampling_VOC_CLE}.
Fifth, as shown in Tab.~\ref{Tab:metric_success_rate}, the performance of
our metric learning with
no eigendecomposition is close to that of computationally expensive metric learning with
step-by-step eigendecomposition.
Sixth, compared with other state-of-the-art trackers, it is capable of
effectively
adapting to complicated appearance changes in the tracking process by
constructing
an effective metric-weighted linear representation with weighed
reservoir sampling, as shown in
Tab.~\ref{Tab:quantitative}.
Last, using the structured metric learning is capable of
improving the tracking performance in CLE and VOR, as shown in Tab.~\ref{Tab:structured_metric_VOC_CLE}.
That is because the structured metric learning encodes the underlying
the structural interaction information on data samples, which plays an important role in robust visual tracking.

\section{Pedestrian tracking and identification}

Recent studies have demonstrated the effectiveness
of combining object identification and
tracking together.
To achieve this goal, we need to first localize the object of interest
and  then assign it to one of the predefined object classes using
the object tracking information.
Without loss of generality, we suppose
there are totally $K$ object classes that correspond to
$K$ static template sample sets
$\{\mathbf{P}_{\mathcal{T}}^{k}\}_{k=1}^{K}$
(collected before object tracking).
After performing object tracking on a video sequence ranging from frame 1 to frame $t$,
we obtain the consecutive object observations $\mathbf{y}_{1:t}$
whose object classification scores are denoted as $(\mathcal{S}(\mathbf{y}_{1}), \ldots, \mathcal{S}(\mathbf{y}_{t}))$
(as defined in Equ.~\eqref{eq:particle_liki_model}).
In essence, these classification scores reflect the likelihood
of the observations to be generated from the object of interest.
With respect to $\mathbf{P}_{\mathcal{T}}^{k}$, we compute
a set of reconstruction errors
$(g(\mathbf{x}_{k}^{1}; \mathbf{M}, \mathbf{P}_{\mathcal{T}}^{k}, \mathbf{y}_{1}), \ldots,
g(\mathbf{x}_{k}^{t}; \mathbf{M}, \mathbf{P}_{\mathcal{T}}^{k}, \mathbf{y}_{t}))$
such that
$\mathbf{x}_{k}^{t} = \arg \min_{\mathbf{x}}  g(\mathbf{x};
\mathbf{M}, \mathbf{P}_{\mathcal{T}}^{k}, \mathbf{y}_{t})$.
Based on these reconstruction errors, the cumulative distance of $\mathbf{y}_{t}$
with respect to the $k$-th object class is calculated as: \vspace{-0.26cm}
\begin{equation}
\mathcal{H}_{k}(\mathbf{y}_{t}) = \sum_{i=1}^{t} \omega(\mathbf{y}_{i}) g(\mathbf{x}_{k}^{i}; \mathbf{M}, \mathbf{P}_{\mathcal{T}}^{k}, \mathbf{y}_{i}),
\label{eq:identification_score} \vspace{-0.26cm}
\end{equation}
where $\omega(\mathbf{y}_{i})$ is a weighting factor that measures
the prior weight of $\mathbf{y}_{i}$ generated from the object of interest.
Here, we just use the object classification score $\mathcal{S}(\mathbf{y}_{i})$ to approximate
the prior weight $\omega(\mathbf{y}_{i})$ in the process of object identification
(such that $\omega(\mathbf{y}_{i}) \propto \mathcal{S}(\mathbf{y}_{i})$).
As a result, the object class membership for $\mathbf{y}_{t}$ is determined by:
$k^{\ast} = \underset{1\leq k \leq K}{\arg \min} \thickspace \mathcal{H}_{k}(\mathbf{y}_{t})$.
In addition, the above-mentioned object identification module has
the capability of automatically detecting the abnormal events
(e.g., occlusion). When
$g(\mathbf{x}_{k}^{i}; \mathbf{M}, \mathbf{P}_{\mathcal{T}}^{k}, \mathbf{y}_{i})$
is very large, the tracked objects often have drastic appearance changes (caused by
occlusion, noisy corruption, shape deformation, and so on).
Therefore, the abnormal changes in object appearance can be
automatically detected by checking the value of
$g(\mathbf{x}_{k}^{i}; \mathbf{M}, \mathbf{P}_{\mathcal{T}}^{k}, \mathbf{y}_{i})$.

\begin{figure}[t]
\vspace{-0.95cm}
\centering
\includegraphics[scale=0.38]{./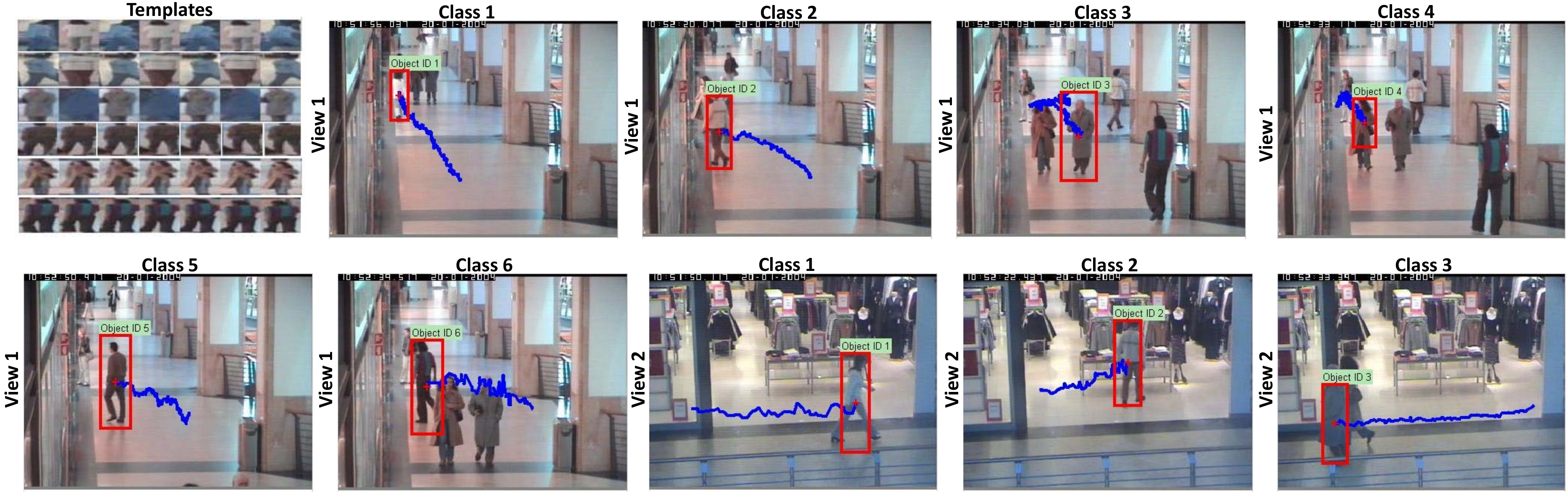}
\vspace{-0.25cm}
 \caption{Two-view pedestrian identification examples.
It is clear that the pedestrians can be accurately
identified.
 \vspace{-0.15cm}}
 \label{fig:more_example}
\end{figure}

\begin{figure*}[t]
\vspace{-0.5cm}
\centering
\includegraphics[scale=0.43]{./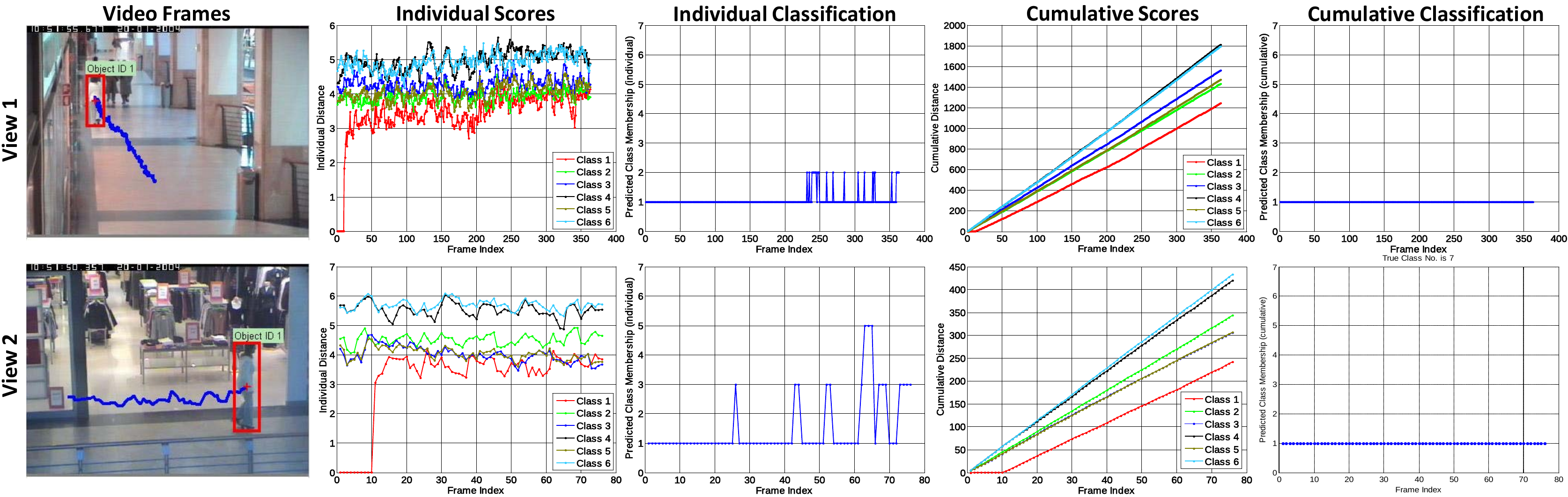}
\vspace{-0.3cm}
 \caption{Illustration of our pedestrian identification method.
The first column shows the tracking results on the video frames at two different viewpoints;
the second column displays the frame-by-frame  reconstruction errors based on
frame-independent metric-weighted linear regression; the third column exhibits
the frame-by-frame identification results associated with the second column;
the fourth column plots the frame-by-frame cumulative reconstruction errors
based on frame-dependent metric-weighted linear regression; and
the last column corresponds to the frame-by-frame identification results associated with the fourth column.
 Clearly, our cumulative classification method is able to correctly
 identify the same pedestrian from two different viewpoints.
 \vspace{-0.0cm}}
 \label{fig:identification_example}
\end{figure*}

Based on Equ.~\eqref{eq:identification_score},
we carry out the pedestrian identification task
on the video sequences\footnote[1]{http://homepages.inf.ed.ac.uk/rbf/CAVIARDATA1/}
with two viewpoints. Prior to pedestrian tracking and identification,
we collect a set of static templates from some training video sequences.
In total, there are six individual pedestrians corresponding to
six object classes. Fig.~\ref{fig:more_example} shows
the pedestrian tracking and identification results as well as
the static templates used in tracking.
For a clear illustration,
we give an intuitive example of showing the
whole pedestrian identification process, as shown in
Fig.~\ref{fig:identification_example}.
From Fig.~\ref{fig:identification_example}, we
observe that our method is able to
accurately recognize the tracked pedestrian's
identity throughout the
entire video sequence.
More pedestrian identification results
can be found in the supplementary.

\begin{figure}
 \vspace{-0.9cm}
\centering
\includegraphics[scale=0.41]{./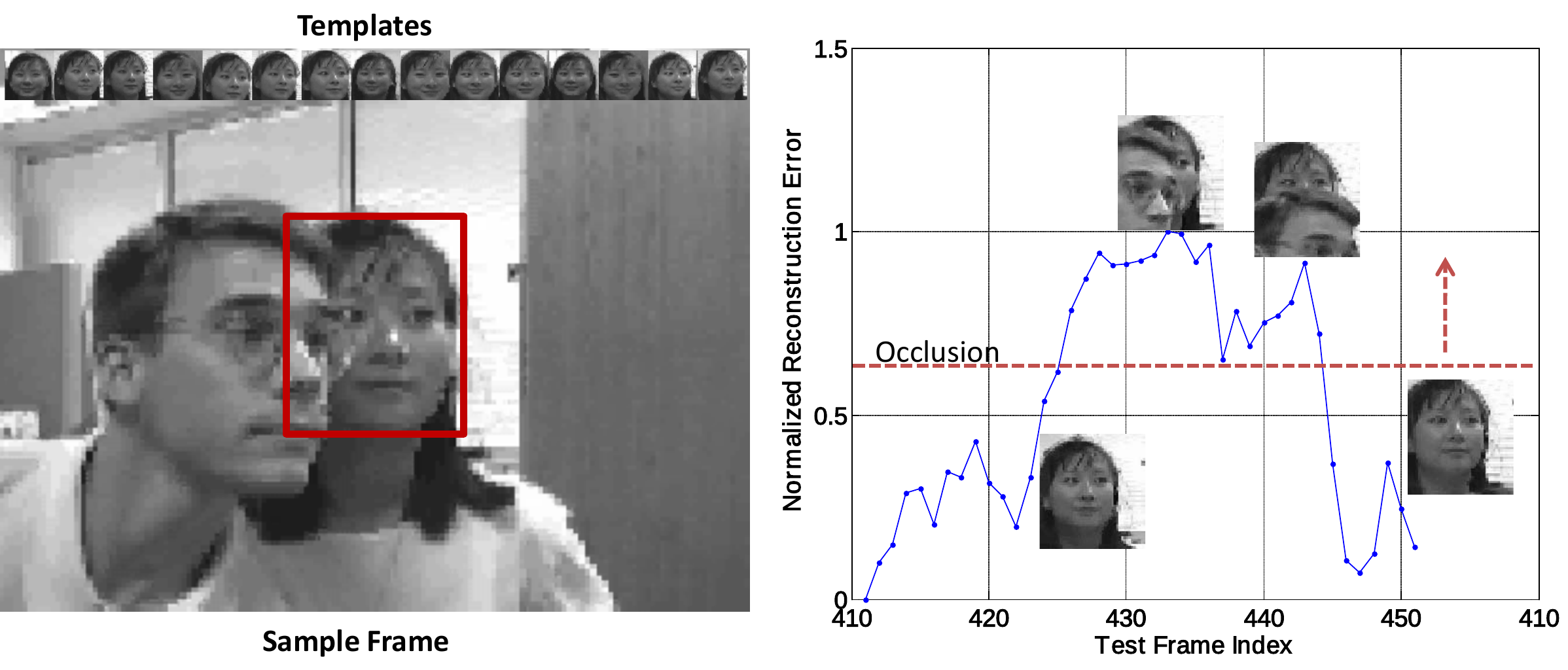}
\vspace{-1.0cm}
 \caption{Example of detecting the occlusion events.
 The left part shows a sample frame and the static templates,
 and the right part plots the frame-by-frame occlusion detection results
 during tracking.
 \vspace{-0.8cm}}
 \label{fig:occlusion}
\end{figure}
Moreover, we apply our method to detect whether the abnormal
events (e.g., occlusion) take place in the tracking
process.
Namely, if the target appearance
is weakly correlated with the static templates
(i.e., high reconstruction errors),
there is likely to be some abnormal events occurring
during tracking.
Fig.~\ref{fig:occlusion}
shows an example of detecting the frame-by-frame
occlusion events on the ``girl'' video sequence.
It is seen from Fig.~\ref{fig:occlusion} that
our method succeeds in
detecting the occlusion events
in most cases.

\section{Conclusion}

In this work, we have proposed an online metric-weighted
linear representation for robust visual tracking.
With a closed-form analytical solution, the proposed
linear representation is capable of effectively
encoding the discriminative information
on object/non-object classification. We designed
an online Mahalanobis distance
metric learning scheme, including online non-structured and structured metric learning.
The metric learning scheme aims
to distinguish the relative
importance of individual feature dimensions
and capture the correlation
between feature dimensions
in a feasible metric space.
We empirically show that
adding a metric to the linear representation considerably
improves the robustness of the tracker. To make the online
metric learning even more efficient, for the first time,
we design a learning mechanism based on time-weighted
reservoir sampling.
With this mechanism, recently streamed samples in the video are
assigned higher weights.
We have also theoretically proved that metric learning based on
the proposed reservoir sampling with limited-sized sampling
buffers can effectively approximate metric learning using all the
received training samples.
Compared with state-of-the-art trackers on eighteen challenging
sequences, we empirically show that our method is  more robust to
complicated appearance changes,  pose variations, and occlusions.
Furthermore, we also extend our work
to perform pedestrian identification and occlusion event detection
during object tracking.
Experimental results demonstrate the effectiveness of
our work.

To balance efficiency and effectiveness, a mixture of non-structured and structured metric learning
methods can be alternatively applied during tracking. For example,
the non-structured method can produce  metric
learning results with a high update frequency,
and then the structured method further generates
the refined metric learning results with a low update frequency. We plan to investigate this in future.

\end{spacing}

\begin{spacing}{1.2}
{

\footnotesize

\reffont
\bibliography{ref}

% Generated by IEEEtran.bst, version: 1.13 (2008/09/30)
\begin{thebibliography}{10}
\providecommand{\url}[1]{#1}
\csname url@samestyle\endcsname
\providecommand{\newblock}{\relax}
\providecommand{\bibinfo}[2]{#2}
\providecommand{\BIBentrySTDinterwordspacing}{\spaceskip=0pt\relax}
\providecommand{\BIBentryALTinterwordstretchfactor}{4}
\providecommand{\BIBentryALTinterwordspacing}{\spaceskip=\fontdimen2\font plus
\BIBentryALTinterwordstretchfactor\fontdimen3\font minus
  \fontdimen4\font\relax}
\providecommand{\BIBforeignlanguage}[2]{{%
\expandafter\ifx\csname l@#1\endcsname\relax
\typeout{** WARNING: IEEEtran.bst: No hyphenation pattern has been}%
\typeout{** loaded for the language `#1'. Using the pattern for}%
\typeout{** the default language instead.}%
\else
\language=\csname l@#1\endcsname
\fi
#2}}
\providecommand{\BIBdecl}{\relax}
\BIBdecl

\bibitem{park2012robust}
D.~W. Park, J.~Kwon, and K.~M. Lee, ``Robust visual tracking using
  autoregressive hidden markov model,'' in \emph{Proc. IEEE Conf. Comp. Vis.
  Patt. Recogn.}, 2012, pp. 1964--1971.

\bibitem{Jepson-Fleet-Yacoob5}
A.~D. Jepson, D.~J. Fleet, and T.~F. El-Maraghi, ``Robust online appearance
  models for visual tracking,'' in \emph{Proc. IEEE Conf. Comp. Vis. Patt.
  Recogn.}, 2001, pp. 415--422.

\bibitem{Limy-Ross17}
D.~A. Ross, J.~Lim, R.~Lin, and M.~Yang, ``Incremental learning for robust
  visual tracking,'' \emph{Int. J. Comp. Vis.}, vol.~77, no.~1, pp. 125--141,
  2008.

\bibitem{Kwon-Lee-CVPR2010}
J.~Kwon and K.~M. Lee, ``Visual tracking decomposition,'' in \emph{Proc. IEEE
  Conf. Comp. Vis. Patt. Recogn.}, 2010, pp. 1269--1276.

\bibitem{zhang2013robust}
T.~Zhang, G.~B, S.~Liu, and N.~Ahuja, ``Robust visual tracking via structured
  multi-task sparse learning,'' \emph{Int. J. Comp. Vis.}, vol. 101, no.~2, pp.
  367--383, 2013.

\bibitem{Meo-Ling-ICCV09}
X.~Mei and H.~Ling, ``Robust visual tracking and vehicle classification via
  sparse representation,'' \emph{{IEEE} Trans. Pattern Anal. Mach. Intell.},
  2011.

\bibitem{Li-Shen-Shi-cvpr2011}
H.~Li, C.~Shen, and Q.~Shi, ``Real-time visual tracking with compressive
  sensing,'' in \emph{Proc. IEEE Conf. Comp. Vis. Patt. Recogn.}, 2011.

\bibitem{wang2013online}
D.~Wang, H.~Lu, and M.~Yang, ``Online object tracking with sparse prototypes,''
  \emph{{IEEE} Trans. Image Proc.}, vol.~22, no.~1, pp. 314--325, 2013.

\bibitem{wu2012real}
Y.~Wu, J.~Cheng, J.~Wang, H.~Lu, J.~Wang, H.~Ling, E.~Blasch, and L.~Bai,
  ``Real-time probabilistic covariance tracking with efficient model update,''
  \emph{{IEEE} Trans. Image Proc.}, vol.~21, no.~5, pp. 2824--2837, 2012.

\bibitem{zhang2012real}
K.~Zhang, L.~Zhang, and M.~Yang, ``Real-time compressive tracking,'' in
  \emph{Proc. Eur. Conf. Comp. Vis.}, 2012, pp. 864--877.

\bibitem{Avidan-2004}
S.~Avidan, ``Support vector tracking,'' \emph{{IEEE} Trans. Pattern Anal. Mach.
  Intell.}, vol.~26, no.~8, pp. 1064--1072, 2004.

\bibitem{bai2012robust}
Y.~Bai and M.~Tang, ``Robust tracking via weakly supervised ranking svm,'' in
  \emph{Proc. IEEE Conf. Comp. Vis. Patt. Recogn.}, 2012, pp. 1854--1861.

\bibitem{harestruck_iccv2011}
S.~Hare, A.~Saffari, and P.~Torr, ``Struck: Structured output tracking with
  kernels,'' in \emph{Proc. IEEE Int. Conf. Comp. Vis.}, 2011.

\bibitem{khanloo2012large}
B.~Khanloo, F.~Stefanus, M.~Ranjbar, Z.~Li, N.~Saunier, T.~Sayed, and G.~Mori,
  ``A large margin framework for single camera offline tracking with hybrid
  cues,'' \emph{Computer Vision and Image Understanding}, vol. 116, no.~6, pp.
  676--689, 2012.

\bibitem{Grabner-Grabner-Bischof-BMVC2006}
H.~Grabner, M.~Grabner, and H.~Bischof, ``Real-time tracking via on-line
  boosting,'' in \emph{Proc. British Machine Vis. Conf.}, 2006, pp. 47--56.

\bibitem{Babenko-Yang-Belongie-cvpr2009}
B.~Babenko, M.~Yang, and S.~Belongie, ``Visual tracking with online multiple
  instance learning,'' in \emph{Proc. IEEE Conf. Comp. Vis. Patt. Recogn.},
  2009, pp. 983--990.

\bibitem{Santner-Leistner-Saffari-Pock-Bischof-cvpr2010}
J.~Santner, C.~Leistner, A.~Saffari, T.~Pock, and H.~Bischof, ``Prost: Parallel
  robust online simple tracking,'' in \emph{Proc. IEEE Conf. Comp. Vis. Patt.
  Recogn.}, 2010, pp. 723--730.

\bibitem{Fan-Wu-Dai-ECCV2010}
J.~Fan, Y.~Wu, and S.~Dai, ``Discriminative spatial attention for robust
  tracking,'' in \emph{Proc. Eur. Conf. Comp. Vis.}, 2010, pp. 480--493.

\bibitem{wang2010discriminative}
X.~Wang, G.~Hua, and T.~Han, ``Discriminative tracking by metric learning,''
  \emph{Proc. Eur. Conf. Comp. Vis.}, pp. 200--214, 2010.

\bibitem{tsagkatakisonline_csvt}
G.~Tsagkatakis and A.~Savakis, ``Online distance metric learning for object
  tracking,'' \emph{IEEE Trans. Circuits \& Systems for Video Tech.}, 2011.

\bibitem{kalal2012tracking}
Z.~Kalal, K.~Mikolajczyk, and J.~Matas, ``Tracking-learning-detection,''
  \emph{{IEEE} Trans. Pattern Anal. Mach. Intell.}, vol.~34, no.~7, pp.
  1409--1422, 2012.

\bibitem{chechik2010large}
G.~Chechik, V.~Sharma, U.~Shalit, and S.~Bengio, ``Large scale online learning
  of image similarity through ranking,'' \emph{J. Mach. Learn. Research},
  vol.~11, pp. 1109--1135, 2010.

\bibitem{bao2012real}
C.~Bao, Y.~Wu, H.~Ling, and H.~Ji, ``Real time robust l1 tracker using
  accelerated proximal gradient approach,'' in \emph{Proc. IEEE Conf. Comp.
  Vis. Patt. Recogn.}, 2012, pp. 1830--1837.

\bibitem{rigamonti2011sparse}
R.~Rigamonti, M.~A. Brown, and V.~Lepetit, ``Are sparse representations really
  relevant for image classification?'' in \emph{Proc. IEEE Conf. Comp. Vis.
  Patt. Recogn.}, 2011, pp. 1545--1552.

\bibitem{FaceICCV2011}
L.~Zhang, M.~Yang, and X.~Feng, ``Sparse representation or collaborative
  representation: Which helps face recognition?'' in \emph{Proc. IEEE Int.
  Conf. Comp. Vis.}, 2011.

\bibitem{li2012incremental}
X.~Li, A.~Dick, C.~Shen, A.~van~den Hengel, and H.~Wang, ``Incremental learning
  of 3d-dct compact representations for robust visual tracking,'' \emph{{IEEE}
  Trans. Pattern Anal. Mach. Intell.}, vol.~35, no.~4, pp. 863--881, 2013.

\bibitem{weinberger2006distance}
K.~Weinberger, J.~Blitzer, and L.~Saul, ``Distance metric learning for large
  margin nearest neighbor classification,'' in \emph{Proc. Adv. Neural Inf.
  Process. Syst.}, 2006.

\bibitem{mensink2012metric}
T.~Mensink, J.~Verbeek, F.~Perronnin, and G.~Csurka, ``Metric learning for
  large scale image classification: Generalizing to new classes at near-zero
  cost,'' in \emph{Proc. Eur. Conf. Comp. Vis.}, 2012, pp. 488--501.

\bibitem{kedem2012non}
D.~Kedem, S.~Tyree, K.~Weinberger, F.~Sha, and G.~Lanckriet, ``Non-linear
  metric learning,'' in \emph{Proc. Adv. Neural Inf. Process. Syst.}, 2012, pp.
  2582--2590.

\bibitem{ying2012distance}
Y.~Ying and P.~Li, ``Distance metric learning with eigenvalue optimization,''
  \emph{J. Mach. Learn. Res.}, vol.~13, pp. 1--26, 2012.

\bibitem{jiang2011adaptive}
N.~Jiang, W.~Liu, and Y.~Wu, ``Adaptive and discriminative metric differential
  tracking,'' in \emph{Proc. IEEE Conf. Comp. Vis. Patt. Recogn.}, 2011, pp.
  1161--1168.

\bibitem{hong2012dual}
Z.~Hong, X.~Mei, and D.~Tao, ``Dual-force metric learning for robust
  distracter-resistant tracker,'' in \emph{Proc. Eur. Conf. Comp. Vis.}, 2012,
  pp. 513--527.

\bibitem{vitter1985random}
J.~S. Vitter, ``Random sampling with a reservoir,'' \emph{ACM Trans. Math.
  Software}, vol.~11, no.~1, pp. 37--57, 1985.

\bibitem{zhaoICML2011}
P.~Zhao, S.~Hoi, R.~Jin, and T.~Yang, ``Online {AUC} maximization,'' in
  \emph{Proc. Int. Conf. Mach. Learn.}, 2011.

\bibitem{jia2012visual}
X.~Jia, H.~Lu, and M.~Yang, ``Visual tracking via adaptive structural local
  sparse appearance model,'' in \emph{Proc. IEEE Conf. Comp. Vis. Patt.
  Recogn.}, 2012, pp. 1822--1829.

\bibitem{zhong2012robust}
W.~Zhong, H.~Lu, and M.~Yang, ``Robust object tracking via sparsity-based
  collaborative model,'' in \emph{Proc. IEEE Conf. Comp. Vis. Patt. Recogn.},
  2012, pp. 1838--1845.

\bibitem{li2013visual}
X.~Li, A.~Dick, C.~Shen, Z.~Zhang, A.~van~den Hengel, and H.~Wang, ``Visual
  tracking with spatio-temporal dempster-shafer information fusion.''
  \emph{{IEEE} Trans. Image Proc.}, vol.~22, no.~8, pp. 3028--3040, 2013.

\bibitem{edwards1999improving}
G.~J. Edwards, C.~J. Taylor, and T.~F. Cootes, ``Improving identification
  performance by integrating evidence from sequences,'' in \emph{Proc. IEEE
  Conf. Comp. Vis. Patt. Recogn.}, 1999.

\bibitem{zhou2004visual}
S.~K. Zhou, R.~Chellappa, and B.~Moghaddam, ``Visual tracking and recognition
  using appearance-adaptive models in particle filters,'' \emph{{IEEE} Trans.
  Image Proc.}, vol.~13, no.~11, pp. 1491--1506, 2004.

\bibitem{li2012non}
X.~Li, C.~Shen, Q.~Shi, A.~Dick, and A.~v.~d. Hengel, ``Non-sparse linear
  representations for visual tracking with online reservoir metric learning,''
  in \emph{Proc. IEEE Conf. Comp. Vis. Patt. Recogn.}, 2012, pp. 1760--1767.

\bibitem{Isard-Blake-ECCV1996}
M.~Isard and A.~Blake, ``Contour tracking by stochastic propagation of
  conditional density,'' in \emph{Proc. Eur. Conf. Comp. Vis.}, 1996, pp.
  343--356.

\bibitem{jennings1992matrix}
A.~Jennings and J.~McKeown, \emph{Matrix computation}.\hskip 1em plus 0.5em
  minus 0.4em\relax John Wiley \& Sons Inc., 1992.

\bibitem{crammer2006online}
K.~Crammer, O.~Dekel, J.~Keshet, S.~Shalev-Shwartz, and Y.~Singer, ``Online
  passive-aggressive algorithms,'' \emph{J. Mach. Learn. Research}, vol.~7, pp.
  551--585, 2006.

\bibitem{householder1964theory}
A.~S. Householder, \emph{The theory of matrices in numerical analysis}.\hskip
  1em plus 0.5em minus 0.4em\relax Blaisdell Publishing Co.: New York, 1964.

\bibitem{powell1969theorem}
M.~J.~D. Powell, ``A theorem on rank one modifications to a matrix and its
  inverse,'' \emph{The Computer Journal}, vol.~12, no.~3, pp. 288--290, 1969.

\bibitem{Kolonko04sequentialreservoir}
M.~Kolonko and D.~W\"asch, ``Sequential reservoir sampling with a non-uniform
  distribution,'' \emph{{ACM} Trans. Math. Software}, vol.~32, pp. 257--273,
  2004.

\bibitem{efraimidis2006weighted}
P.~S. Efraimidis and P.~G. Spirakis, ``Weighted random sampling with a
  reservoir,'' \emph{Information process. letters}, vol.~97, no.~5, pp.
  181--185, 2006.

\bibitem{Dalal-Triggs-CVPR2005}
N.~Dalal and B.~Triggs, ``Histograms of oriented gradients for human
  detection,'' in \emph{Proc. IEEE Conf. Comp. Vis. Patt. Recogn.}, 2005.

\bibitem{lixi-cvpr2008}
X.~Li, W.~Hu, Z.~Zhang, X.~Zhang, M.~Zhu, and J.~Cheng, ``Visual tracking via
  incremental log-euclidean riemannian subspace learning,'' in \emph{Proc. IEEE
  Conf. Comp. Vis. Patt. Recogn.}, 2008, pp. 1--8.

\bibitem{Adam-Fragment-2006}
A.~Adam, E.~Rivlin, and I.~Shimshoni, ``Robust fragments-based tracking using
  the integral histogram,'' in \emph{Proc. IEEE Conf. Comp. Vis. Patt.
  Recogn.}, 2006, pp. 798--805.

\bibitem{TLD}
Z.~Kalal, K.~Mikolajczyk, and J.~Matas, ``Tracking-learning-detection,''
  \emph{{IEEE} Trans. Pattern Anal. Mach. Intell.}, vol.~34, no.~7, pp.
  1409--1422, 2012.

\bibitem{ASLSAM}
X.~Jia, H.~Lu, and M.~Yang, ``Visual tracking via adaptive structural local
  sparse appearance model,'' in \emph{Proc. IEEE Conf. Comp. Vis. Patt.
  Recogn.}, 2012, pp. 1822--1829.

\bibitem{SCM}
W.~Zhong, H.~Lu, and M.~Yang, ``Robust object tracking via sparsity-based
  collaborative model,'' in \emph{Proc. IEEE Conf. Comp. Vis. Patt. Recogn.},
  pp. 1838--1845.

\end{thebibliography}
\bibliographystyle{ieee}
}
\end{spacing}

\clearpage

\setcounter{page}{1}

\begin{spacing}{1.15}
{

\vspace{0cm}{ \bf Supplementary}

In this supplementary material, we provide more technical details
of online updating the metric-weighted
linear representation, online discriminative distance metric learning,
and a theoretical analysis of time-weighted reservoir sampling.
Furthermore, we show more
experimental results (both qualitatively and quantitatively), including experimental demonstration videos,
more CLE (center location error) and VOR (VOC overlap ratio) curves
for different evaluation tasks, intuitive
frame tracking images,
and more frame-by-frame pedestrian identification results.

\begin{table}[h]
\centering
\scalebox{0.69}
{
\begin{tabular}{l l}
\hline
{Video sequences}   &  {Corresponding video files} \\
\hline
\hline
B-Beam  &  Video\_01\_BalanceBeam.mp4 \\\hline
Lola &  Video\_02\_Lola.mp4 \\\hline
trace &   Video\_03\_trace.mp4 \\\hline
Walk &  Video\_04\_Walk.mp4 \\\hline
football  &  Video\_05\_football.mp4  \\\hline
iceball &  Video\_06\_iceball.mp4 \\\hline
coke11 &  Video\_07\_coke11.mp4 \\\hline
trellis70 &  Video\_08\_trellis70.mp4 \\\hline
dograce &  Video\_09\_dograce.mp4 \\\hline
football3   & Video\_10\_football3.mp4  \\\hline
cubicle  & Video\_11\_cubicle.mp4 \\\hline
seq-jd & Video\_12\_seq-jd.mp4\\\hline
girl & Video\_13\_girl.mp4\\\hline
BMX-Street & Video\_14\_BMX-Street.mp4\\\hline
planeshow & Video\_15\_planeshow.mp4\\\hline
race  & Video\_16\_race.mp4\\\hline
CamSeq01  & Video\_17\_CamSeq01.mp4\\\hline
car11 & Video\_18\_car11.mp4\\
\hline
\end{tabular}
}
\vspace{-0.1cm}
\caption{The configurations of the eighteen experimental demonstration videos. These
demonstration videos can be downloaded at the following link:
{\color{red}http://cs.adelaide.edu.au/users/xi/pamimetricdemo.zip}
 }
\label{Tab:list_video}
\end{table}

\clearpage

\clearpage

\clearpage

\vspace{-0.25cm}
\section{Online proximity based metric learning}
\label{sec:oml_supp}

Having introduced the metric-weighted linear representation  in Sec.~\ref{sec:malso}, we now address the key issue of calculating the metric matrix $\mathbf{M}$.
$\bf M$ should ideally be learned from the visual data, and should be dynamically updated as conditions change throughout a video sequence.

\subsubsection{\bf Triplet-based ranking losses}
   Suppose that we have a set of sample triplets
   $\{(\mathbf{p}, \mathbf{p}^{+},\mathbf{p}^{-})\}$
    with
    $ \mathbf{p},
    \mathbf{p}^{+},
    \mathbf{p}^{-} \in \mathcal{R}^{d}
    $.  These triplets encode the proximity comparison information.
   In each triplet, the distance between $  \bf p $
   and $ {\bf p}^+  $ should be smaller than the distance between
   $ \bf p $ and $ {\bf p}^-  $.

The Mahalanobis distance under  metric $ \bf M$ is defined as:
\vspace{-0.153cm}
\begin{equation}
D_{\mathbf{M}}(\mathbf{p}, \mathbf{q}) = (\mathbf{p}
-\mathbf{q})^{T}\mathbf{M}(\mathbf{p}-\mathbf{q}). \vspace{-0.126cm}
\end{equation}
Clearly, $ \bf M $ must be a symmetric and positive semidefinite
matrix. It is equivalent to learn a projection matrix $ \bf L $ such
that $ {\bf M} = {\bf L} {\bf L}^T$.
In practice,
   we generate the triplets set as:
   $ \bp $ and $ \bp^+$ belong to the same class
   and $ \bp $ and $ \bp^-$ belong to different classes.
So we want the constraints
$ D_\bM ( \bp, \bp^+ ) < D_\bM ( \bp, \bp^- ) $ to be satisfied as
well as possible.
By putting it into a large-margin learning framework, and using the
soft-margin hinge loss,
the loss function for each triplet is: \vspace{-0.016cm}
\begin{equation}
l_{\mathbf{M}}(\mathbf{p}, \mathbf{p}^{+}, \mathbf{p}^{-}) =
       \max\{0, 1 + D_{\mathbf{M}}(\mathbf{p}, \mathbf{p}^{+}) -
D_{\mathbf{M}}(\mathbf{p}, \mathbf{p}^{-})\}. \vspace{-0.01cm}
\label{eq:local_hinge_loss_supp}
\end{equation}

\subsubsection{\bf Large-margin metric learning}
To obtain the optimal distance metric matrix $\mathbf{M}$, we need to
minimize the global loss $L_{\mathbf{M}}$
that takes the sum of hinge losses~\eqref{eq:local_hinge_loss_supp} over
all possible triplets from the training
set: \vspace{-0.21cm}
\begin{equation}
L_{\mathbf{M}} = \underset{(\mathbf{p}, \mathbf{p}^{+},
\mathbf{p}^{-})\in \mathcal{Q}}{\sum} l_{\mathbf{M}}(\mathbf{p},
\mathbf{p}^{+}, \mathbf{p}^{-}), \vspace{-0.21cm}
\label{eq:total_loss_supp}
\end{equation}
where $\mathcal{Q}$ is the triplet set.
   To sequentially optimize the above objective function $L_{\mathbf{M}}$
   in an online fashion,
   we design an iterative algorithm to solve the following
   convex problem:
\begin{equation}
\begin{array}{l}
\mathbf{M}^{k+1} = \underset{\mathbf{M}}{\arg\min}
\frac{1}{2}\|\mathbf{M}-\mathbf{M}^{k}\|^{2}_{F} + C\xi,\\
\mbox{s.t.} \thickspace  D_{\mathbf{M}}(\mathbf{p}, \mathbf{p}^{-})
- D_{\mathbf{M}}(\mathbf{p}, \mathbf{p}^{+})  \geq 1 -  \xi,
\thickspace \xi \geq  0,
\end{array}
\label{eq:online_optimization_function_supp}
\end{equation}
where $\|\cdot\|_{F}$ denotes the Frobenius norm, $\xi$ is a slack
variable, and $C$ is a positive factor controlling
the trade-off between the smoothness term
$\frac{1}{2}\|\mathbf{M}-\mathbf{M}^{k}\|^{2}_{F}$ and the loss  term
$\xi$.
Following the passive-aggressive mechanism used
in~\cite{chechik2010large, crammer2006online}, we only update the
metric matrix $\mathbf{M}$ when $l_{\mathbf{M}}(\mathbf{p},
\mathbf{p}^{+}, \mathbf{p}^{-})>0$.

\subsubsection{\bf Optimization of ${\mathbf M}$}
We optimize the function in Equation~\ref{eq:online_optimization_function_supp}
with Lagrangian regularization:\vspace{-0.12cm}
\begin{equation}
\begin{array}{l}
\mathcal{L}(\mathbf{M}, \eta, \xi, \beta) =
\frac{1}{2}\|\mathbf{M}-\mathbf{M}^{k}\|^{2}_{F} + C\xi - \beta \xi
+ \eta(1-\xi
+
D_{\mathbf{M}}(\mathbf{p},
\mathbf{p}^{+})-D_{\mathbf{M}}(\mathbf{p}, \mathbf{p}^{-})),
\end{array}
\label{eq:Lagrange_loss_supp} \vspace{-0.12cm}
\end{equation}
where $\eta\geq 0$ and $\beta \geq 0$ are Lagrange multipliers.
The optimization procedure is carried out in the following two alternating
steps.

\begin{itemize}
\item {\it Update $\mathbf{M}$}.
By setting
$ \frac{\partial\mathcal{L}(\mathbf{M}, \eta, \xi, \beta)}{\partial
\mathbf{M}} = 0$, we arrive at the update rule
\begin{equation}
\mathbf{M}^{k+1} = \mathbf{M}^{k} + \eta \mathbf{U}
\label{eq:eq:metric_update_supp} \vspace{-0.12cm}
\end{equation}
where
$\mathbf{U} = \mathbf{a}_{-}\mathbf{a}_{-}^{T} - \mathbf{a}_{+}\mathbf{a}_{+}^{T}$ and $\mathbf{a}_{+} = \mathbf{p} - \mathbf{p}^{+}$,
$\mathbf{a}_{-} = \mathbf{p} - \mathbf{p}^{-}$.

 By taking the
 derivative of $\mathcal{L}(\mathbf{M}, \eta, \xi, \beta)$
 w.r.t. $\mathbf{M}$, we have the following: \vspace{-0.12cm}
 \begin{equation}
 \begin{array}{ll}
 \frac{\partial\mathcal{L}(\mathbf{M}, \eta, \xi, \beta)}{\partial
 \mathbf{M}} & = \mathbf{M}-\mathbf{M}^{k} +
 \eta\frac{\partial
 [D_{\mathbf{M}}(\mathbf{p},
 \mathbf{p}^{+})-D_{\mathbf{M}}(\mathbf{p}, \mathbf{p}^{-})]
 }
 {\partial
 \mathbf{M}}.\\
 \end{array} \vspace{-0.12cm}
 \end{equation}
 Mathematically, $\frac{
 \partial
 [D_{\mathbf{M}}(\mathbf{p},
 \mathbf{p}^{+})-D_{\mathbf{M}}(\mathbf{p}, \mathbf{p}^{-})]
 }
 {\partial \mathbf{M}}$ can be
 formulated as: \vspace{-0.12cm}
 \begin{equation}
 \frac{\partial
 [D_{\mathbf{M}}(\mathbf{p},
 \mathbf{p}^{+})-D_{\mathbf{M}}(\mathbf{p}, \mathbf{p}^{-})]
 }
 {\partial \mathbf{M}}
 =
 \mathbf{a}_{+}\mathbf{a}_{+}^{T} - \mathbf{a}_{-}\mathbf{a}_{-}^{T},
 \label{eq:triplet_update_supp} \vspace{-0.12cm}
 \end{equation}
 where $\mathbf{a}_{+} = \mathbf{p} - \mathbf{p}^{+}$ and
 $\mathbf{a}_{-} = \mathbf{p} - \mathbf{p}^{-}$. The optimal
 $\mathbf{M}^{k+1}$ is obtained
 by setting $\frac{\partial\mathcal{L}(\mathbf{M}, \eta, \xi,
 \beta)}{\partial \mathbf{M}}$ to zero. As a result,
 the following relation holds: \vspace{-0.12cm}
 \begin{equation}
 \mathbf{M}^{k+1} = \mathbf{M}^{k} + \eta
 (\mathbf{a}_{-}\mathbf{a}_{-}^{T} - \mathbf{a}_{+}\mathbf{a}_{+}^{T}).
 \label{eq:eq:metric_update_supp} \vspace{-0.12cm}
 \end{equation}

\item {\it Update $\eta$}. Subsequently, we take the derivative of the
Lagrangian~\eqref{eq:Lagrange_loss_supp} w.r.t. $\xi$ and set it
to zero, leading to the update rule:
 \vspace{-0.2cm}
 \begin{equation}
 \frac{\partial\mathcal{L}(\mathbf{M}, \eta, \xi, \beta)}{\partial
 \xi} = C - \beta - \eta = 0.
 \label{eq:lagrange_multiplier_supp} \vspace{-0.2cm}
 \end{equation}
 Clearly, $\beta\geq 0$ leads to the fact that $\eta \leq C$.
 For notational simplicity, $\mathbf{a}_{-}\mathbf{a}_{-}^{T} -
 \mathbf{a}_{+}\mathbf{a}_{+}^{T}$ is abbreviated as $\mathbf{U}$
 hereinafter.
 By substituting Equs.~\eqref{eq:eq:metric_update_supp}
 and~\eqref{eq:lagrange_multiplier_supp} into Equ.~\eqref{eq:Lagrange_loss_supp} with $\mathbf{M} = \mathbf{M}^{k+1}$,  we have: \vspace{-0.2cm}
 \begin{equation}
 \mathcal{L}(\eta) = \frac{1}{2}\eta^{2}\|\mathbf{U}\|_{F}^{2} +
 \eta(1+D_{\mathbf{M}^{k+1}}(\mathbf{p},
 \mathbf{p}^{+})-D_{\mathbf{M}^{k+1}}(\mathbf{p}, \mathbf{p}^{-})),
 \label{eq:Lagrange_eta_supp} \vspace{-0.12cm}
 \end{equation}
 where $D_{\mathbf{M}^{k+1}}(\mathbf{p}, $ $ \mathbf{p}^{+}) =
 \mathbf{a}_{+}^{T } $ $ (\mathbf{M}^{k}+\eta \mathbf{U})\mathbf{a}_{+}$ and
 $D_{\mathbf{M}^{k+1}}(\mathbf{p}, $ $ \mathbf{p}^{-}) =
 \mathbf{a}_{-}^{T}(\mathbf{M}^{k}+\eta \mathbf{U})\mathbf{a}_{-}$.
 As a result, $\mathcal{L}(\eta)$ can be reformulated as: \vspace{-0.12cm}
 \begin{equation}
 \mathcal{L}(\eta) = \lambda_{2}\eta^{2} + \lambda_{1}\eta +
 \lambda_{0}, \vspace{-0.016cm}
 \end{equation}
 where $\lambda_{2}= \frac{1}{2}\|\mathbf{U}\|_{F}^{2} +
 \mathbf{a}_{+}^{T}\mathbf{U}\mathbf{a}_{+} -
 \mathbf{a}_{-}^{T}\mathbf{U}\mathbf{a}_{-}$, $\lambda_{1} = 1 +
 \mathbf{a}_{+}^{T}\mathbf{M}^{k}\mathbf{a}_{+} -
 \mathbf{a}_{-}^{T}\mathbf{M}^{k}\mathbf{a}_{-}$,
 and $\lambda_{0} = 0$. To obtain the optimal $\eta$, we need to
 differentiate $\mathcal{L}(\eta)$ w.r.t. $\eta$ and set it to
 zero: \vspace{-0.12cm}
 \begin{equation}
 \hspace{-0.05cm}
 \begin{array}{l}
 \frac{\partial \mathcal{L}(\eta)}{\partial \eta} =
 \eta(\|\mathbf{U}\|_{F}^{2} +
 2\mathbf{a}_{+}^{T}\mathbf{U}\mathbf{a}_{+} -
 2\mathbf{a}_{-}^{T}\mathbf{U}\mathbf{a}_{-}) \\
       \hspace{2.1cm} + (1 +
 \mathbf{a}_{+}^{T}\mathbf{M}^{k}\mathbf{a}_{+} -
 \mathbf{a}_{-}^{T}\mathbf{M}^{k}\mathbf{a}_{-}) = 0.\\
 \end{array} \vspace{-0.2cm} \hspace{-0.21cm}
 \end{equation}
 As a result, the following relation holds: \vspace{-0.2cm}
 \begin{equation}
 \eta = -\frac{1 + \mathbf{a}_{+}^{T}\mathbf{M}^{k}\mathbf{a}_{+} -
 \mathbf{a}_{-}^{T}\mathbf{M}^{k}\mathbf{a}_{-}}{\|\mathbf{U}\|_{F}^{2} +
 2\mathbf{a}_{+}^{T}\mathbf{U}\mathbf{a}_{+} -
 2\mathbf{a}_{-}^{T}\mathbf{U}\mathbf{a}_{-}}.
 \label{eq:eta_update_supp} \vspace{-0.12cm}
 \end{equation}
 Due to the constraint of $0 \leq \eta\leq C$, $\eta$ should take the following
 value: \vspace{-0.12cm}
 \begin{equation}
 \hspace{-0.0cm}
 \eta \hspace{-0.08cm}= \hspace{-0.08cm}\min\left\{C, \max \left\{0,\frac{1 + \mathbf{a}_{+}^{T}\mathbf{M}^{k}\mathbf{a}_{+} -
 \mathbf{a}_{-}^{T}\mathbf{M}^{k}\mathbf{a}_{-}}{2\mathbf{a}_{-}^{T}\mathbf{U}\mathbf{a}_{-} \hspace{-0.08cm}- \hspace{-0.08cm}
 2\mathbf{a}_{+}^{T}\mathbf{U}\mathbf{a}_{+} \hspace{-0.08cm}- \hspace{-0.08cm}\|\mathbf{U}\|_{F}^{2}}\right\}\hspace{-0.1cm}\right\}
 \hspace{-0.9cm}
 \label{eq:eta_final_supp} \vspace{-0.2cm}
 \end{equation}
\end{itemize}
The full derivation of each step can be found in the supplementary file.
The complete procedure of online distance metric learning is
summarized in Algorithm~\ref{alg:online_metric_learning}.

\subsubsection{\bf  Online update}
When updated according to
Algorithm~\ref{alg:online_metric_learning}, $\mathbf{M}$ is modified
by rank-one
additions
such that $\mathbf{M}\longleftarrow \mathbf{M} + \eta
(\mathbf{a}_{-}\mathbf{a}_{-}^{T} -
\mathbf{a}_{+}\mathbf{a}_{+}^{T})$ where $\mathbf{a}_{+} = \mathbf{p} - \mathbf{p}^{+}$ and
$\mathbf{a}_{-} = \mathbf{p} - \mathbf{p}^{-}$ are
two vectors (defined in Equ.~\eqref{eq:eq:metric_update_supp}) for triplet construction, and
$\eta$ is a step-size factor (defined in Equ.~\eqref{eq:eta_final_supp}).
As a result, the original $\mathbf{P}^{T}\mathbf{M}\mathbf{P}$
becomes $\mathbf{P}^{T}\mathbf{M}\mathbf{P}
+ (\eta\mathbf{P}^{T}\mathbf{a}_{-})(\mathbf{P}^{T}\mathbf{a}_{-})^{T}
+  (-\eta\mathbf{P}^{T}\mathbf{a}_{+})(\mathbf{P}^{T}\mathbf{a}_{+})^{T}$.
When $\mathbf{M}$ is modified by a rank-one addition, the inverse of
$\mathbf{P}^{T}\mathbf{M}\mathbf{P}$ can be updated
according to the theory of~\cite{householder1964theory,powell1969theorem}:
\begin{equation}
(\mathbf{J}+\mathbf{u}\mathbf{v}^{T})^{-1} = \mathbf{J}^{-1} -
\frac{\mathbf{J}^{-1}\mathbf{u}\mathbf{v}^{T}\mathbf{J}^{-1}}{1+\mathbf{v}^{T}\mathbf{J}^{-1}\mathbf{u}}.
\label{eq:rank_one_update_supp} \vspace{-0.02cm}
\end{equation}
Here, $\mathbf{J}=\mathbf{P}^{T}\mathbf{M}\mathbf{P}$, $\mathbf{u}=\eta\mathbf{P}^{T}\mathbf{a}_{-}$ (or $\mathbf{u}=-\eta\mathbf{P}^{T}\mathbf{a}_{+}$),
and $\mathbf{v}=\mathbf{P}^{T}\mathbf{a}_{-}$ (or $\mathbf{v}=\mathbf{P}^{T}\mathbf{a}_{+}$).

\clearpage

\section{Online structured metric learning}
\label{sec:structured_metric_learning_supp}

Metric learning based on sample proximity comparisons leads to an efficient online learning algorithm, but requires pre-defined sets of positive and negative samples. In tracking, these usually correspond to target/non-target image patches. The boundary between these classes typically occurs where sample overlap with the target drops below a threshold, but this can be difficult to evaluate exactly and thus introduces some noise into the algorithm.

In this Section, we replace the proximity based metric learning module with an online structured metric learning method for learning  $\bf M$. The main advantage of this method is that it directly learns the metric from measured sample overlap, and therefore does not require the separation of samples into positive and negative classes.

{\bf Structured ranking}
Let $\mathbf{p}_{t}$ and $\mathbf{p}_{t}^{i}$
 denote two feature vectors extracted from two image patches,
 which are respectively associated
 with two bounding boxes
 $\mathbf{R}_{t}$ and $\mathbf{R}_{t}^{i}$ from frame $t$.
 Without loss of generality, let us assume that
 $\mathbf{R}_{t}$  corresponds to the
 bounding box obtained by the current tracker
 while $\mathbf{R}_{t}^{i}$ is associated with
 a bounding box from the area surrounding $\mathbf{R}_{t}$.
 As in~\cite{harestruck_iccv2011}, the structural affinity relationship
 between $\mathbf{p}_{t}$ and $\mathbf{p}_{t}^{i}$
 is captured by the following overlap function: $s^{o}_{t}(\mathbf{p}_{t}\circ \mathbf{R}_{t}, \mathbf{p}_{t}^{i}\circ \mathbf{R}_{t}^{i}) = \frac{\mathbf{R}_{t}\bigcap\mathbf{R}_{t}^{i}}{\mathbf{R}_{t}\bigcup\mathbf{R}_{t}^{i}}.$
 As a result, we define the following optimization problem for structured metric learning:  \vspace{-0.18cm}
 \begin{equation}
 \begin{array}{l}
 \mathbf{M}^{k+1} = \underset{\mathbf{M}}{\arg\min}\thinspace
 \frac{1}{2}\|\mathbf{M}-\mathbf{M}^{k}\|^{2}_{F} + C\xi,\\
 \mbox{s.t.}
 \thickspace D_{\mathbf{M}}(\mathbf{p}_{t}, \mathbf{p}_{t}^{j}) - D_{\mathbf{M}}(\mathbf{p}_{t}, \mathbf{p}_{t}^{i})
  \geq
 \Delta_{ij}  -  \xi, \forall i,j\\
 \end{array}
 \label{eq:online_optimization_function_structure_supp} \vspace{-0.18cm}
 \end{equation}
 where $\xi \geq  0$
 and  $\Delta_{ij} = s^{o}_{t}(\mathbf{p}_{t}\circ \mathbf{R}_{t}, \mathbf{p}_{t}^{i} \circ \mathbf{R}_{t}^{i}) - s^{o}_{t}(\mathbf{p}_{t} \circ \mathbf{R}_{t}, \mathbf{p}_{t}^{j} \circ \mathbf{R}_{t}^{j})$.
 Clearly, the number of constraints in the optimization problem~\eqref{eq:online_optimization_function_structure_supp}
 is
 exponentially large or even
 infinite, making it difficult to optimize.
 Our approach to this optimization problem differs
 from~\cite{harestruck_iccv2011} in four main aspects:
 i) our approach aims to learn a distance metric while
 \cite{harestruck_iccv2011} seeks  a SVM classifier;
 ii)
 we optimize an online max-margin objective function
 while \cite{harestruck_iccv2011} solves a batch-mode
 optimization problem; iii) our optimization problem involves nonlinear constraints on
 triplet-based Mahalanobis distance differences, while the optimization problem in~\cite{harestruck_iccv2011}
 comprises linear constraints on doublet-based SVM classification score differences;
 and iv) our approach directly solves the primal optimization problem while \cite{harestruck_iccv2011}
 optimizes the dual problem.

{\bf Structured optimization}
 Inspired by the cutting-plane method,
 we iteratively construct a constraint set (denoted as $\mathcal{P}$)
 containing the most violated constraints
 for the optimization problem~\eqref{eq:online_optimization_function_structure_supp}.
 In our case,
 the most violated constraint is selected according to
 the following criterion:  \vspace{-0.18cm}
 \begin{equation}
 (\mu, \nu)
 =
 \underset{
 (i, j)}{\arg\max} \hspace{0.2cm}
 \Delta_{ij} + D_{\mathbf{M}}(\mathbf{p}_{t}, \mathbf{p}_{t}^{i}) - D_{\mathbf{M}}(\mathbf{p}_{t}, \mathbf{p}_{t}^{j}),
 \label{eq:most_violated_constraints_supp}  \vspace{-0.18cm}
 \end{equation}
 For notational simplicity, let $l_{\mathbf{M}}(\mathbf{p}_{t}\circ \mathbf{R}_{t}, \mathbf{p}_{t}^{j} \circ \mathbf{R}_{t}^{j},\mathbf{p}_{t}^{i} \circ \mathbf{R}_{t}^{i} )$ denote the loss term
 $\Delta_{ij} + D_{\mathbf{M}}(\mathbf{p}_{t}, \mathbf{p}_{t}^{i}) - D_{\mathbf{M}}(\mathbf{p}_{t}, \mathbf{p}_{t}^{j})$.
 Note that the violated constraints generated from~\eqref{eq:most_violated_constraints_supp}
 are used if and only if $l_{\mathbf{M}}(\mathbf{p}_{t}\circ \mathbf{R}_{t}, \mathbf{p}_{t}^{j} \circ \mathbf{R}_{t}^{j},\mathbf{p}_{t}^{i} \circ \mathbf{R}_{t}^{i} )$ is greater than zero.
 Subsequently, we add the most violated constraint
 to the optimization problem~\eqref{eq:online_optimization_function_structure_supp} in an iterative manner,
 that is, $\mathcal{P} \leftarrow \mathcal{P} \bigcup \{(\mathbf{p}_{t}^{\mu}\circ \mathbf{R}_{t}^{\mu}, \mathbf{p}_{t}^{\nu}\circ \mathbf{R}_{t}^{\nu})\}$.
 The corresponding Lagrangian is formulated as:  \vspace{-0.18cm}
 \begin{equation}
 \begin{array}{l}
 \mathcal{L} =
 \frac{1}{2}\|\mathbf{M}-\mathbf{M}^{k}\|^{2}_{F} +  (C - \beta)\xi
 + \sum_{\ell=1}^{|\mathcal{P}|}\eta_{\ell}[\Delta_{\mu_{\ell}\nu_{\ell}} - \xi
 +D_{\mathbf{M}}(\mathbf{p}_{t}, \mathbf{p}_{t}^{\mu_{\ell}}) - D_{\mathbf{M}}(\mathbf{p}_{t}, \mathbf{p}_{t}^{\nu_{\ell}})],
 \end{array}
 \label{eq:Lagrange_loss_structure_supp}  \vspace{-0.18cm}
 \end{equation}
 where $\beta \geq 0$ and $\eta_{\ell}\geq 0$ are Lagrange multipliers.
 The optimization procedure is once again carried out in two alternating
steps:

 \begin{itemize}
 \item {\it Update $\mathbf{M}$.}
 By setting $\frac{\partial\mathcal{L}}{\partial
 \mathbf{M}}$ to zero, we obtain an updated $\mathbf{M}$ defined as:
\begin{equation}
 \mathbf{M}^{k+1} = \mathbf{M}^{k} + \sum_{\ell=1}^{|\mathcal{P}|}\eta_{\ell}\mathbf{U}_{\ell}
\end{equation}
where
$\mathbf{U}_{\ell} = \mathbf{a}_{t}^{\nu_{\ell}}(\mathbf{a}_{t}^{\nu_{\ell}})^{\T} - \mathbf{a}_{t}^{\mu_{\ell}}(\mathbf{a}_{t}^{\mu_{\ell}})^{\T}$, and $\mathbf{a}_{t}^{n}$ denotes $\mathbf{p}_{t} - \mathbf{p}_{t}^{n}$.

 The first-order derivative of $\mathcal{L}$ w.r.t. $\mathbf{M}$ is expressed as:
  \begin{equation}
  \frac{\partial\mathcal{L}}{\partial
  \mathbf{M}}  = \mathbf{M}-\mathbf{M}^{k} -
  \sum_{\ell=1}^{|\mathcal{P}|}\eta_{\ell}\frac{\partial
  [D_{\mathbf{M}}(\mathbf{p}_{t}, \mathbf{p}_{t}^{\nu_{\ell}}) - D_{\mathbf{M}}(\mathbf{p}_{t}, \mathbf{p}_{t}^{\mu_{\ell}})]
  }
  {\partial
  \mathbf{M}}.
  \label{eq:dldm}
  \end{equation}
  Clearly, $\frac{\partial
  D_{\mathbf{M}}(\mathbf{p}_{t}, \mathbf{p}_{t}^{n})
  }
  {\partial
  \mathbf{M}}$ is equal to $(\mathbf{p}_{t} - \mathbf{p}_{t}^{n})(\mathbf{p}_{t} - \mathbf{p}_{t}^{n})^{\T}$.
  Letting $\mathbf{a}_{t}^{n}$ denote $\mathbf{p}_{t} - \mathbf{p}_{t}^{n}$, we rewrite (\ref{eq:dldm}) as:
  $\frac{\partial\mathcal{L}}{\partial
  \mathbf{M}}  = \mathbf{M}-\mathbf{M}^{k} -
  \sum_{\ell=1}^{|\mathcal{P}|}\eta_{\ell}[\mathbf{a}_{t}^{\nu_{\ell}}(\mathbf{a}_{t}^{\nu_{\ell}})^{\T} - \mathbf{a}_{t}^{\mu_{\ell}}(\mathbf{a}_{t}^{\mu_{\ell}})^{\T}]$.
  By setting $\frac{\partial\mathcal{L}}{\partial
  \mathbf{M}}$ to zero, we obtain the optimal $\mathbf{M}$ defined as:
  $\mathbf{M} = \mathbf{M}^{k} + \sum_{\ell=1}^{|\mathcal{P}|}\eta_{\ell}[\mathbf{a}_{t}^{\nu_{\ell}}(\mathbf{a}_{t}^{\nu_{\ell}})^{\T} - \mathbf{a}_{t}^{\mu_{\ell}}(\mathbf{a}_{t}^{\mu_{\ell}})^{\T}]$.

 \item {\it Update $\eta_{\ell}$.}
 To obtain the optimal solution for all Lagrange multipliers $\eta_{\ell}$,
we take the first-order derivative of $\mathcal{L}$ w.r.t. $\eta_{\ell}$ and set it to zero: \vspace{-0.2cm}
  \begin{equation}
  \hspace{-0.26cm}
  \begin{array}{c}
  \frac{\partial \mathcal{L}}{\partial \eta_{\ell}} =
  \eta_{\ell}[\mathbf{1}^{\T}(\mathbf{U}_{\ell}\circ \mathbf{U}_{\ell})\mathbf{1} + 2(\mathbf{a}_{t}^{\mu_{\ell}})^{\T}\mathbf{U}_{\ell}\mathbf{a}_{t}^{\mu_{\ell}}
  - 2(\mathbf{a}_{t}^{\nu_{\ell}})^{\T}\mathbf{U}_{\ell}\mathbf{a}_{t}^{\nu_{\ell}}]
  + [\Delta_{\mu_{\ell}\nu_{\ell}} + (\mathbf{a}_{t}^{\mu_{\ell}})^{\T}\mathbf{M}^{k}\mathbf{a}_{t}^{\mu_{\ell}} - (\mathbf{a}_{t}^{\nu_{\ell}})^{\T}\mathbf{M}^{k}\mathbf{a}_{t}^{\nu_{\ell}}]\\
  \thickspace + \underset{m\neq \ell}{\sum}\eta_{m}[\mathbf{1}^{\T}(\mathbf{U}_{\ell} \circ \mathbf{U}_{m})\mathbf{1} + (\mathbf{a}_{t}^{\mu_{m}})^{\T}\mathbf{U}_{m}\mathbf{a}_{t}^{\mu_{m}}
  - (\mathbf{a}_{t}^{\nu_{m}})^{\T}\mathbf{U}_{m}\mathbf{a}_{t}^{\nu_{m}}
  + (\mathbf{a}_{t}^{\mu_{m}})^{\T}\mathbf{U}_{\ell}\mathbf{a}_{t}^{\mu_{m}}
  - (\mathbf{a}_{t}^{\nu_{m}})^{\T}\mathbf{U}_{\ell}\mathbf{a}_{t}^{\nu_{m}}]=0,
  \end{array}  \hspace{-0.71cm} \vspace{-0.2cm}
  \end{equation}
  where $\mathbf{1}$ is the all-one column vector and $\circ$ is the
  elementwise product operator.
  Hence, we have a linear equation
  $\mathbf{B}\bm{\eta} = \mathbf{f}$, where $\bm{\eta} = (\eta_{1}, \eta_{2}, \ldots, \eta_{|\mathcal{P}|})^{\T}$,
  $\mathbf{f} = (f_{1}, f_{2}, \ldots, f_{|\mathcal{P}|})$ with
  $f_{\ell}$ being $-[\Delta_{\mu_{\ell}\nu_{\ell}} + (\mathbf{a}_{t}^{\mu_{\ell}})^{\T}\mathbf{M}^{k}\mathbf{a}_{t}^{\mu_{\ell}} - (\mathbf{a}_{t}^{\nu_{\ell}})^{\T}\mathbf{M}^{k}\mathbf{a}_{t}^{\nu_{\ell}}]$,
  and $\mathbf{B} = (b_{\ell m})_{|\mathcal{P}|\times |\mathcal{P}|}$ with
  $b_{\ell m}$ being
  $\mathbf{1}^{\T}(\mathbf{U}_{\ell} \circ \mathbf{U}_{m})\mathbf{1} + (\mathbf{a}_{t}^{\mu_{m}})^{\T}\mathbf{U}_{m}\mathbf{a}_{t}^{\mu_{m}}
  - (\mathbf{a}_{t}^{\nu_{m}})^{\T}\mathbf{U}_{m}\mathbf{a}_{t}^{\nu_{m}}
  + (\mathbf{a}_{t}^{\mu_{m}})^{\T}\mathbf{U}_{\ell}\mathbf{a}_{t}^{\mu_{m}}
  - (\mathbf{a}_{t}^{\nu_{m}})^{\T}\mathbf{U}_{\ell}\mathbf{a}_{t}^{\nu_{m}}$.

  Differentiating $\mathcal{L}$ w.r.t. $\xi$ and setting it to zero, we have
  $C - \beta - \sum_{\ell=1}^{|\mathcal{P}|}\eta_{\ell} = 0$. Since $\beta\geq 0$, the relation $0 \leq \sum_{\ell=1}^{|\mathcal{P}|}\eta_{\ell} \leq C$ holds. Therefore, the optimal $\bm{\eta}^{\ast}$ is efficiently obtained
  by solving the following optimization problem:  \vspace{-0.25cm}
  \begin{equation}
  \begin{array}{l}
  \bm{\eta}^{\ast} = \underset{\bm{\eta}}{\arg \min} \thinspace \|\mathbf{B}\bm{\eta} - \mathbf{f}\|_{1}, \hspace{0.15cm}
  \mbox{s.t.} \thinspace \bm{\eta} \succeq 0; \mathbf{1}^{\T}\bm{\eta} \leq C.
  \end{array}
  \label{eq:eta_linear_solve_supp}  \vspace{-0.25cm}
  \end{equation}
  \end{itemize}

 As before, the optimal $\mathbf{M}$ is updated
 as a sequence of rank-one
 additions: $\mathbf{M}\longleftarrow \mathbf{M} + \eta_{\ell}[\mathbf{a}_{t}^{\nu_{\ell}}(\mathbf{a}_{t}^{\nu_{\ell}})^{\T} - \mathbf{a}_{t}^{\mu_{\ell}}(\mathbf{a}_{t}^{\mu_{\ell}})^{\T}]$.
 As a result, the original $\mathbf{P}^{\T}\mathbf{M}\mathbf{P}$
 becomes $\mathbf{P}^{\T}\mathbf{M}\mathbf{P}
 + (\eta_{\ell}\mathbf{P}^{\T}\mathbf{a}_{t}^{\nu_{\ell}})(\mathbf{P}^{\T}\mathbf{a}_{t}^{\nu_{\ell}})^{\T}
 +  (-\eta_{\ell}\mathbf{P}^{\T}\mathbf{a}_{t}^{\mu_{\ell}})(\mathbf{P}^{\T}\mathbf{a}_{t}^{\mu_{\ell}})^{\T}$.
 When $\mathbf{M}$ is modified by a rank-one addition, the inverse of
 $\mathbf{P}^{\T}\mathbf{M}\mathbf{P}$ can be easily updated
 according to the theory of~\cite{householder1964theory,powell1969theorem}.
 Namely, $(\mathbf{J}+\mathbf{u}\mathbf{v}^{\T})^{-1} = \mathbf{J}^{-1} -
 \frac{\mathbf{J}^{-1}\mathbf{u}\mathbf{v}^{\T}\mathbf{J}^{-1}}{1+\mathbf{v}^{\T}\mathbf{J}^{-1}\mathbf{u}}$.
 Here, $\mathbf{J}=\mathbf{P}^{\T}\mathbf{M}\mathbf{P}$, $\mathbf{u}=\eta_{\ell}\mathbf{P}^{\T}\mathbf{a}_{t}^{\nu_{\ell}}$ (or $\mathbf{u}=-\eta_{\ell}\mathbf{P}^{\T}\mathbf{a}_{t}^{\mu_{\ell}}$),
 and $\mathbf{v}=\mathbf{P}^{\T}\mathbf{a}_{t}^{\nu_{\ell}}$ (or $\mathbf{v}=\mathbf{P}^{\T}\mathbf{a}_{t}^{\mu_{\ell}}$).

\clearpage

\section{Theoretical analysis of time-weighted reservoir sampling}
\label{sec:analysis_t_r_s}

\begin{theorem}
\vspace{-0.0cm}
\label{theo:Weighted_reservoir_sampling_supp}
Given a new training sample $\mathbf{p}$, we have the following relation: \vspace{-0.26cm}
\[
\begin{array}{l}
\frac{1}{|\mathcal{B}_{c_{+}}||\mathcal{B}_{c_{-}}|}\emph{E}_{\mathcal{B}_{c_{+}}}\emph{E}_{\mathcal{B}_{c_{-}}}\left(
\underset{\mathbf{p}^{+} \in \mathcal{B}_{c_{+}}}{\sum}\underset{\mathbf{p}^{-} \in \mathcal{B}_{c_{-}}}{\sum}l_{\mathbf{M}}(\mathbf{p},
\mathbf{p}^{+}, \mathbf{p}^{-})\right)\\
=\overset{h_{c_{+}}}{\underset{i=1}{\sum}}\left[
\frac{{w}_{i}^{c_{+}}}
{\overset{h_{c_{+}}}
{\underset{m=1}{\sum}}{w}_{m}^{c_{+}}
}
\left(
\overset{h_{c_{-}}}{\underset{j=1}{\sum}}
\frac{{w}_{j}^{c_{-}}}
{
\overset{h_{c_{-}}}{\underset{n=1}{\sum}}{w}_{n}^{c_{-}}}
\l_{\mathbf{M}}(\mathbf{p}, \mathbf{p}^{c_{+}}_{i}, \mathbf{p}^{c_{-}}_{j})\right)\right]
\end{array} \vspace{-0.2cm}
\]
where $\emph{E}(\cdot)$ is the expectation operator,
$c_{+} \in \{f, b\}$ is a class indicator variable whose class membership is the same as $\mathbf{p}$
(i.e., if $\mathbf{p}\in f$, $c_{+}=f$; otherwise, $c_{+}=b$),
$c_{-} \in \{f, b\}$  is a class indicator variable whose class membership is different from $\mathbf{p}$
(i.e., if $\mathbf{p}\in f$, $c_{-}=b$; otherwise, $c_{-}=f$),
$\{\mathbf{p}^{f}_{i}\}_{i=1}^{h_{f}}$ and
$\{\mathbf{p}^{b}_{j}\}_{j=1}^{h_{b}}$ denote all the received
training sample sets before $\mathbf{p}$,
${w}_{i}^{f}$ and ${w}_{j}^{b}$ are the corresponding weights of $\mathbf{p}^{f}_{i}$ and $\mathbf{p}^{b}_{j}$.
In our case, any sample weight ${w}_{i}^{f}$ (or ${w}_{j}^{b}$) is defined as:
${w}_{i}^{f}=q^{\mathbb{I}_{i}^{f}}$ (or ${w}_{j}^{b}=q^{\mathbb{I}_{j}^{b}}$) where
$\mathbb{I}_{i}^{f}$ (or $\mathbb{I}_{j}^{b}$) is the
corresponding frame index number of $\mathbf{p}^{f}_{i}$ (or $\mathbf{p}^{b}_{j}$) and $q$ is a constant such that $q\geq 1$.
\vspace{-0.1cm}
\end{theorem}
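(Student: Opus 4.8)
The plan is to turn the left-hand side into a sum over all previously received samples by exploiting linearity of expectation, and then to identify the resulting marginal selection probabilities with the normalized time-weights, using the key mechanism $k = u^{1/w}$ of the time-weighted reservoir sampling in Algorithm~\ref{alg:Weighted_reserior_sampling}.

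First I would note that the two buffers $\mathcal{B}_{c_{+}}$ and $\mathcal{B}_{c_{-}}$ are maintained by independent runs of the sampling procedure on the foreground and background streams, so the joint expectation factorizes into $\mathrm{E}_{\mathcal{B}_{c_{+}}}\mathrm{E}_{\mathcal{B}_{c_{-}}}$. Since the summand $l_{\mathbf{M}}(\mathbf{p},\mathbf{p}^{+},\mathbf{p}^{-})$ couples the two buffers only through which samples they contain, I would expand the inner double sum and push the expectation inside by linearity: writing $\chi_{i}^{c_{+}} = \mathbf{1}[\mathbf{p}_{i}^{c_{+}}\in\mathcal{B}_{c_{+}}]$ for the inclusion indicator, the expectation of $\sum_{\mathbf{p}^{+}}\sum_{\mathbf{p}^{-}} l_{\mathbf{M}}$ becomes $\sum_{i=1}^{h_{c_{+}}}\sum_{j=1}^{h_{c_{-}}} \mathrm{E}[\chi_{i}^{c_{+}}]\,\mathrm{E}[\chi_{j}^{c_{-}}]\, l_{\mathbf{M}}(\mathbf{p},\mathbf{p}_{i}^{c_{+}},\mathbf{p}_{j}^{c_{-}})$. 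The whole problem then reduces to computing the marginal inclusion probability $\mathrm{E}[\chi_{i}^{c}] = \Pr(\mathbf{p}_{i}^{c}\in\mathcal{B}_{c})$ and dividing it by the buffer size $|\mathcal{B}_{c}|$.

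The heart of the argument is a lemma on the random keys. Each sample $\mathbf{p}_{i}^{c}$ is assigned a key $k_{i} = u_{i}^{1/w_{i}^{c}}$ with $u_{i}\sim\mathrm{rand}(0,1)$, so $\Pr(k_{i}\le t) = \Pr(u_{i}\le t^{w_{i}^{c}}) = t^{w_{i}^{c}}$ on $[0,1]$, giving $k_{i}$ the density $w_{i}^{c}\, t^{w_{i}^{c}-1}$. Since the reservoir retains the samples with the largest keys, I would read off the selection probability from the order statistics of the keys. The cleanest instance is that a fixed sample holds the single largest key with probability $\int_{0}^{1} w_{i}^{c}\, t^{w_{i}^{c}-1}\prod_{m\ne i} t^{w_{m}^{c}}\,dt = w_{i}^{c} / \sum_{m} w_{m}^{c}$, which already produces the normalized weight appearing on the right-hand side. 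I would then invoke the weighted reservoir sampling guarantee of~\cite{efraimidis2006weighted} to extend this to a buffer of size $|\mathcal{B}_{c}|$, establishing $\Pr(\mathbf{p}_{i}^{c}\in\mathcal{B}_{c}) / |\mathcal{B}_{c}| = w_{i}^{c} / \sum_{m} w_{m}^{c}$ with $w_{i}^{c}=q^{\mathbb{I}_{i}^{c}}$.

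Substituting these normalized weights for both the $c_{+}$ and $c_{-}$ buffers and regrouping the double sum into the nested form then yields exactly the right-hand side. The main obstacle is the middle step: for a buffer of size larger than one the marginal inclusion probability is an integral over an order statistic of keys with unequal exponents $w_{m}^{c}$, and verifying that its normalization by $|\mathcal{B}_{c}|$ collapses to $w_{i}^{c} / \sum_{m} w_{m}^{c}$ is the delicate point, resting on the sampling-proportional-to-weight property of the key scheme rather than on elementary manipulation. By comparison, the factorization across the independent foreground and background buffers and the final regrouping are routine.
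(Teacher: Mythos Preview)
Your approach is essentially the same as the paper's: both factorize the expectation over the two independent buffers, apply linearity, and reduce everything to the single-buffer identity $\tfrac{1}{|\mathcal{B}_c|}\,\mathrm{E}_{\mathcal{B}_c}\bigl[\sum_{\mathbf{p}'\in\mathcal{B}_c} g(\mathbf{p}')\bigr] = \sum_i \tfrac{w_i^c}{\sum_m w_m^c}\, g(\mathbf{p}_i^c)$, which both you and the paper ultimately take as the cited property of weighted reservoir sampling rather than prove from scratch. The only organizational differences are that the paper works sequentially (inner expectation over $\mathcal{B}_{c_-}$, then outer over $\mathcal{B}_{c_+}$) instead of through your inclusion indicators, and that the paper explicitly invokes the \emph{with-replacement} variant of reservoir sampling to make that identity exact---precisely the point you flagged as delicate for the top-key (without-replacement) scheme of Algorithm~\ref{alg:Weighted_reserior_sampling}.
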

\begin{proof}
\footnotesize
In total, there are two cases for $\mathbf{p}$: i) $\mathbf{p}$ is a foreground sample (i.e., $\mathbf{p} \in f$) with $c_{+}=f$ and $c_{-}=b$; and
ii) $\mathbf{p}$ is a background sample (i.e., $\mathbf{p} \in b$) with $c_{+}=b$ and $c_{-}=f$.
Therefore, when $\mathbf{p}$ is a foreground sample, we need to prove the following relation:
\begin{equation}
\begin{array}{l}
\frac{1}{|\mathcal{B}_{f}||\mathcal{B}_{b}|}\mbox{E}_{\mathcal{B}_{f}}\mbox{E}_{\mathcal{B}_{b}}\left(
\underset{\mathbf{p}^{+} \in \mathcal{B}_{f}}{\sum}\underset{\mathbf{p}^{-} \in \mathcal{B}_{b}}{\sum}l_{\mathbf{M}}(\mathbf{p},
\mathbf{p}^{+}, \mathbf{p}^{-})\right)\\
=
\overset{h_{f}}{\underset{i=1}{\sum}}\left[
\frac{{w}_{i}^{f}}
{\overset{h_{f}}
{\underset{m=1}{\sum}}{w}_{m}^{f}
}
\left(
\overset{h_{b}}{\underset{j=1}{\sum}}
\frac{{w}_{j}^{b}}
{
\overset{h_{b}}{\underset{n=1}{\sum}}{w}_{n}^{b}}
\l_{\mathbf{M}}(\mathbf{p}, \mathbf{p}^{f}_{i}, \mathbf{p}^{b}_{j})\right)\right].
\end{array}
\label{eq:foreground_prove}
\end{equation}
Conversely, we need to  prove the following relation:
\begin{equation}
\begin{array}{l}
\frac{1}{|\mathcal{B}_{b}||\mathcal{B}_{f}|}\mbox{E}_{\mathcal{B}_{b}}\mbox{E}_{\mathcal{B}_{f}}\left(
\underset{\mathbf{p}^{+} \in \mathcal{B}_{b}}{\sum}\underset{\mathbf{p}^{-} \in \mathcal{B}_{f}}{\sum}l_{\mathbf{M}}(\mathbf{p},
\mathbf{p}^{+}, \mathbf{p}^{-})\right)\\
=
\overset{h_{b}}{\underset{i=1}{\sum}}\left[
\frac{{w}_{i}^{b}}
{\overset{h_{b}}
{\underset{m=1}{\sum}}{w}_{m}^{b}
}
\left(
\overset{h_{f}}{\underset{j=1}{\sum}}
\frac{{w}_{j}^{f}}
{
\overset{h_{f}}{\underset{n=1}{\sum}}{w}_{n}^{f}}
\l_{\mathbf{M}}(\mathbf{p}, \mathbf{p}^{b}_{i}, \mathbf{p}^{f}_{j})\right)\right].
\end{array}
\label{eq:background_prove}
\end{equation}
First of all, we cope with the foreground case defined in Equ.~\eqref{eq:foreground_prove}. The expectation
in Equ.~\eqref{eq:foreground_prove} can be computed as:
\begin{equation}
\begin{array}{l}
\frac{1}{|\mathcal{B}_{f}||\mathcal{B}_{b}|}\mbox{E}_{\mathcal{B}_{f}}\mbox{E}_{\mathcal{B}_{b}}\left(
\underset{\mathbf{p}^{+} \in \mathcal{B}_{f}}{\sum}\underset{\mathbf{p}^{-} \in \mathcal{B}_{b}}{\sum}l_{\mathbf{M}}(\mathbf{p}_{},
\mathbf{p}^{+}, \mathbf{p}^{-})\right)\\
= \frac{1}{|\mathcal{B}_{f}|}\mbox{E}_{\mathcal{B}_{f}}
\left\{\underset{\mathbf{p}^{+} \in \mathcal{B}_{f}}{\sum}
\left[\frac{1}{|\mathcal{B}_{b}|}\mbox{E}_{\mathcal{B}_{b}}
\left(\underset{\mathbf{p}^{-} \in \mathcal{B}_{b}}{\sum}
l_{\mathbf{M}}(\mathbf{p}_{},\mathbf{p}^{+}, \mathbf{p}^{-})
\right)\right]
\right\}.
\end{array}
\label{eq:step1}
\end{equation}
According to the property of weighted reservoir sampling with replacement (as shown in Refs [19, 20]), we have:
\begin{equation}
\frac{1}{|\mathcal{B}_{b}|}\mbox{E}_{\mathcal{B}_{b}}
\left(\underset{\mathbf{p}^{-} \in \mathcal{B}_{b}}{\sum}
l_{\mathbf{M}}(\mathbf{p}_{},\mathbf{p}^{+}, \mathbf{p}^{-})
\right)
=
\underset{\mathbf{v}_{-}\sim \mathbb{W}_{b}}{\mbox{E}}
\left[l_{\mathbf{M}}(\mathbf{p}_{},\mathbf{p}^{+}, \mathbf{v}_{-})\right].
\label{eq:reservior_background}
\end{equation}
Here, $\mathbb{W}_{b}$ is the probability distribution associated with
$\{\mathbf{p}^{b}_{j}\}_{j=1}^{h_{b}}$, and its corresponding probability mass function is defined as:
\begin{equation}
 \Pr_{\mathbb{W}_{b}}(\mathbf{v}_{-}=\mathbf{p}^{b}_{j})=\frac{{w}_{j}^{b}}
{\overset{h_{b}}
{\underset{n=1}{\sum}}{w}_{n}^{b}
}.
\label{eq:background_weighted_distribution}
\end{equation}
As a result, Equ.~\eqref{eq:step1} can be rewritten as:
\begin{equation}
\begin{array}{l}
\frac{1}{|\mathcal{B}_{f}||\mathcal{B}_{b}|}\mbox{E}_{\mathcal{B}_{f}}\mbox{E}_{\mathcal{B}_{b}}\left(
\underset{\mathbf{p}^{+} \in \mathcal{B}_{f}}{\sum}\underset{\mathbf{p}^{-} \in \mathcal{B}_{b}}{\sum}l_{\mathbf{M}}(\mathbf{p}_{},
\mathbf{p}^{+}, \mathbf{p}^{-})\right)\\
= \frac{1}{|\mathcal{B}_{f}|}\mbox{E}_{\mathcal{B}_{f}}
\left\{\underset{\mathbf{p}^{+} \in \mathcal{B}_{f}}{\sum}
\left[
\underset{\mathbf{v}_{-}\sim \mathbb{W}_{b}}{\mbox{E}}
\left[l_{\mathbf{M}}(\mathbf{p}_{},\mathbf{p}^{+}, \mathbf{v}_{-})\right]
\right]
\right\}.
\end{array}
\end{equation}
Similar to Equ.~\eqref{eq:reservior_background}, we obtain the following relation:
\begin{equation}
\begin{array}{l}
\frac{1}{|\mathcal{B}_{f}|}\mbox{E}_{\mathcal{B}_{f}}
\left\{\underset{\mathbf{p}^{+} \in \mathcal{B}_{f}}{\sum}
\left[
\underset{\mathbf{v}_{-}\sim \mathbb{W}_{b}}{\mbox{E}}
\left[l_{\mathbf{M}}(\mathbf{p}_{},\mathbf{p}^{+}, \mathbf{v}_{-})\right]
\right]
\right\}\\
=\underset{\mathbf{v}_{+}\sim \mathbb{W}_{f}}{\mbox{E}}
\left\{
\underset{\mathbf{v}_{-}\sim \mathbb{W}_{b}}{\mbox{E}}
\left[l_{\mathbf{M}}(\mathbf{p}_{},\mathbf{v}_{+}, \mathbf{v}_{-})\right]
\right\}.
\end{array}
\label{eq:double_expectation}
\end{equation}
Here, $\mathbb{W}_{f}$ is the probability distributions corresponding to $\{\mathbf{p}^{f}_{i}\}_{i=1}^{h_{f}}$
with the following probability mass function:
\begin{equation}
\Pr_{\mathbb{W}_{f}}(\mathbf{v}_{+}=\mathbf{p}^{f}_{i})=\frac{{w}_{i}^{f}}
{\overset{h_{f}}
{\underset{m=1}{\sum}}{w}_{m}^{f}
}.
\label{eq:foreground_weighted_distribution}
\end{equation}
Therefore,
\begin{equation}
\begin{array}{l}
\frac{1}{|\mathcal{B}_{f}||\mathcal{B}_{b}|}\mbox{E}_{\mathcal{B}_{f}}\mbox{E}_{\mathcal{B}_{b}}\left(
\underset{\mathbf{p}^{+} \in \mathcal{B}_{f}}{\sum}\underset{\mathbf{p}^{-} \in \mathcal{B}_{b}}{\sum}l_{\mathbf{M}}(\mathbf{p}_{},
\mathbf{p}^{+}, \mathbf{p}^{-})\right)\\
= \underset{\mathbf{v}_{+}\sim \mathbb{W}_{f}}{\mbox{E}}
\left\{
\underset{\mathbf{v}_{-}\sim \mathbb{W}_{b}}{\mbox{E}}
\left[l_{\mathbf{M}}(\mathbf{p}_{},\mathbf{v}_{+}, \mathbf{v}_{-})\right]
\right\}.
\end{array}
\end{equation}
Based on Equ.~\eqref{eq:background_weighted_distribution} and Equ.~\eqref{eq:foreground_weighted_distribution},
we reformulate $\underset{\mathbf{v}_{+}\sim \mathbb{W}_{f}}{\mbox{E}}
\left\{
\underset{\mathbf{v}_{-}\sim \mathbb{W}_{b}}{\mbox{E}}
\left[l_{\mathbf{M}}(\mathbf{p}_{},\mathbf{v}_{+}, \mathbf{v}_{-})\right]
\right\}$ as:
\begin{equation}
\begin{array}{l}
\underset{\mathbf{v}_{+}\sim \mathbb{W}_{f}}{\mbox{E}}
\left\{
\underset{\mathbf{v}_{-}\sim \mathbb{W}_{b}}{\mbox{E}}
\left[l_{\mathbf{M}}(\mathbf{p}_{},\mathbf{v}_{+}, \mathbf{v}_{-})\right]
\right\}
\\
=
\underset{\mathbf{v}_{+}\sim \mathbb{W}_{f}}{\mbox{E}}
\left\{
\overset{h_{b}}{\underset{j=1}{\sum}}\left[
\frac{{w}_{j}^{b}}
{
\overset{h_{b}}{\underset{n=1}{\sum}}{w}_{n}^{b}}
\l_{\mathbf{M}}(\mathbf{p}_{}, \mathbf{v}_{+}, \mathbf{p}^{b}_{j})
\right]
\right\}\\
= \overset{h_{f}}{\underset{i=1}{\sum}}\left[
\frac{{w}_{i}^{f}}
{\overset{h_{f}}
{\underset{m=1}{\sum}}{w}_{m}^{f}
}
\left(
\overset{h_{b}}{\underset{j=1}{\sum}}
\frac{{w}_{j}^{b}}
{
\overset{h_{b}}{\underset{n=1}{\sum}}{w}_{n}^{b}}
\l_{\mathbf{M}}(\mathbf{p}_{}, \mathbf{p}^{f}_{i}, \mathbf{p}^{b}_{j})\right)\right].
\end{array}
\label{eq:step4}
\end{equation}
As a result, we have the following relation:
\begin{equation}
\begin{array}{l}
\frac{1}{|\mathcal{B}_{f}||\mathcal{B}_{b}|}\mbox{E}_{\mathcal{B}_{f}}\mbox{E}_{\mathcal{B}_{b}}\left(
\underset{\mathbf{p}^{+} \in \mathcal{B}_{f}}{\sum}\underset{\mathbf{p}^{-} \in \mathcal{B}_{b}}{\sum}l_{\mathbf{M}}(\mathbf{p},
\mathbf{p}^{+}, \mathbf{p}^{-})\right)\\
=
\overset{h_{f}}{\underset{i=1}{\sum}}\left[
\frac{{w}_{i}^{f}}
{\overset{h_{f}}
{\underset{m=1}{\sum}}{w}_{m}^{f}
}
\left(
\overset{h_{b}}{\underset{j=1}{\sum}}
\frac{{w}_{j}^{b}}
{
\overset{h_{b}}{\underset{n=1}{\sum}}{w}_{n}^{b}}
\l_{\mathbf{M}}(\mathbf{p}, \mathbf{p}^{f}_{i}, \mathbf{p}^{b}_{j})\right)\right].
\end{array}
\label{eq:final_provement_foreground}
\end{equation}
Finally, we complete the proof of Equ.~\eqref{eq:foreground_prove}.
Furthermore, we need to prove the background case defined in Equ.~\eqref{eq:background_prove}.
After a similar
process (i.e., from Equ.~\eqref{eq:step1} to Equ.~\eqref{eq:step4}), we can obtain:
\begin{equation}
\begin{array}{l}
\frac{1}{|\mathcal{B}_{b}||\mathcal{B}_{f}|}\mbox{E}_{\mathcal{B}_{b}}\mbox{E}_{\mathcal{B}_{f}}\left(
\underset{\mathbf{p}^{+} \in \mathcal{B}_{b}}{\sum}\underset{\mathbf{p}^{-} \in \mathcal{B}_{f}}{\sum}l_{\mathbf{M}}(\mathbf{p},
\mathbf{p}^{+}, \mathbf{p}^{-})\right)\\
=
\overset{h_{b}}{\underset{i=1}{\sum}}\left[
\frac{{w}_{i}^{b}}
{\overset{h_{b}}
{\underset{m=1}{\sum}}{w}_{m}^{b}
}
\left(
\overset{h_{f}}{\underset{j=1}{\sum}}
\frac{{w}_{j}^{f}}
{
\overset{h_{f}}{\underset{n=1}{\sum}}{w}_{n}^{f}}
\l_{\mathbf{M}}(\mathbf{p}, \mathbf{p}^{b}_{i}, \mathbf{p}^{f}_{j})\right)\right].
\end{array}
\label{eq:final_provement_background}
\end{equation}
As a result, we complete the proof of Equ.~\eqref{eq:background_prove}.
Based on the conclusions of Equ.~\eqref{eq:final_provement_foreground} and Equ.~\eqref{eq:final_provement_background},
we have:
\begin{equation}
\begin{array}{l}
\frac{1}{|\mathcal{B}_{c_{+}}||\mathcal{B}_{c_{-}}|}\mbox{E}_{\mathcal{B}_{c_{+}}}\mbox{E}_{\mathcal{B}_{c_{-}}}\left(
\underset{\mathbf{p}^{+} \in \mathcal{B}_{c_{+}}}{\sum}\underset{\mathbf{p}^{-} \in \mathcal{B}_{c_{-}}}{\sum}l_{\mathbf{M}}(\mathbf{p},
\mathbf{p}^{+}, \mathbf{p}^{-})\right)\\
=\overset{h_{c_{+}}}{\underset{i=1}{\sum}}\left[
\frac{{w}_{i}^{c_{+}}}
{\overset{h_{c_{+}}}
{\underset{m=1}{\sum}}{w}_{m}^{c_{+}}
}
\left(
\overset{h_{c_{-}}}{\underset{j=1}{\sum}}
\frac{{w}_{j}^{c_{-}}}
{
\overset{h_{c_{-}}}{\underset{n=1}{\sum}}{w}_{n}^{c_{-}}}
\l_{\mathbf{M}}(\mathbf{p}, \mathbf{p}^{c_{+}}_{i}, \mathbf{p}^{c_{-}}_{j})\right)\right].
\end{array} \vspace{-0.0cm}
\end{equation}
Consequently, we complete the proof of Theorem~\ref{theo:Weighted_reservoir_sampling_supp}.
\end{proof}

\break

\section{Performance with and without metric learning}
\label{sec:exp_w_metric_learning}

To justify the effect of different metric learning mechanisms, we
design several experiments on five video sequences.
Fig.~\ref{fig:metric_non_metric} and Tab.~\ref{Tab:metric_success_rate}  show the corresponding experimental
results of different metric learning mechanisms
in CLE, VOR, and success rate.
From Fig.~\ref{fig:metric_non_metric} and Tab.~\ref{Tab:metric_success_rate}, we can see that the performance of
metric learning is  better than that of no
metric learning. In addition, the performance of metric learning with
no eigendecomposition is close
to that of metric learning with step-by-step eigendecomposition, and
better than that of
metric learning with final eigendecomposition. Therefore, the obtained
results are consistent with those
in~\cite{chechik2010large}. Besides, metric learning with step-by-step
eigendecomposition
is much slower than that with no eigendecomposition which is adopted
by the proposed
tracking algorithm.

\begin{table}[t]
\begin{center}
\scalebox{0.55}
{
\begin{tabular}{c||c|c|c|c|c||c|c|c|c|c||c|c|c|c|c}
\hline \scriptsize
& \multicolumn{5}{|c||}{CLE} & \multicolumn{5}{|c||}{VOR} & \multicolumn{5}{|c}{Success Rate}\\
\hline
& \makebox[1.0cm]{cubicle}   &  \makebox[1.0cm]{football} &
\makebox[1.0cm]{iceball} & \makebox[1.1cm]{trellis70} & \makebox[1.0cm]{seq-jd} & \makebox[1.0cm]{cubicle}   &  \makebox[1.0cm]{football} &
\makebox[1.0cm]{iceball} & \makebox[1.1cm]{trellis70} & \makebox[1.0cm]{seq-jd} & \makebox[1.0cm]{cubicle}   &  \makebox[1.0cm]{football} &
\makebox[1.0cm]{iceball} & \makebox[1.1cm]{trellis70} & \makebox[1.0cm]{seq-jd}\\
\hline \hline
ML w/o eigen                &  4.31      &  3.14   &  3.03         & 5.61   & 4.30 &0.74   & 0.67    &0.68   & 0.78   & 0.72 &\bf 0.98    &0.88    &0.93   & 0.98    &0.94\\
ML with final eigen         &  5.95      & 5.69     &  5.10       & 8.49    &7.07 & 0.67    &0.59    &0.63    &0.70    &0.63 &  0.94   & 0.74   & 0.90   & 0.94   & 0.82\\
ML with step-by-step eigen  &  \bf 2.16  &  \bf 1.89    &\bf 1.30    &\bf 4.54    &\bf 3.31 & \bf 0.79   & \bf 0.71   & \bf 0.69    &\bf 0.82  & \bf 0.75 &\bf 0.98    &\bf 0.90   & 0\bf .95    &\bf 0.99    &\bf  0.95\\
No metric learning           & 5.45      &51.73      &4.31           &8.85    &6.29 &0.66    &0.27    &0.64   & 0.68    &0.63 & 0.86   & 0.36   & 0.88   & 0.91   & 0.82\\
\hline
\end{tabular}
}
\end{center}
\vspace{-0.7cm}
\caption{
Quantitative evaluation of the proposed tracker with different
metric learning configurations on five video sequences.
The table reports their average tracking results in
CLE, VOR, and success rate.
 \vspace{-0.5cm}}
\label{Tab:metric_success_rate}
\end{table}

\begin{figure*}[t]
\centering
\includegraphics[scale=0.405]{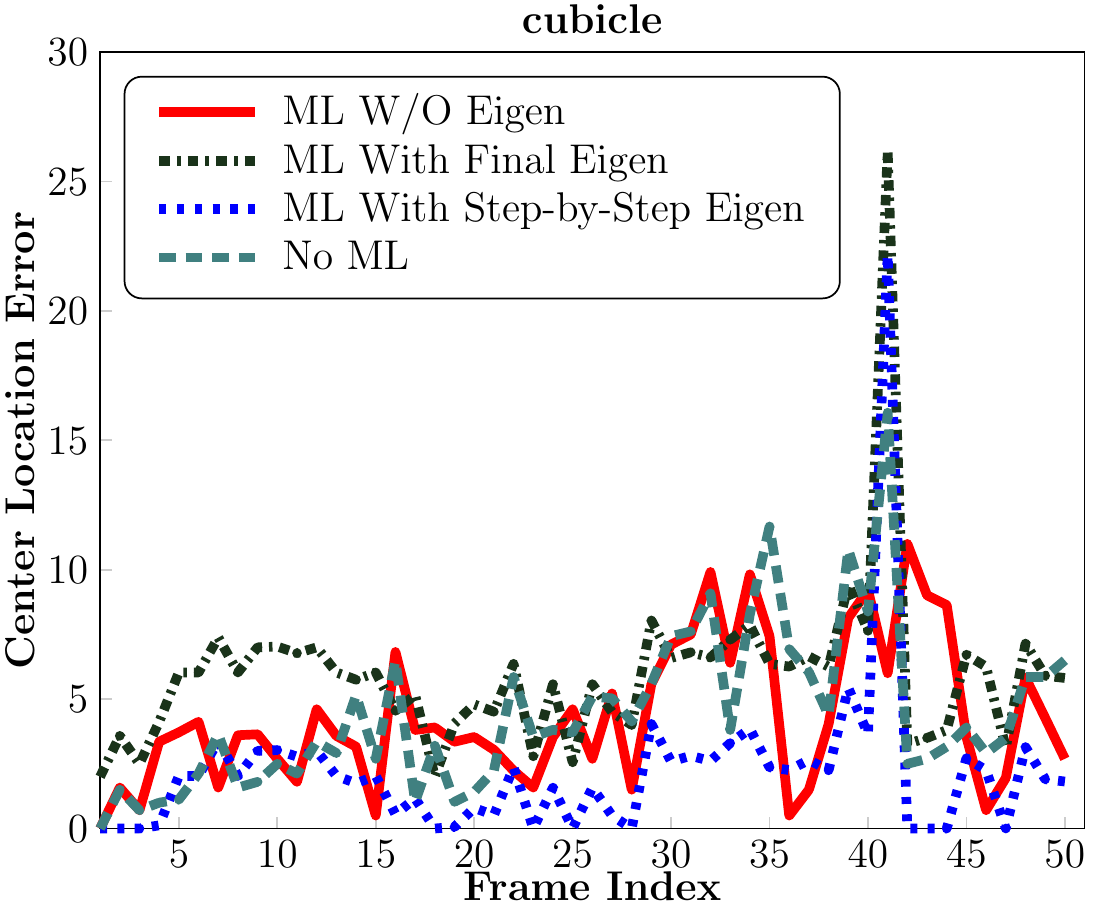}
\includegraphics[scale=0.405]{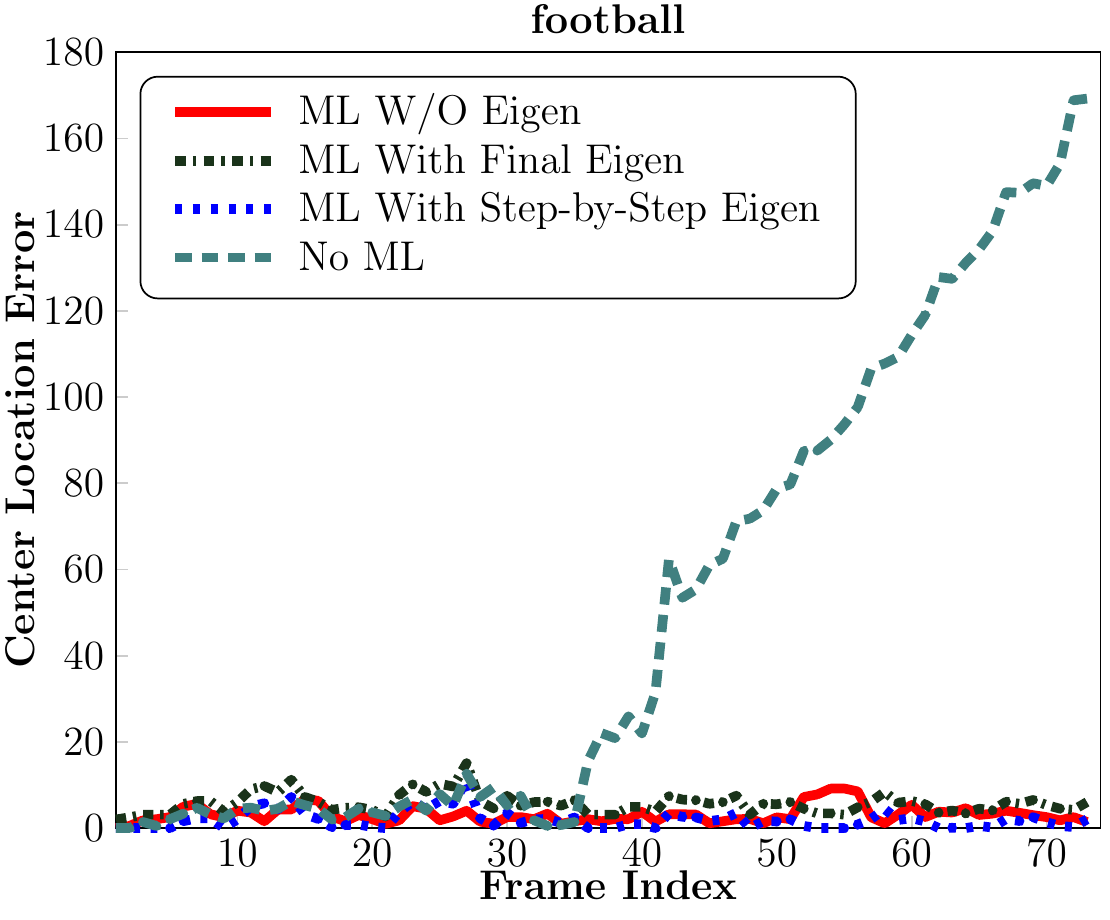}\\
\includegraphics[scale=0.405]{./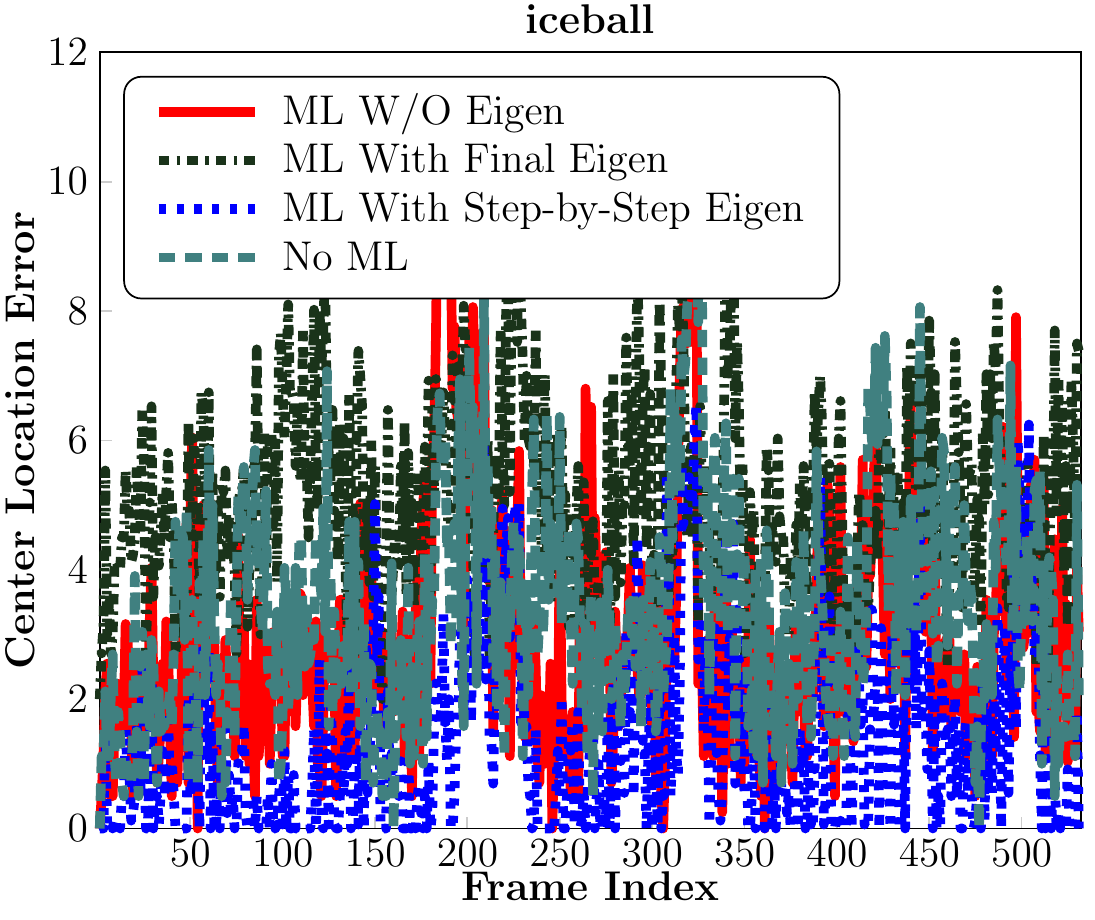}
\includegraphics[scale=0.405]{./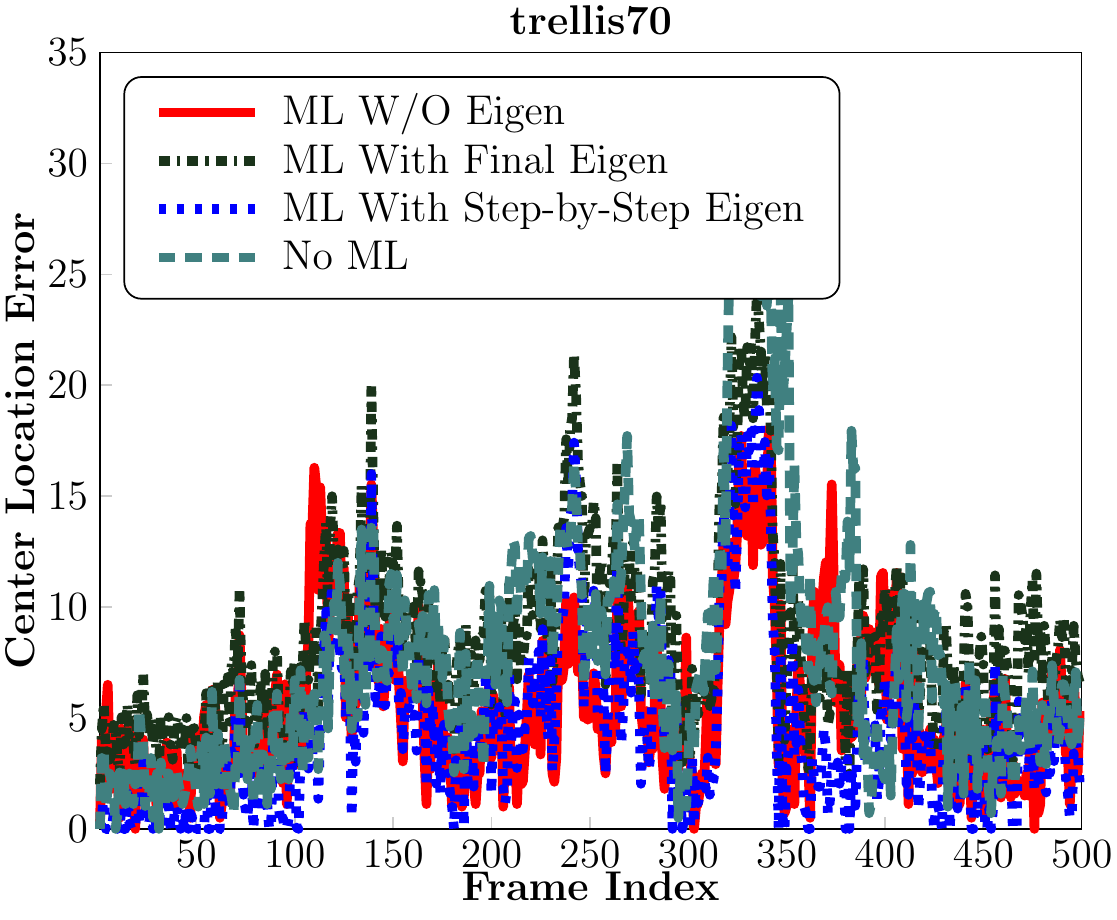}
\includegraphics[scale=0.405]{./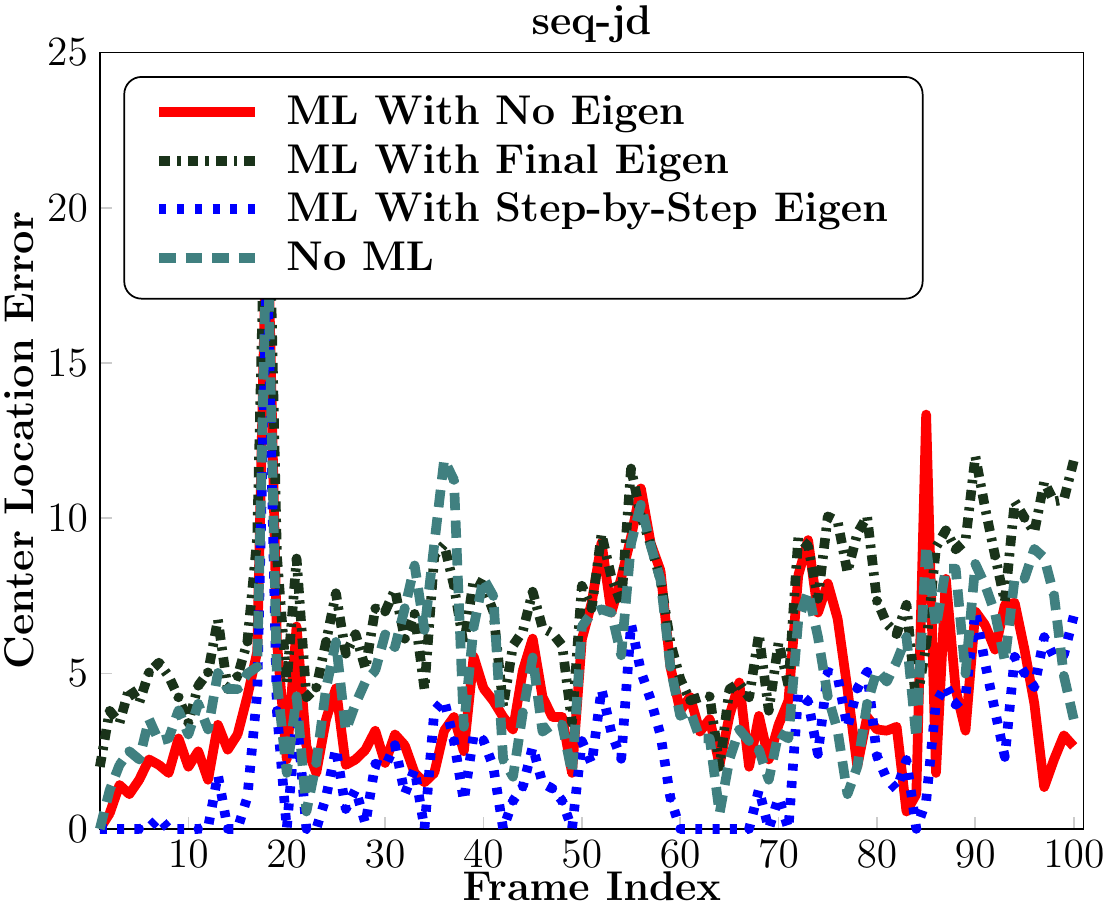}\\
\includegraphics[scale=0.405]{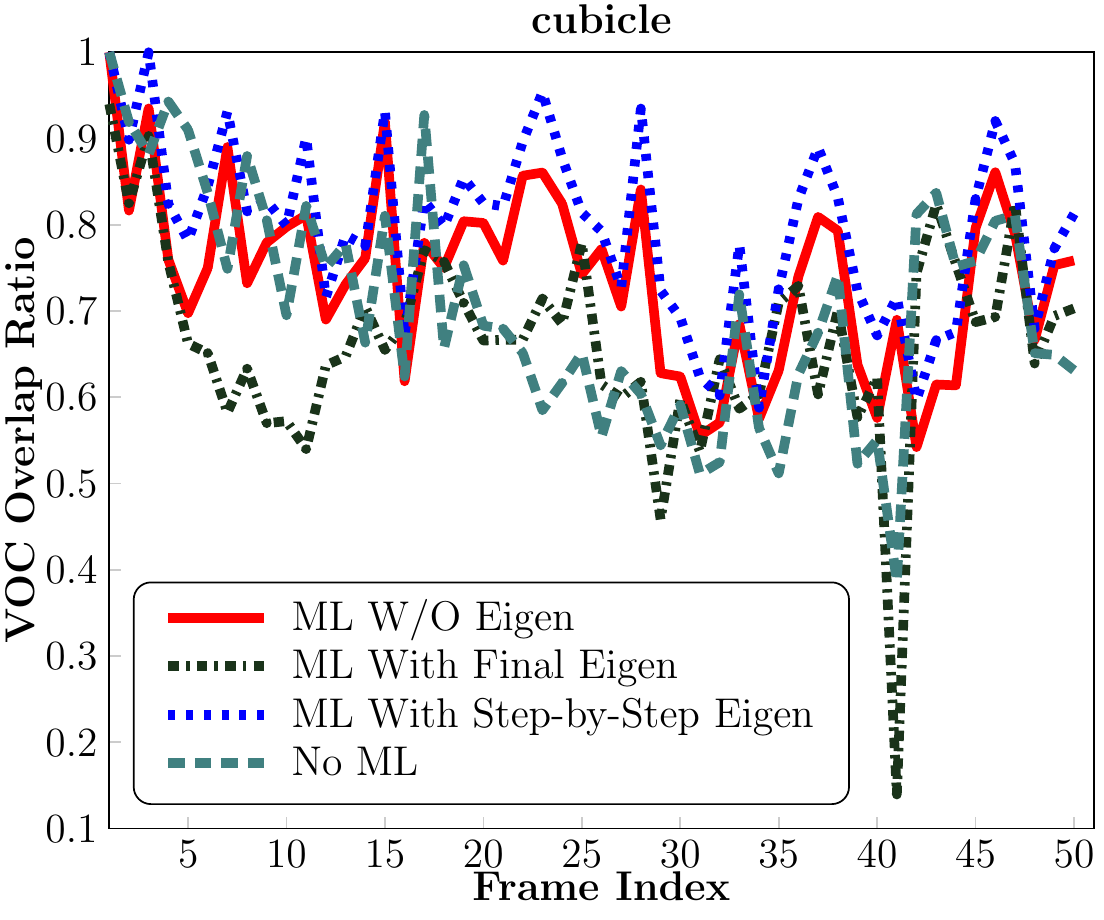}
\includegraphics[scale=0.405]{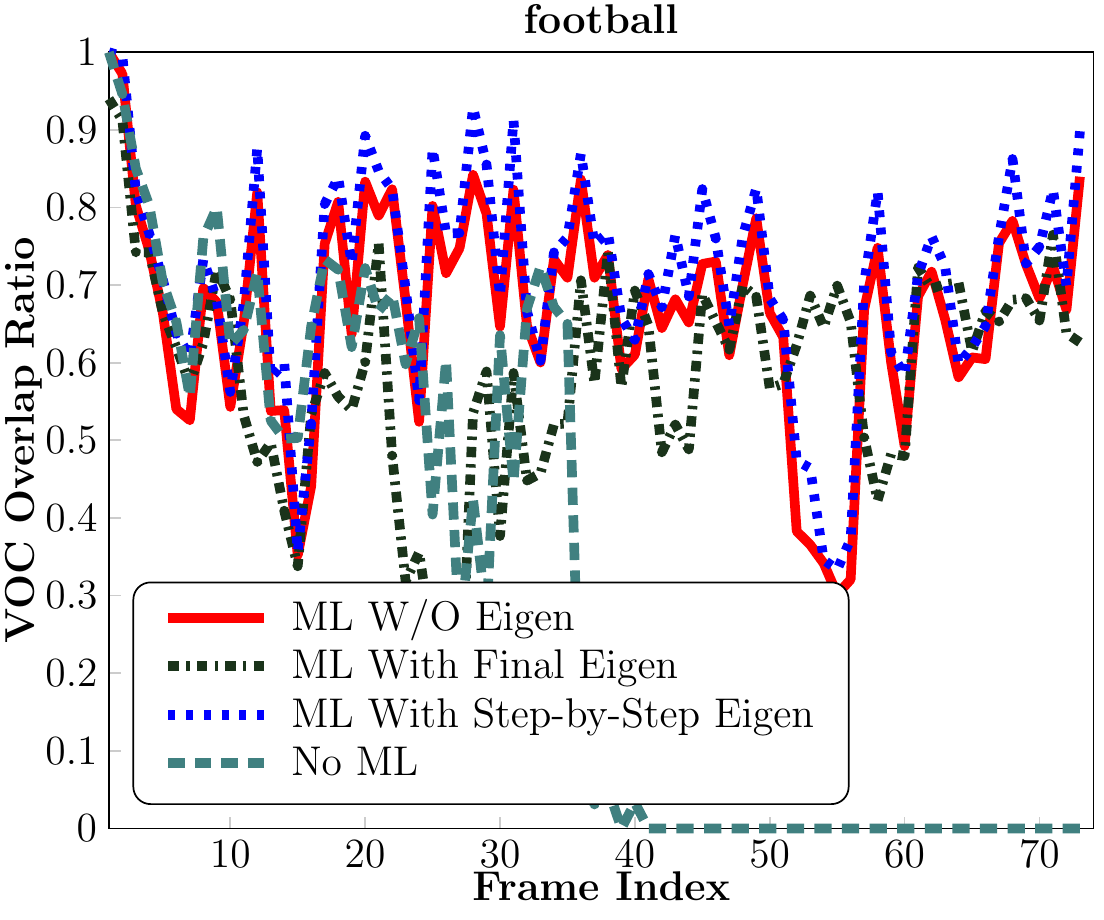}\\
\includegraphics[scale=0.405]{./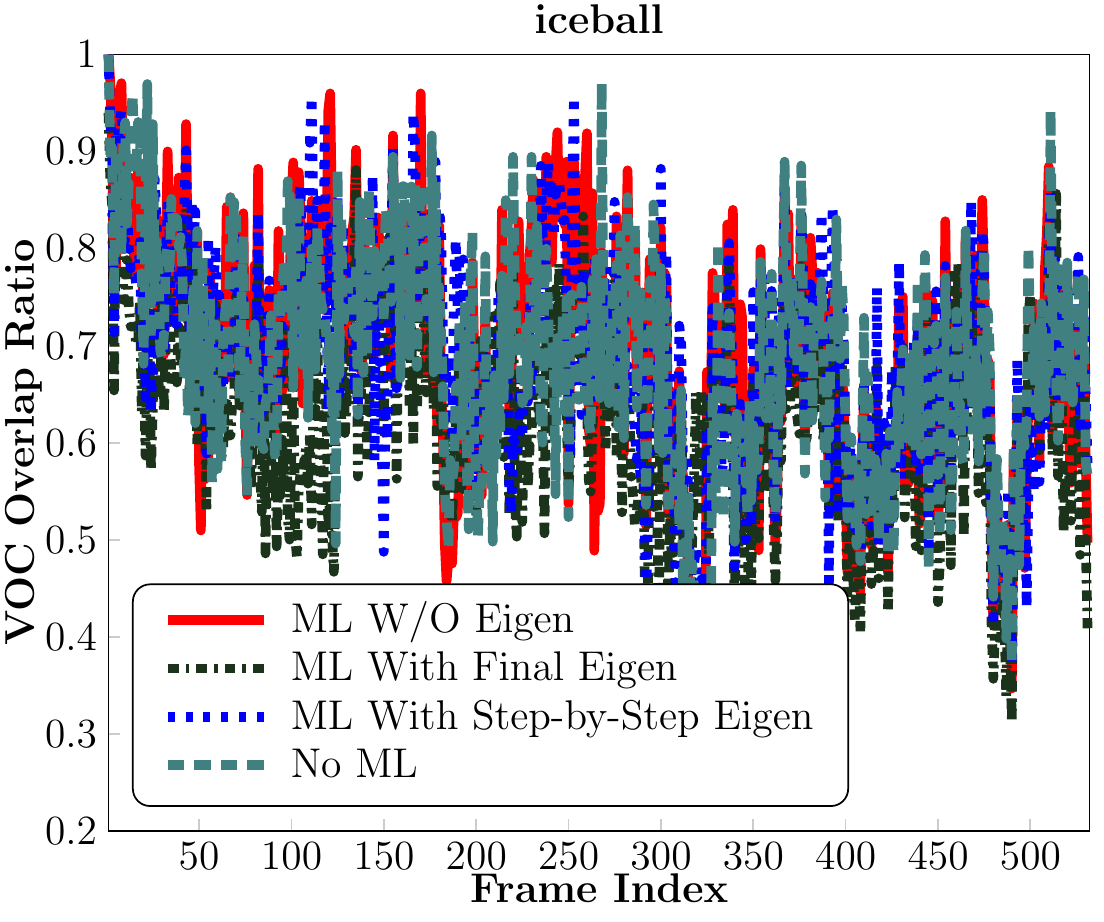}
\includegraphics[scale=0.405]{./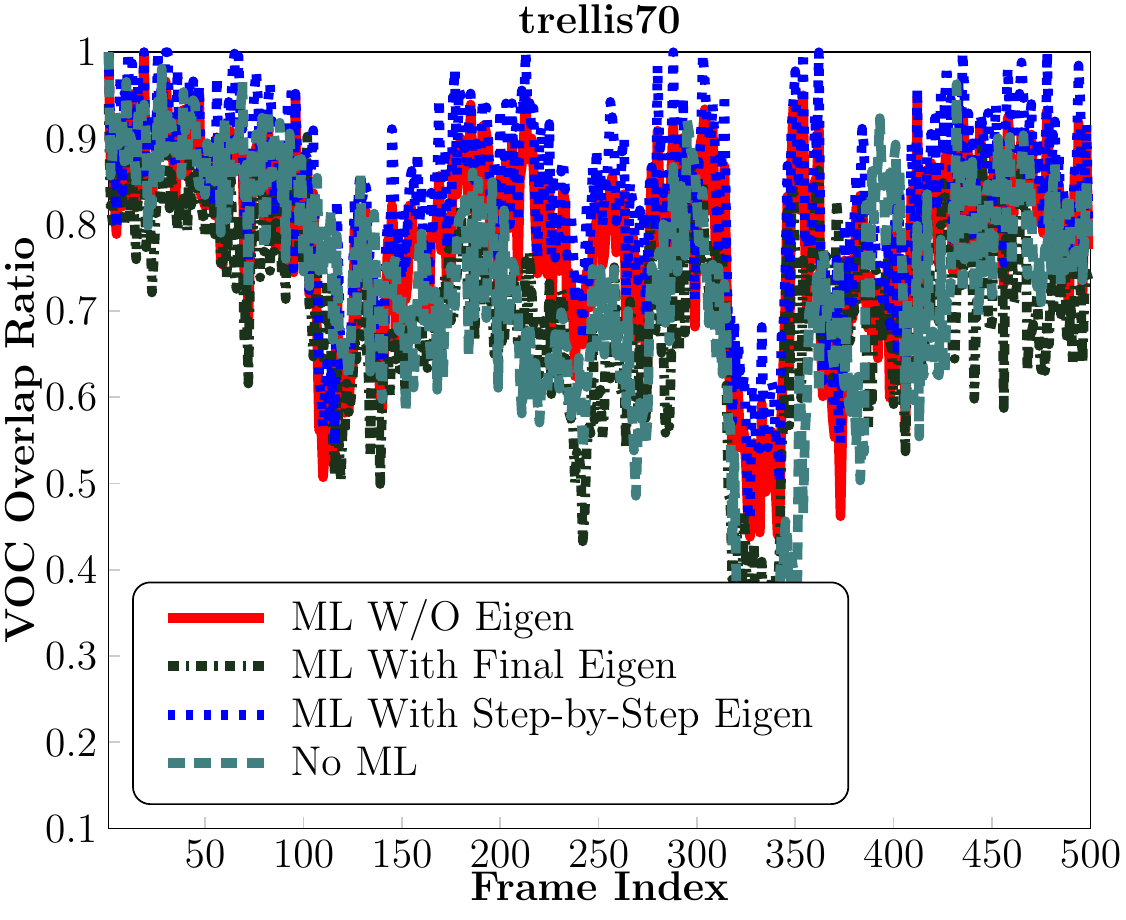}
\includegraphics[scale=0.405]{./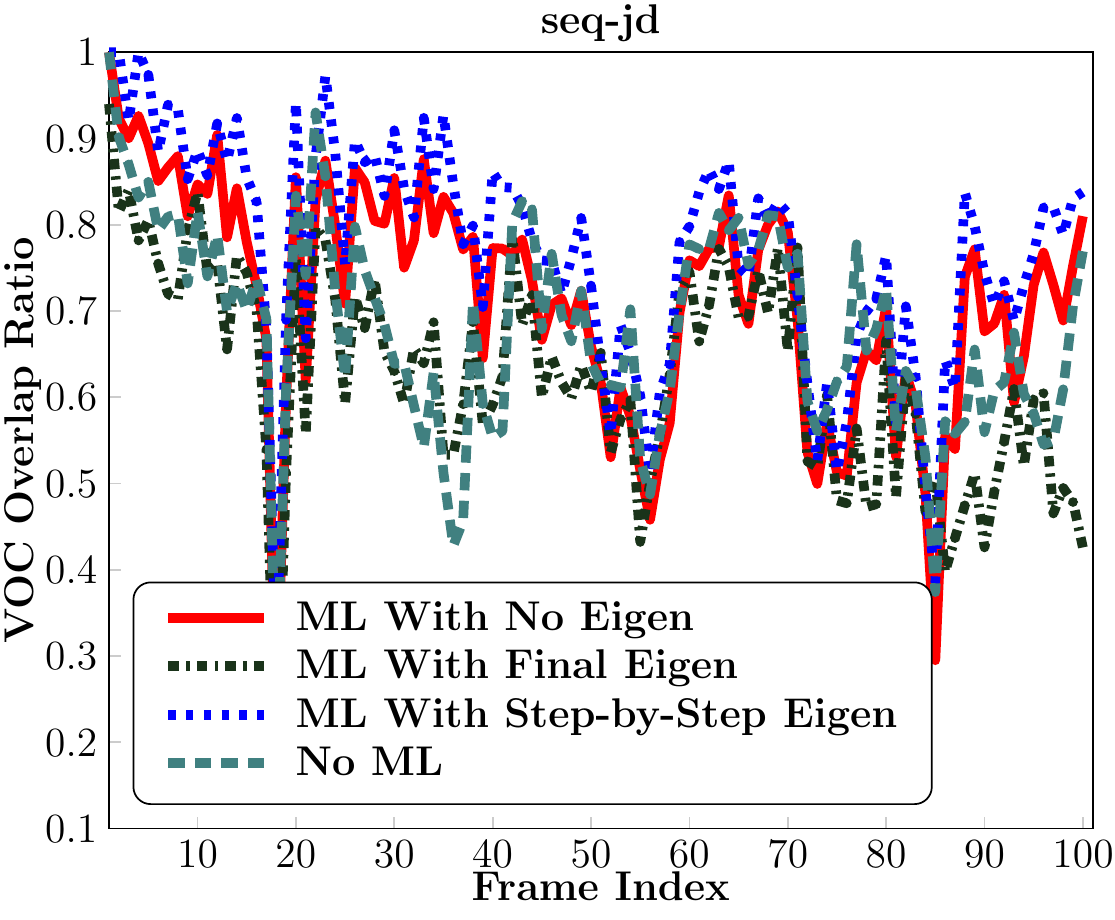}
\vspace{-0.16cm}
 \vspace{-0.0cm}
 \caption{Quantitative evaluation of the proposed tracker with/without
metric learning on five video sequences. The top two rows are
associated with
 the tracking performance in CLE, while the bottom two rows correspond to
 the tracking performance in VOR.
 \vspace{-0.16cm}}
 \label{fig:metric_non_metric}
\end{figure*}

\section{Comparison of different linear representations}
\label{sec:diff_linear_represnetations}

We evaluate the performance of four types of
linear representations including our linear
representation with metric learning,
our linear representation without
metric learning,
compressive sensing linear representation~\cite{Li-Shen-Shi-cvpr2011},
and $\ell_{1}$-regularized linear
representation~\cite{Meo-Ling-ICCV09}.
For a fair comparison, we utilize the raw pixel features which are the same as
\cite{Li-Shen-Shi-cvpr2011,Meo-Ling-ICCV09}.
Fig.~\ref{fig:regression} shows the performance of these four linear
representation methods
in CLE and VPR on four video sequences. Clearly, our linear
representation with metric learning consistently achieves
lower CLE (higher VOR) performance in most frames than the three other linear representations.

\begin{figure*}[t]
 \vspace{-1.8cm}
\centering
\includegraphics[scale=0.36]{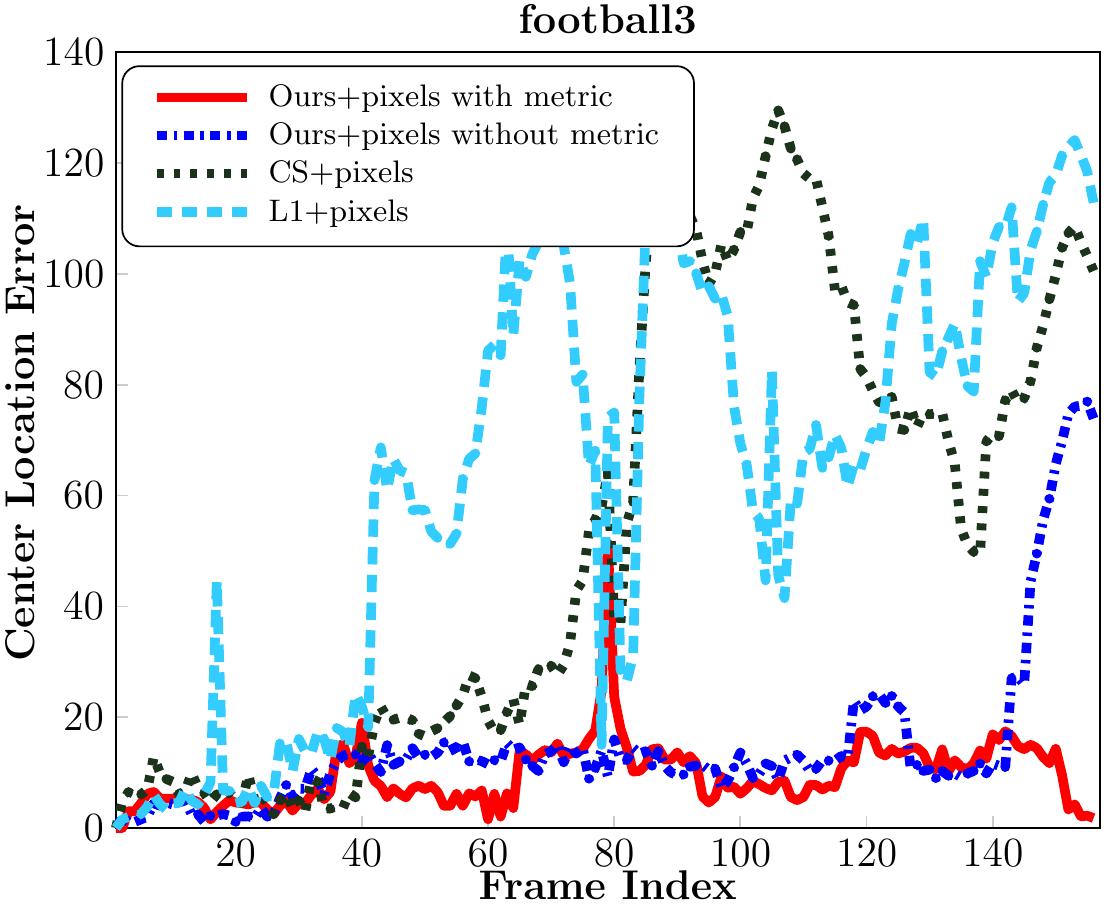}
\hspace{-0.13cm}
\includegraphics[scale=0.36]{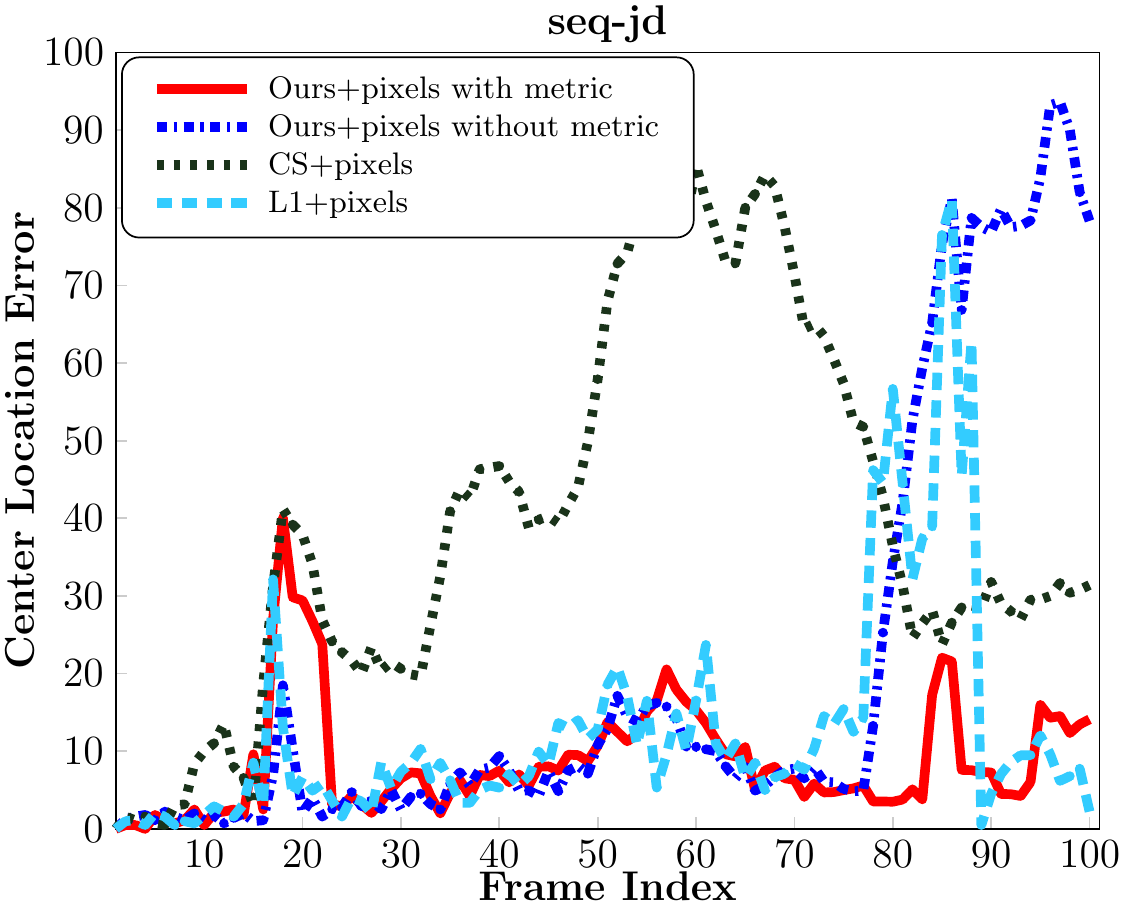}\\
\includegraphics[scale=0.36]{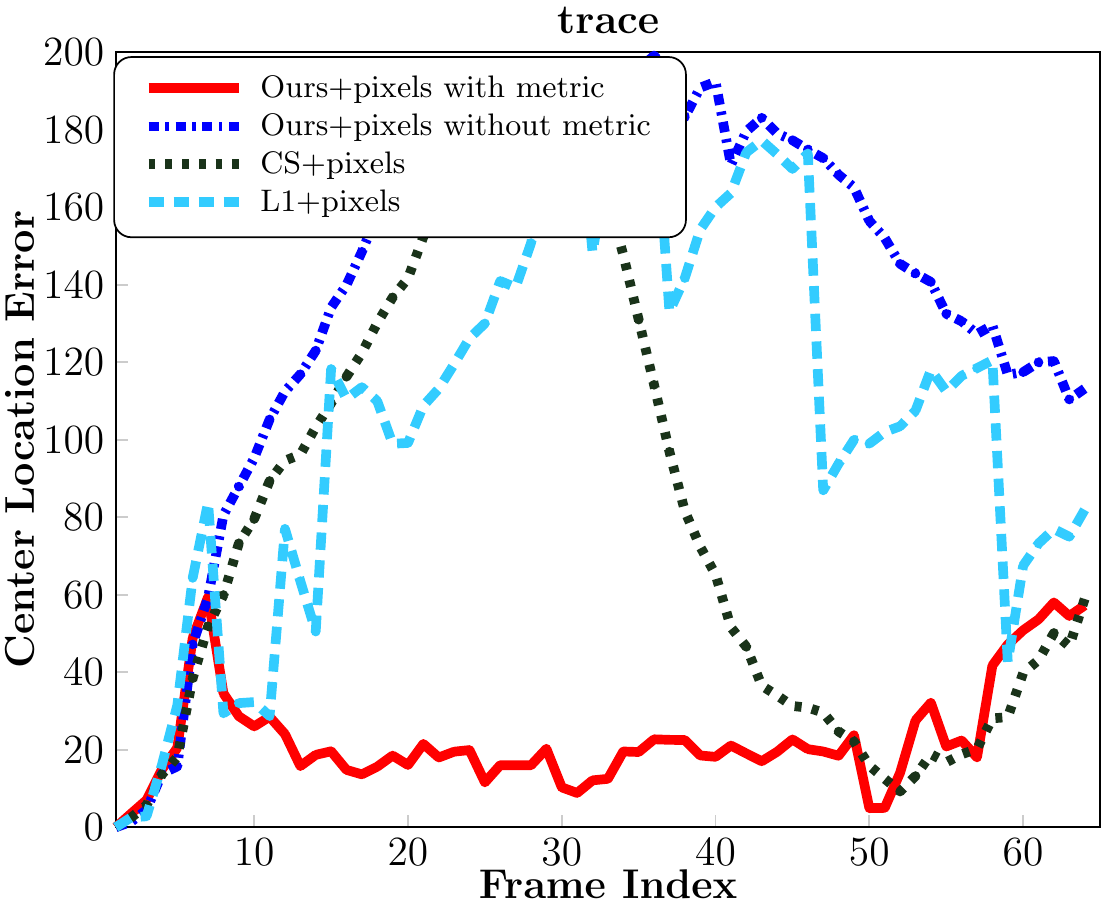}
\hspace{-0.13cm}
\includegraphics[scale=0.36]{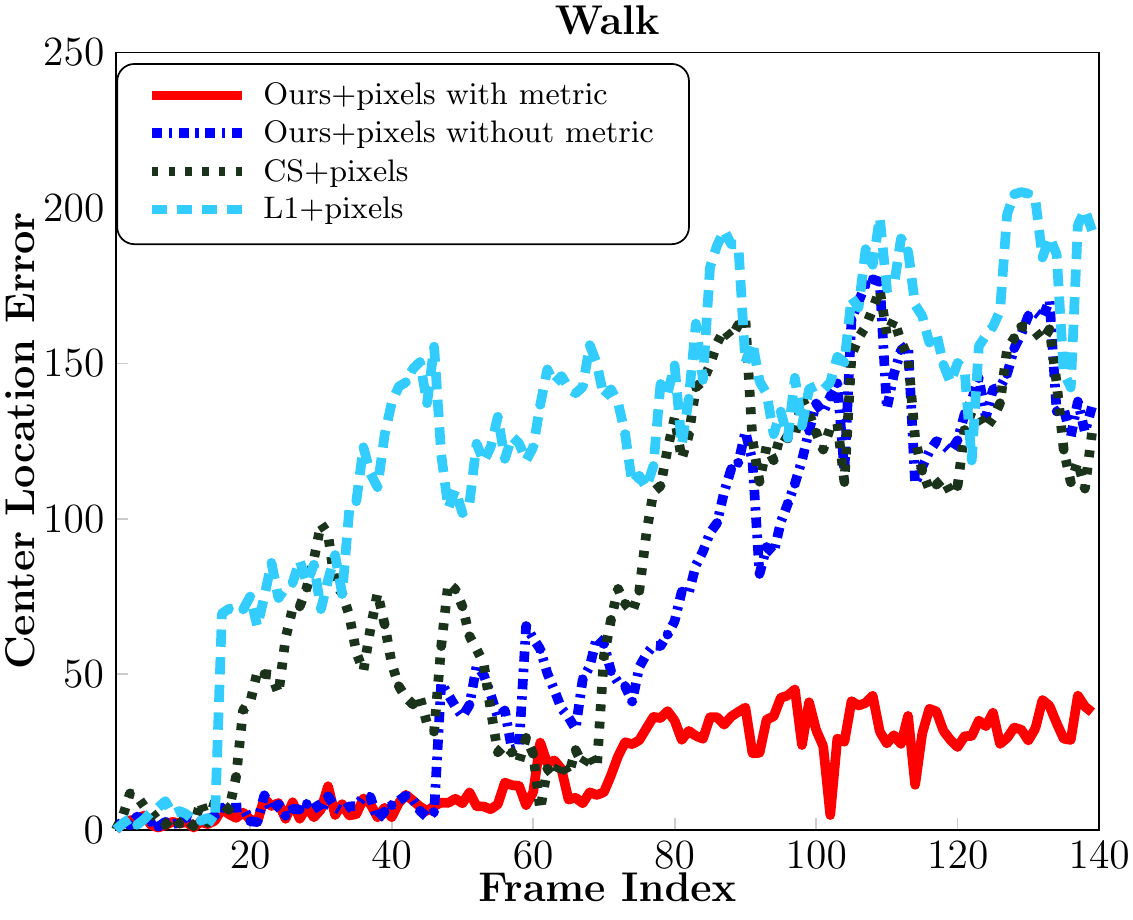}\\
\includegraphics[scale=0.36]{./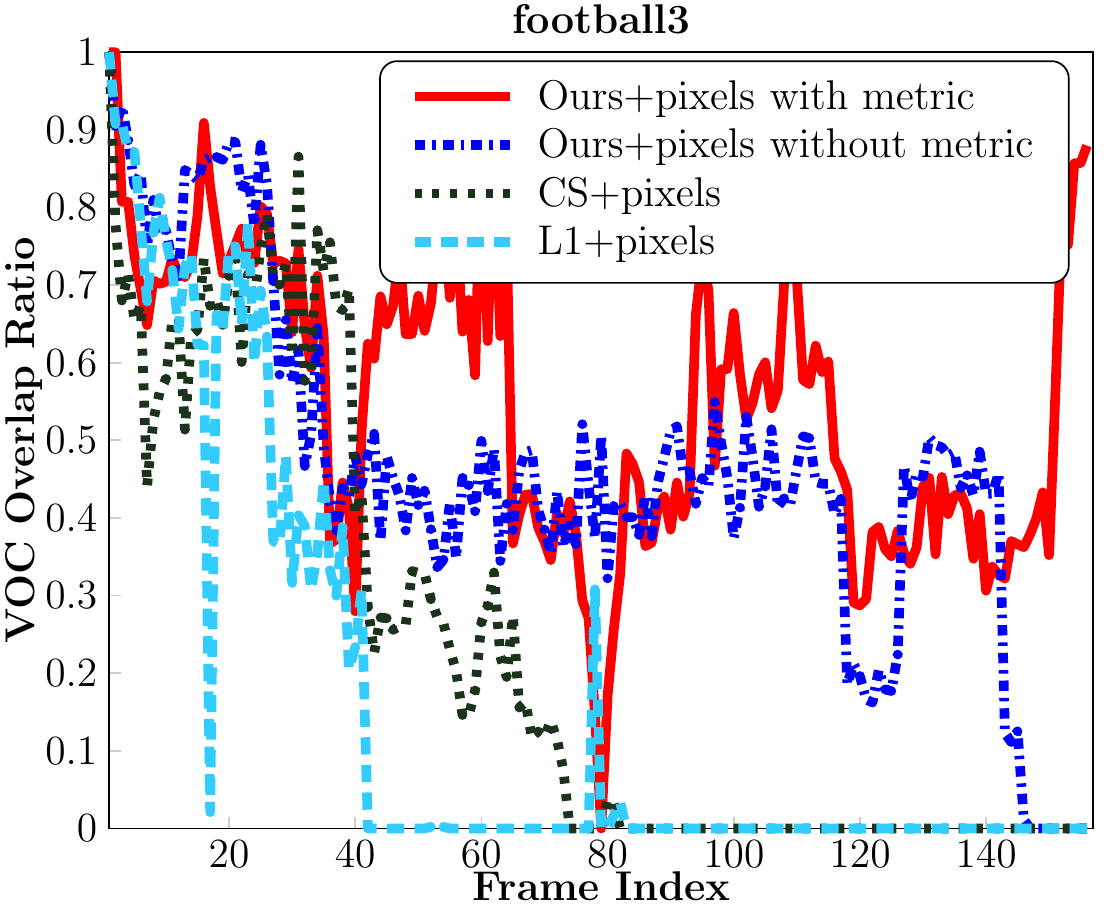}
\includegraphics[scale=0.36]{./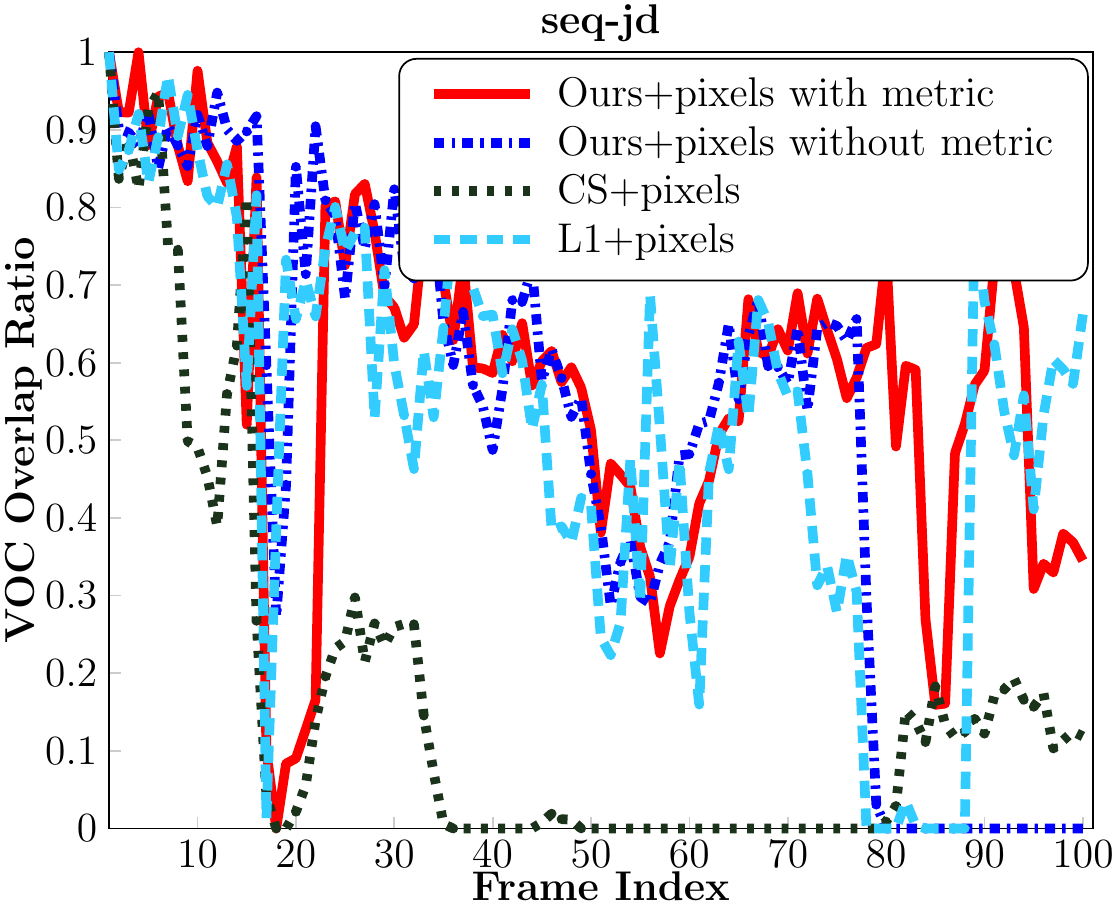}\\
\includegraphics[scale=0.36]{./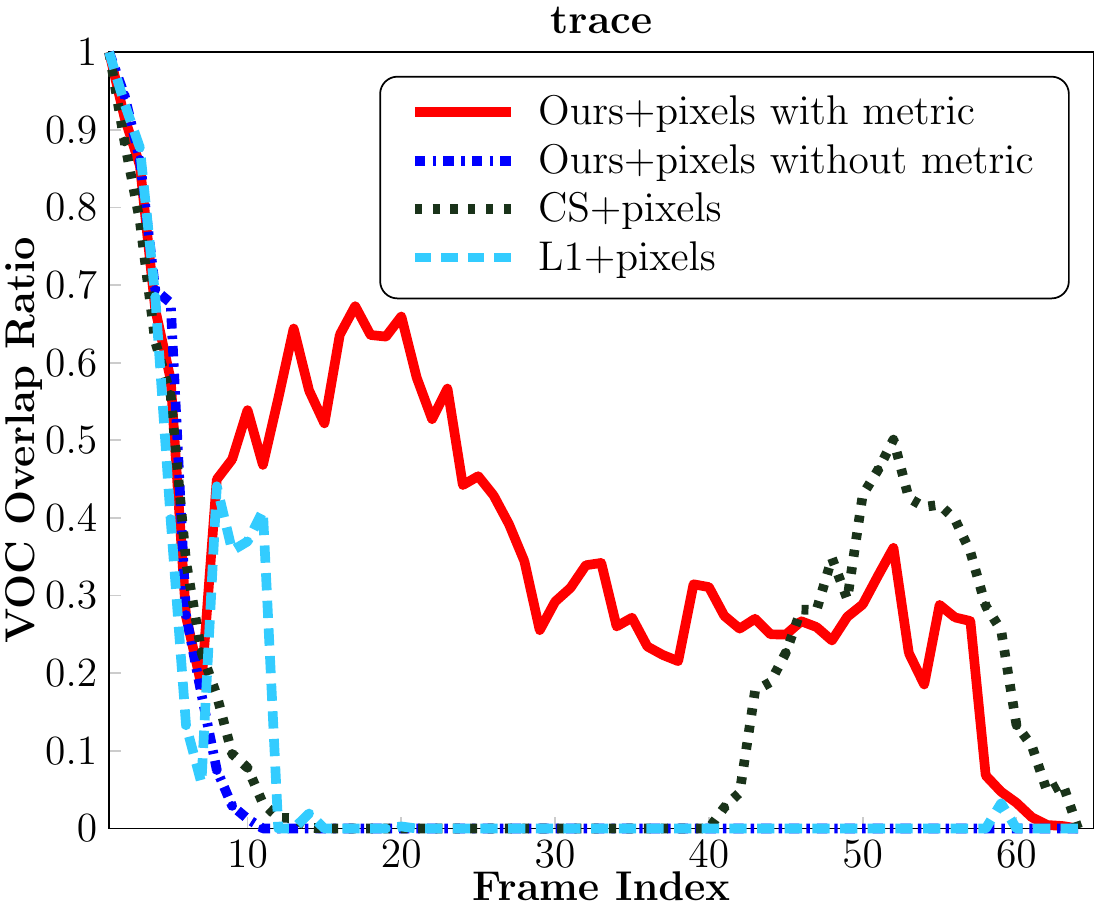}
\includegraphics[scale=0.36]{./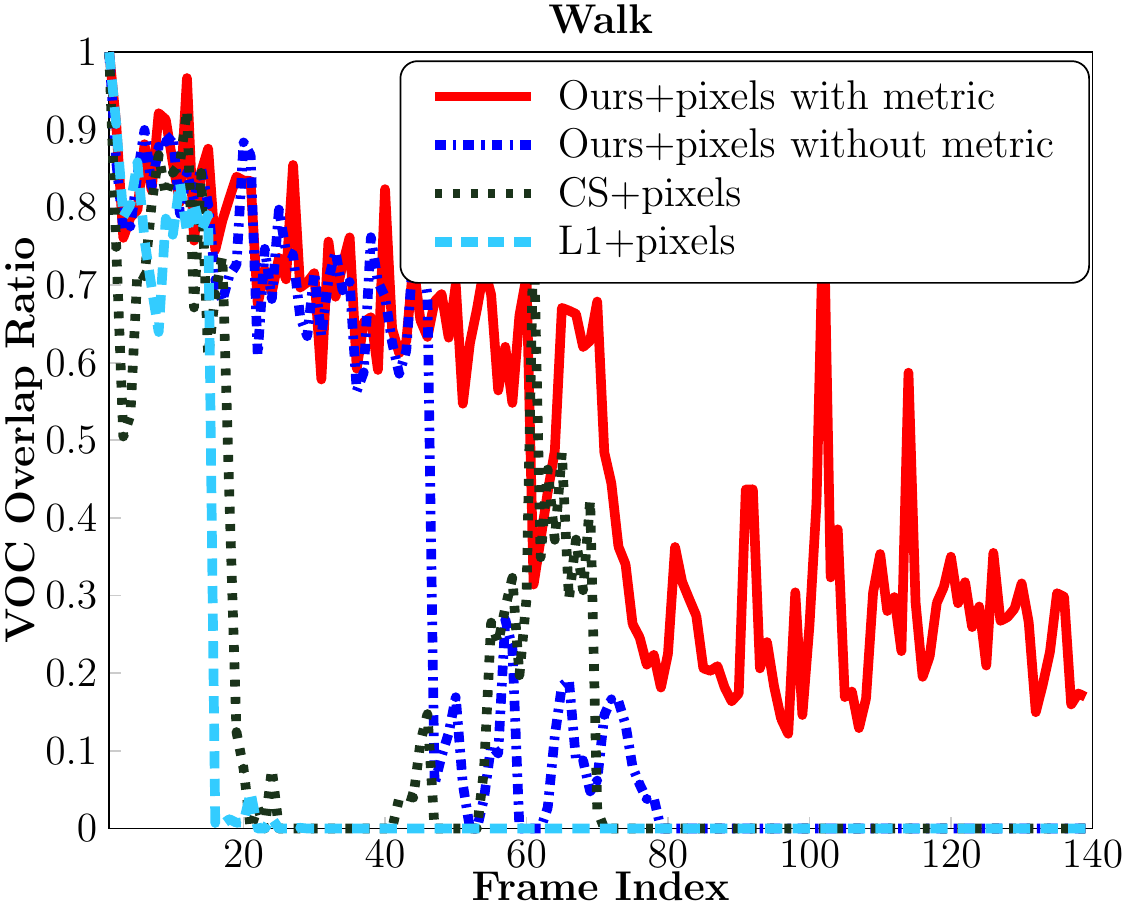}\\
\vspace{-0.16cm}
\vspace{-0.0cm}
 \caption{Quantitative comparison of different linear representation
methods in CLE and VOR on four video sequences.
 \vspace{-0.15cm}}
 \label{fig:regression}
\end{figure*}

\section{Evaluation of different sampling methods}
\label{sec:diff_sampling}

We aim to examine the performance of the two
sampling methods.
Fig.~\ref{fig:sampling} shows the experimental results of the two
sampling methods in CLE and VOR on four video sequences.
From Fig.~\ref{fig:sampling}, we can see that weighted reservoir
sampling performs better than
ordinary reservoir sampling.

\begin{figure*}[t]
 \vspace{-1.8cm}
\centering
\includegraphics[scale=0.405]{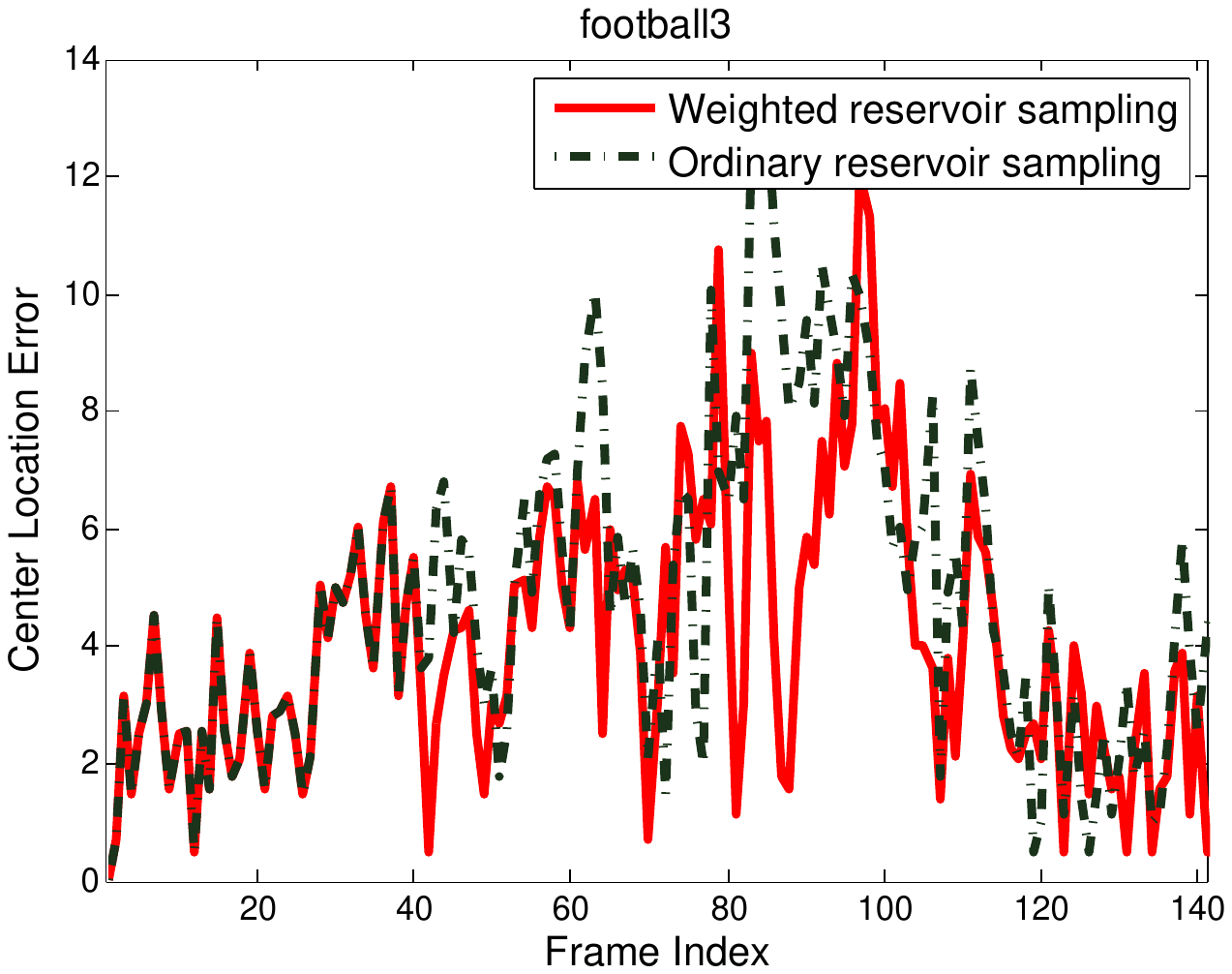}
\includegraphics[scale=0.45]{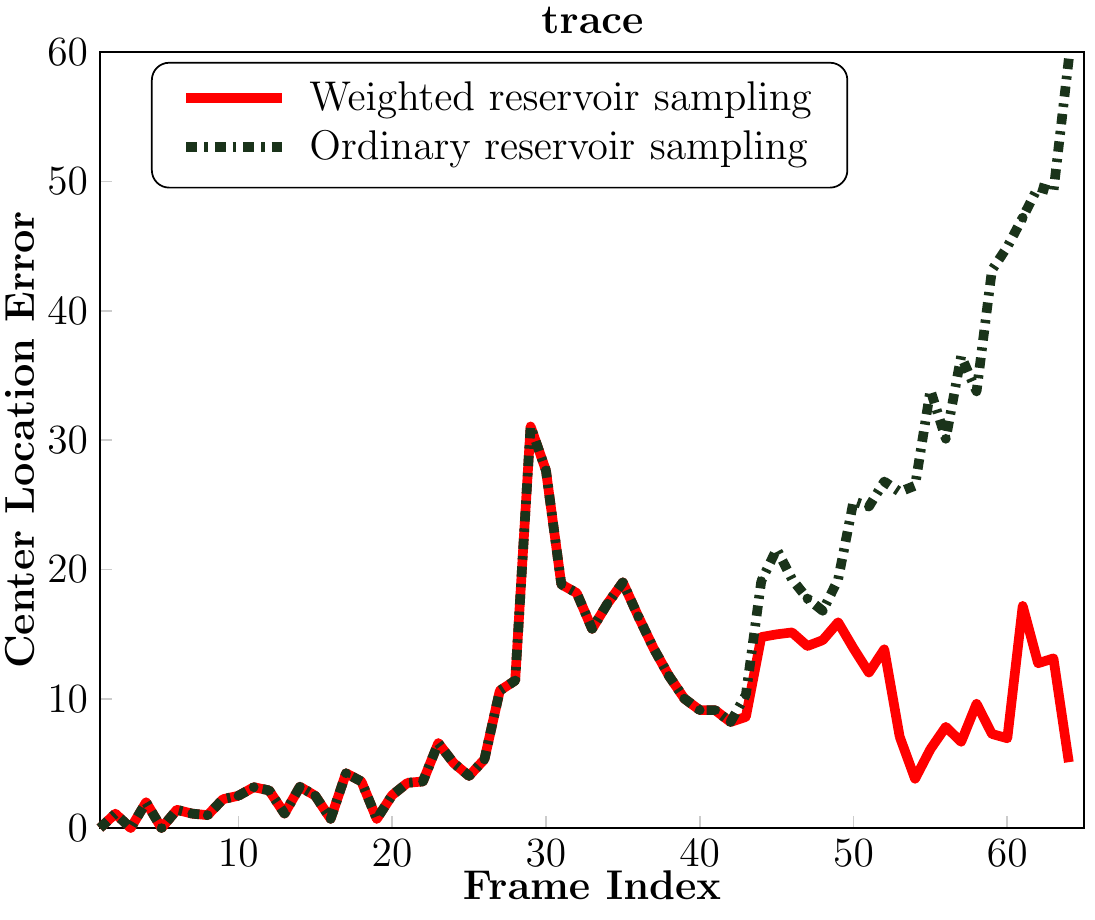}\\
\includegraphics[scale=0.45]{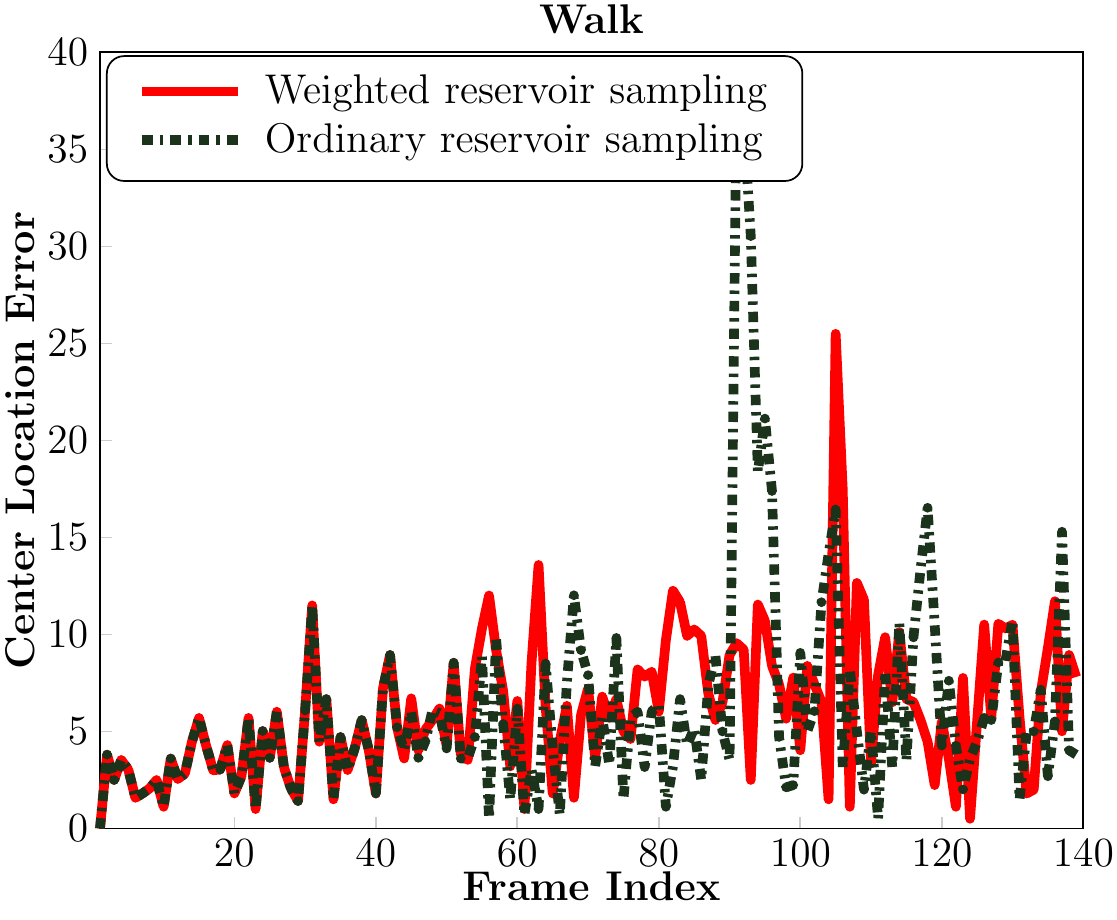}
\includegraphics[scale=0.45]{./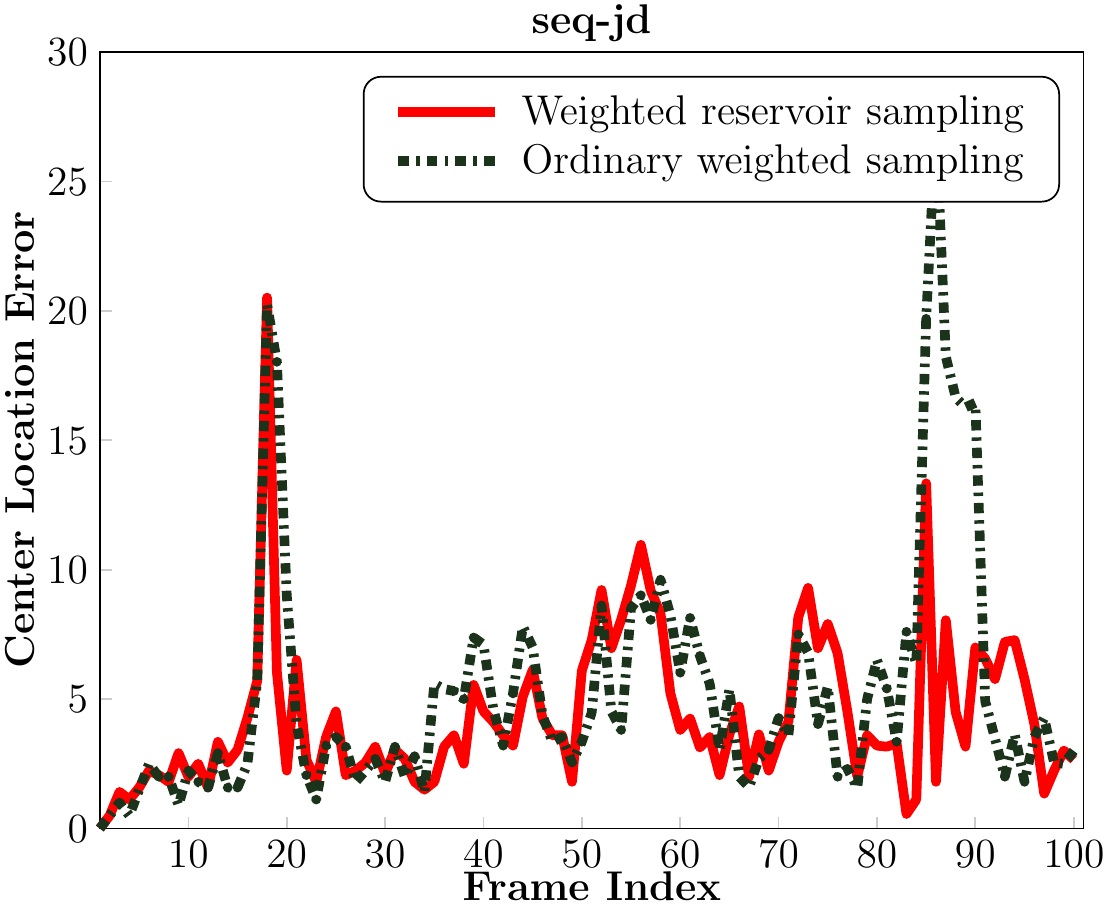}\\
\includegraphics[scale=0.405]{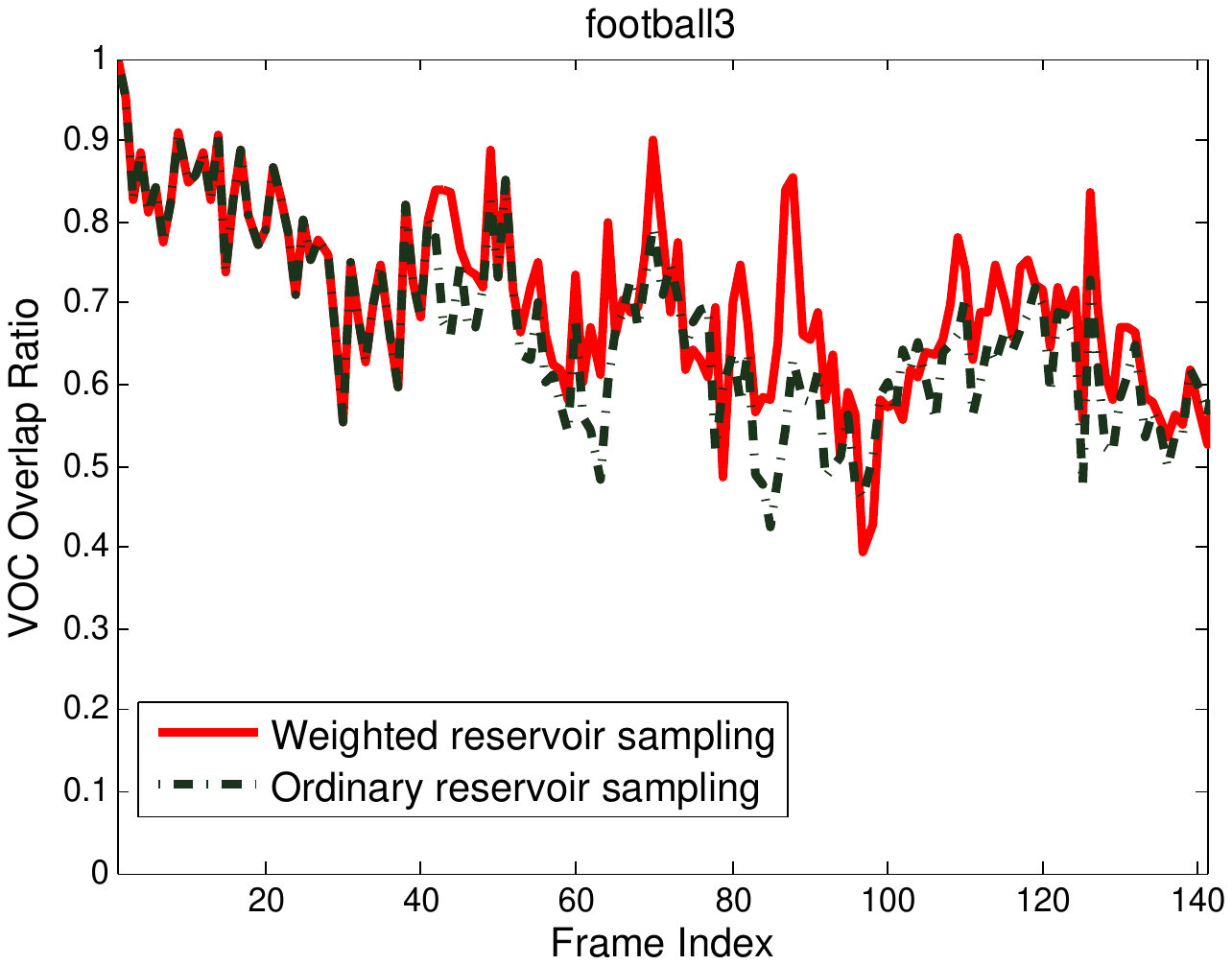}
\includegraphics[scale=0.45]{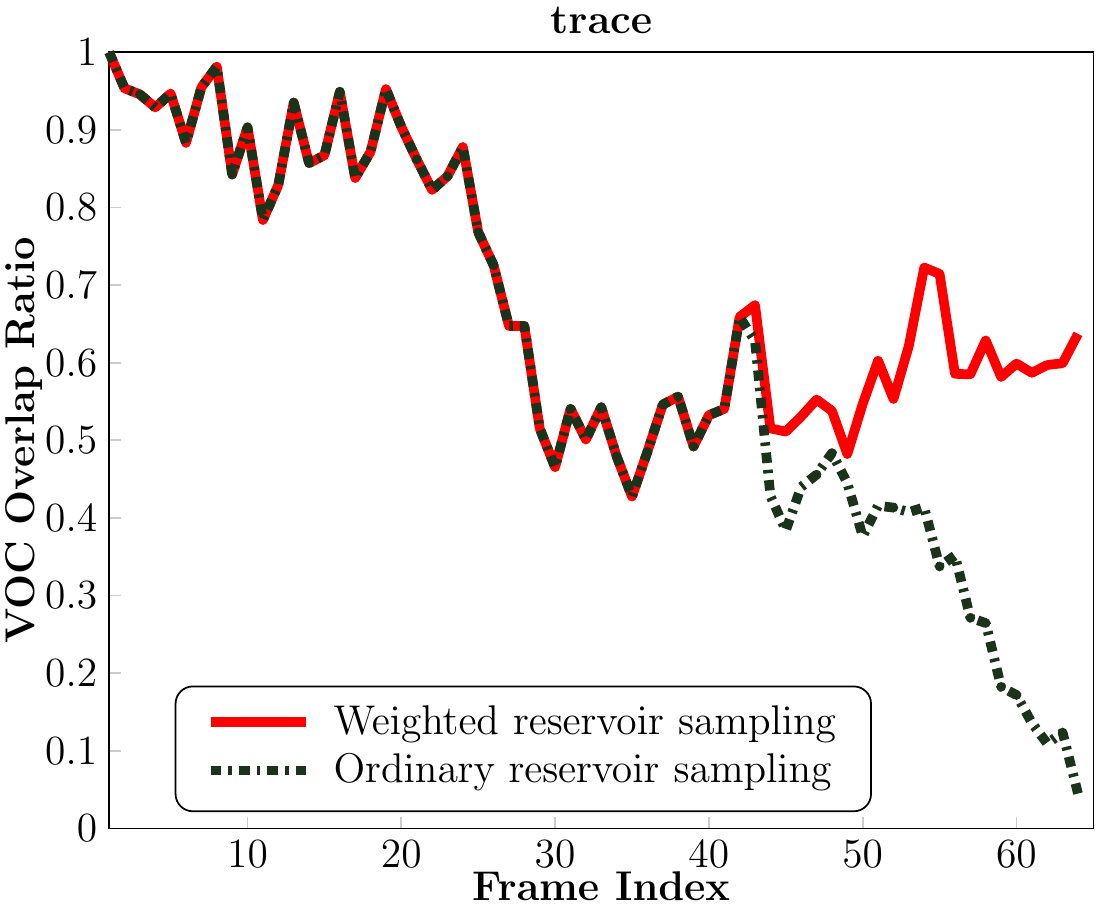}\\
\includegraphics[scale=0.45]{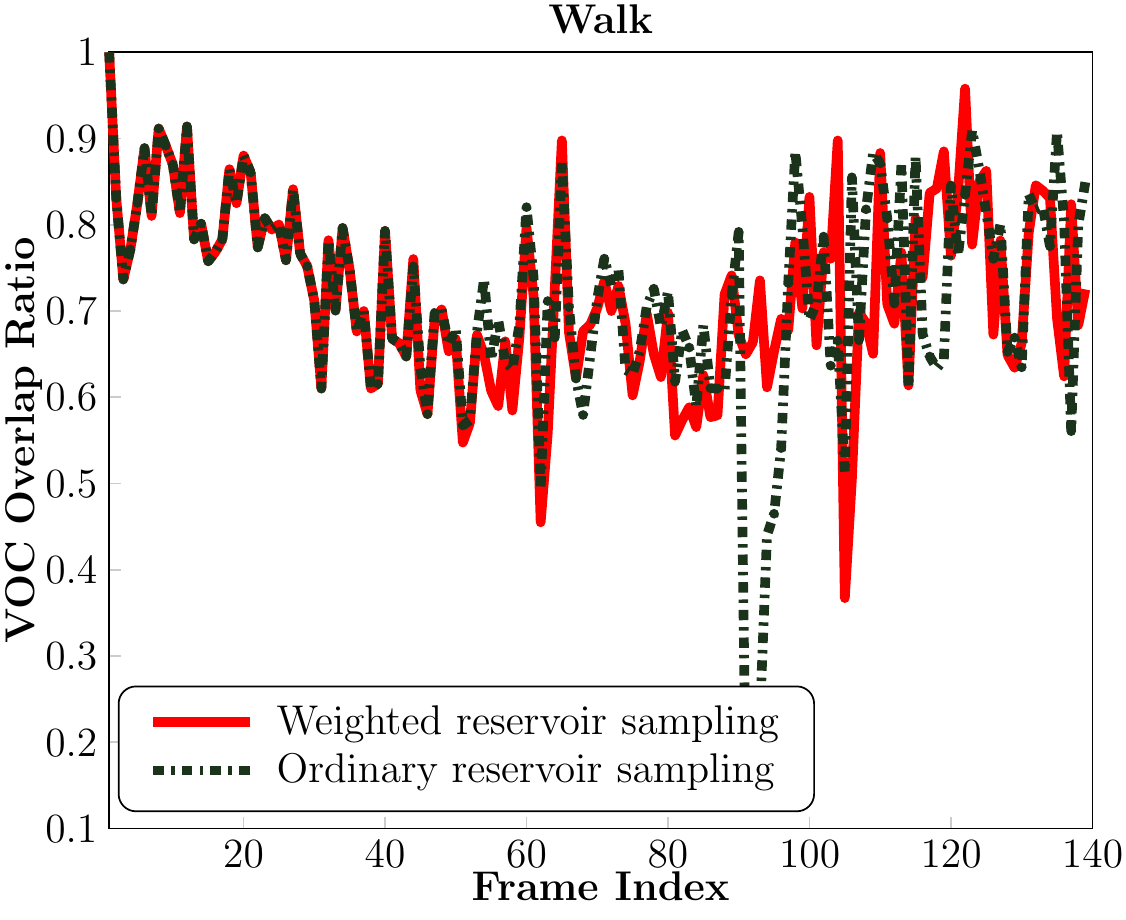}
\includegraphics[scale=0.45]{./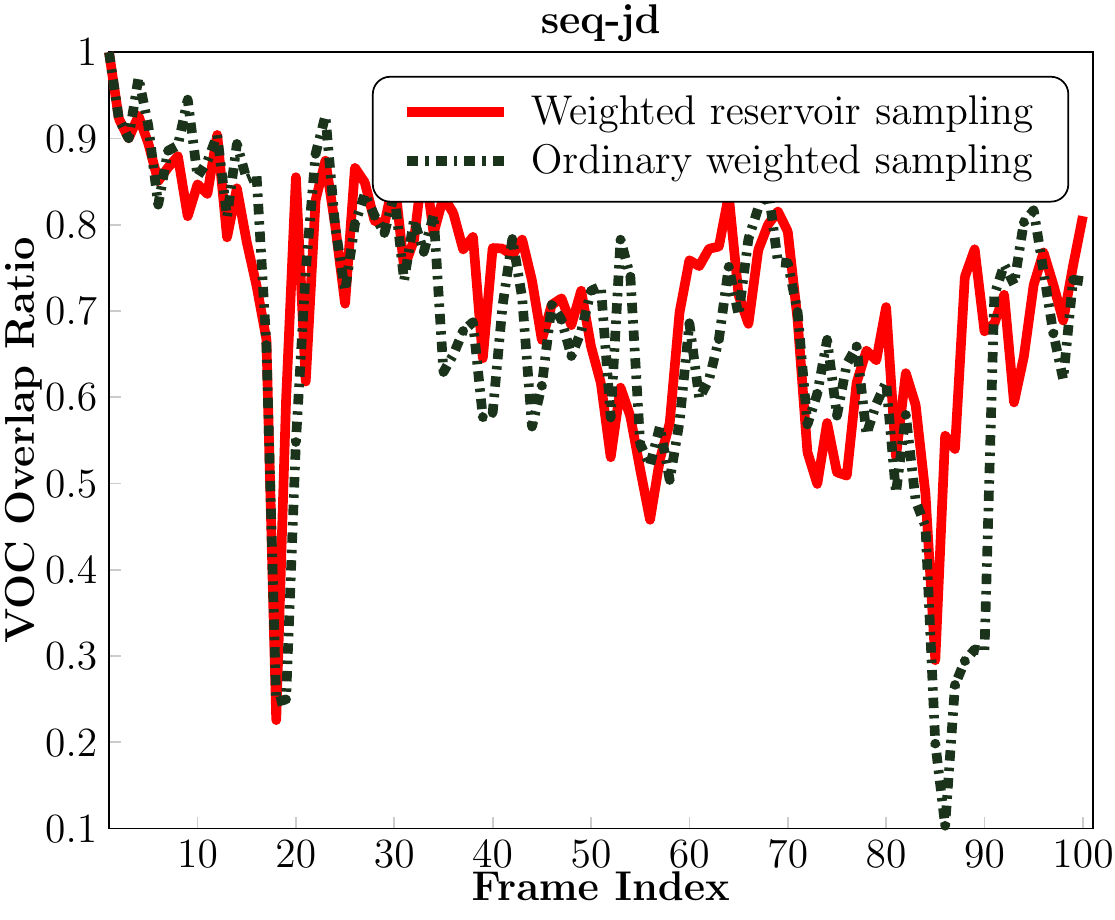}
\vspace{-0.1cm}
\caption{Quantitative comparison of different sampling methods in
CLE on four video sequences
 The top two rows
show the tracking performance in CLE;
 the bottom two rows display the tracking performance in VOR.
Before exceeding the buffer size limit (approximately occurring between frame 40 and
frame 50), the performances
 of different sampling methods are identical.
 \vspace{-0.1cm}}
  \label{fig:sampling}
\end{figure*}

\section{Pedestrian identification}
\label{sec:p_classification}

Based on Equ.~\eqref{eq:identification_score},
we perform the pedestrian identification task
with two viewpoints.
Fig.~\ref{fig:more_example_supp} shows
the quantitative frame-by-frame identification results
for five pedestrians with different viewpoints.
Moreover, Figs.~\ref{fig:Id1_View1}-\ref{fig:Id6_View1}
display the tracking and identification results
for six pedestrians on several representative frames.
Clearly, our method is able to achieve a robust
pedestrian identification performance in different cases.

\begin{figure}[t]
\vspace{-0.26cm}
\centering
\includegraphics[scale=0.5]{./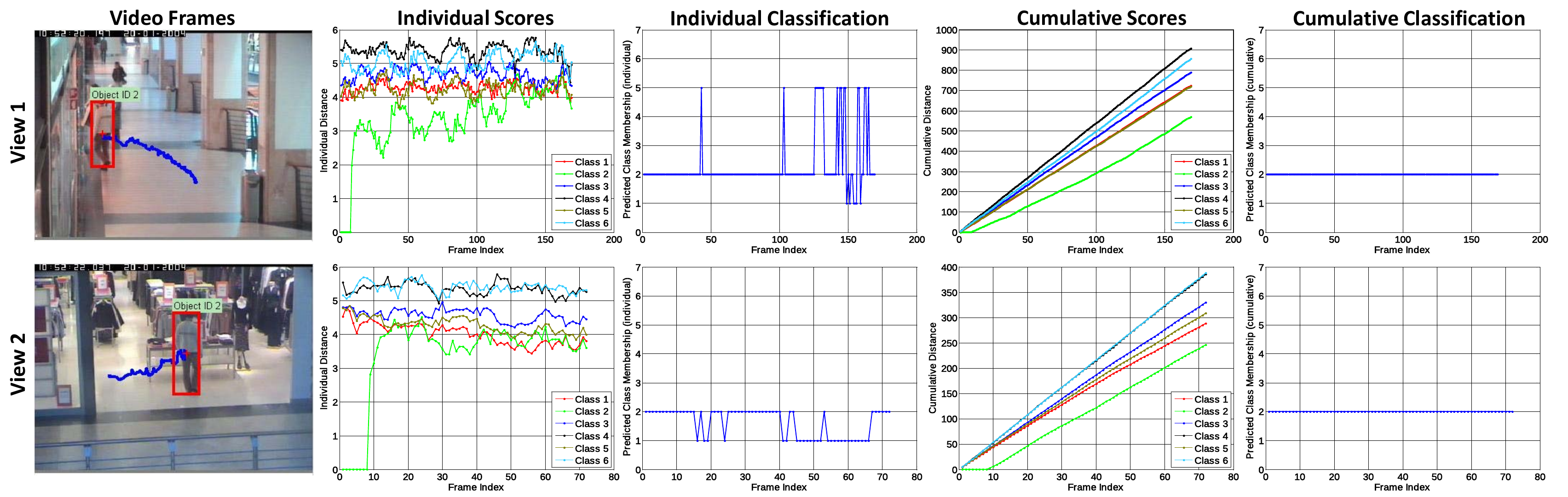}\\
(a) \\
\includegraphics[scale=0.5]{./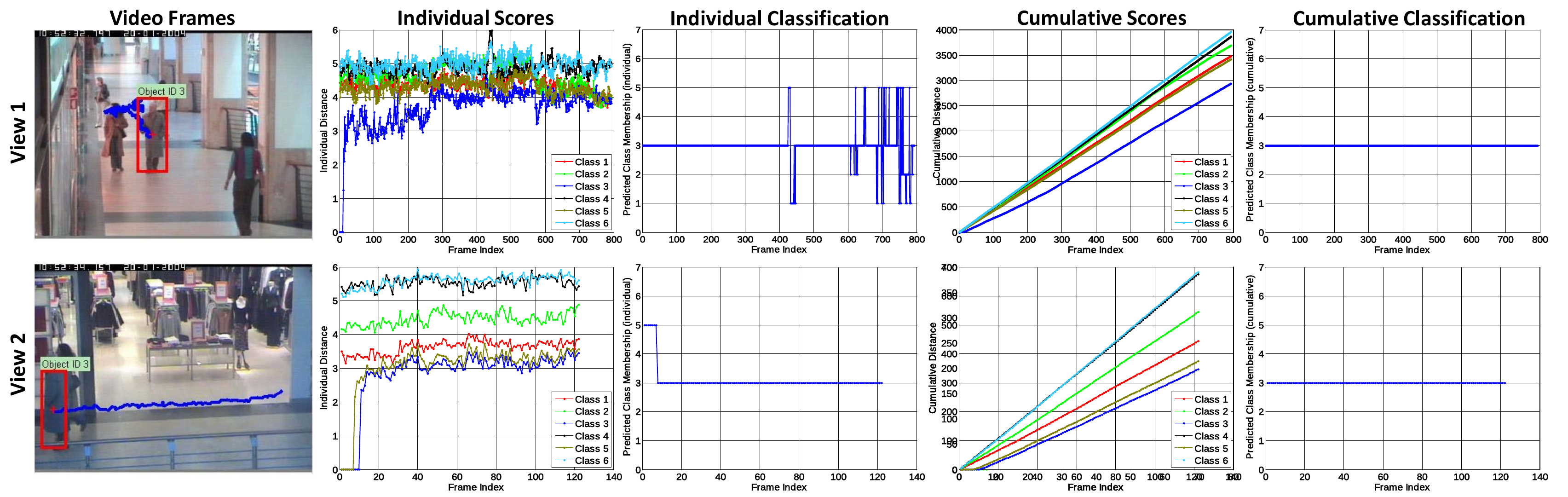}\\
(b)\\
\includegraphics[scale=0.5]{./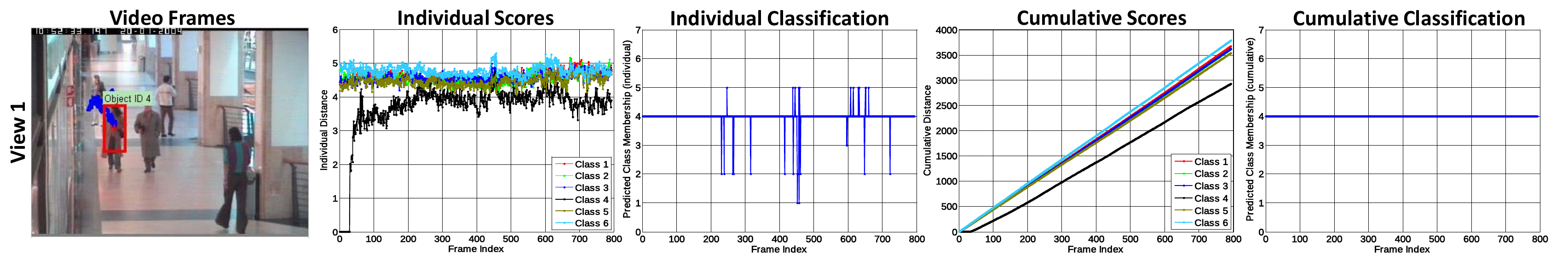}\\
(c)\\
\includegraphics[scale=0.5]{./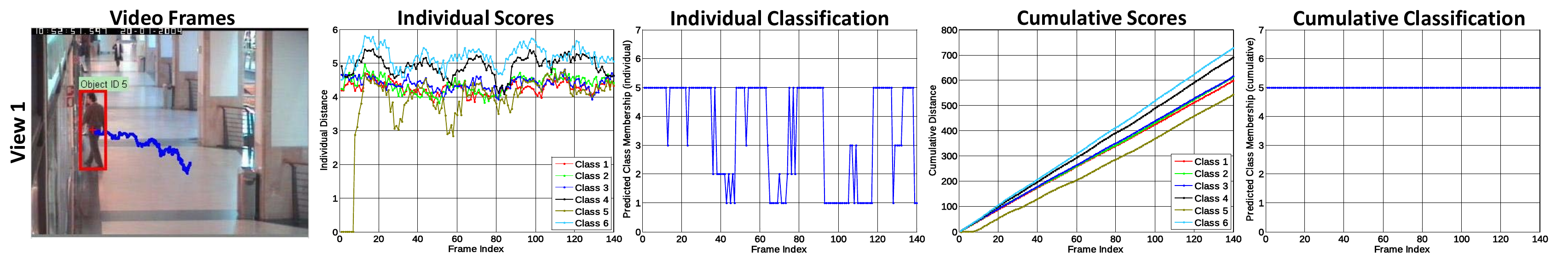}\\
(d)\\
\includegraphics[scale=0.5]{./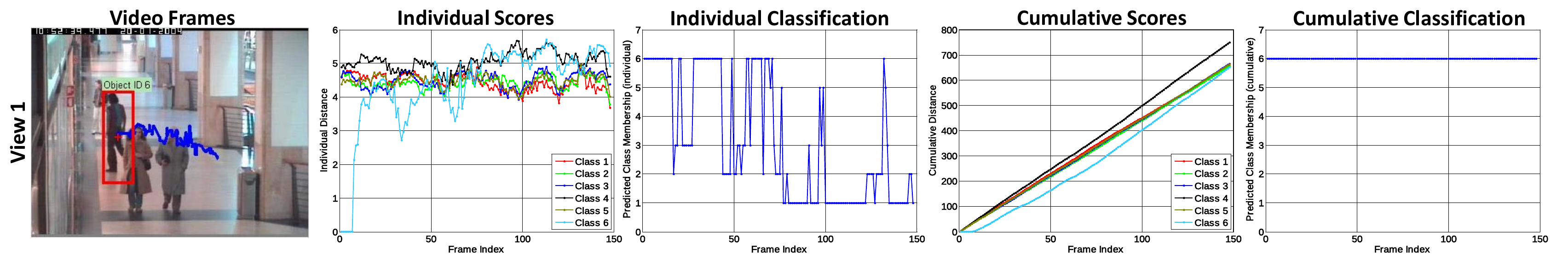}\\
(e)\\
\vspace{-0.5cm}
 \caption{Intuitive illustration of two-view pedestrian identification.
(a)-(e) show the quantitative frame-by-frame identification results
of different pedestrians from two viewpoints.
It is clear that our method is able to assign the pedestrians
to correct classes.
 \vspace{-0.6cm}}
 \label{fig:more_example_supp}
\end{figure}

\begin{figure}[t]
\vspace{-0.16cm}
\centering
\includegraphics[scale=0.3]{./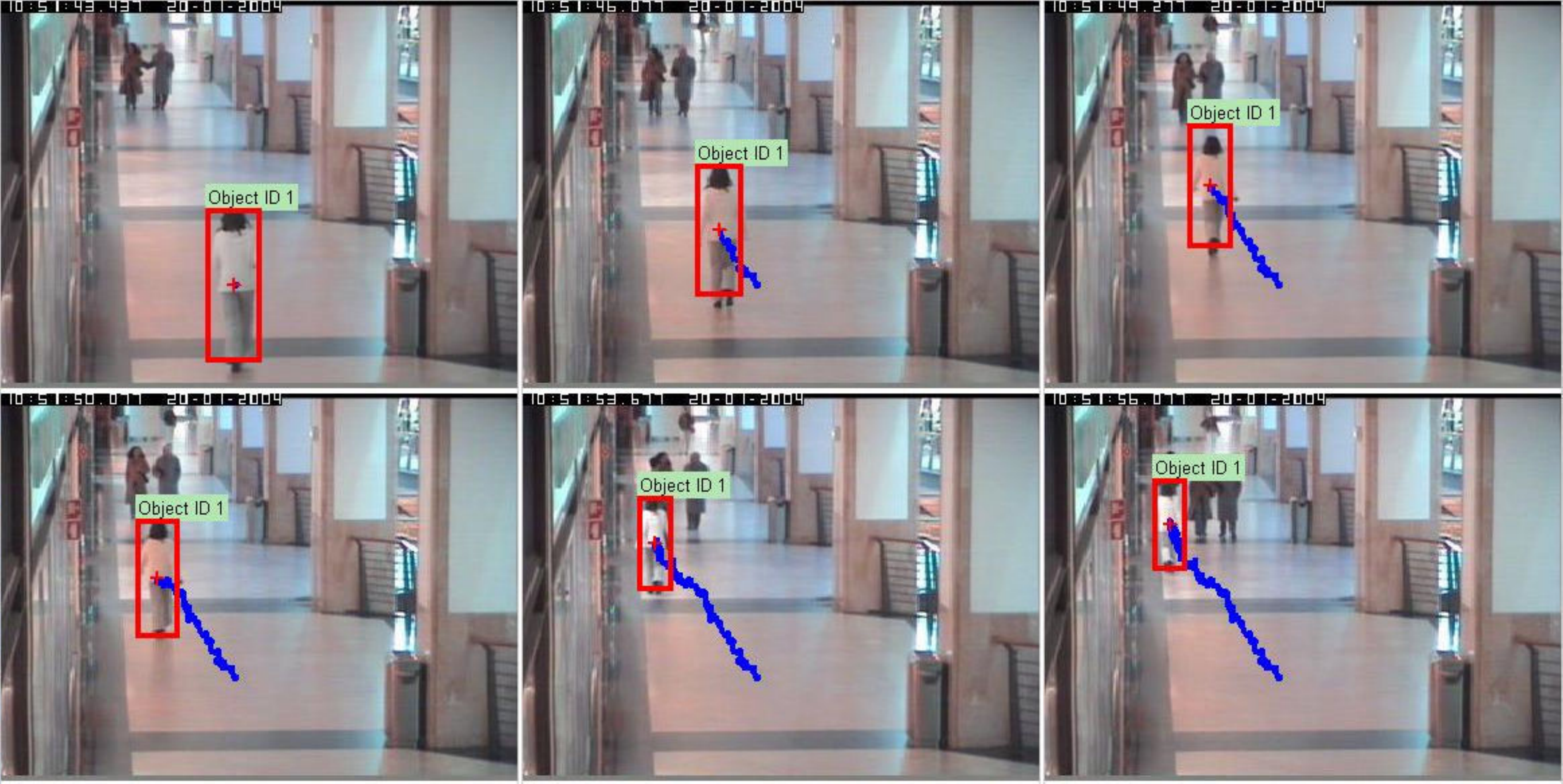}\\
\vspace{-0.6cm}
 \caption{Identification and tracking results for the first pedestrian on several representative frames (from View 1).
 \vspace{-0.6cm}}
 \label{fig:Id1_View1}
\end{figure}

\begin{figure}[t]
\vspace{-0.16cm}
\centering
\includegraphics[scale=0.3]{./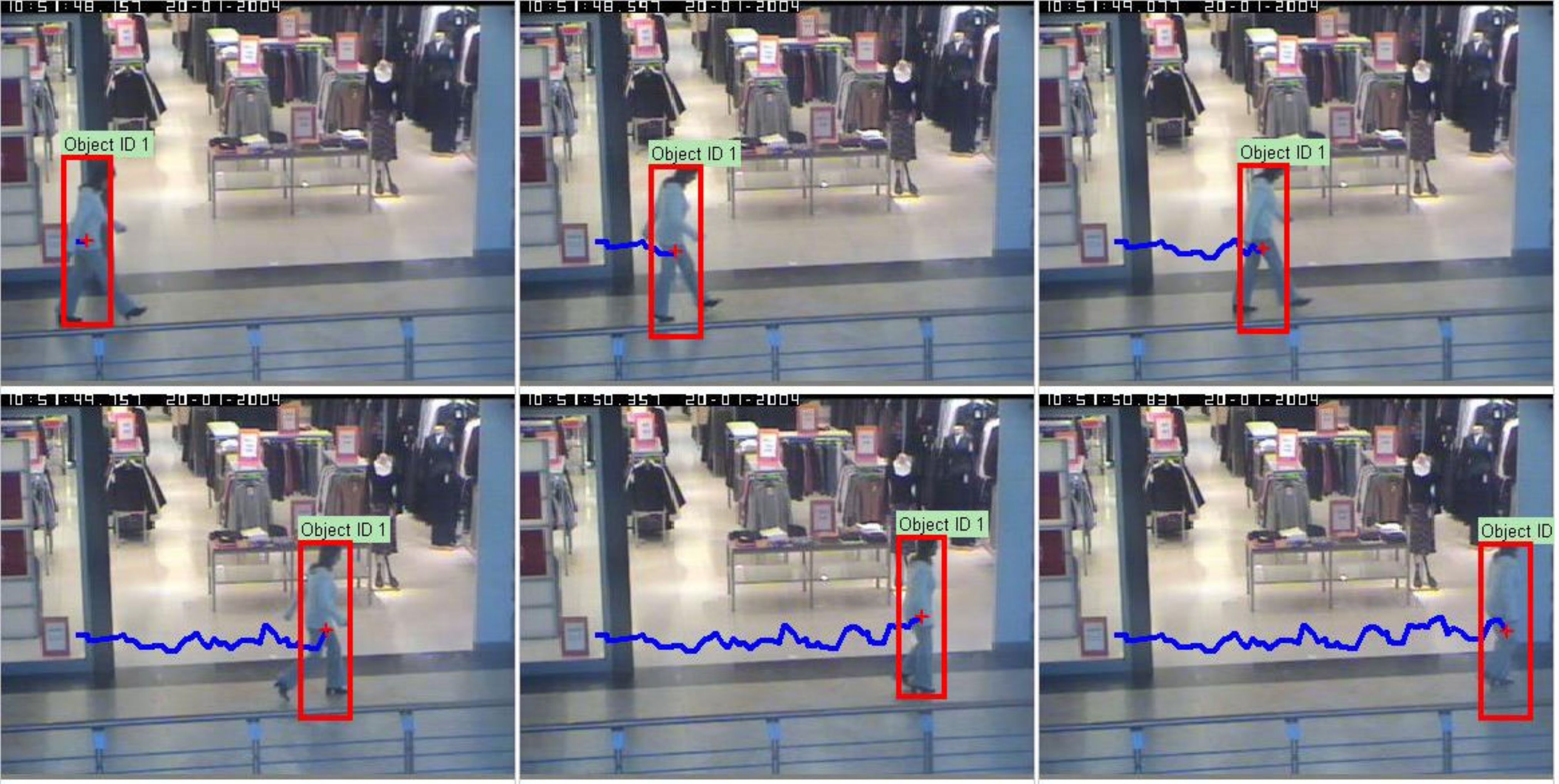}\\
\vspace{-0.6cm}
 \caption{Identification and tracking results for the first pedestrian on several representative frames (from View 2).
 \vspace{-0.6cm}}
 \label{fig:Id1_View2}
\end{figure}

\begin{figure}[t]
\vspace{-0.16cm}
\centering
\includegraphics[scale=0.3]{./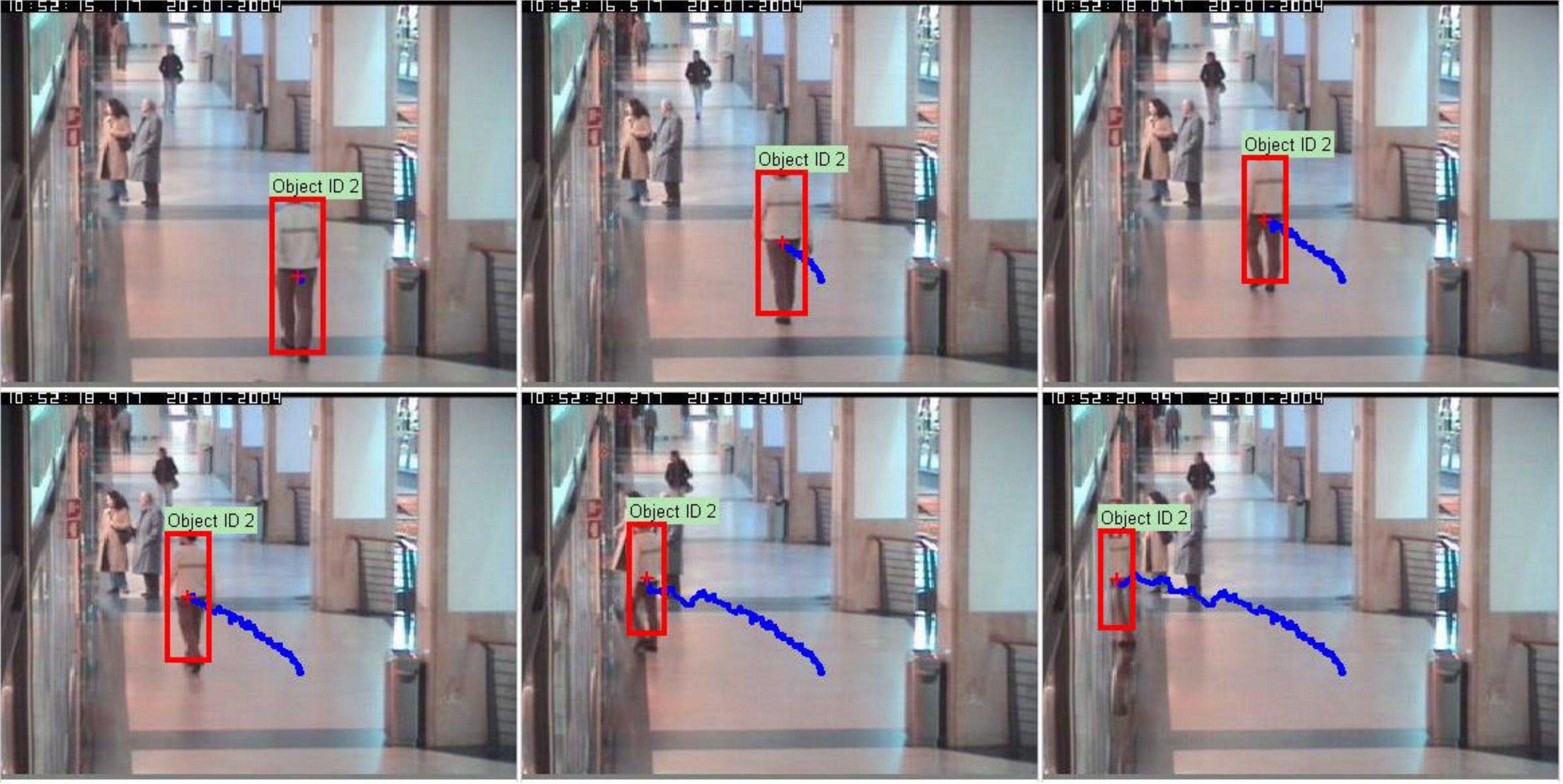}\\
\vspace{-0.6cm}
 \caption{Identification and tracking results for the second pedestrian on several representative frames (from View 1).
 \vspace{-0.6cm}}
 \label{fig:Id2_View1}
\end{figure}

\begin{figure}[t]
\vspace{-0.16cm}
\centering
\includegraphics[scale=0.3]{./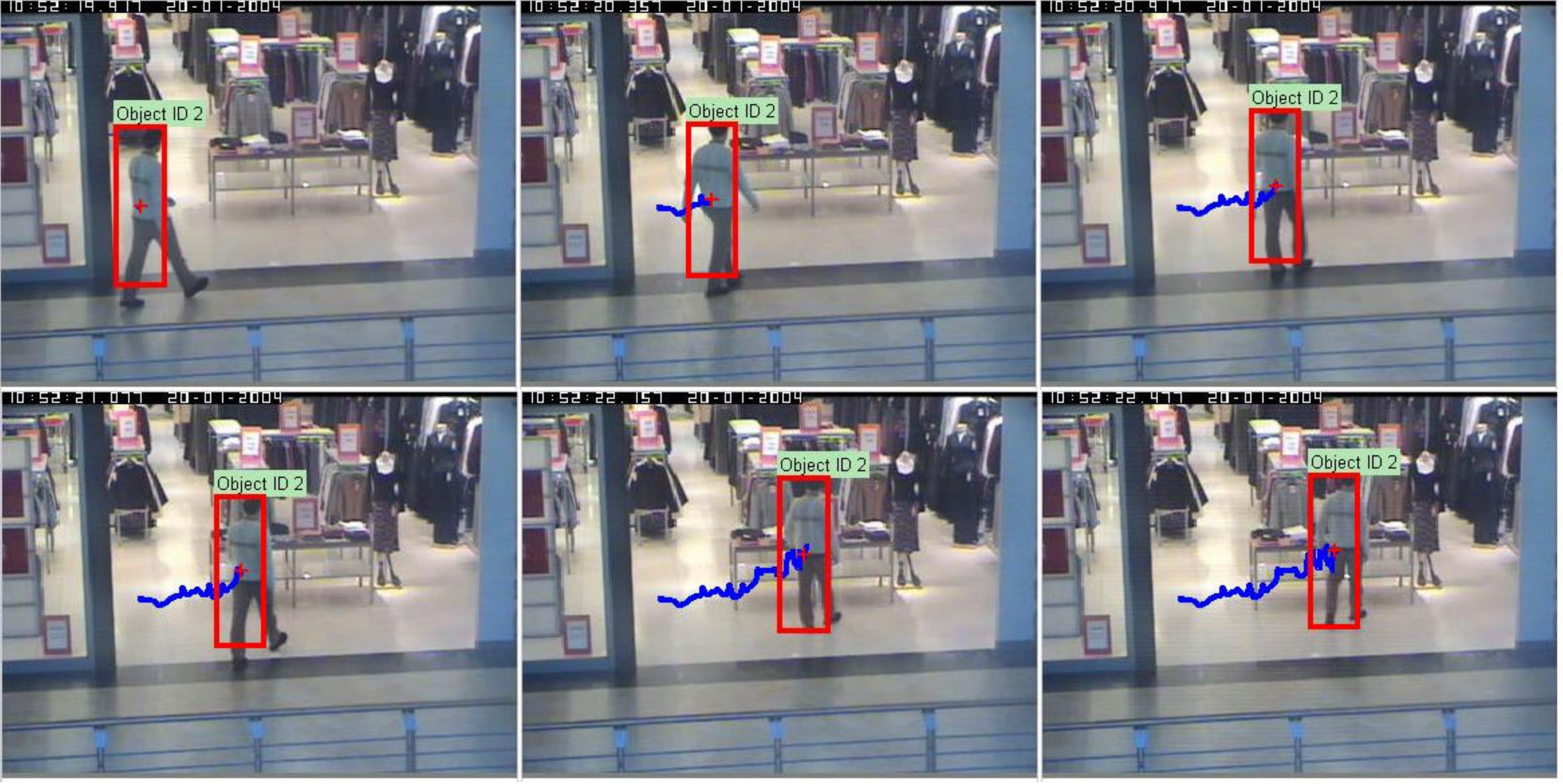}\\
\vspace{-0.6cm}
 \caption{Identification and tracking results for the second pedestrian on several representative frames (from View 2).
 \vspace{-0.6cm}}
 \label{fig:Id2_View2}
\end{figure}

\begin{figure}[t]
\vspace{-0.16cm}
\centering
\includegraphics[scale=0.3]{./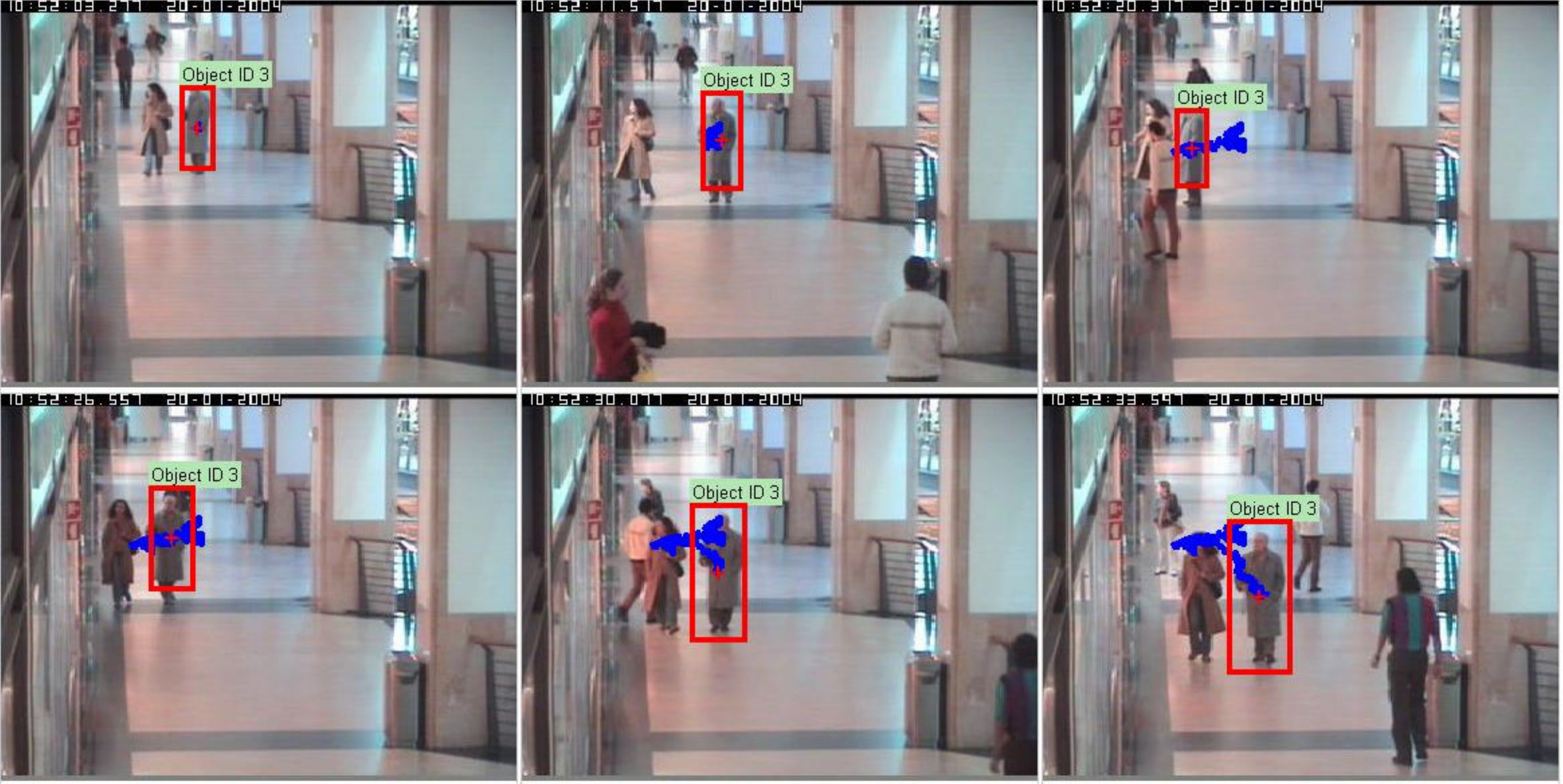}\\
\vspace{-0.6cm}
 \caption{Identification and tracking results for the third pedestrian on several representative frames (from View 1).
 \vspace{-0.6cm}}
 \label{fig:Id3_View1}
\end{figure}

\begin{figure}[t]
\vspace{-0.16cm}
\centering
\includegraphics[scale=0.3]{./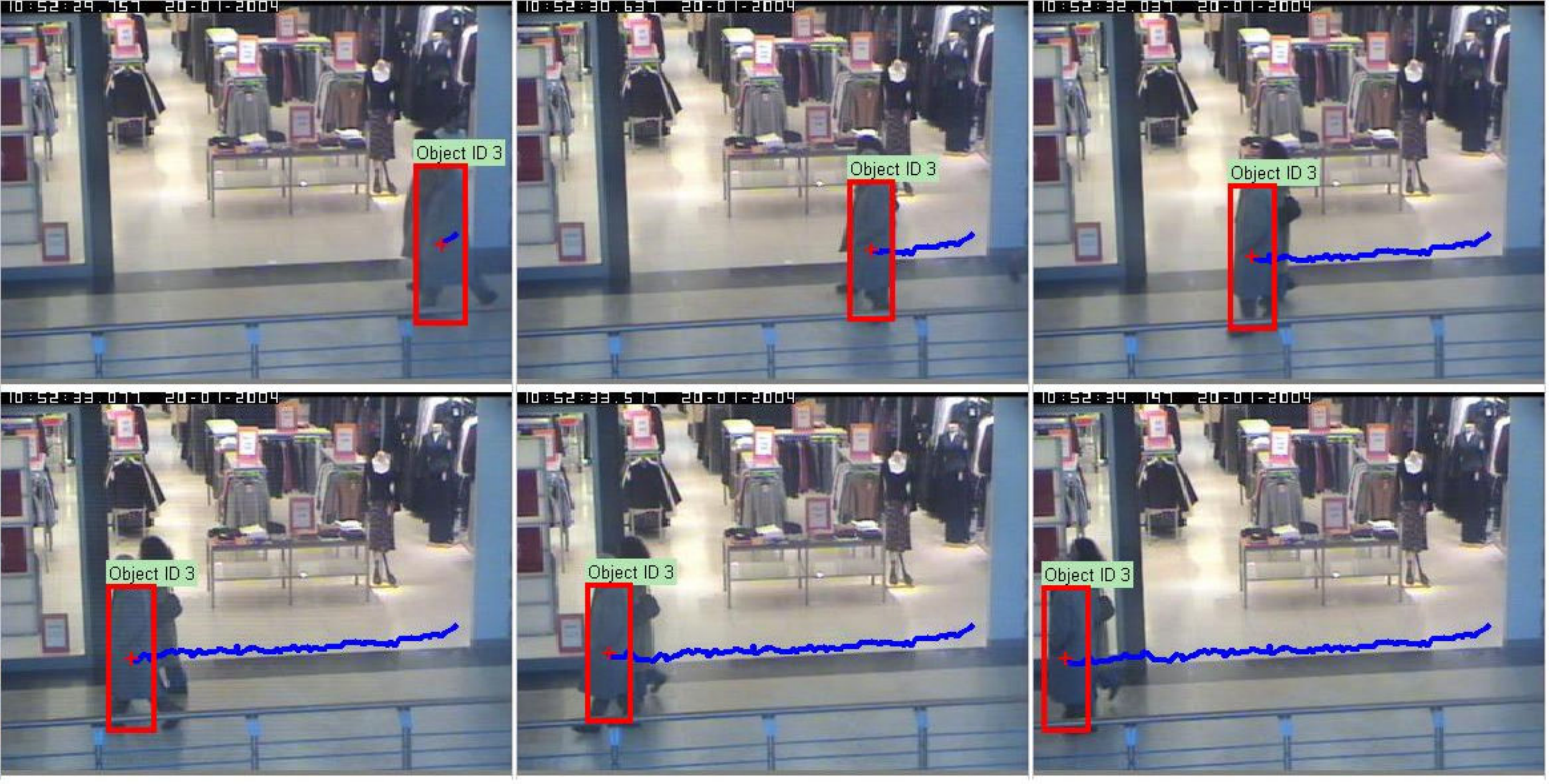}\\
\vspace{-0.6cm}
 \caption{Identification and tracking results for the third pedestrian on several representative frames (from View 2).
 \vspace{-0.6cm}}
 \label{fig:Id3_View2}
\end{figure}

\begin{figure}[t]
\vspace{-0.16cm}
\centering
\includegraphics[scale=0.3]{./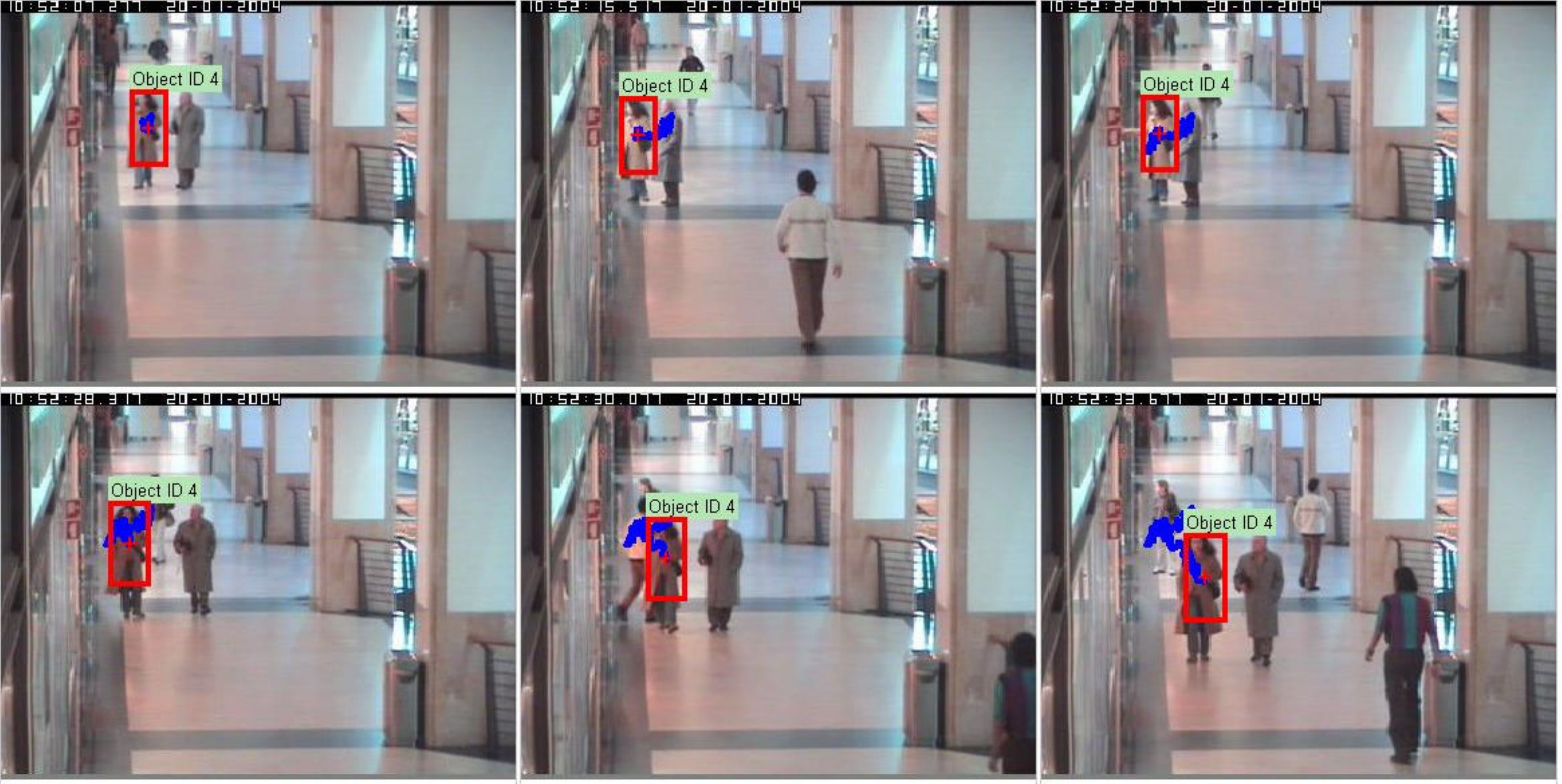}\\
\vspace{-0.6cm}
 \caption{Identification and tracking results for the fourth pedestrian on several representative frames (from View 1).
 \vspace{-0.6cm}}
 \label{fig:Id4_View1}
\end{figure}

\begin{figure}[t]
\vspace{-0.16cm}
\centering
\includegraphics[scale=0.3]{./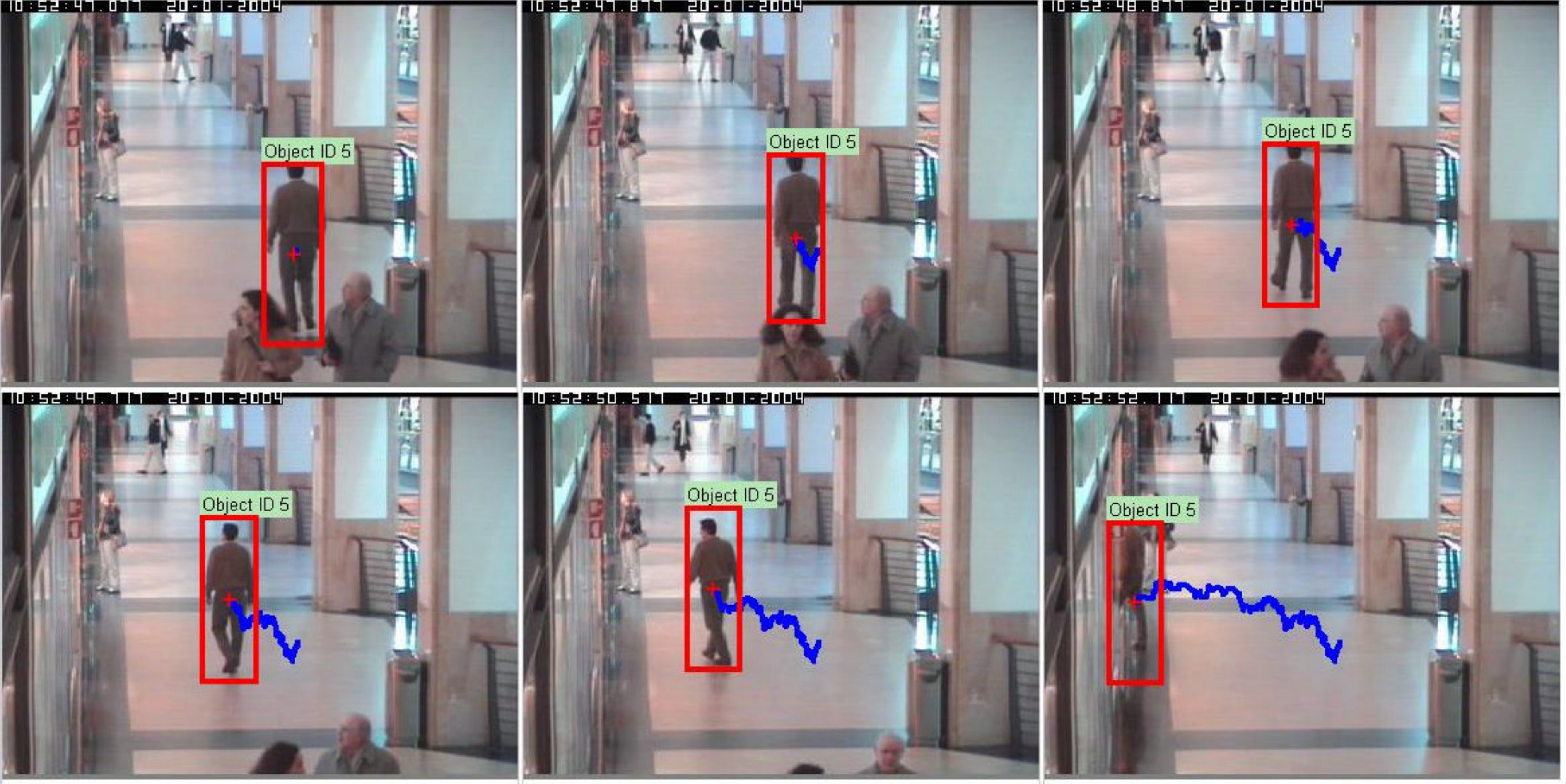}\\
\vspace{-0.6cm}
 \caption{Identification and tracking results for the fifth pedestrian on several representative frames (from View 1).
 \vspace{-0.6cm}}
 \label{fig:Id5_View1}
\end{figure}

\begin{figure}[t]
\vspace{-0.16cm}
\centering
\includegraphics[scale=0.3]{./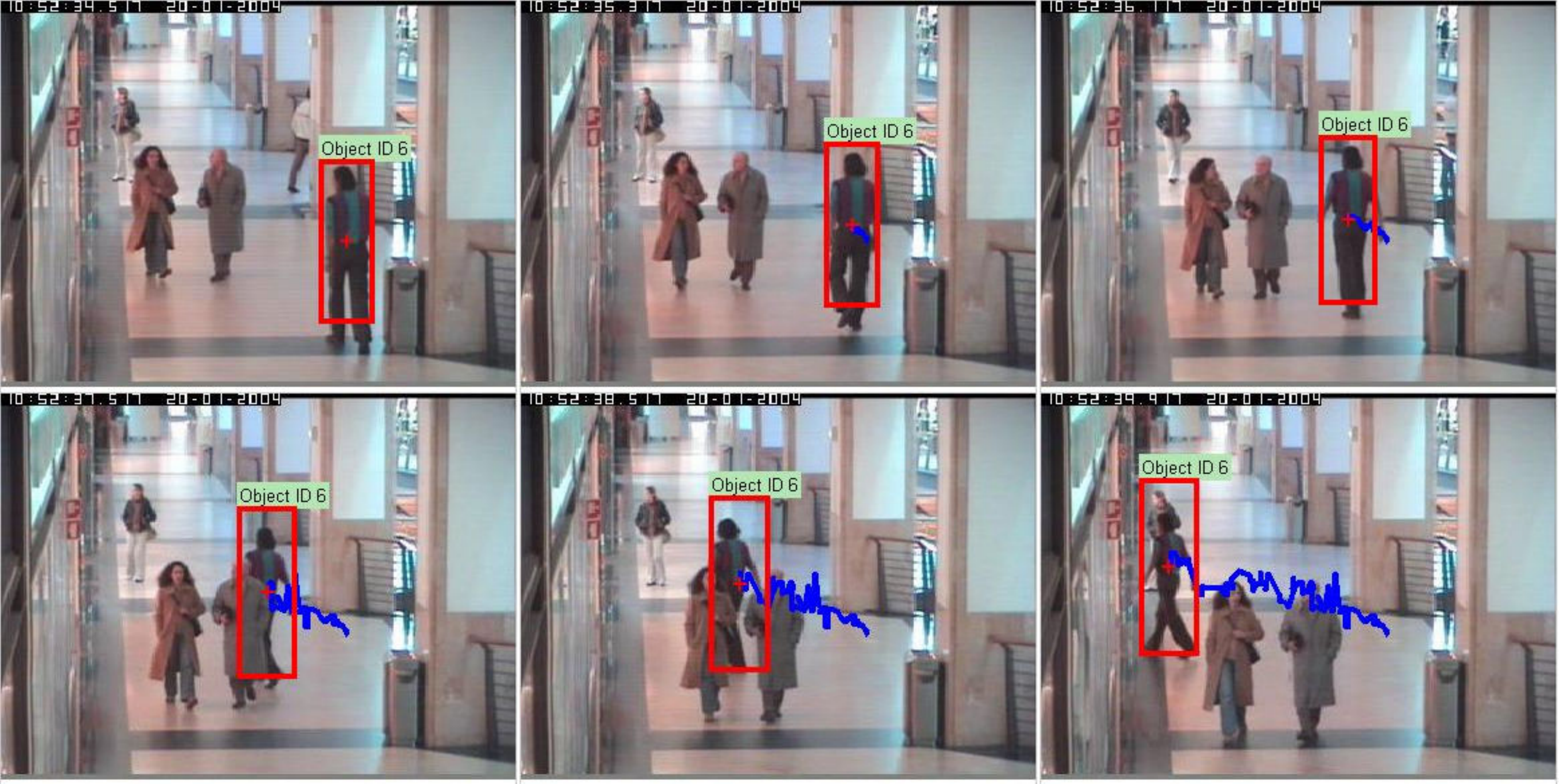}\\
\vspace{-0.6cm}
 \caption{Identification and tracking results for the sixth pedestrian on several representative frames (from View 1).
 \vspace{-0.6cm}}
 \label{fig:Id6_View1}
\end{figure}

\section{Comparison with the state-of-the-art trackers}
\label{sec:comparison_with_competing}

We report the quantitative tracking results of the eleven
trackers in CLE and VOR over the eighteen video sequences.
Figs.~\ref{fig:exp_error_curve} and \ref{fig:exp_voc_curve} plot the frame-by-frame
CLEs and VORs (marked with the curves in different colors) obtained by
the eleven trackers.
From Figs.~\ref{fig:exp_error_curve} and \ref{fig:exp_voc_curve},
we observe that the proposed tracking algorithm
achieves the best tracking performance on most video sequences.

Moreover, we show the corresponding
qualitative tracking results of the eleven trackers (highlighted by the
bounding boxes in different colors)  over the
representative frames of the eighteen video sequences in Figs.~\ref{fig:tracking_balancebeam}--\ref{fig:tracking_car11}.
Clearly, it is seen from Figs.~\ref{fig:tracking_balancebeam}--\ref{fig:tracking_car11}
that the proposed tracking algorithm obtains the most
accurate tracking results in most cases.

\begin{figure*}[t]
 \vspace{-0.8cm}
\centering
\includegraphics[scale=0.3]{./Suppelmentary_File/NewFigs/NewLegendBar.PNG}\\
\includegraphics[scale=0.41]{./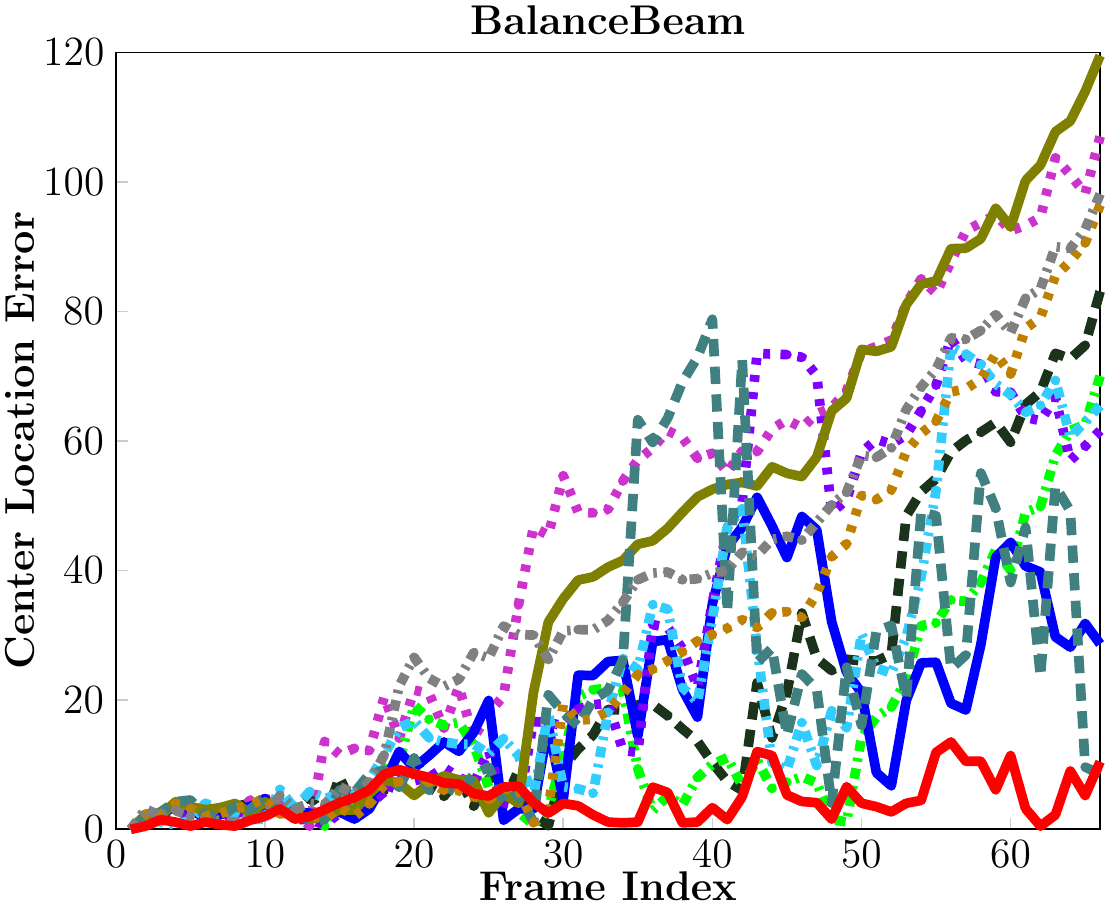}
\includegraphics[scale=0.41]{./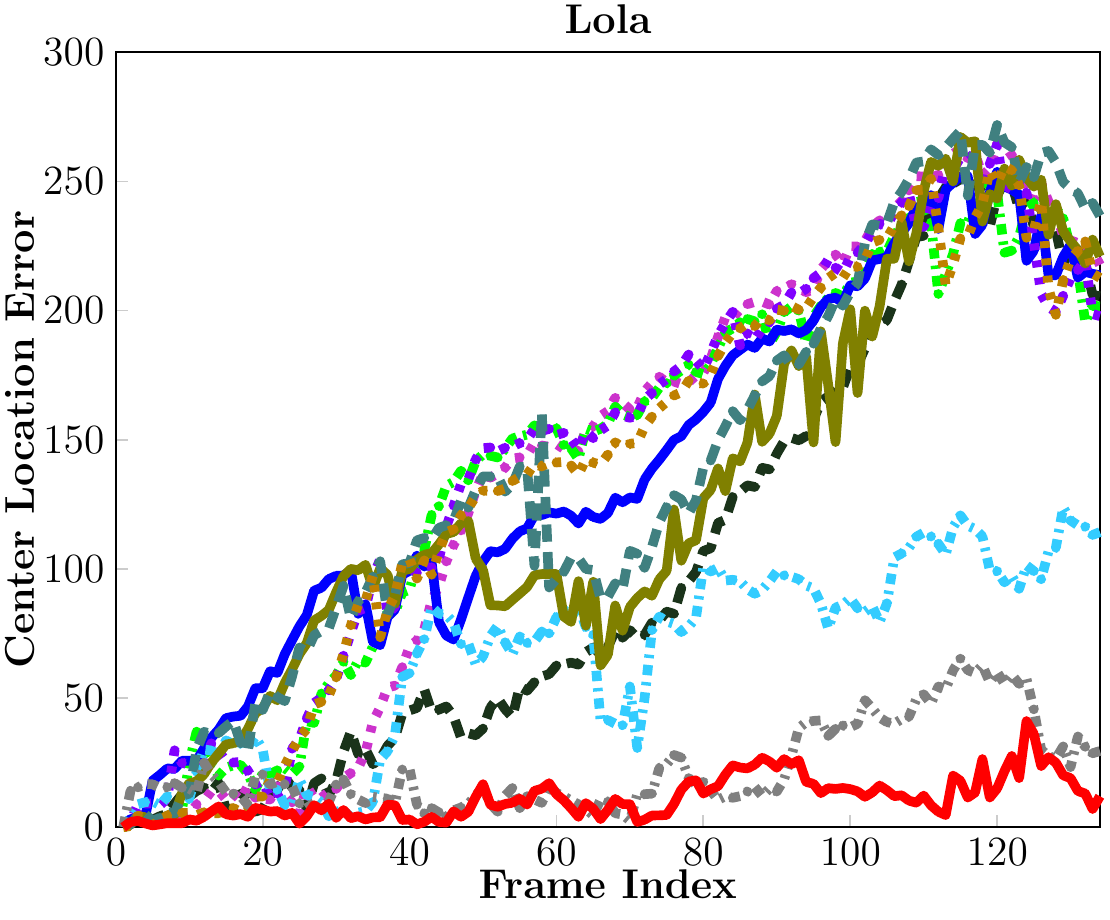}
\includegraphics[scale=0.41]{./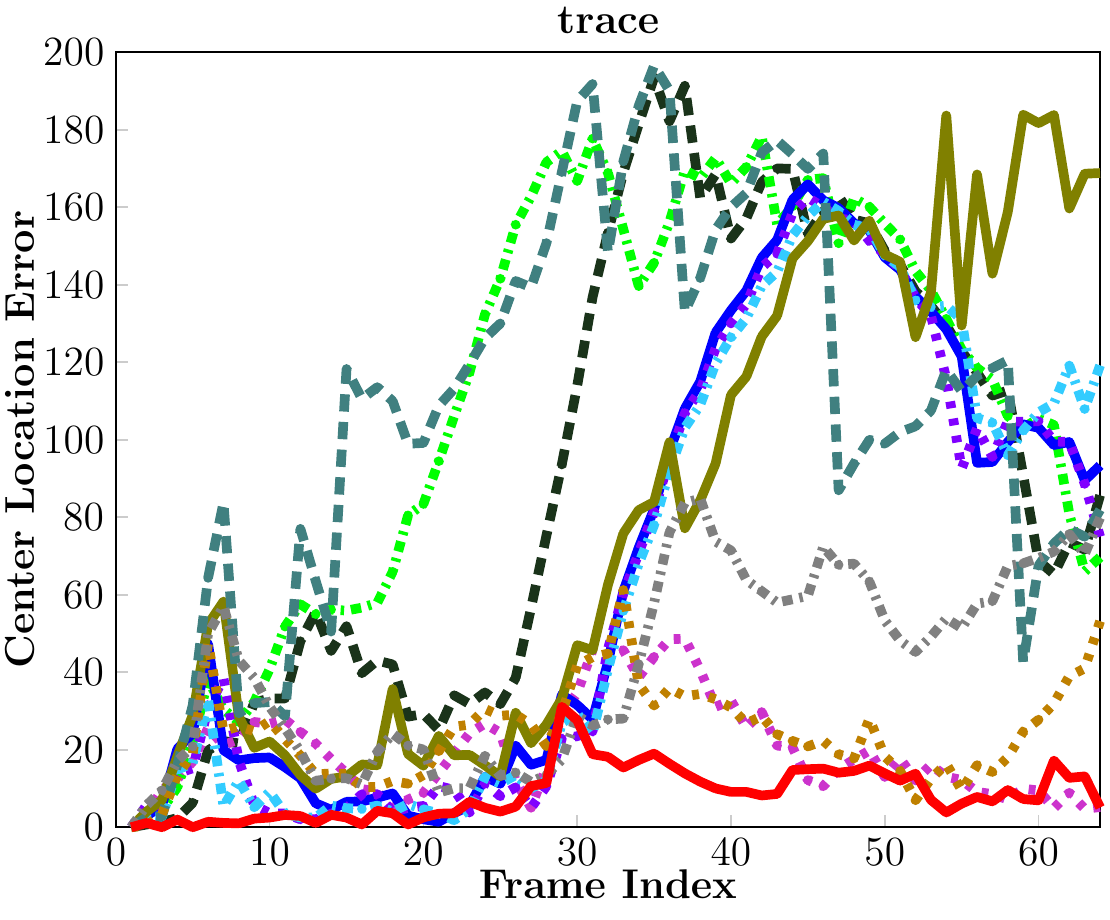}\\
\includegraphics[scale=0.41]{./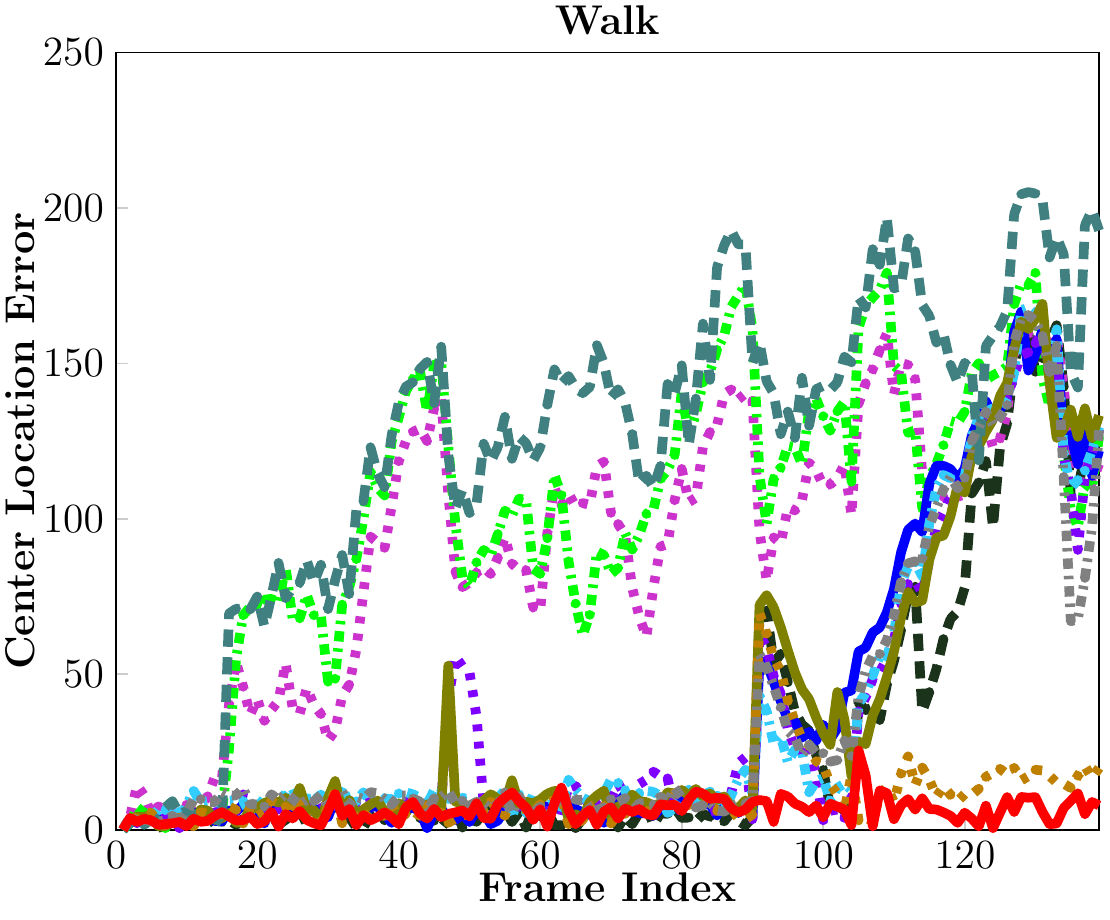}
\includegraphics[scale=0.41]{./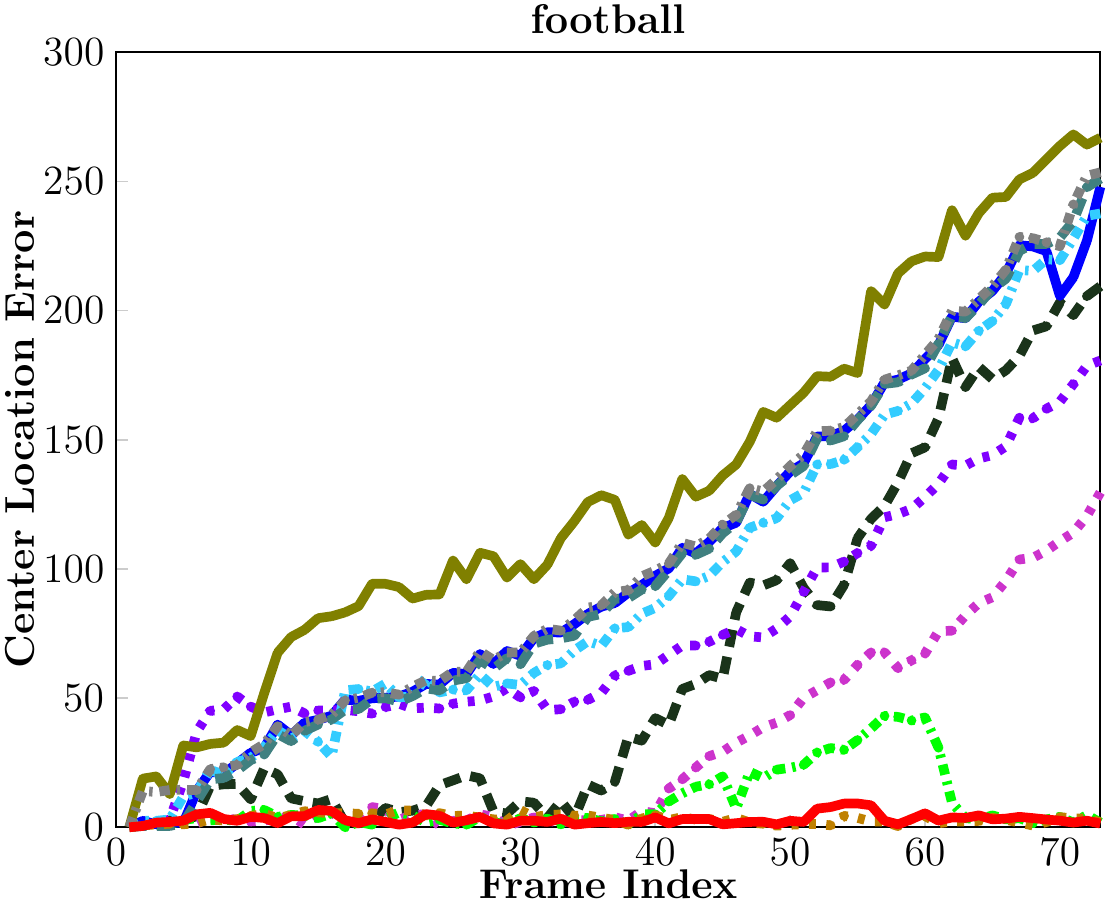}
\includegraphics[scale=0.41]{./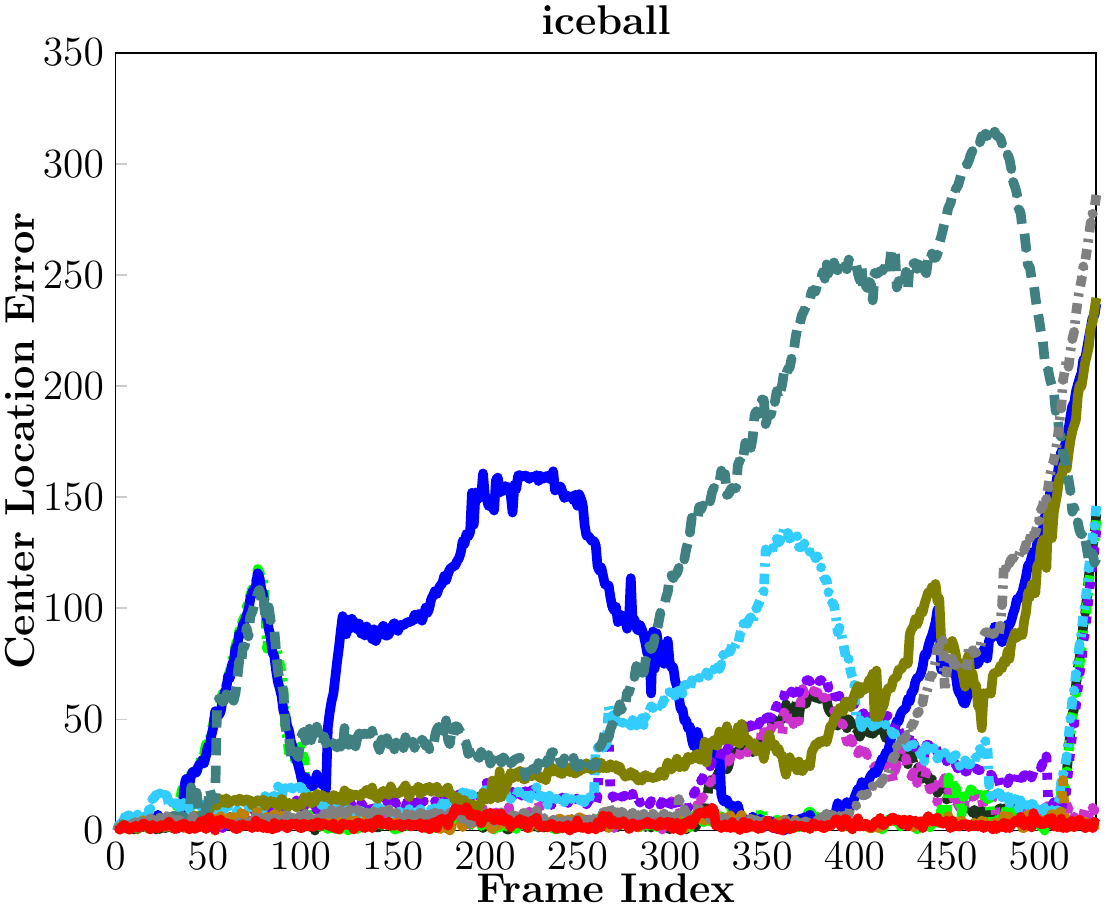}\\
\includegraphics[scale=0.41]{./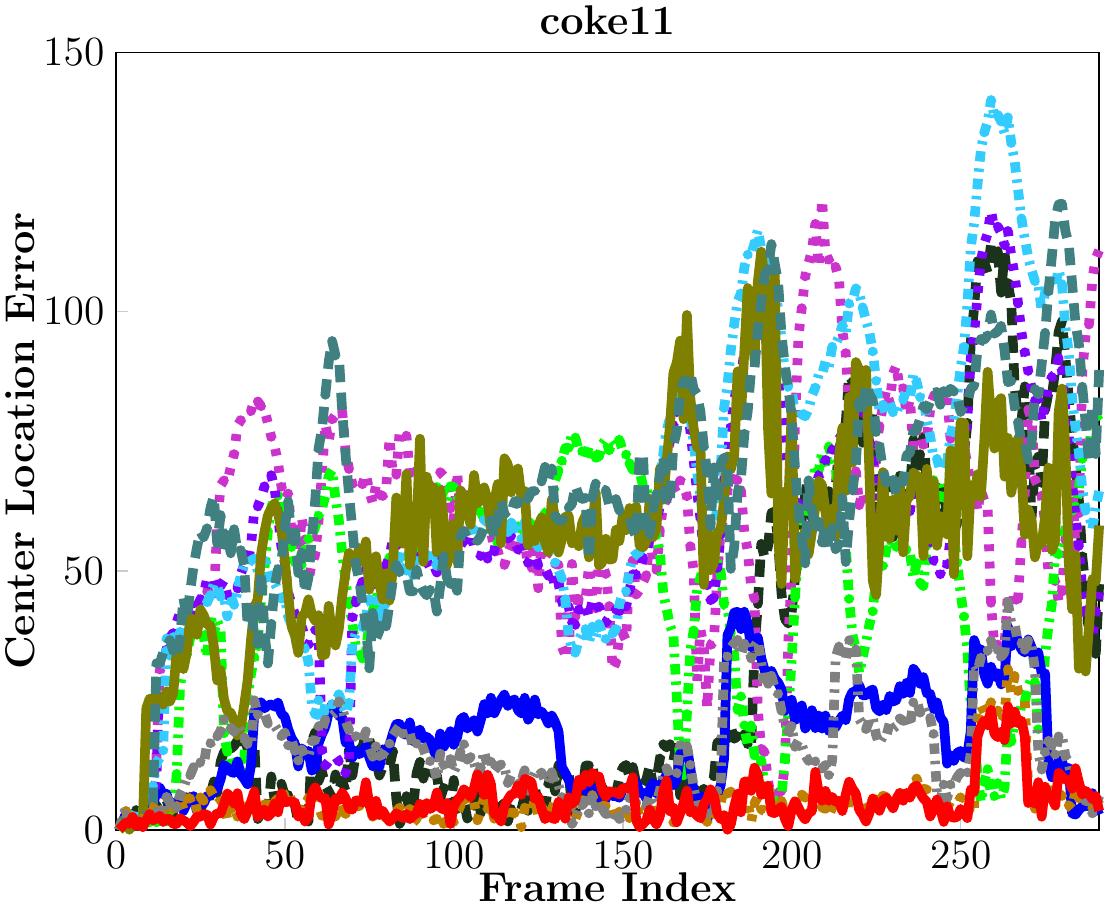}
\includegraphics[scale=0.41]{./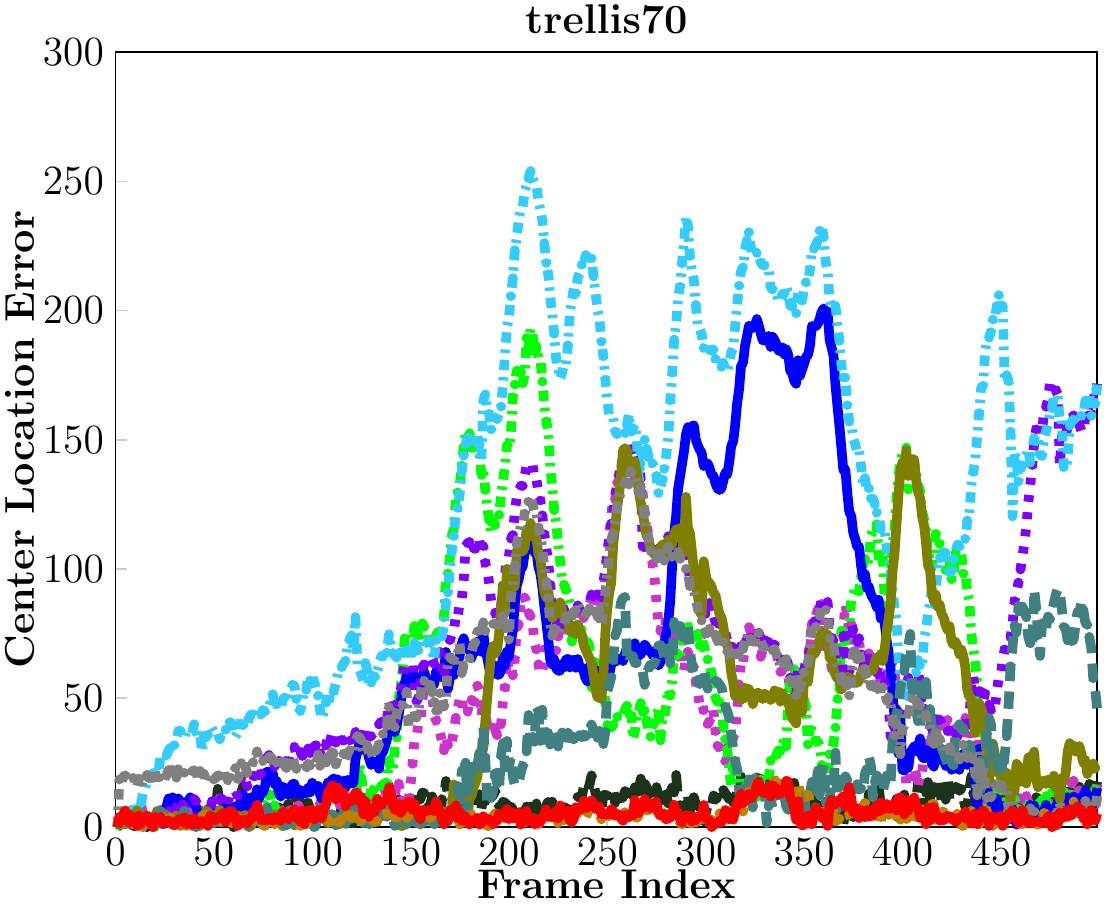}
\includegraphics[scale=0.41]{./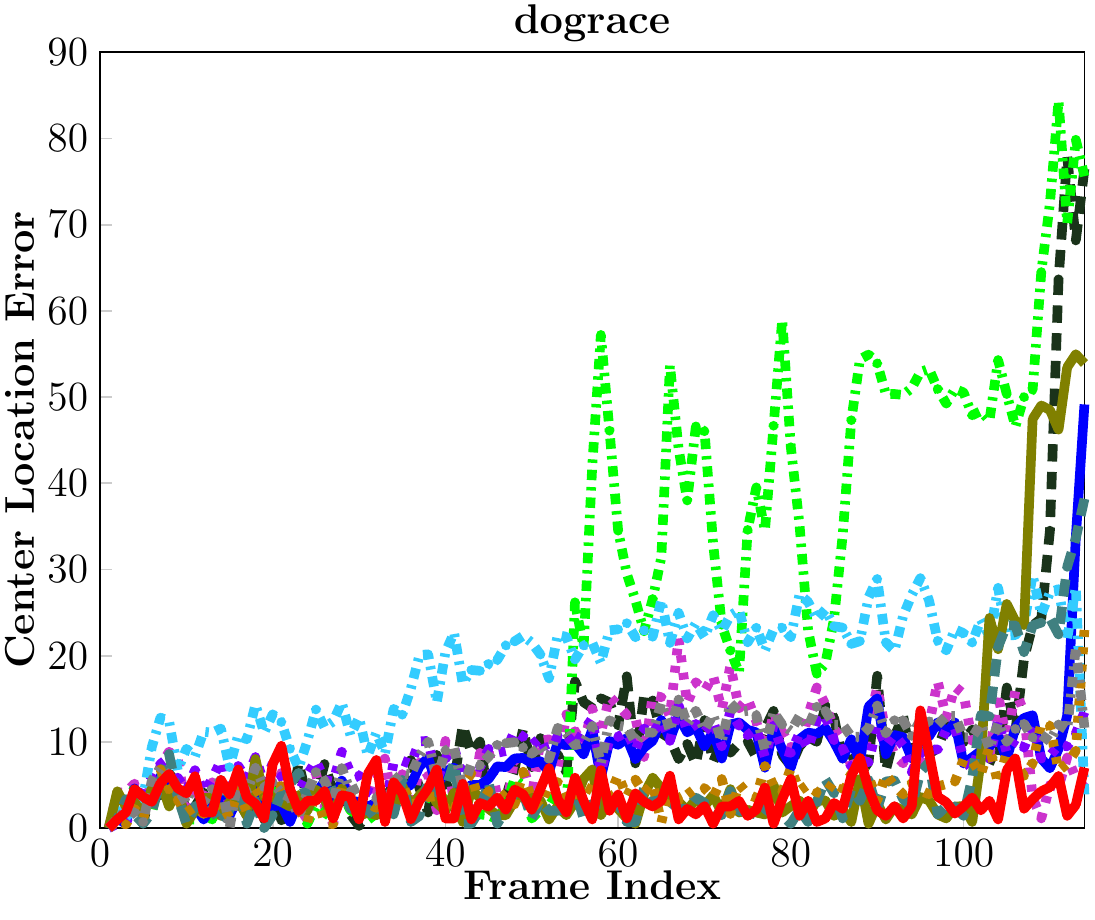}\\
\includegraphics[scale=0.41]{./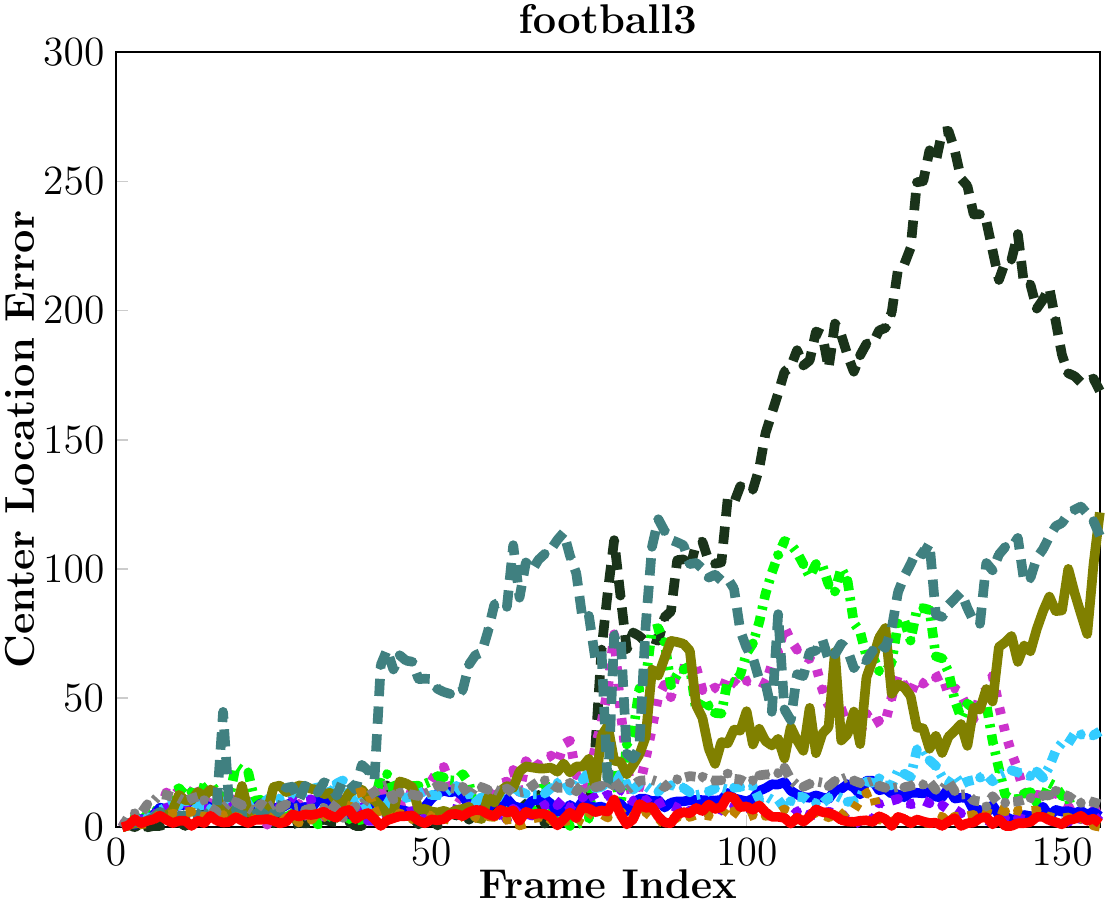}
\includegraphics[scale=0.41]{./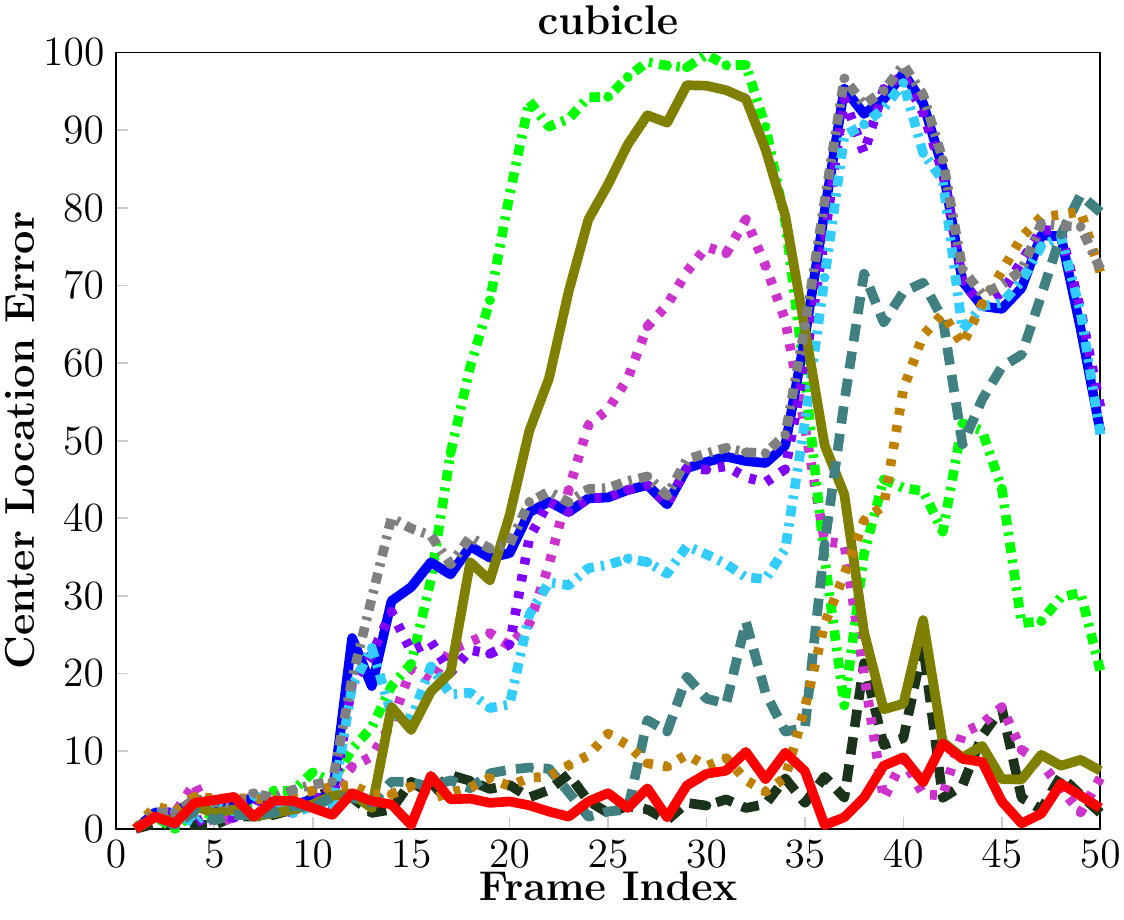}
\includegraphics[scale=0.41]{./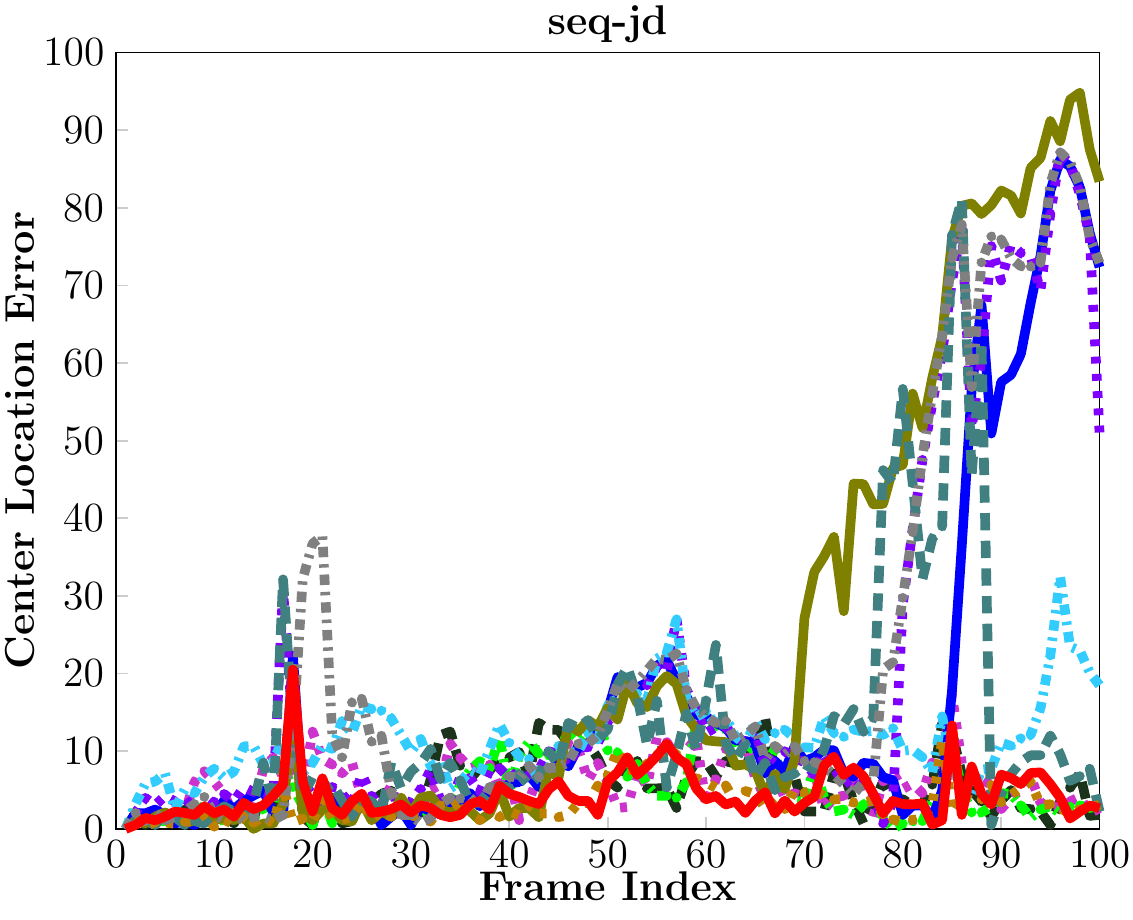}\\
\includegraphics[scale=0.41]{./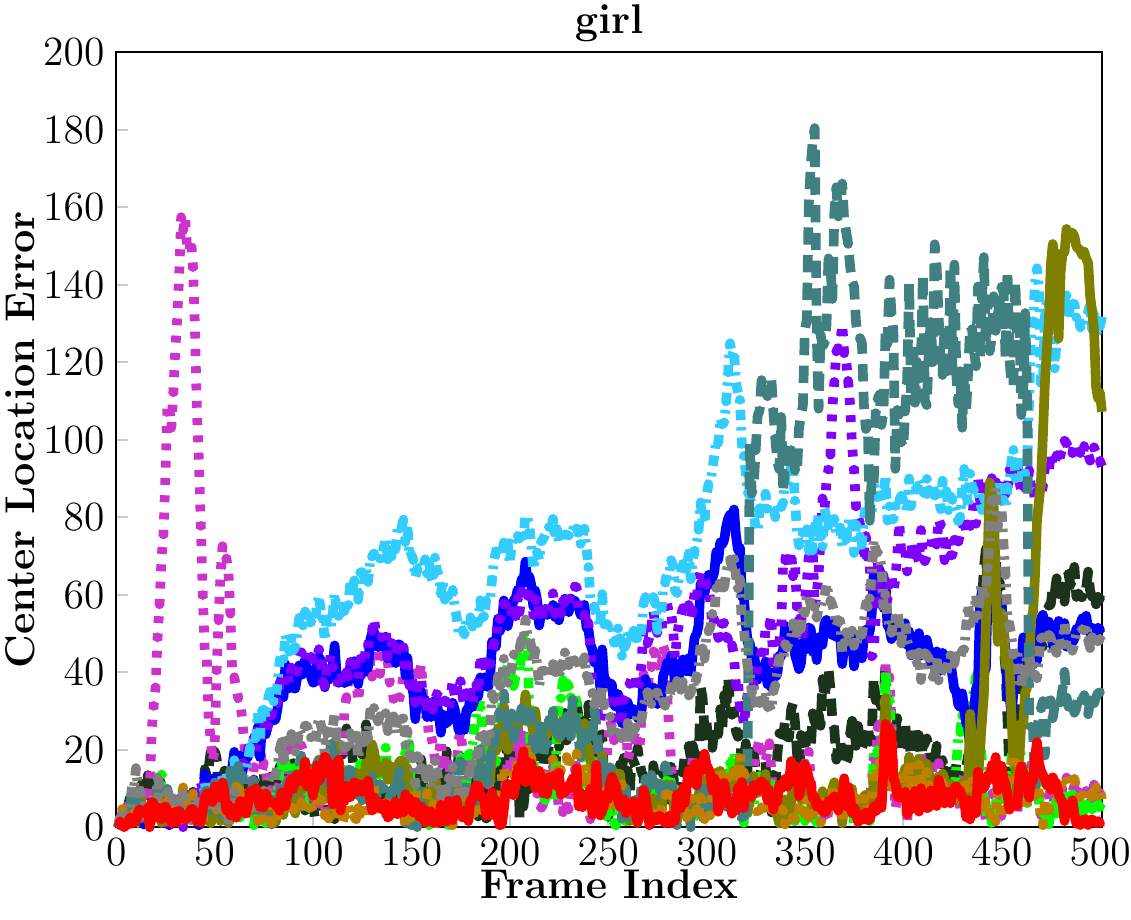}
\includegraphics[scale=0.41]{./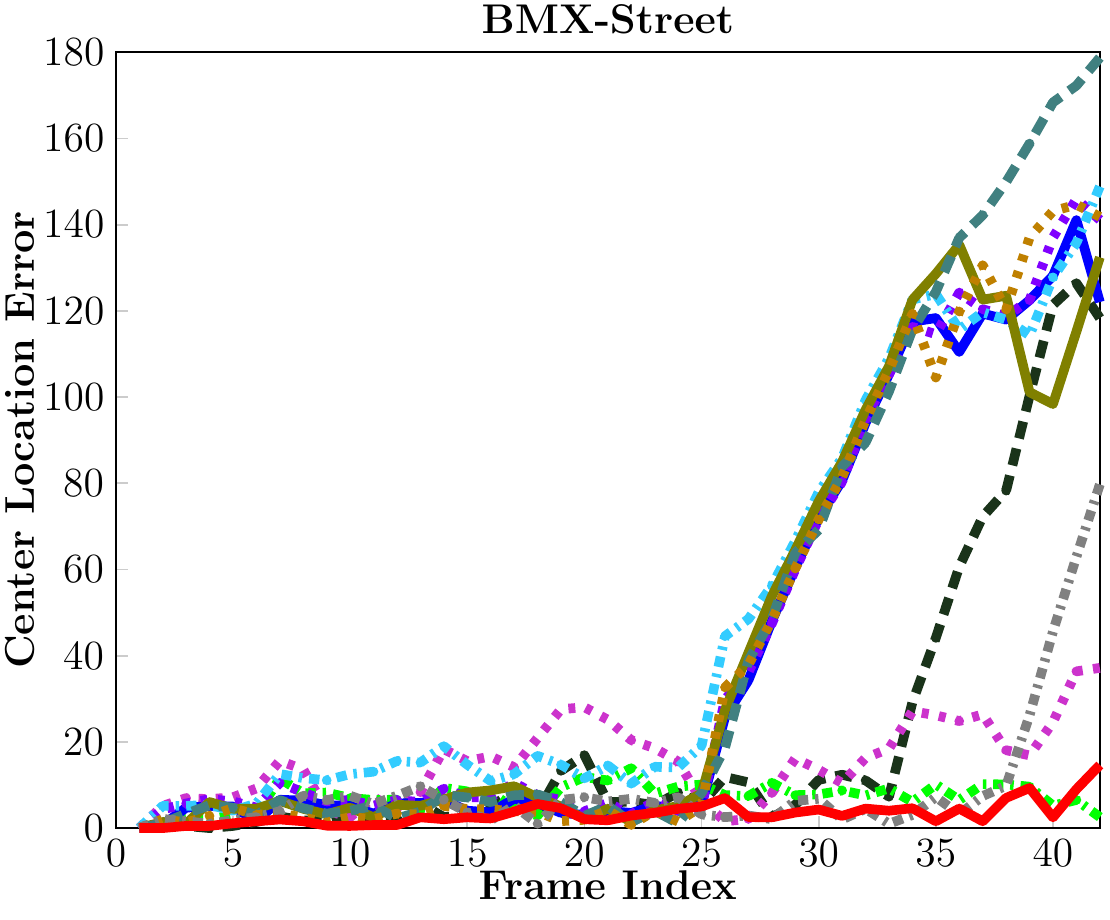}
\includegraphics[scale=0.41]{./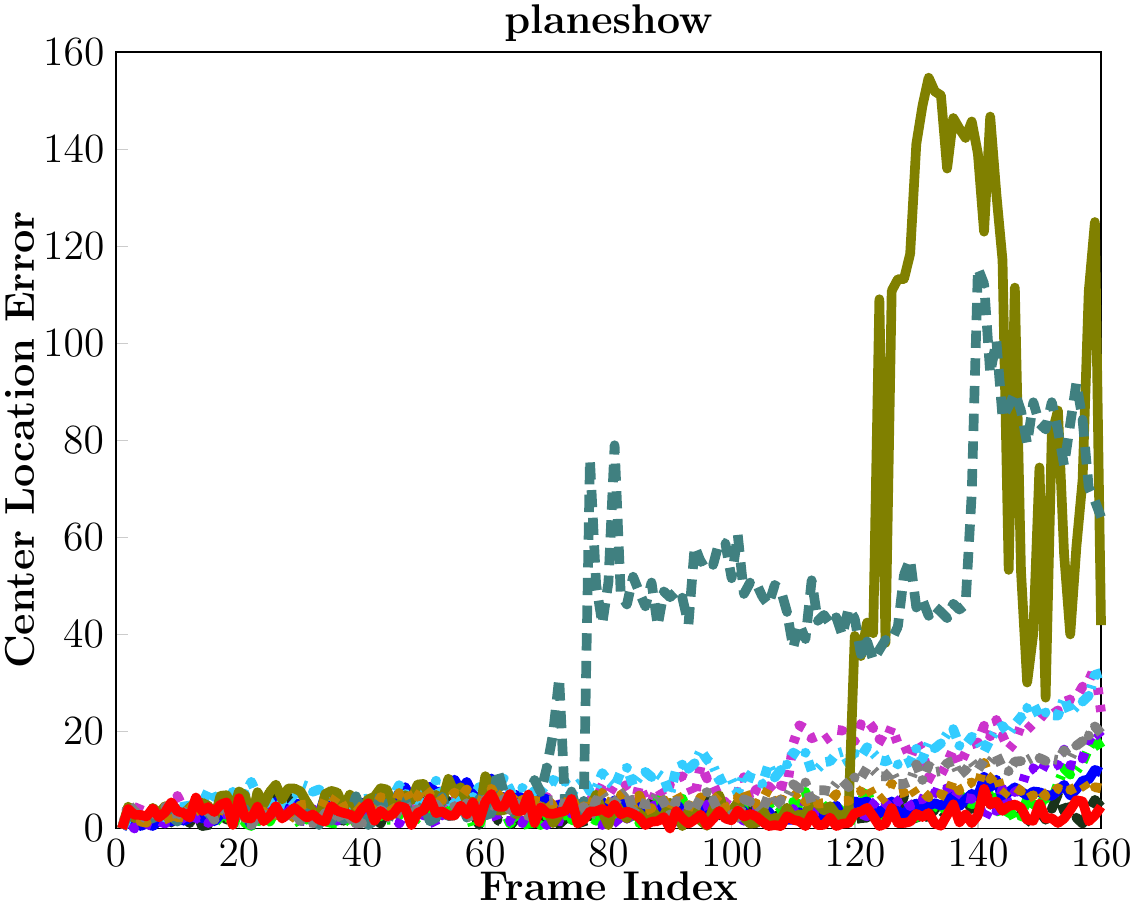}\\
\includegraphics[scale=0.41]{./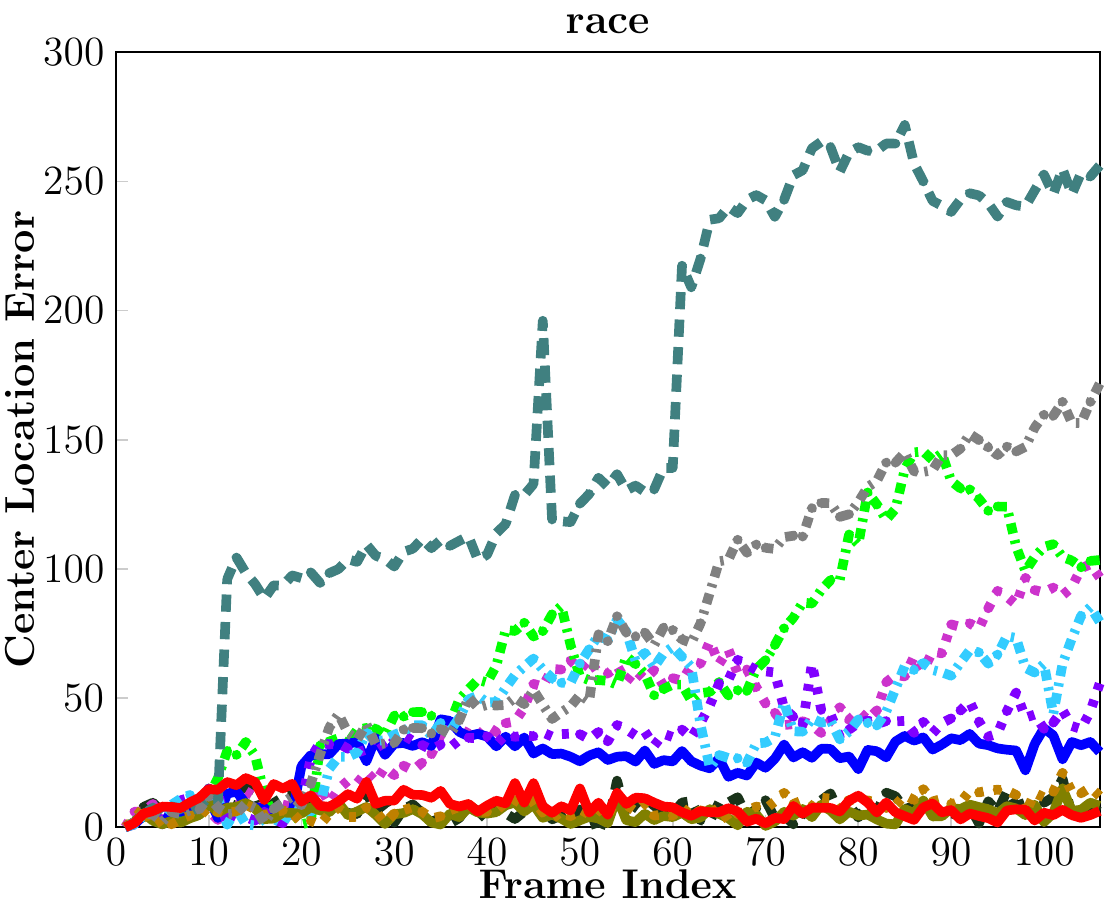}
\includegraphics[scale=0.41]{./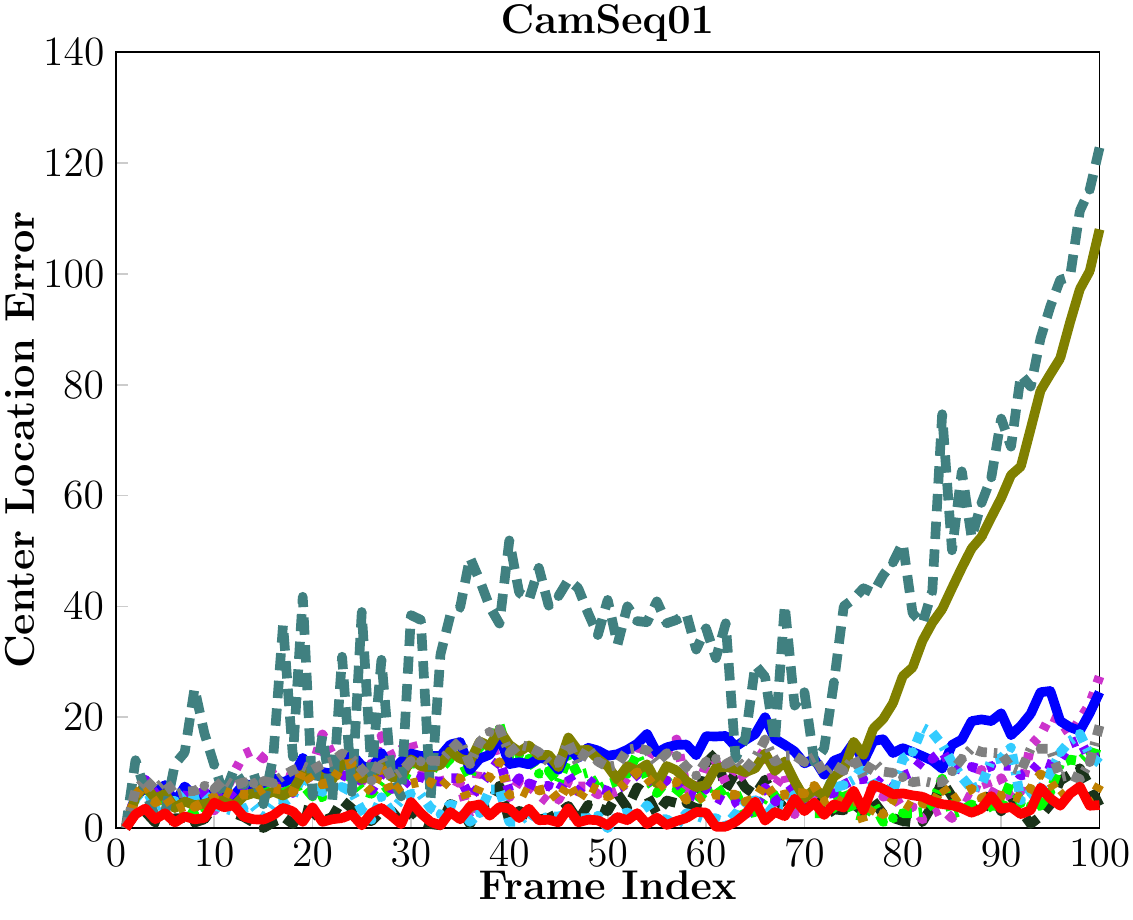}
\includegraphics[scale=0.41]{./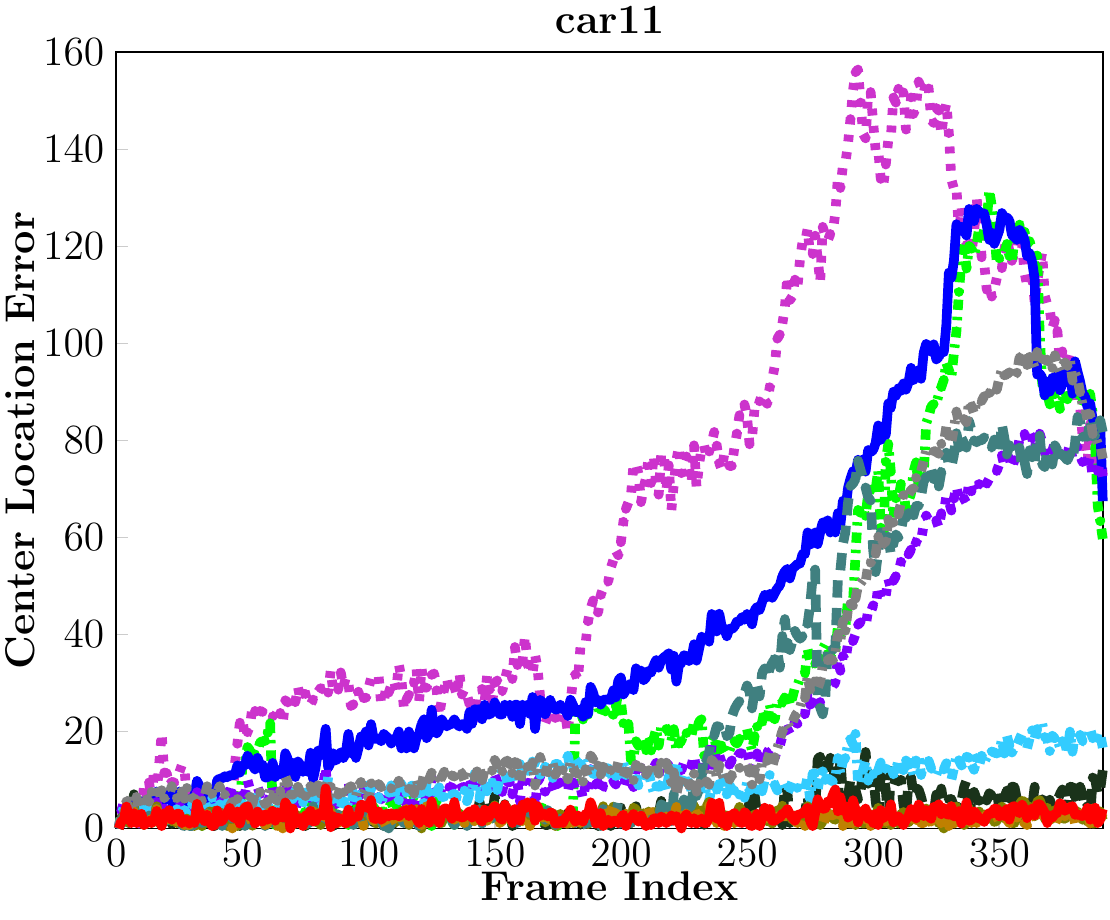}\\
\vspace{-0.13cm}
\caption{Quantitative comparison of different trackers in CLE on the eighteen
video sequences.}
 \vspace{-0.46cm}
  \label{fig:exp_error_curve}
\end{figure*}

\begin{figure*}[t]
\centering
 \vspace{-0.8cm}
\includegraphics[scale=0.3]{./Suppelmentary_File/NewFigs/NewLegendBar.PNG}\\
\includegraphics[scale=0.41]{./Suppelmentary_File/NewFigs/VOC_BalanceBeam_new.pdf}
\includegraphics[scale=0.41]{./Suppelmentary_File/NewFigs/VOC_Lola_new.pdf}
\includegraphics[scale=0.41]{./Suppelmentary_File/NewFigs/VOC_trace_new.pdf}\\
\includegraphics[scale=0.41]{./Suppelmentary_File/NewFigs/VOC_Walk_new.pdf}
\includegraphics[scale=0.41]{./Suppelmentary_File/NewFigs/VOC_football_new.pdf}
\includegraphics[scale=0.41]{./Suppelmentary_File/NewFigs/VOC_iceball_new.pdf}\\
\includegraphics[scale=0.41]{./Suppelmentary_File/NewFigs/VOC_coke11_new.pdf}
\includegraphics[scale=0.41]{./Suppelmentary_File/NewFigs/VOC_trellis70_new.pdf}
\includegraphics[scale=0.41]{./Suppelmentary_File/NewFigs/VOC_dograce_new.pdf}\\
\includegraphics[scale=0.41]{./Suppelmentary_File/NewFigs/VOC_football3_new.pdf}
\includegraphics[scale=0.41]{./Suppelmentary_File/NewFigs/VOC_cubicle_new.pdf}
\includegraphics[scale=0.41]{./Suppelmentary_File/NewFigs/VOC_seq-jd_new.pdf}\\
\includegraphics[scale=0.41]{./Suppelmentary_File/NewFigs/VOC_girl_new.pdf}
\includegraphics[scale=0.41]{./Suppelmentary_File/NewFigs/VOC_BMX-Street_new.pdf}
\includegraphics[scale=0.41]{./Suppelmentary_File/NewFigs/VOC_planeshow_new.pdf}\\
\includegraphics[scale=0.41]{./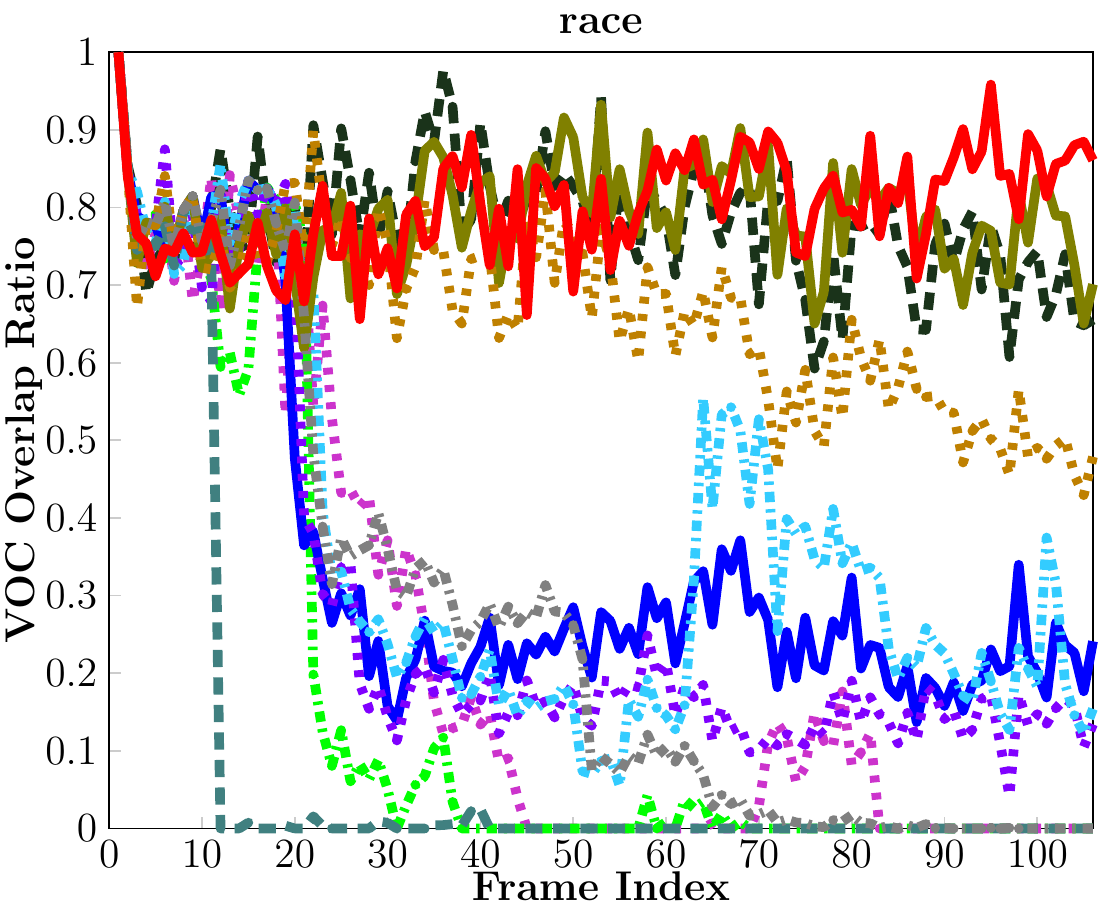}
\includegraphics[scale=0.41]{./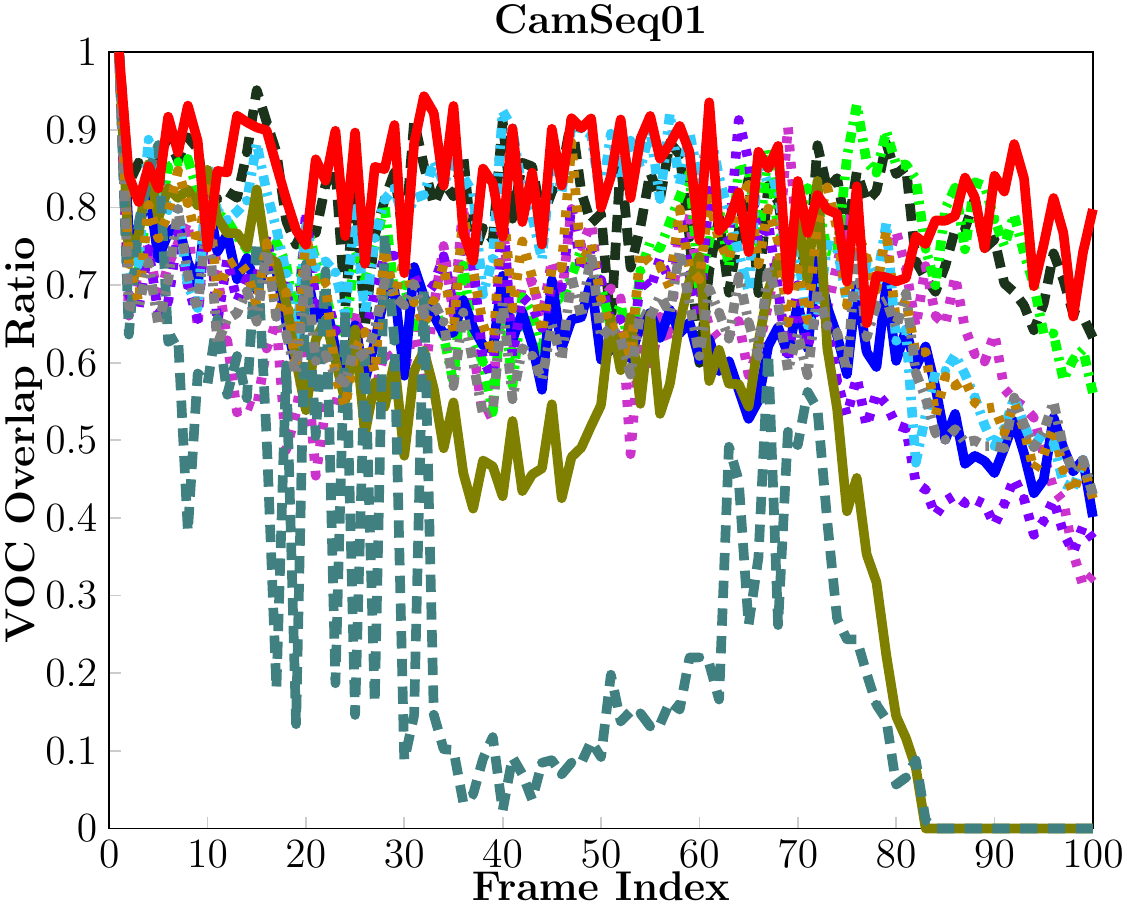}
\includegraphics[scale=0.41]{./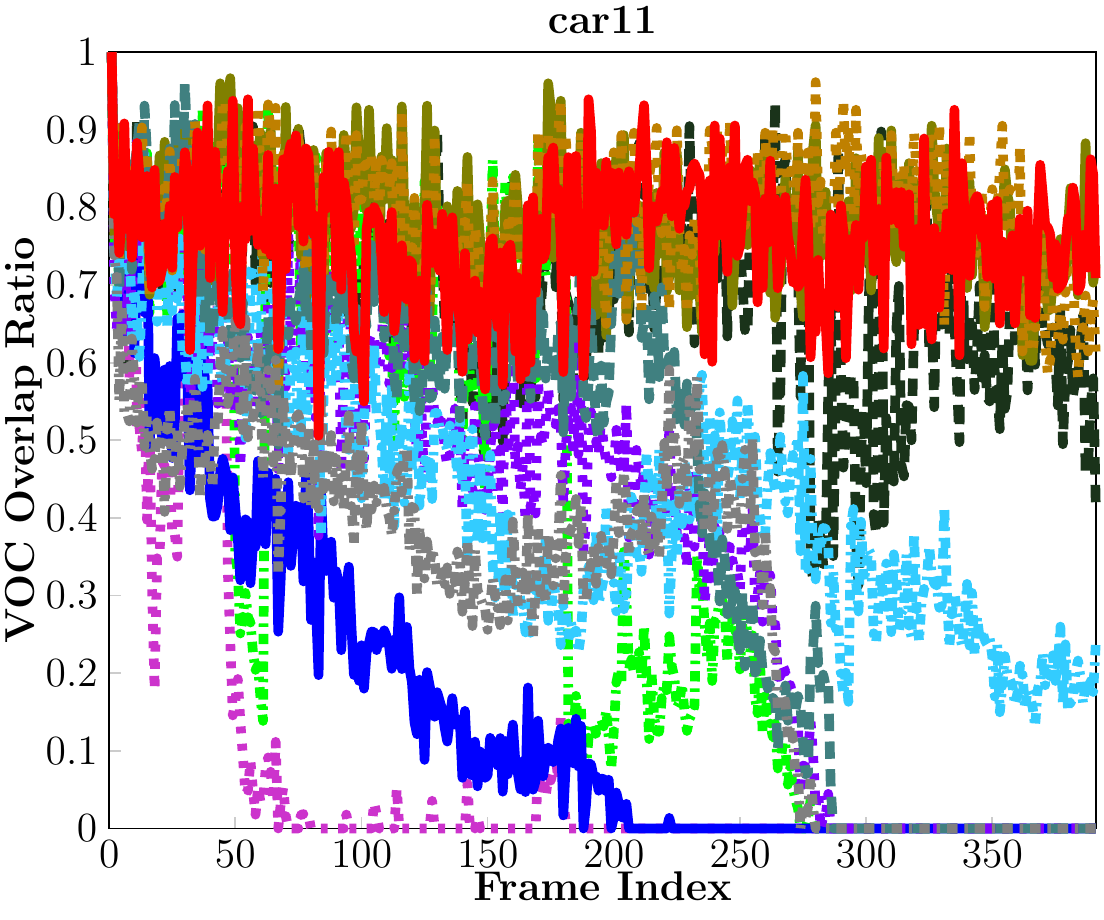}\\
\vspace{-0.13cm}
\caption{Quantitative comparison of different trackers in VOR on all the eighteen
video sequences.}
 \vspace{-0.41cm}
  \label{fig:exp_voc_curve}
\end{figure*}

\begin{figure*}[t]
\centering
\includegraphics[width=0.98\linewidth]{./Suppelmentary_File/NewFigs/NewLegendBar.PNG}\\
\includegraphics[width=0.98\linewidth]{./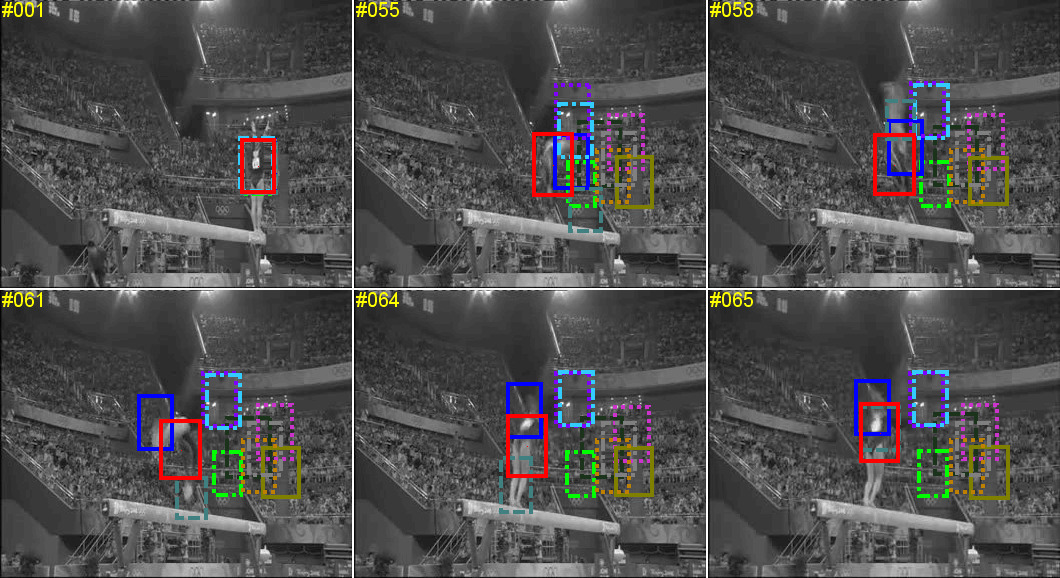}\\
\vspace{-0.0cm}
 \caption{Tracking results of different trackers over some
representative frames from the ``BalanceBeam'' video sequence
in the scenarios with drastic body pose variations and background clutters.
}
 \label{fig:tracking_balancebeam} \vspace{-0.3cm}
\end{figure*}

\begin{figure*}[t]
\centering
\includegraphics[width=0.98\linewidth]{./Suppelmentary_File/NewFigs/NewLegendBar.PNG}\\
\includegraphics[width=0.98\linewidth]{./Suppelmentary_File/NewFigs/NewSample_Lola.jpg}\\
 \caption{Tracking results of different trackers over some
representative frames from the ``Lola'' video sequence
in the scenarios with drastic scale changes and body pose variations.
}
 \label{fig:tracking_Lola} \vspace{-0.3cm}
\end{figure*}

\begin{figure*}[t]
\centering
\includegraphics[width=0.98\linewidth]{./Suppelmentary_File/NewFigs/NewLegendBar.PNG}\\
\includegraphics[width=0.98\linewidth]{./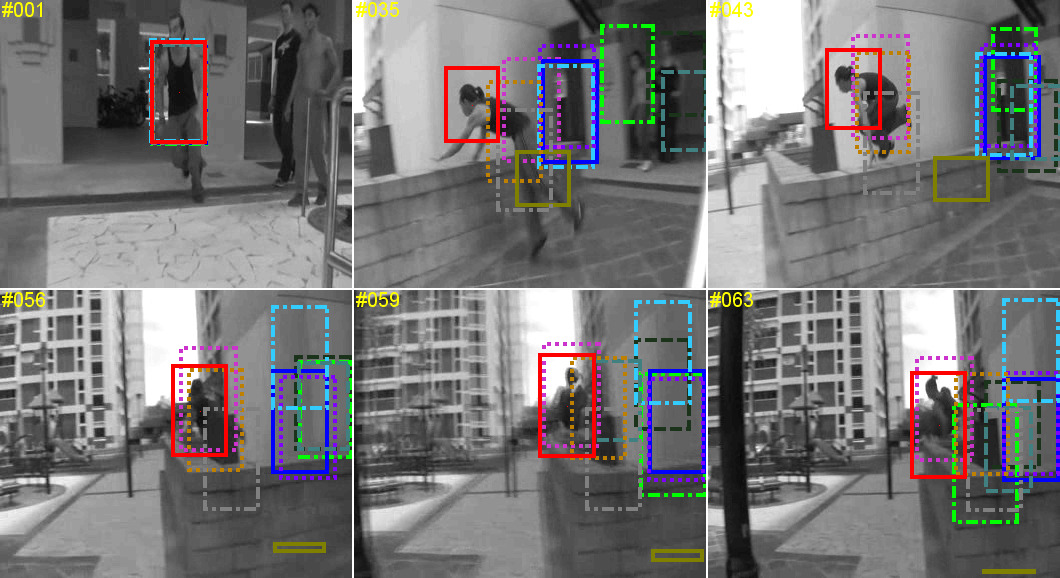}\\
 \caption{Tracking results of different trackers over some
          representative frames from the ``trace'' video sequence
 in the scenarios with drastic body pose variations and shape deformations.
}
 \label{fig:tracking_trace} \vspace{-0.3cm}
\end{figure*}

\begin{figure*}[t]
\centering
\includegraphics[width=0.98\linewidth]{./Suppelmentary_File/NewFigs/NewLegendBar.PNG}\\
\includegraphics[width=0.98\linewidth]{./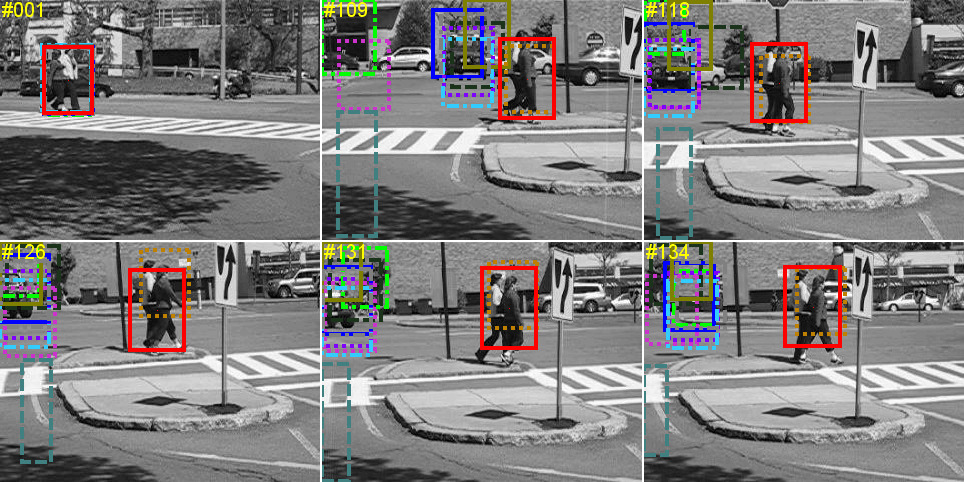}\\
 \caption{Tracking results of different trackers over some
representative frames from the ``Walk'' video sequence
 in the scenarios with drastic camera motion, partial occlusions, and background clutters.
}
 \label{fig:tracking_Walk} \vspace{-0.3cm}
\end{figure*}

\begin{figure*}[t]
\centering
\includegraphics[width=0.98\linewidth]{./Suppelmentary_File/NewFigs/NewLegendBar.PNG}\\
\includegraphics[width=0.98\linewidth]{./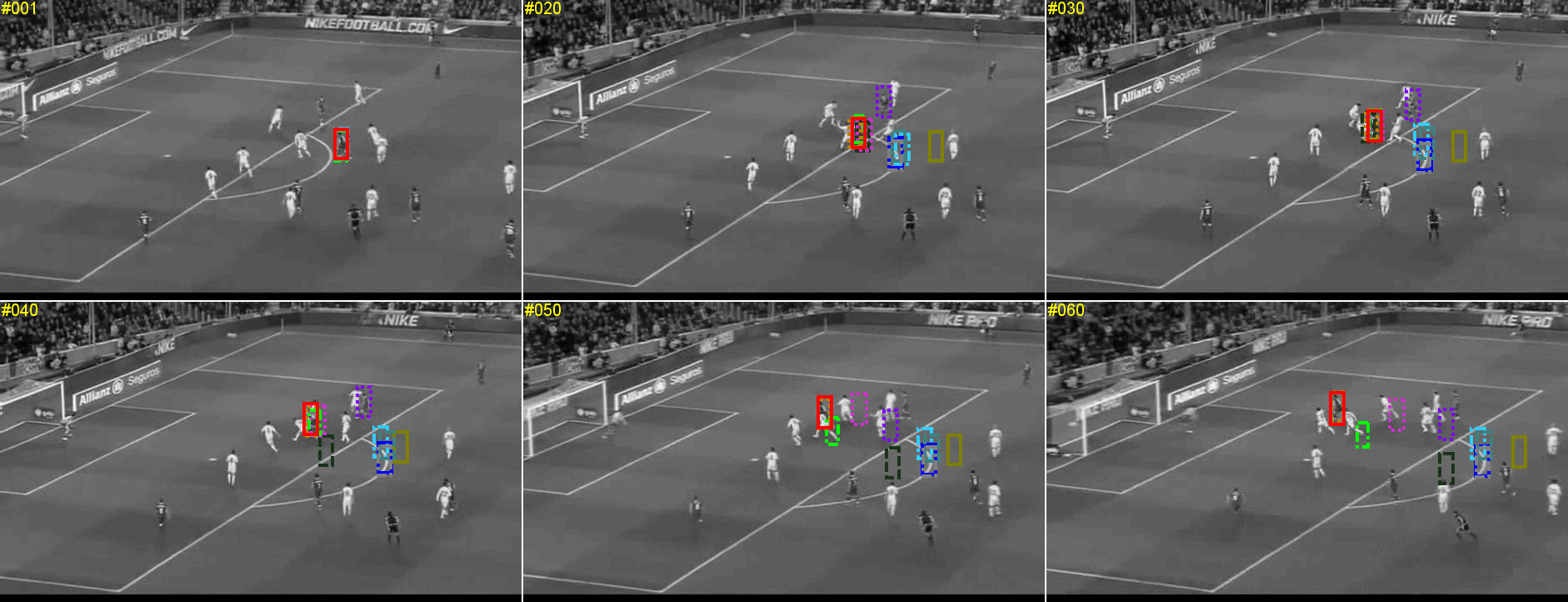}\\
 \caption{Tracking results of different trackers over some
representative frames from the ``football'' video sequence
in the scenarios with small-sized targets and partial occlusions.
}
 \label{fig:tracking_football} \vspace{-0.3cm}
\end{figure*}

\begin{figure*}[t]
\centering
\includegraphics[width=0.98\linewidth]{./Suppelmentary_File/NewFigs/NewLegendBar.PNG}\\
\includegraphics[width=0.98\linewidth]{./Suppelmentary_File/NewFigs/NewSample_iceball.jpg}\\
 \caption{Tracking results of different trackers over some
representative frames from the ``iceball'' video sequence
in the scenarios with partial occlusions,  out-of-plane rotations, body pose variations, and abrupt motion.
}
 \label{fig:tracking_iceball} \vspace{-0.3cm}
\end{figure*}

\begin{figure*}[t]
\centering
\includegraphics[width=0.98\linewidth]{./Suppelmentary_File/NewFigs/NewLegendBar.PNG}\\
\includegraphics[width=0.98\linewidth]{./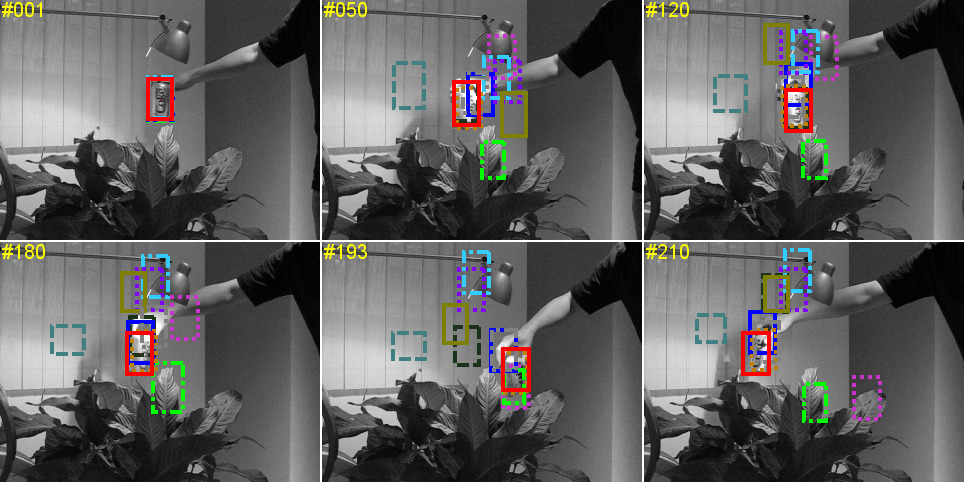}\\
 \caption{Tracking results of different trackers over some
representative frames from the ``coke11'' video sequence
in the scenarios with illumination changes, severe occlusions,  out-of-plane rotations, and background clutters.
}
 \label{fig:tracking_coke11} \vspace{-0.3cm}
\end{figure*}

\begin{figure*}[t]
\centering
\includegraphics[width=0.98\linewidth]{./Suppelmentary_File/NewFigs/NewLegendBar.PNG}\\
\includegraphics[width=0.98\linewidth]{./Suppelmentary_File/NewFigs/NewSample_trellis70.jpg}\\
 \caption{Tracking results of different trackers over some
representative frames from the ``trellis70'' video sequence
in the scenarios with drastic illumination changes and  head pose variations.
}
 \label{fig:tracking_trellis70} \vspace{-0.3cm}
\end{figure*}

\begin{figure*}[t]
\centering
\includegraphics[width=0.98\linewidth]{./Suppelmentary_File/NewFigs/NewLegendBar.PNG}\\
\includegraphics[width=0.98\linewidth]{./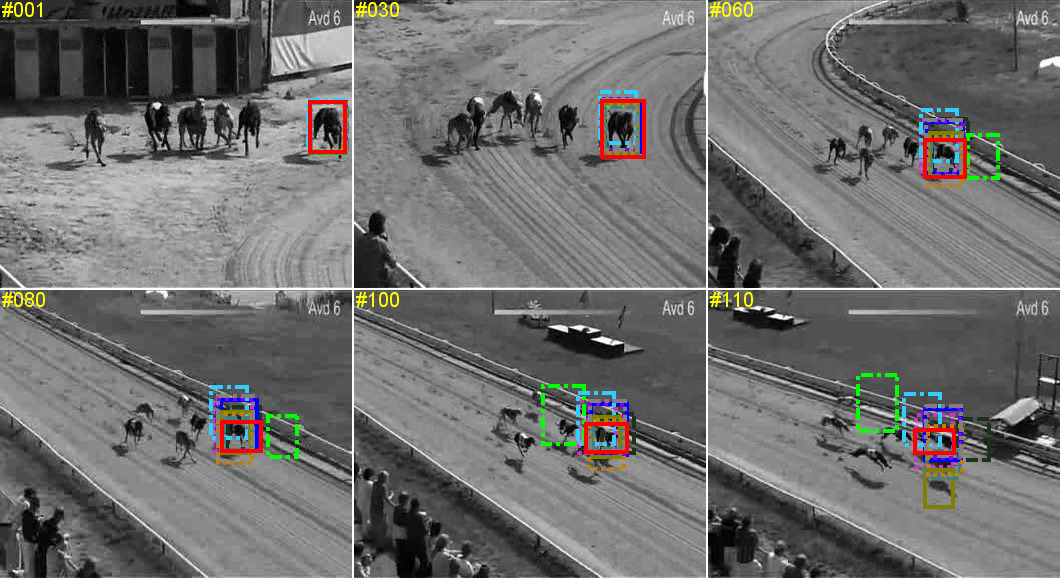}\\
 \caption{Tracking results of different trackers over some
representative frames from the ``dograce'' video sequence
in the scenarios with drastic pose changes and shape deformations.
}
 \label{fig:tracking_dograce} \vspace{-0.3cm}
\end{figure*}

\begin{figure*}[t]
\centering
\includegraphics[width=0.98\linewidth]{./Suppelmentary_File/NewFigs/NewLegendBar.PNG}\\
\includegraphics[width=0.98\linewidth]{./Suppelmentary_File/NewFigs/NewSample_football3.jpg}\\
 \caption{Tracking results of different trackers over some
representative frames from the ``football3'' video sequence
in the scenarios with motion blurring, partial occlusions,
head pose variations, and background clutters.
}
 \label{fig:tracking_football3} \vspace{-0.3cm}
\end{figure*}

\begin{figure*}[t]
\centering
\includegraphics[width=0.698\linewidth]{./Suppelmentary_File/NewFigs/NewLegendBar.PNG}\\
\includegraphics[width=0.698\linewidth]{./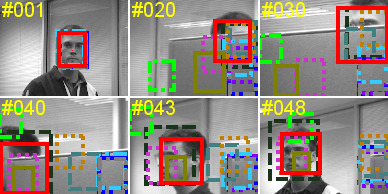}\\
 \caption{Tracking results of different trackers over some
representative frames from the ``cubicle'' video sequence
in the scenarios with severe occlusions, out-of-plane rotations, and head pose changes.
}
 \label{fig:tracking_cubicle} \vspace{-0.3cm}
\end{figure*}

\begin{figure*}[t]
\centering
\includegraphics[width=0.698\linewidth]{./Suppelmentary_File/NewFigs/NewLegendBar.PNG}\\
\includegraphics[width=0.698\linewidth]{./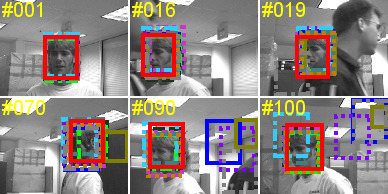}\\
 \caption{Tracking results of different trackers over some
representative frames from the ``seq-jd'' video sequence
in the scenarios with severe occlusions, out-of-plane rotations, and head pose changes.
}
 \label{fig:tracking_seq_jd} \vspace{-0.3cm}
\end{figure*}

\begin{figure*}[t]
\centering
\includegraphics[width=0.698\linewidth]{./Suppelmentary_File/NewFigs/NewLegendBar.PNG}\\
\includegraphics[width=0.698\linewidth]{./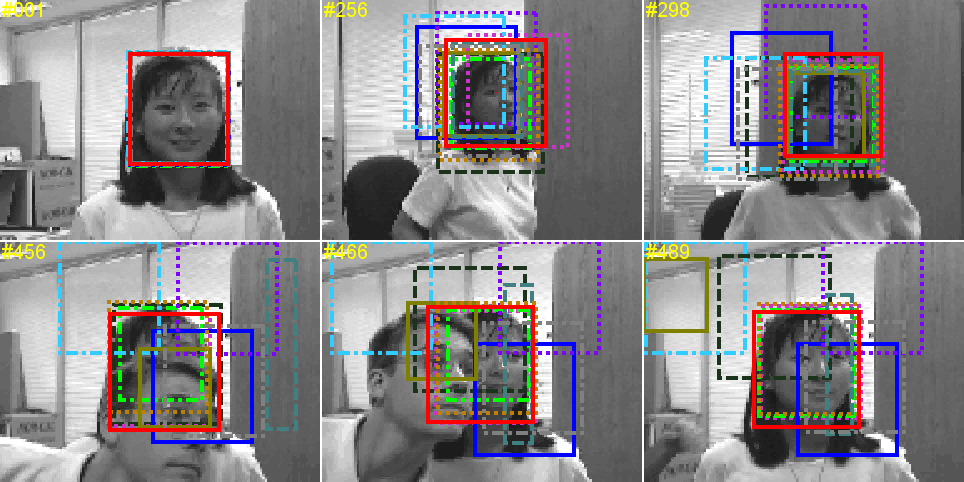}\\
 \caption{Tracking results of different trackers over some
representative frames from the ``girl'' video sequence
in the scenarios with severe occlusions, out-of-plane \& in-plane rotations, and head pose changes.
}
 \label{fig:tracking_girl} \vspace{-0.3cm}
\end{figure*}

\begin{figure*}[t]
\centering
\includegraphics[width=0.698\linewidth]{./Suppelmentary_File/NewFigs/NewLegendBar.PNG}\\
\includegraphics[width=0.698\linewidth]{./Suppelmentary_File/NewFigs/NewSample_planeshow.jpg}\\
 \caption{Tracking results of different trackers over some
representative frames from the ``planeshow'' video sequence
in the scenarios with shape deformations, out-of-plane rotations, and pose variations.
}
 \label{fig:tracking_planeshow} \vspace{-0.3cm}
\end{figure*}

\begin{figure*}[t]
\centering
\includegraphics[width=0.698\linewidth]{./Suppelmentary_File/NewFigs/NewLegendBar.PNG}\\
\includegraphics[width=0.698\linewidth]{./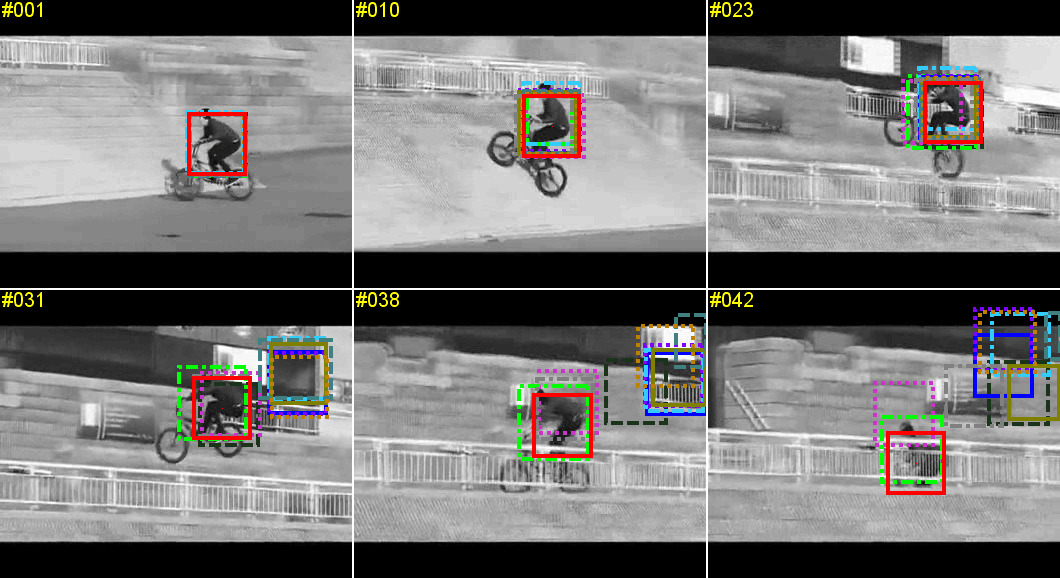}\\
 \caption{Tracking results of different trackers over some
representative frames from the ``BMX-Street'' video sequence
in the scenarios with shape deformations, partial occlusions, and body pose changes.
}
 \label{fig:tracking_BMX_Street} \vspace{-0.3cm}
\end{figure*}

\begin{figure*}[t]
\centering
\includegraphics[width=0.698\linewidth]{./Suppelmentary_File/NewFigs/NewLegendBar.PNG}\\
\includegraphics[width=0.698\linewidth]{./Suppelmentary_File/NewFigs/NewSample_race.jpg}\\
 \caption{Tracking results of different trackers over some
representative frames from the ``race'' video sequence
in the scenarios with background clutters.
}
 \label{fig:tracking_race} \vspace{-0.3cm}
\end{figure*}

\begin{figure*}[t]
\centering
\includegraphics[width=0.698\linewidth]{./Suppelmentary_File/NewFigs/NewLegendBar.PNG}\\
\includegraphics[width=0.698\linewidth]{./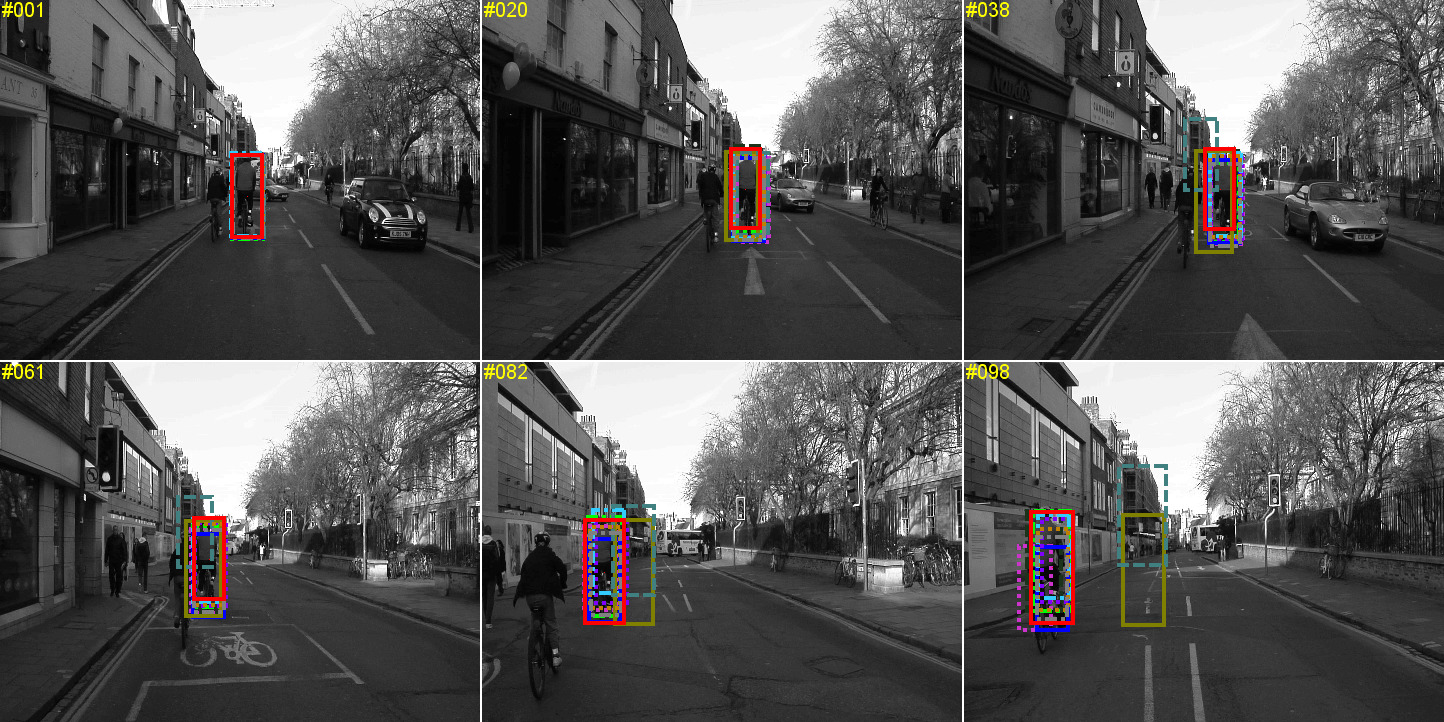}\\
 \caption{Tracking results of different trackers over some
representative frames from the ``CamSeq01'' video sequence
in the scenarios with body pose changes.
}
 \label{fig:tracking_CamSeq01} \vspace{-0.3cm}
\end{figure*}

\begin{figure*}[t]
\centering
\includegraphics[width=0.698\linewidth]{./Suppelmentary_File/NewFigs/NewLegendBar.PNG}\\
\includegraphics[width=0.698\linewidth]{./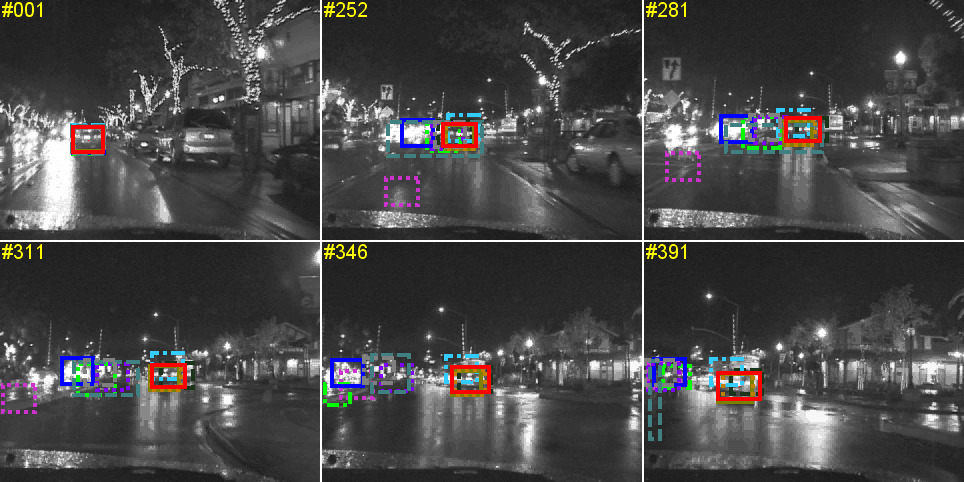}\\
 \caption{Tracking results of different trackers over some
representative frames from the ``car11'' video sequence
in the scenarios with varying lighting conditions and background clutters.
}
 \label{fig:tracking_car11} \vspace{-0.3cm}
\end{figure*}

\section{Discussion}
\label{sec:exp_discussion}

Based on the obtained experimental results, we observe that
the proposed tracking algorithm has the following properties. First,
after the buffer size exceeds a certain value (around $300$ in our
experiments), the tracking performance keeps stable with an increasing
buffer size, as shown in Fig.~\ref{fig:buffersize_particlenum}. This is desirable since we do not need a large buffer
size to achieve promising performance.
Second, in contrast to many existing particle filtering-based trackers
whose running time is typically linear in the number of particles, our method's
running time is sublinear in the number of
particles, as shown in Fig.~\ref{fig:buffersize_particlenum}.
Moreover, its tracking performance rapidly improves and
finally converge to a certain value, as shown in
Fig.~\ref{fig:buffersize_particlenum}.
Third, as shown in Fig.~\ref{fig:metric_non_metric} and Tab.~\ref{Tab:metric_success_rate}, the performance of
our metric learning with
no eigendecomposition is close to that of computationally expensive metric learning with
step-by-step eigendecomposition.
Fourth, using the structured metric learning is capable of
improving the tracking performance in CLE and VOR, as shown in Tab.~\ref{Tab:structured_metric_VOC_CLE}.
That is because the structured metric learning encodes the underlying
the structural interaction information on data samples, which plays an important role in robust visual tracking.
Fifth, based on linear representation with metric learning, it performs
better in tracking accuracy, as shown
in
Fig.~\ref{fig:regression}.
Sixth, it utilizes weighed reservoir
sampling to effectively maintain and update the foreground
and background sample buffers for metric learning, as shown in Fig.~\ref{fig:sampling}.
Seventh, compared with other state-of-the-art trackers, it is capable of
effectively
adapting to complicated appearance changes in the tracking process by
constructing
an effective metric-weighted linear representation with weighed
reservoir sampling, as shown in
Fig.~\ref{fig:exp_error_curve} and Tab.~\ref{Tab:quantitative}.

}
\end{spacing}

\end{document}